\def\eqref#1{equation~\ref{#1}}
\def\1{\bm{1}}
\DeclareMathAlphabet{\mathsfit}{\encodingdefault}{\sfdefault}{m}{sl}
\SetMathAlphabet{\mathsfit}{bold}{\encodingdefault}{\sfdefault}{bx}{n}
\DeclareMathOperator*{\argmax}{arg\,max}
\definecolor{mydarkblue}{rgb}{0,0.08,0.45} 
\newtheorem{theorem}{Theorem}
\newtheorem{corollary}{Corollary}
\newtheorem{lemma}{Lemma}
\newtheorem{definition}{Definition}
\newtheorem{assumption}{Assumption}
\newtheorem{remark}{Remark}
\newtheorem{claim}{Claim}
\newcommand{\mcatx}{\bm{\mathrm{x}}}
\newcommand{\mcaty}{\bm{\mathrm{y}}}
\newcommand{\mcatlr}{\bm{\upeta}}
\newcommand{\diag}{\mathrm{Diag}}
\newcommand{\lin}{\mathrm{lin}}
\newcommand{\poly}{\mathrm{Poly}}
\newcommand{\updcolor}{}
\title{Theoretical Analysis of Robust Overfitting for Wide DNNs: An NTK Approach}
\author{Shaopeng Fu \& Di Wang \\
Provable Responsible AI and Data Analytics (PRADA) Lab\\ 
King Abdullah University of Science and Technology, Saudi Arabia \\
\texttt{\{shaopeng.fu, di.wang\}@kaust.edu.sa}
}
\begin{document}

\maketitle

\begin{abstract}
Adversarial training (AT) is a canonical method for enhancing the robustness of deep neural networks (DNNs). However, recent studies empirically demonstrated that it suffers from {\it robust overfitting}, {\it i.e.}, a long time AT can be detrimental to the robustness of DNNs. This paper presents a theoretical explanation of robust overfitting for DNNs. Specifically, we non-trivially extend the neural tangent kernel (NTK) theory to AT and prove that an adversarially trained {\it wide} DNN can be well approximated by a linearized DNN. Moreover, for squared loss, closed-form AT dynamics for the linearized DNN can be derived, which reveals a new \textit{\textbf{AT degeneration}} phenomenon: a long-term AT will result in a wide DNN degenerates to that obtained without AT and thus cause robust overfitting. Based on our theoretical results, we further design a method namely \textit{\textbf{Adv-NTK}}, the first AT algorithm for infinite-width DNNs. Experiments on real-world datasets show that Adv-NTK can help infinite-width DNNs enhance comparable robustness to that of their finite-width counterparts, which in turn justifies our theoretical findings.
The code is available at \url{https://github.com/fshp971/adv-ntk}.
\end{abstract}

\section{Introduction}

Despite the advancements of deep neural networks (DNNs) in real-world applications, they are found to be vulnerable to {\it adversarial attacks}.
By adding specially designed noises, one can transform clean data to {\it adversarial examples} to fool a DNN to behave in unexpected ways \citep{szegedy2013intriguing,goodfellow2015explaining}.
To tackle the risk, one of the most effective defenses is {\it adversarial training} (AT), which enhances the robustness of DNNs against attacks via training them on adversarial examples \citep{madry2018towards}.
However, recent study shows that AT suffers from {\it robust overfitting}: after a certain point in AT, further training will continue to degrade the robust generalization ability of DNNs~\citep{rice2020overfitting}.
{\updcolor This breaks the common belief of ``training long and generalize well'' in deep learning and raises security concerns on real-world deep learning systems.}

While a line of methods has been developed to mitigate robust overfitting in practice~\citep{yu2022understanding,chen2021robust,wu2020adversarial,li2023data}, recent studies attempt to theoretically understand the mechanism behind robust overfitting.
However, existing theoretical results mainly focus on analyzing the robustness of machine learning models that have already been trained to converge but overlook the changes of robustness during AT~\citep{min2021curious,bombari2023beyond,zhang2023understanding,clarysse2023why}.
More recently, a work by \citet{li2023clean} has started to incorporate the training process into the study of robust overfitting. However, their analysis currently only applies to two-layer neural networks.
Thus, we still cannot answer the question:
{\it Why a DNN would gradually lose its robustness gained from the early stage of AT during continuous training?}

{\updcolor Motivated by the recent success of neural tangent kernel (NTK) theory~\citep{jacot2018neural} in approximating wide DNNs in standard training with closed-form training dynamics~\citep{lee2019wide}, this paper makes the first attempt to address the raised question by non-trivially extending NTK to theoretically analyze the AT dynamics of wide DNNs.}
Our main result is that for an adversarially trained multilayer perceptron (MLP) with any (finite) number of layers, as the widths of layers approach infinity, the network can be approximated by its linearized counterpart derived from Taylor expansion.
When the squared loss is used, we further derive closed-form AT dynamics for the linearized MLP.

The key challenge of our theory arises from the process of searching adversarial examples in AT.
In the vanilla AT, the strength of adversarial examples used for training is controlled by searching them within constrained spaces.
But such a constrained-spaces condition prevents one from conducting continuous gradient flow-based NTK analysis on that search process.
We propose a general strategy to remove this condition from AT by introducing an additional learning rate term into the search process to control the strength of adversarial examples.
With our solution, one can now characterize the behavior of DNNs in AT by directly studying the gradient flow descent in AT with NTK.

Our theory then reveals a new \textit{\textbf{AT degeneration}} phenomenon that we believe is the main cause of robust overfitting in DNNs.
In detail, our theory suggests that the effect of AT on a DNN can be characterized by a regularization matrix introduced into the linearized closed-form AT dynamics, which however will gradually fade away in long-term AT.
In other words, a long-term AT will result in an adversarially trained DNN {\it degenerate} to that obtained without AT, which thus explains why the DNN will lose its previously gained robustness.
Based on our analysis, we further propose \textit{\textbf{Adv-NTK}}, the first AT algorithm for infinite-width DNNs which improves network robustness by directly optimizing the introduced regularization matrix.
Experiments on real-world datasets demonstrate that Adv-NTK can help infinite-width DNNs gain robustness that is comparable with finite-width ones, which in turn justifies our theoretical findings.

{\updcolor
In summary, our work has three main contributions:
(1) We proved that a wide DNN in AT can be strictly approximated by a linearized DNN with closed-form AT dynamics.
(2) Our theory reveals a novel \textit{\textbf{AT degeneration}} phenomenon that theoretically explains robust overfitting of DNNs for the first time.
(3) We designed \textit{\textbf{Adv-NTK}}, the first AT algorithm for infinite-width DNNs.
}

\section{Related Works}
\label{sec:related}

\textbf{Robust overfitting.}
\citet{rice2020overfitting} first find this phenomenon in adversarially trained DNNs.
A series of works then design various regularization techniques to mitigate it in practice~\citep{zhang2021geometryaware,yu2022understanding,wu2020adversarial,li2023data,chen2021robust}.
Recent studies attempt to theoretically explain robust overfitting.
\citet{donhauser2021interpolation} and \citet{hassani2022curse} show that the robust generalizability follows a double descent phenomenon concerning the model scale.
{\updcolor \citet{wang2022adversarial} show that a two-layer neural network that is closed to the initialization and with frozen second-layer parameter is provably vulnerable to adversarial attacks. However, this result requires the inputs coming from the unit sphere, which is not realistic in the real world.}
Other advances include \citet{zhu2022robustness}, \citet{bombari2023beyond}, \citet{bubeck2021law}, \citet{zhang2023understanding} and \citet{clarysse2023why}.
Since these works only focus on analyzing converged models, it remains unclear how robustness of DNN occurs and degrades {\updcolor during AT}.

\citet{li2023clean} are the first that consider the AT evolution process in studying robust overfitting.
Based on the feature learning theory, they find a two-layer CNN in AT will gradually memorize data-wise random noise in adversarial training data, which makes it difficult to generalize well on unseen adversarial data.
However, their theory currently is only applicable to shallow networks, and could not explain why networks will lose previously acquired robustness with further AT.

\textbf{Neural tangent kernel (NTK).}
\citet{jacot2018neural} show that for a wide neural network, its gradient descent dynamics can be described by a kernel, named neural tangent kernel (NTK).
Based on NTK, the learning process of neural networks can be simplified as a linear kernel regression~\citep{jacot2018neural,lee2019wide}, {\updcolor which makes NTK a suitable theoretical tool to analyze overparameterized models~\citep{li2018learning,zou2020gradient,allen2019convergence}}.
Recent studies have extended NTK to various model architectures~\citep{arora2019exact,hron2020infinite,du2019graph,lee2022neural}, and the theory itself helps understand deep learning from various aspects such as convergence~\citep{du2018gradient,cao2019generalization}, generalization~\citep{lai2023generalization,huang2020deep,chen2020generalized,barzilai2023a,hu2020simple}, and trainability~\citep{xiao2020disentangling}.

NTK has also been used to analyze AT on overparameterized models.
\citet{gao2019convergence} and \citet{zhang2020over} study the convergence of overparameterized networks in AT and prove upper bounds on the time required for AT. 
More recent works empirically study robust overfitting with NTK.
\citet{tsilivis2022what} use eigenspectrums of NTKs to identify robust or non-robust features.
\citet{loo2022evolution} empirically show that a finite-width NTK in AT will rapidly converge to a kernel encodes robust features.
But none of them provide theoretical explanations of robust overfitting.

\section{Preliminaries}
\label{sec:preliminary}

\textbf{Notations.}
Let $\otimes$ denotes Kronecker product, $\mathrm{Diag}(\cdot)$ denotes a diagonal matrix constructed from a given input, $\partial_{(\cdot)} (\cdot)$ denotes the Jacobian of a given function, $\lambda_{\max}(\cdot)$ denotes the maximum eigenvalue of a given matrix, and $I_n \ (n \in \mathbb{N}^+)$ denotes an $n \times n$ identity matrix.
For a set of random variables $X_n$ indexed by $n$ and an additional random variable $X$, we use $X_n\xrightarrow{P}X$ to denote that $X_n$ converges in probability to $X$.
See Appendices~\ref{app:notation} and \ref{app:definition} for a full list of notations and definitions of convergence in probability and Lipschitz continuity/smoothness.

Let $\mathcal D = \{(x_1,y_1), \cdots (x_M, y_M)\}$ be a dataset consists of $M$ samples, where $x_i \in \mathcal X \subseteq \mathbb{R}^{d}$ is the $i$-th input feature vector and $y_i \in \mathcal Y \subseteq \mathbb{R}^{c}$ is its label.
A parameterized DNN is denoted as $f(\theta, \cdot): \mathcal X \rightarrow \mathcal Y$, where $\theta$ is the model parameter.
For simplicity, we let $\mcatx := \oplus_{i=1}^M x_i \in \mathbb{R}^{Md}$ denotes the concatenation of inputs and $\mcaty := \oplus_{i=1}^M y_i \in \mathbb{R}^{Mc}$ denotes the concatenation of labels.
Thereby, the concatenation of $f(\theta, x_1), \cdots f(\theta, x_M)$ can be further denoted as $f(\theta,\mcatx):= \oplus_{i=1}^M f(x_i) \in \mathbb{R}^{Mc}$.

\textbf{Adversarial training (AT).}
Suppose $\mathcal L: \mathbb{R}^c \times \mathbb{R}^c \rightarrow \mathbb R^+$ is a loss function.
Then, a standard AT improves the robustness of DNNs against adversarial attacks by training them on {\it most} adversarial examples.
Specifically, it aims to solve the following minimax problem \citep{madry2018towards},
\begin{gather}
    \min_{\theta} \frac{1}{|\mathcal D|} \sum_{(x_i, y_i) \in \mathcal D} \max_{\|x_i'-x_i\|\leq \rho} \mathcal L(f(\theta, x_i'), y_i),
    \label{eq:at-origin}
\end{gather}
where $\rho \in \mathbb R$ is the adversarial perturbation radius and $x'_i$ is the most adversarial example within the ball sphere centered at $x_i$.
Intuitively, a large perturbation radius $\rho$ would result in the final model achieving strong adversarial robustness.

\textbf{Neural tangent kernel (NTK).}
For a DNN $f$ that is trained according to some iterative algorithm, let $f_t := f(\theta_t, \cdot)$ where $\theta_t$ is the DNN parameter obtained at the training time $t$.
Then, the {\it empirical} NTK of the DNN at time $t$ is defined as below~\citep{jacot2018neural},
\begin{gather}
    \hat\Theta_{\theta,t}(x, x') := \partial_\theta f_t(x) \cdot \partial^T_\theta f_t(x') \in \mathbb{R}^{c\times c}, \quad \forall x,x' \in \mathcal X.
    \label{eq:ntk}
\end{gather}
In the rest of the paper, we will also use the notations
$\hat\Theta_{\theta,t}(x,\mcatx) := \partial_\theta f_t(x) \cdot \partial^T_\theta f_t(\mcatx) \in \mathbb{R}^{c\times Mc}$
and
$\hat\Theta_{\theta,t}(\mcatx,\mcatx) := \partial_\theta f_t(\mcatx) \cdot \partial^T_\theta f_t(\mcatx) \in \mathbb{R}^{Mc \times Mc}$.

When $f_t$ is trained via minimizing the empirical squared loss $\sum_{(x_i,y_i)\in\mathcal D} \frac{1}{2} \|f(x_i) - y_i\|^2_2$, \citet{lee2019wide} show that it can be approximated by the linearized DNN $f^{\mathrm{lin}}_t: \mathcal X \rightarrow \mathcal Y$ defined as follows,
\begin{align}
    &f^{\mathrm{lin}}_t(x) := f_0(x) - \hat\Theta_{\theta,0}(x,\mcatx) \cdot \hat\Theta_{\theta,0}^{-1}(\mcatx, \mcatx) \cdot \left( I - e^{-\hat\Theta_{\theta,0}(\mcatx,\mcatx) \cdot t} \right) \cdot (f_0(\mcatx) - \mcaty),
    \quad \forall x \in \mathcal X.
    \label{eq:closed-form-clean}
\end{align}
Although the kernel function $\hat\Theta_{\theta, t}$ depends on both the initial parameter $\theta_0$ and the time $t$, \citet{jacot2018neural} prove that $\hat\Theta_{\theta, t} \xrightarrow{P} \Theta_\theta$ as the network widths go to infinity, where $\Theta_\theta$ is a kernel function that is independent of $\theta_0$ and $t$.
Based on it, \citet{lee2019wide} show that with infinite training time, the average output of infinite-width DNNs over random initialization will converge as follows,
\begin{align}
    &\lim_{\mathrm{widths} \rightarrow \infty} \lim_{t \rightarrow \infty} \mathbb{E}_{\theta_0} [f_{t}(x)]
    \xrightarrow{P} \Theta_{\theta}(x,\mcatx) \cdot \Theta_{\theta}^{-1}(\mcatx, \mcatx) \cdot \mcaty,
    \quad \forall x \in \mathcal X,
    \label{eq:closed-form-ensemble-inf-clean}
\end{align}
where $\Theta_{\theta}(x,\mcatx) \in \mathbb{R}^{c\times Mc}$ is an $1\times M$ block matrix that the $i$-th column block is $\Theta_{\theta}(x, x_i)$, and $\Theta_{\theta}(\mcatx,\mcatx) \in \mathbb{R}^{Mc\times Mc}$ is an $M\times M$ block matrix that the $i$-th row $j$-th column block is $\Theta_{\theta}(x_i, x_j)$.

\section{Adversarial Training of Wide DNNs}
\label{sec:adv-train}

In this section, we present our main theoretical results that characterize AT dynamics of wide DNNs.
We first introduce the DNN architectures that we are going to analyze.

Suppose $f(\theta,\cdot)$ is a DNN consisting of $L+1$ fully connected layers, in which the width of the $l$-th {\it hidden} layer ($1 \leq l \leq L$) is $n_l$.
Additionally, the input dimension and the output dimension are denoted as $n_0:=d$ and $n_{L+1}:=c$ for simplicity.
Then, the forward propagation in the $l$-th fully-connected layer ($1\leq l \leq L+1$) is calculated as follows,
\begin{align}
    & h^{(l)}(x) = \frac{1}{\sqrt{n_{l-1}}} W^{(l)} \cdot x^{(l-1)}(x) + b^{(l)},
    \quad x^{(l)}(x) = \phi(h^{(l)}(x)),
\end{align}
where $h^{(l)}$ and $x^{(l)}$ are the pre-activated and post-activated functions at the $l$-th layer, $W^{(l)} \in \mathbb{R}^{n_l \times n_{l-1}}$ is the $l$-th weight matrix, $b^{(l)} \in \mathbb{R}^{n_l}$ is the $l$-th bias vector, and $\phi$ is a point-wise activation function.
The final DNN function is $f(\theta, \cdot) := h^{(L+1)}(\cdot)$, with model parameter $\theta := (W^{(1)},\cdots,W^{(L+1)},b^{(1)}, \cdots, b^{(L+1)})$, and we use $f_t := f(\theta_t,\cdot)$ denote the DNN at the training time $t$.
Finally, for the initialization of $\theta_0$, we draw each entry of weight matrices from a Gaussian $\mathcal{N}(0,\sigma_W^2)$ and each entry of bias vectors from a Gaussian $\mathcal{N}(0,\sigma_b^2)$.

Similar to existing NTK literatures~\citep{jacot2018neural,lee2019wide,arora2019exact},
we are also interested in the linearized DNN $f^{\mathrm{lin}}_t$ defined as follows,
\begin{align}
    f^{\mathrm{lin}}_t(x)
    = f_0(x) + \partial_\theta f_0(x) \cdot (\theta^{\mathrm{lin}}_t - \theta_0),
    \quad \forall x \in \mathcal X,
    \label{eq:lin-nn-adv-def}
\end{align}
where $\theta_0$ is the initial parameter same as that of the non-linear DNN $f_t$ and $\theta^{\mathrm{lin}}_t$ is the parameter of the linearized DNN at the training time $t$.

\subsection{Gradient Flow-based Adversarial Example Search}
\label{sec:adv-search}

To characterize the training dynamics of AT, the key step is to analyze the process of searching adversarial examples that will be used to calculate AT optimization directions.
Recall Eq.~(\ref{eq:at-origin}), standard minimax AT will search adversarial examples within constrained spaces.
However, analyzing such a process with continuous gradient flows is challenging due to the need to explicitly model the boundaries of those constrained spaces.

To tackle the challenge, we notice that the main role of the constrained-spaces condition is to control the adversarial strength ({\it i.e.}, the strength of the ability to make models misbehave) of searched data.
As a solution, we suggest replacing the constrained-spaces condition (controlled by $\rho$ in Eq.~(\ref{eq:at-origin})) with an additional learning rate term ({\it i.e.}, $\eta_i(t)$ in Eq.~(\ref{eq:at-dynamics:single-x})) to control the strength of adversarial examples.
This modification will then enable a more convenient continuous gradient flow analysis.

Specifically, for the DNN $f_t$ at the training time $t$, to find the corresponding adversarial example of the $i$-th training data point $(x_i,y_i)$ where $1\leq i \leq M$, we will start from $x_i$ and perform gradient flow ascent for a total of time $S > 0$, with an introduced learning rate $\eta_i(t) : \mathbb{R}\rightarrow\mathbb{R}$ to control the adversarial strength of searched example at the current training time $t$, as below,
\begin{align}
    & \partial_s x_{i,t,s} = \eta_i(t) \cdot \partial^T_x f_{t}(x_{i,t,s}) \cdot \partial^T_{f(x)} \mathcal{L}(f_{t}(x_{i,t,s}), y_i) \quad\quad \mathrm{s.t.} \quad\quad x_{i,t,0} = x_i.
    \label{eq:at-dynamics:single-x}
\end{align}
Then, the final adversarial example is $x_{i,t,S}$.
The learning rate $\eta_i(t)$ plays a similar role with the constrained-spaces condition in Eq.~(\ref{eq:at-origin}).
Intuitively, a larger $\eta_i(t)$ at the training time $t$ corresponds to a more adversarial example $x_{i,t,S}$.

Besides, running the gradient flow defined in Eq.~(\ref{eq:at-dynamics:single-x}) also depends on the DNN output of which the evolution of $f_t(x_{i,t,s})$ concerning $s$ can be formalized based on Eq.~(\ref{eq:at-dynamics:single-x}) as follows,
\begin{align}
    & \partial_s f_{t}(x_{i,t,s})
    = \partial_x f_{t}(x_{i,t,s}) \cdot \partial_s x_{i,t,s}
      = \eta_i(t) \cdot \hat\Theta_{x,t}(x_{i,t,s}, x_{i,t,s}) \cdot \partial^T_{f(x)} \mathcal L(f_{t}(x_{i,t,s}), y_i),
    \label{eq:at-dynamics:single-f-x}
\end{align}
where $\hat\Theta_{x,t} : \mathcal X \times \mathcal X \rightarrow \mathbb{R}^{c\times c}$ is a new kernel function named {\it \textbf{A}dversarial \textbf{R}egularization \textbf{K}ernel (\textbf{ARK})} and defined as below,
\begin{gather}
    \hat\Theta_{x,t}(x,x') := \partial_x f_t(x) \cdot \partial^T_x f_t(x'), \qquad \forall x,x' \in \mathcal{X}.
    \label{eq:adk}
\end{gather}
The ARK $\hat\Theta_{x,t}$ shares similar structure with the NTK $\hat\Theta_{\theta,t}$ defined in Eq.~(\ref{eq:ntk}).
The difference is that the kernel matrix $\hat\Theta_{x,t}$ is calculated from Jacobians of DNN $f_t$ concerning input, while the NTK $\hat\Theta_{\theta,t}$ is calculated from Jacobians concerning the model parameter.

\subsection{Adversarial Training Dynamics}
\label{sec:at-dynamics}

With the gradient flow-based adversarial example search in the previous section, we now formalize the gradient flow-based AT dynamics for the wide DNN $f_t$ and the linearized DNN $f^{\mathrm{lin}}_t$ respectively.

\textbf{AT dynamics of wide DNN $f_t$.}
Suppose $f_t$ is trained via continuous gradient flow descent.
Then, the evolution of model parameter $\theta_t$ and model output $f_t(x)$ are formalized as follows,
\begin{align}
    &\partial_t \theta_t = -\partial^T_\theta f_{t}(\mcatx_{t,S}) \cdot \partial^T_{f(\mcatx)} \mathcal{L}(f_{t}(\mcatx_{t,S}), \mcaty), \\
    &\partial_t f_{t}(x)
    = \partial_\theta f_{t}(x) \cdot \partial_t \theta_t
    = -\hat\Theta_{\theta,t}(x, \mcatx_{t,S}) \cdot \partial^T_{f(\mcatx)} \mathcal{L}(f_{t}(\mcatx_{t,S}), \mcaty), \quad \forall x \in \mathcal X,
    \label{eq:df-nn-adv}
\end{align}
where $\mcatx_{t,S} := \oplus_{i=1}^M x_{i,t,S}$ is the concatenation of adversarial examples found at the current training time $t$, and $\hat\Theta_{\theta,t}$ is the empirical NTK defined in Eq.~(\ref{eq:ntk}).

Meanwhile, based on the method proposed in Section~\ref{sec:adv-search}, the gradient flow-based search process for the concatenation of adversarial examples $\mcatx_{t,S}$ is formalized as below,
\begin{align}
    & \partial_s \mcatx_{t,s} = \partial^T_{\mcatx} f_{t}(\mcatx_{t,s}) \cdot \mcatlr(t) \cdot \partial^T_{f(\mcatx)} \mathcal{L}(f_{t}(\mcatx_{t,s}), \mcaty) \quad\quad \mathrm{s.t.} \quad\quad \mcatx_{t,0} = \mcatx, \\
    & \partial_s f_{t}(\mcatx_{t,s})
    = \partial_{\mcatx} f_{t}(\mcatx_{t,s}) \cdot \partial_s \mcatx_{t,s}
      = \hat\Theta_{x,t}(\mcatx_{t,s}, \mcatx_{t,s}) \cdot \mcatlr(t) \cdot \partial^T_{f(\mcatx)} \mathcal L(f_{t}(\mcatx_{t,s}), \mcaty),
\end{align}
where $\mcatx_{t,s} := \oplus_{i=1}^M x_{i,t,s}$ is the concatenation of intermediate adversarial training examples found at the search time $s$,
$\mcatlr(t) := \mathrm{Diag}(\eta_1(t), \cdots, \eta_M(t)) \otimes I_{c} \in \mathbb{R}^{Mc\times Mc}$ is a block diagonal learning rate matrix,
and $\hat\Theta_{x,t}(\mcatx_{t,s}, \mcatx_{t,s}) := \diag(\hat\Theta_{x,t}(x_{1,t,s}, x_{1,t,s}), \cdots, \hat\Theta_{x,t}(x_{M,t,s},x_{M,t,s})) \in \mathbb{R}^{Mc\times Mc}$ is a block diagonal matrix consists of ARKs.

\textbf{AT dynamics of linearized wide DNN $f^{\mathrm{lin}}_t$.}
Suppose $f^{\mathrm{lin}}_t$ is also trained via continuous gradient flow descent.
Then, according to the definition of linearized DNN in Eq.~(\ref{eq:lin-nn-adv-def}), we have $\partial_\theta f^{\mathrm{lin}}_t := \partial_\theta f_0$.
Therefore, the evolution of parameter $\theta^{\mathrm{lin}}_t$ and output $f^{\mathrm{lin}}_t(x)$ are formalized as follows,
\begin{align}
    &\partial_t \theta^{\mathrm{lin}}_t = -\partial^T_\theta f_0(\mcatx) \cdot \partial^T_{f(\mcatx)} \mathcal{L}(f^{\mathrm{lin}}_{t}(\mcatx^{\mathrm{lin}}_{t,S}), \mcaty), \label{eq:linear-train-dynamics:1} \\
    &\partial_t f^{\mathrm{lin}}_{t}(x)
    = \partial_\theta f_0(x) \cdot \partial_t \theta^{\mathrm{lin}}_t
    = -\hat\Theta_{\theta,0}(x, \mcatx) \cdot \partial^T_{f(\mcatx)} \mathcal{L}(f^{\mathrm{lin}}_{t}(\mcatx^{\mathrm{lin}}_{t,S}), \mcaty), \quad \forall x \in \mathcal X, \label{eq:linear-train-dynamics:2}
\end{align}
where $\mcatx^\lin_{t,S} := \oplus_{i=1}^M x^\lin_{i,t,S}$ is concatenation of the adversarial examples found for the linearized DNN $f^{\mathrm{lin}}_t$, and $\hat\Theta_{\theta,0}$ is the empirical NTK (see Eq.~(\ref{eq:ntk})) at initialization.

The search of $\mcatx^{\lin}_{t,S}$ is slightly different from the gradient flow-based method in Section~\ref{sec:adv-search}.
When following Eqs.~(\ref{eq:at-dynamics:single-x}) and (\ref{eq:at-dynamics:single-f-x}) to search $x^{\mathrm{lin}}_{i,t,s}$, one needs to calculate an intractable Jacobian
$\partial_x f^{\mathrm{lin}}_t(x^{\lin}_{i,t,s}) = \partial_x f_0(x^{\lin}_{i,t,s}) + \partial_{x}(\partial_{\theta} f_0(x^{\lin}_{i,t,s}) (\theta_t - \theta_0))$.
To further simplify our analysis, we note that in standard training, a wide DNN is approximately linear concerning model parameters, thus it is also reasonable to deduce that a wide DNN in AT is approximately linear concerning slightly perturbed adversarial inputs.
In other words, we deduce that
$\partial_x f^{\mathrm{lin}}_t(x^{\mathrm{lin}}_{i,t,s}) \approx \partial_x f_0(x^{\mathrm{lin}}_{i,t,s}) +0 \approx \partial_x f_0(x_i)$.
Thus, we propose to replace $\partial_x f^{\mathrm{lin}}_t(x^{\mathrm{lin}}_{i,t,s})$ with $\partial_x f_0(x_{i})$ in the search of $x^{\mathrm{lin}}_{i,t,s}$.

Then, by replacing $\partial_x f^{\mathrm{lin}}_t(x^{\mathrm{lin}}_{i,t,s})$ with $\partial_x f_0(x_{i})$ in Eqs.(\ref{eq:at-dynamics:single-x}) and (\ref{eq:at-dynamics:single-f-x}), the overall search process for $\mcatx^{\mathrm{lin}}_{t,s}$ in the linearized AT dynamics is formalized as below,
\begin{align}
    & \partial_s \mcatx^{\lin}_{t,s} = \partial^T_x f_0(\mcatx) \cdot \mcatlr(t) \cdot \partial^T_{f(\mcatx)} \mathcal{L}(f_{t}(\mcatx^{\lin}_{t,s}), \mcaty) \quad\quad \mathrm{s.t.} \quad\quad \mcatx^{\lin}_{t,0} = \mcatx, \label{eq:linear-search-dynamics:1} \\
    & \partial_s f^{\lin}_{t}(\mcatx^{\lin}_{t,s})
    = \partial_{\mcatx} f_0(\mcatx) \cdot \partial_s \mcatx^{\lin}_{t,s}
      = \hat\Theta_{x,0}(\mcatx, \mcatx) \cdot \mcatlr(t) \cdot \partial^T_{f(\mcatx)} \mathcal L(f^{\lin}_{t}(\mcatx^{\lin}_{t,s}), \mcaty), \label{eq:linear-search-dynamics:2}
\end{align}
where $\mcatx^{\lin}_{t,s} := \oplus_{i=1}^{M} x^{\lin}_{i,t,s}$ is the concatenation of intermediate adversarial examples for the linearized DNN $f^{\lin}_t$, the learning rate matrix $\mcatlr(t)$ is same as that in the AT dynamics of $f_t$, and $\hat\Theta_{x,0}(\mcatx, \mcatx) := \diag(\hat\Theta_{x,0}(x_1,x_1), \cdots,\hat\Theta_{x,0}(x_M,x_M))$ is a block matrix consists of ARKs at initialization.

\subsection{Adversarial Training in Infinite-Width}
\label{sec:mlp-analysis}

This section theoretically characterizes the AT dynamics of the DNN $f_t$ when the network widths approach the infinite limit.
We first prove the kernel limits at initialization as Theorem~\ref{thm:conv-init-informal}.
\begin{theorem}[Kernels limits at initialization; Informal version of Theorem~\ref{thm:conv-init-formal}]
\label{thm:conv-init-informal}
Suppose $f_0$ is an MLP defined and initialized as in Section~\ref{sec:adv-train}.
Then, for any $x, x' \in \mathcal X$ we have
\begin{align*}
    &\lim_{n_L\rightarrow\infty}\cdots\lim_{n_0\rightarrow\infty} \hat\Theta_{\theta,0}(x, x') = \Theta_{\theta}(x,x') := \Theta^{\infty}_{\theta}(x,x') \cdot I_{n_{L+1}}, \\
    &\lim_{n_L\rightarrow\infty}\cdots\lim_{n_0\rightarrow\infty} \hat\Theta_{x,0}(x, x') = \Theta_{x}(x,x') := \Theta^{\infty}_{x}(x,x') \cdot I_{n_{L+1}},
\end{align*}
where $\Theta^{\infty}_{\theta}: \mathcal X \times \mathcal X \rightarrow \mathbb{R}$ and $\Theta^{\infty}_{x}: \mathcal X \times \mathcal X \rightarrow \mathbb{R}$ are two deterministic kernel functions.
\end{theorem}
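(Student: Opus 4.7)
The plan is to prove the two limits by induction on the layer depth $l$, simultaneously tracking three objects: the layer-wise NNGP covariance $\Sigma^{(l)}(x,x')$ of the pre-activation $h^{(l)}$, the partial NTK $\hat\Theta_{\theta,0}^{(l)}(x,x') := \partial_\theta h^{(l)}(x)\cdot\partial_\theta^T h^{(l)}(x')$, and the new partial ARK $\hat\Theta_{x,0}^{(l)}(x,x') := \partial_x h^{(l)}(x)\cdot\partial_x^T h^{(l)}(x')$. The NTK half follows the classical argument of \citet{jacot2018neural} and \citet{lee2019wide}; the novelty lies in carrying the ARK through the same induction, so the plan focuses on it.

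First set up the standard recursive NNGP kernels: let $\Sigma^{(1)}(x,x') := \sigma_W^2\langle x,x'\rangle/n_0 + \sigma_b^2$ and, given $\Sigma^{(l)}$ with associated $2\times 2$ covariance $\Lambda^{(l)}(x,x')$, put $\Sigma^{(l+1)}(x,x') := \sigma_W^2\mathbb{E}_{(u,v)\sim\mathcal{N}(0,\Lambda^{(l)})}[\phi(u)\phi(v)] + \sigma_b^2$ and $\dot\Sigma^{(l+1)}(x,x') := \sigma_W^2\mathbb{E}_{(u,v)\sim\mathcal{N}(0,\Lambda^{(l)})}[\dot\phi(u)\dot\phi(v)]$. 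The base case is direct: $\hat\Theta_{\theta,0}^{(1)} = \Sigma^{(1)}(x,x')\cdot I_{n_1}$ holds exactly, and $\hat\Theta_{x,0}^{(1)} = n_0^{-1} W^{(1)}(W^{(1)})^T$ converges in probability to $\sigma_W^2 I_{n_1}$ by the entrywise LLN once the innermost limit $n_0\to\infty$ is taken, giving $\Theta_x^{(1,\infty)} = \sigma_W^2$.

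For the inductive step $l\Rightarrow l+1$, the chain rule through layer $l+1$ yields
\begin{align*}
\hat\Theta_{\theta,0}^{(l+1)}(x,x') &= \mathcal{A}_l(x,x')\cdot I_{n_{l+1}} + n_l^{-1}W^{(l+1)}D^{(l)}(x)\,\hat\Theta_{\theta,0}^{(l)}(x,x')\,D^{(l)}(x')(W^{(l+1)})^T,\\
\hat\Theta_{x,0}^{(l+1)}(x,x') &= n_l^{-1}W^{(l+1)}D^{(l)}(x)\,\hat\Theta_{x,0}^{(l)}(x,x')\,D^{(l)}(x')(W^{(l+1)})^T,
\end{align*}
where $D^{(l)}(x) := \mathrm{Diag}(\dot\phi(h^{(l)}(x)))$ and $\mathcal{A}_l(x,x')$ collects the direct Jacobian contributions with respect to $W^{(l+1)}$ and $b^{(l+1)}$. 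Plug in the inductive hypothesis $\hat\Theta_{\bullet,0}^{(l)}(x,x') \to \Theta_\bullet^{(l,\infty)}(x,x')\cdot I_{n_l}$ for $\bullet\in\{\theta,x\}$; the sandwich collapses to $\Theta_\bullet^{(l,\infty)}(x,x')\cdot n_l^{-1}W^{(l+1)}\mathrm{Diag}(\dot\phi(h^{(l)}(x))\odot\dot\phi(h^{(l)}(x')))(W^{(l+1)})^T$. Since $W^{(l+1)}$ is drawn independently of everything in earlier layers, the entrywise LLN over the $n_l$ summation index, combined with the inductive NNGP convergence of $h^{(l)}$, drives this to $\Theta_\bullet^{(l,\infty)}(x,x')\cdot\dot\Sigma^{(l+1)}(x,x')\cdot I_{n_{l+1}}$ in probability, producing the NTK recursion $\Theta_\theta^{(l+1,\infty)} = \mathcal{A}_l^\infty + \dot\Sigma^{(l+1)}\Theta_\theta^{(l,\infty)}$ and the new ARK recursion $\Theta_x^{(l+1,\infty)} = \dot\Sigma^{(l+1)}\Theta_x^{(l,\infty)}$.

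The main obstacle is propagating the identity-multiple structure of the ARK cleanly through the induction. The inductive hypothesis holds only in probability, so the residual $\hat\Theta_{x,0}^{(l)} - \Theta_x^{(l,\infty)} I_{n_l}$ is an $n_l\times n_l$ random matrix that gets sandwiched between $W^{(l+1)}$ factors whose operator norm scales like $\sqrt{n_l}$ at finite width. Closing the induction requires showing that this residual has Frobenius deviation of order $o(n_l)$ while $\|n_l^{-1/2}W^{(l+1)}\|_{\text{op}} = O_p(1)$ in the sequential limit where $n_l\to\infty$ with $n_{l+1}$ held fixed, so that after the outer average the deviation is absorbed into an $o_p(1)$ term; this is handled by a matrix-concentration bound on $W^{(l+1)}$ together with Slutsky's lemma applied entrywise. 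Applying the induction through $l = L+1$ then yields the claimed identity-multiple limits $\Theta_\theta(x,x') = \Theta_\theta^\infty(x,x')\cdot I_{n_{L+1}}$ and $\Theta_x(x,x') = \Theta_x^\infty(x,x')\cdot I_{n_{L+1}}$.
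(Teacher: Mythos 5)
Your proof follows essentially the same route as the paper: induction over the layer index, collapsing the chain-rule recursion for the NTK/ARK via the sequential wide-width limit and the NNGP convergence of Lemma~\ref{lem:ref-jacot}, arriving at the identical ARK recursion $\Theta_x^{(l+1,\infty)}=\dot\Sigma^{(l+1)}\,\Theta_x^{(l,\infty)}$ with $\Theta_x^{(1,\infty)}=\sigma_W^2$. One cosmetic slip: under the paper's parametrization ($W^{(l)}_{ij}\sim\mathcal N(0,\sigma_W^2)$, $b^{(l)}_i\sim\mathcal N(0,\sigma_b^2)$, with a $1/\sqrt{n_{l-1}}$ forward scaling) the NTK base case is $\hat\Theta^{(1)}_{\theta,0}=(x^\top x'/n_0+1)\,I_{n_1}$, which is \emph{not} equal to $\Sigma^{(1)}\,I_{n_1}=(\sigma_W^2 x^\top x'/n_0+\sigma_b^2)\,I_{n_1}$, so your claim that ``$\hat\Theta^{(1)}_{\theta,0}=\Sigma^{(1)} I_{n_1}$ holds exactly'' is true only in the alternative parametrization where $\sigma_W,\sigma_b$ multiply the forward pass and weights are standard Gaussian.
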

The proof is given in Appendix~\ref{app:proof:conv-init}.

\begin{remark}
\label{rmk:ntk-conv}
The convergence of NTK $\hat\Theta_{\theta,0}$ is first proved in \citet{jacot2018neural}.
We restate it for the sake of completeness.
Note that the limit of NTK $\hat\Theta_{\theta,0}$ in AT is the same as that in standard training.
\end{remark}

We then prove that a wide DNN $f_t$ can be approximated by its linearized counterpart $f^{\mathrm{lin}}_t$, as shown in Theorem~\ref{thm:equival-linear}.
It relies on the following Assumptions~\ref{ass:active}-\ref{ass:loss}.

\begin{assumption}
\label{ass:active}
The activation function $\phi : \mathbb{R}\rightarrow \mathbb{R}$ is twice-differentiable, $K$-Lipschitz continuous, $K$-Lipschitz smooth, and satisfies $|\phi(0)| < +\infty$.
\end{assumption}

\begin{assumption}
\label{ass:opt-dir}
For any fixed $T > 0$, we have that
$\int_0^T \| \partial_{f(\mcatx)} \mathcal L(f(\mcatx_{t,S}, \mcaty) \|_2 \mathrm{d}t = O_p(1)$,
$\sup_{t \in [0,T]} \int_0^S \| \partial_{f(\mcatx)} \mathcal L(f(\mcatx_{t,s}), \mcaty) \|_2 \mathrm{d}s =O_p(1)$,
and $\sup_{t \in [0,T]} \int_0^S \| \partial_t \partial_{f(\mcatx)} \mathcal L(f(\mcatx_{t,s}), \mcaty) \|_2 \mathrm{d}s = O_p(1)$
as $\min\{n_0,\cdots,n_L\} \rightarrow \infty$.
\end{assumption}

\begin{assumption}
\label{ass:lr}
$\mcatlr(t)$ and $\partial_t \mcatlr(t)$ are continuous on $[0, +\infty)$.
\end{assumption}

\begin{assumption}
\label{ass:loss}
The loss function $\mathcal L: \mathcal Y \times \mathcal Y \rightarrow \mathbb{R}$ is $K$-Lipschitz smooth.
\end{assumption}

Assumptions~\ref{ass:active} and \ref{ass:loss} are commonly used in existing NTK literatures~\citep{jacot2018neural,lee2019wide,lee2022neural}.
Assumption~\ref{ass:lr} is mild.
Assumption~\ref{ass:opt-dir} assumes that the cumulated perturbation loss directions as well as AT loss directions
are stochastically bounded.
Similar assumptions are also widely adopted in NTK studies for standard training.

\begin{theorem}[Equivalence between wide DNN and linearized DNN]
\label{thm:equival-linear}
Suppose Assumptions~\ref{ass:active}-\ref{ass:loss} hold, and $f_t$ and $f^{\mathrm{lin}}_t$ are trained following the AT dynamics formalized in Section~\ref{sec:at-dynamics}.
Then, if there exists $\tilde n \in N^+$ such that $\min\{n_0,\cdots, n_L\} \geq \tilde n$ always holds,
we have for any $x \in \mathcal X$, as $\tilde n \rightarrow \infty$,
\begin{align*}
    \left\{ \lim_{n_L\rightarrow\infty} \cdots \lim_{n_0\rightarrow\infty} \sup_{t \in [0,T]} \|f_t(x) - f^{\mathrm{lin}}_t(x)\|_2 \right\} \xrightarrow{P} 0.
\end{align*}
\end{theorem}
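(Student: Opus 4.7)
The plan is to adapt the NTK-stability-plus-Grönwall strategy of Lee et al. to the coupled inner/outer dynamics of adversarial training. The four quantities I need to control are (i) the NTK $\hat\Theta_{\theta,t}$, (ii) the ARK $\hat\Theta_{x,t}$, (iii) the parameter drift $\theta_t-\theta_0$ of the nonlinear AT, and (iv) the adversarial input drifts $\mcatx_{t,s}-\mcatx$. I would first establish a quantitative kernel-stability estimate in width, then chain those bounds through a Grönwall inequality in the inner time $s$ and then in the outer time $t$ to obtain the uniform-in-time output approximation.

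\textbf{Step 1 (kernel stability).} By Assumption~\ref{ass:active}, each Jacobian $\partial_\theta f_t$ and $\partial_x f_t$ is Lipschitz in both $\theta$ and $x$ with constants carrying $1/\sqrt{n_l}$ factors, so $\hat\Theta_{\theta,t}$ and $\hat\Theta_{x,t}$ are likewise Lipschitz on $(\theta,x)$. Substituting the outer ODE (Eq.~\ref{eq:df-nn-adv}) and the inner search ODE (Eq.~\ref{eq:at-dynamics:single-x}) and applying Assumption~\ref{ass:opt-dir} together with Assumption~\ref{ass:lr}, I get width-decreasing bounds of the form $\|\theta_t-\theta_0\|,\,\|\mcatx_{t,s}-\mcatx\|=O_p(\tilde n^{-1/2})$ uniformly on $[0,T]\times[0,S]$. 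Feeding these back into the kernel formulas and invoking Theorem~\ref{thm:conv-init-informal} gives $\sup_{t,s}\|\hat\Theta_{\theta,t}-\Theta_{\theta}\|\xrightarrow{P}0$ and the same for $\hat\Theta_{x,t}$, closing a self-consistent a priori estimate.

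\textbf{Step 2 (coupling of adversarial searches).} For each fixed $t$, subtract Eq.~\ref{eq:linear-search-dynamics:2} from the nonlinear counterpart of Eq.~\ref{eq:at-dynamics:single-f-x} concatenated over $i$. The difference $\delta_s:=f_t(\mcatx_{t,s})-f^{\lin}_t(\mcatx^{\lin}_{t,s})$ satisfies an ODE whose forcing is controlled by (a) the ARK gap $\hat\Theta_{x,t}(\mcatx_{t,s},\mcatx_{t,s})-\hat\Theta_{x,0}(\mcatx,\mcatx)$, bounded via Step~1 and the Lipschitzness of $\hat\Theta_{x,0}$ in its arguments, and (b) the loss-gradient gap, bounded via Assumption~\ref{ass:loss} by $\|\delta_s\|$ itself. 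Since $\mcatlr(t)$ is continuous and hence uniformly bounded on $[0,T]$, Grönwall in $s$ yields $\sup_{t\in[0,T],s\in[0,S]}\|\delta_s\|\xrightarrow{P}0$, and integrating the inner input ODEs gives the same vanishing rate for $\|\mcatx_{t,S}-\mcatx^{\lin}_{t,S}\|$.

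\textbf{Step 3 (closing the outer loop) and main obstacle.} Subtracting Eq.~\ref{eq:linear-train-dynamics:2} from Eq.~\ref{eq:df-nn-adv} decomposes the residual $f_t(x)-f^{\lin}_t(x)$ into an NTK gap $\hat\Theta_{\theta,t}(x,\mcatx_{t,S})-\hat\Theta_{\theta,0}(x,\mcatx)$, controlled by Step~1 plus the Lipschitz-in-second-argument property combined with Step~2, and a loss-gradient gap controlled via Assumption~\ref{ass:loss} by $\|f_t(\mcatx_{t,S})-f^{\lin}_t(\mcatx^{\lin}_{t,S})\|$, which in turn is bounded by $\delta_S$ from Step~2 plus the self-referential quantity. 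A final Grönwall in $t$, using Assumption~\ref{ass:opt-dir} to absorb the accumulated weights $\int_0^T\|\partial_{f(\mcatx)}\mathcal{L}\|\,dt$, closes the self-referential bound and produces the claimed $\sup_{t\in[0,T]}\|f_t(x)-f^{\lin}_t(x)\|_2\xrightarrow{P}0$. The main obstacle I anticipate is Step~1: because the ARK is evaluated at the evolving $x_{i,t,s}$, its stability must be argued simultaneously with the stability of the inner trajectories, producing a circular dependence. I would break this with a bootstrap argument, proving that the estimates hold up to a stopping time at which a preset kernel-fluctuation threshold is first exceeded, and then showing via the a priori bounds that this stopping time is $\geq T$ with probability tending to one as $\tilde n\to\infty$. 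A secondary subtlety is that the linearized inner search uses the frozen Jacobian $\partial_x f_0(\mcatx)$ whereas the nonlinear search uses $\partial_x f_t(\mcatx_{t,s})$; quantifying this mismatch via the same kernel-stability estimate is essential for Step~2 to close cleanly.
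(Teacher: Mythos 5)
Your broad strategy---kernel stability, Gr{\"o}nwall in the inner search time $s$, then Gr{\"o}nwall in the outer training time $t$---matches the paper's actual proof, and your Steps~2 and~3 outline essentially the right decompositions. The problem is Step~1, where you claim $\|\theta_t-\theta_0\|,\ \|\mcatx_{t,s}-\mcatx\| = O_p(\tilde n^{-1/2})$. Both of these are false, and the second is qualitatively so. In the NTK regime $\|\theta_t-\theta_0\|_2$ stays $O_p(1)$; what vanishes is the \emph{layerwise rescaled} drift $\frac{1}{\sqrt{n_{l-1}}}\|W^{(l)}_t-W^{(l)}_0\|_F$ (Lemma~\ref{lem:scale-converge}). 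More importantly, the adversarial inputs $\mcatx_{t,s}$ sit at $O_p(1)$ distance from $\mcatx$---the perturbation size is governed by $\eta_i(t)$ and $S$, not by width (Lemma~\ref{lem:xts-delta-bd}). If $\|\mcatx_{t,s}-\mcatx\|$ really vanished with width, the theorem would collapse into the standard non-adversarial result of \citet{lee2019wide}.

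Because the perturbation does not vanish, a plain ``Lipschitzness in $(\theta,x)$'' estimate cannot show $\hat\Theta_{\theta,t}(\mcatx_{t,s},\mcatx_{t,s})\to\Theta_\theta(\mcatx,\mcatx)$: the kernel is being evaluated at a genuinely different input. The mechanism the paper uses, and which your plan does not supply, is a layerwise $\ell_\infty$ argument: even under an $O(1)$ input perturbation, the hidden pre-activations $h^{(l)}_t(x_{m,t,s})$ converge coordinate-wise to $h^{(l)}_0(x_m)$ (Claims~\ref{clm:inf-norm-bd:x-train} and~\ref{clm:inf-norm-bd:advx-train}), essentially because $\max_j|W^{(l+1)}_{i,j,0}|/\sqrt{n_l}$ concentrates at zero (Lemma~\ref{lem:max-gauss-zero-conv}). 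This coordinate-wise convergence is what forces $\phi'(h^{(l)})$, and hence the NTK and ARK evaluated along the adversarial trajectory, to their deterministic limits; it must be run as an induction over the layers (Claim~\ref{clm:detail-conv-xts}). Without something of this form, Step~1 does not close. A secondary point: the drift bound for the rescaled parameters involves a polynomial plus an exponential of the drift itself, so the paper resorts to a nonlinear Gr{\"o}nwall inequality (Lemma~\ref{lem:gronwall-nonlinear}); your bootstrap/stopping-time idea could plausibly stand in for this, but the linear Gr{\"o}nwall you invoke does not suffice at that step.
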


The overall proof is presented in Appendix~\ref{app:proof:equival}.

{\updcolor
\begin{remark}
Although our AT dynamics is formed based on the intuition that wide DNNs could be linear concerning slightly perturbed inputs, Theorem~\ref{thm:equival-linear} does not depend on this intuition.
It mainly depends on the large-width condition and also holds when large perturbations present.
\end{remark}
}

Finally, we calculate the closed-form AT dynamics for the linearized DNN $f^{\lin}_t$ (Theorem~\ref{thm:closed-form-at}) as well as the infinite-width DNN $f_t$ (Corollary~\ref{cor:closed-form-at-ensemble}) when squared loss $\mathcal L(f(x), y) := \frac{1}{2} \|f(x) - y\|_2^2$ is used.

\begin{theorem}[Close-form AT-dynamics of $f^{\mathrm{lin}}_t$ under squared loss]
\label{thm:closed-form-at}
Suppose Assumption~\ref{ass:lr} holds and the linearized DNN $f^{\mathrm{lin}}_t$ is trained following the AT dynamics formalized in Section~\ref{sec:at-dynamics} with squared loss $\mathcal L(f(x),y) := \frac{1}{2}\|f(x)-y\|_2^2$.
Then, for any $x \in \mathcal X$, we have
\begin{align}
    & f^{\mathrm{lin}}_t(x)
    = f_0(x) - \hat\Theta_{\theta,0}(x,\mcatx) \cdot \hat\Theta^{-1}_{\theta,0}(\mcatx, \mcatx) \cdot \left(I - e^{-\hat\Theta_{\theta,0}(\mcatx, \mcatx) \cdot \hat\Xi(t)} \right) \cdot (f_0(\mcatx) - \mcaty),
    \label{eq:closed-form-at}
\end{align}
where
$\hat\Xi(t) := \mathrm{Diag} ( \{\int_0^t \exp(\hat\Theta_{x,0}(x_i, x_i) \cdot \eta_{i}(\tau) \cdot S) \mathrm{d}\tau\}_{i=1}^M ) \in \mathbb{R}^{Mc \times Mc}$ is a regularization matrix.
\end{theorem}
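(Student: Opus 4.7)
The plan is to decouple the two nested gradient flows in the linearized AT dynamics: first solve the inner adversarial-search ODE in closed form at each fixed training time $t$, then substitute the result as a time-varying drift into the outer training ODE and integrate to recover $f^{\lin}_t(x)$ for any $x \in \mathcal{X}$.

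Specializing to squared loss gives $\partial^T_{f(\mcatx)}\mathcal{L}(f(\mcatx),\mcaty) = f(\mcatx) - \mcaty$. Plugging this into Eq.~(\ref{eq:linear-search-dynamics:2}) and fixing $t$, the vector $v_t(s) := f^{\lin}_t(\mcatx^{\lin}_{t,s}) - \mcaty$ satisfies the constant-coefficient linear ODE $\partial_s v_t(s) = A(t)\,v_t(s)$ with $A(t) := \hat\Theta_{x,0}(\mcatx,\mcatx)\,\mcatlr(t)$. Because $\hat\Theta_{x,0}(\mcatx,\mcatx)$ and $\mcatlr(t)$ are block diagonal with compatible $c\times c$ blocks, $A(t)$ is block diagonal with $i$-th block $\eta_i(t)\hat\Theta_{x,0}(x_i,x_i)$, and hence $B(t) := e^{A(t)S} = \mathrm{Diag}(\{e^{\eta_i(t)\hat\Theta_{x,0}(x_i,x_i)S}\}_{i=1}^M)$. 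Since $v_t(0) = f^{\lin}_t(\mcatx) - \mcaty$, solving at $s=S$ yields the inner closed form $f^{\lin}_t(\mcatx^{\lin}_{t,S}) - \mcaty = B(t)\,(f^{\lin}_t(\mcatx) - \mcaty)$.

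Substituting this into Eq.~(\ref{eq:linear-train-dynamics:2}) with $x = \mcatx$ and setting $u(t) := f^{\lin}_t(\mcatx) - \mcaty$ reduces the outer dynamics to the time-varying linear ODE $\partial_t u(t) = -\hat\Theta_{\theta,0}(\mcatx,\mcatx)\,B(t)\,u(t)$ with $u(0) = f_0(\mcatx) - \mcaty$. Under Assumption~\ref{ass:lr}, $\hat\Xi(t)$ is $C^1$ with $\partial_t\hat\Xi(t) = B(t)$, and I would verify by direct differentiation that $u(t) = e^{-\hat\Theta_{\theta,0}(\mcatx,\mcatx)\,\hat\Xi(t)}\,u(0)$ solves the ODE. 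Integrating Eq.~(\ref{eq:linear-train-dynamics:2}) from $0$ to $t$ for general $x$ then yields $f^{\lin}_t(x) - f_0(x) = -\hat\Theta_{\theta,0}(x,\mcatx)\int_0^t B(\tau)\,e^{-\hat\Theta_{\theta,0}(\mcatx,\mcatx)\,\hat\Xi(\tau)}\,d\tau \cdot u(0)$, and recognizing the integrand as $\partial_\tau[-\hat\Theta_{\theta,0}^{-1}(\mcatx,\mcatx)\,e^{-\hat\Theta_{\theta,0}(\mcatx,\mcatx)\,\hat\Xi(\tau)}]$, the fundamental theorem of calculus produces exactly the formula in the theorem.

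The technical crux lies in justifying the matrix-exponential differentiation at the last two stages. Because $\hat\Theta_{\theta,0}(\mcatx,\mcatx)$ need not commute with $B(t)$, the naive identity $\partial_t e^{M(t)} = \dot M(t)\,e^{M(t)}$ is not directly available. What rescues the computation is that all members of the family $\{B(\tau)\}_\tau$ mutually commute---each is a block-diagonal exponential of scaled copies of the same blocks $\hat\Theta_{x,0}(x_i,x_i)$---so $\hat\Xi(t)$ commutes with $B(t)$, and the antiderivative recognition in the final integral goes through when one carefully expands the propagator and tracks the left/right placement of $\hat\Theta_{\theta,0}(\mcatx,\mcatx)$ throughout. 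Making this block-commutativity argument fully rigorous is the step I would expect to require the most attention.
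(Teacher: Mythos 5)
Your approach is genuinely different from the paper's. The paper (Appendix~\ref{app:proof:closed-form}) avoids differentiating a time-dependent matrix exponential entirely: it introduces a surrogate dynamics $f^{\mathrm{sur}}_v$ driven by a piecewise-constant learning-rate matrix, solves a constant-coefficient linear ODE exactly on each of $D$ sub-intervals, proves $f^{\mathrm{sur}}_t \to f^{\lin}_t$ as $D\to\infty$ via Gr{\"o}nwall and uniform continuity of $\mcatlr$ (Lemmas~\ref{lem:approx:x-training} and \ref{lem:approx:x-test}), and recovers $\hat\Xi(t)$ as a Darboux limit. Your inner-ODE solve, the reduction to $\partial_t u(t) = -\hat\Theta_{\theta,0}(\mcatx,\mcatx)\,B(t)\,u(t)$ with $B(t) := e^{\hat\Theta_{x,0}(\mcatx,\mcatx)\mcatlr(t)S}$, and the closing antiderivative step are all correct \emph{conditional on} the propagator $u(t) = e^{-\hat\Theta_{\theta,0}(\mcatx,\mcatx)\hat\Xi(t)}\,u(0)$.

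The gap is exactly where you flag it, but your proposed rescue does not close it. Writing $M(t) := -\hat\Theta_{\theta,0}(\mcatx,\mcatx)\hat\Xi(t)$, the identity $\partial_t e^{M(t)} = \dot M(t)\,e^{M(t)}$ requires $[M(t), \dot M(t)] = 0$, that is, $\hat\Theta_{\theta,0}(\mcatx,\mcatx)\hat\Xi(t)$ must commute with $\hat\Theta_{\theta,0}(\mcatx,\mcatx)B(t)$. The commutativity you invoke, $[\hat\Xi(t), B(t)]=0$, is true but not sufficient: $\hat\Theta_{\theta,0}(\mcatx,\mcatx)$ is a full matrix sandwiched between the two block-diagonal factors, and for block-diagonal $D_1, D_2$ and generic symmetric $K$ one has $(D_1 K D_2)_{ij} = D_{1,ii}\,K_{ij}\,D_{2,jj} \ne D_{2,ii}\,K_{ij}\,D_{1,jj} = (D_2 K D_1)_{ij}$ unless the ratios $D_{1,ii}/D_{2,ii}$ are independent of $i$. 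So ``tracking left/right placement'' does not make the obstruction vanish; your derivation actually yields the time-ordered (product-integral) exponential of $-\hat\Theta_{\theta,0}(\mcatx,\mcatx)B(\cdot)$, which coincides with $e^{-\hat\Theta_{\theta,0}(\mcatx,\mcatx)\hat\Xi(t)}$ only when the family $\{\hat\Theta_{\theta,0}(\mcatx,\mcatx)B(\tau)\}_\tau$ mutually commutes, for instance if $\mcatlr$ is constant in $t$ or if the products $\hat\Theta_{x,0}(x_i,x_i)\,\eta_i(\cdot)$ are the same for every $i$. For what it is worth, the paper's own Lemma~\ref{lem:sur-sol} relies on the same unstated hypothesis at \eqref{lem:sur-sol:eq5}, where $\prod_d e^{A_d}$ is replaced by $e^{\sum_d A_d}$; the surrogate construction changes the bookkeeping but not the commutativity requirement, so you should not expect a commutator-free route to the formula exactly as stated.
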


The proof is given in Appendix~\ref{app:proof:closed-form}.

\begin{corollary}
\label{cor:closed-form-at-ensemble}
Suppose all conditions in Theorems~\ref{thm:conv-init-informal}, \ref{thm:equival-linear}, \ref{thm:closed-form-at} hold.
Then, if there exists $\tilde n \in N^+$ such that $\min\{n_0, \cdots, n_L\} \geq \tilde n$ always holds,
we have for any $x \in \mathcal X$, as $\tilde n \rightarrow \infty$, $\left\{ \lim_{n_L\rightarrow\infty} \cdots \lim_{n_0\rightarrow\infty} \{ f_t(x), \ f^{\mathrm{lin}}_t(x) \} \right\} \xrightarrow{P} f^{\infty}_t(x)$, and
\begin{align}
    & \mathbb{E}_{\theta_0} [f^{\infty}_t(x)]
    = \Theta_{\theta}(x,\mcatx) \cdot \Theta^{-1}_{\theta}(\mcatx, \mcatx) \cdot \left(I - e^{-\Theta_{\theta}(\mcatx, \mcatx) \cdot \Xi(t)} \right) \cdot \mcaty,
\end{align}
where $\Xi(t) := \mathrm{Diag}( \{ \int_0^t \exp\left(\Theta_{x}(x_i, x_i) \cdot \eta_{i}(\tau) \cdot S\right) \mathrm{d}\tau \}_{i=1}^M ) \in \mathbb{R}^{Mc \times Mc}$ is a diagonal regularization matrix,
and $\Theta_{\theta}$ and $\Theta_{x}$ are kernel functions in the infinite-width limit.
\end{corollary}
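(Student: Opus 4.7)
The plan is to chain the three preceding theorems via the continuous mapping theorem and Slutsky's lemma, and then compute the expectation using the Gaussian initialization.

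First, Theorem~\ref{thm:closed-form-at} expresses $f^{\mathrm{lin}}_t(x)$ as a finite composition of $f_0(x)$, $f_0(\mcatx)$, the empirical kernel blocks $\hat\Theta_{\theta,0}(x,\mcatx)$ and $\hat\Theta_{\theta,0}(\mcatx,\mcatx)$, the regularization matrix $\hat\Xi(t)$, and the fixed targets $\mcaty$. Theorem~\ref{thm:conv-init-informal} provides $\hat\Theta_{\theta,0}\xrightarrow{P}\Theta_\theta$ and $\hat\Theta_{x,0}\xrightarrow{P}\Theta_x$ entrywise, so each of these matrix building blocks converges in probability to its deterministic infinite-width counterpart. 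Since matrix multiplication, matrix inversion on the open set of invertible matrices (where $\Theta_\theta(\mcatx,\mcatx)$ is assumed to lie), and the matrix exponential are continuous, and since Assumption~\ref{ass:lr} makes the integrand in $\hat\Xi(t)$ continuous in $\tau$ and stochastically bounded on $[0,t]$, the continuous mapping theorem gives $\hat\Xi(t)\xrightarrow{P}\Xi(t)$ and hence
\begin{equation*}
f^{\mathrm{lin}}_t(x) \xrightarrow{P} f^\infty_t(x) := f^\infty_0(x) - \Theta_\theta(x,\mcatx)\,\Theta_\theta^{-1}(\mcatx,\mcatx)\bigl(I - e^{-\Theta_\theta(\mcatx,\mcatx)\,\Xi(t)}\bigr)\bigl(f^\infty_0(\mcatx) - \mcaty\bigr),
\end{equation*}
where $f^\infty_0$ denotes the Gaussian-process (NNGP) limit of $f_0$. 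Combining with Theorem~\ref{thm:equival-linear}, which gives $\sup_{t\in[0,T]}\|f_t(x) - f^{\mathrm{lin}}_t(x)\|_2 \xrightarrow{P} 0$, Slutsky's lemma yields $\{f_t(x),\,f^{\mathrm{lin}}_t(x)\}\xrightarrow{P} f^\infty_t(x)$, establishing the first claim.

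For the expectation identity, observe that $\Theta_\theta$, $\Theta_x$, and $\Xi(t)$ are all deterministic at infinite width, so $\mathbb{E}_{\theta_0}$ only acts on $f^\infty_0$. Since each weight and bias entry at initialization is a zero-mean Gaussian and the output layer $h^{(L+1)}$ is linear in the outer weight matrix and bias (which are independent of the post-activations of layer $L$), one has $\mathbb{E}_{\theta_0}[f_0(x)]=0$ for every finite width, and this property is preserved in the NNGP limit. Substituting $\mathbb{E}_{\theta_0}[f^\infty_0(\cdot)]=0$ into the closed form above yields exactly the stated identity.

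I expect the main obstacle to be the rigorous continuous-mapping step for $\hat\Xi(t)$: it requires uniform-in-$\tau$ control of $\exp(\hat\Theta_{x,0}(x_i,x_i)\cdot\eta_i(\tau)\cdot S)$ as the widths vary, so that dominated convergence legitimately transfers the in-probability convergence of the integrand to the $\tau$-integral. This should be handled by combining Theorem~\ref{thm:conv-init-informal} with the continuity of $\eta_i$ on $[0,t]$ from Assumption~\ref{ass:lr} to obtain tightness of $\|\hat\Theta_{x,0}(x_i,x_i)\|$, and then bounding the matrix exponential in operator norm on the resulting high-probability event.
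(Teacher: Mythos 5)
Your proposal is correct and follows essentially the same route as the paper's one-line proof, which reads: ``The proof is completed by adopting Theorems~\ref{thm:conv-init-informal}, \ref{thm:equival-linear} and Lemma~\ref{lem:ref-jacot} into Theorem~\ref{thm:closed-form-at}.'' You unpack the same three ingredients — kernel convergence at initialization, the $f_t \approx f^{\mathrm{lin}}_t$ equivalence, and the centered-Gaussian (NNGP) limit of $f_0$ giving $\mathbb{E}_{\theta_0}[f^\infty_0]=0$ — and plug them into the closed form from Theorem~\ref{thm:closed-form-at}, simply being more explicit about the continuous-mapping/Slutsky step that the paper leaves implicit.
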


\begin{proof}
The proof is completed by adopting Theorems~\ref{thm:conv-init-informal}, \ref{thm:equival-linear} and Lemma~\ref{lem:ref-jacot} into Theorem~\ref{thm:closed-form-at}.
\end{proof}

\begin{remark}
Recall Remark~\ref{rmk:ntk-conv}, the infinite-width NTK function $\Theta_{\theta}$ in AT is exactly the same as that in standard training.
As a result, in practice $\Theta_{\theta}$ can be calculated by using the \texttt{Neural-Tangents} Python Library~\citep{novak2020neural} and the JAX autograd system~\citep{jax2018github}.
\end{remark}

\section{Robust Overfitting in Wide DNNs}
\label{sec:robust-overfit}

So far, we have shown that a wide DNN that is adversarially trained with squared loss can be approximated by its linearized counterpart which admits a closed-form AT dynamics.
Now, we leverage our theory to theoretically understand and mitigate robust overfitting in wide DNNs.
Throughout this section, the loss function is assumed to be the squared loss $\mathcal L(f(x), y) := \frac{1}{2} \|f(x) - y\|_2^2$.

\subsection{AT Degeneration Leads to Robust Overfitting}
\label{sec:robust-overfit-explain}

This section reveals a novel {\it AT degeneration} phenomenon that theoretically explains the mechanism behind deep robust overfitting.
Compared with existing theoretical studies on robust overfitting~\citep{donhauser2021interpolation,bombari2023beyond,zhang2023understanding,clarysse2023why,li2023clean}, our result has two significant advantages:
(1) it can explain why the gained robustness will gradually lose in long-term AT,
and (2) it applies to general deep neural network models.

We propose to study the AT dynamics of the linearized DNN instead of the original DNN since it is proved in Theorem~\ref{thm:equival-linear} that a wide DNN can be approximated by the linearized one.
Comparing the closed-form dynamics of the linearized DNN in AT (Eq.~(\ref{eq:closed-form-at}) in Theorem~\ref{thm:closed-form-at}) with that in standard training (Eq.~(\ref{eq:closed-form-clean})), one can find that the difference is AT introduces a time-dependent regularization matrix $\hat\Xi(t)$ (Theorem~\ref{thm:closed-form-at}) into the closed-form dynamics of standard training.
Thus, it can be deduced that the introduced matrix $\hat\Xi(t)$ fully captures the adversarial robustness of DNNs brought by AT.

To answer why the robustness captured by $\hat\Xi(t)$ will gradually degrade in long-term AT, without loss of generality, we first assume that the ARK $\hat\Theta_{x,0}(\mcatx, \mcatx)$ is positive definite.
Thereby, it can be decomposed as $\hat\Theta_{x,0}(\mcatx, \mcatx) := Q D Q^T$, where $Q$ is a block diagonal matrix consists of orthogonal blocks and $D$ is a diagonal matrix consists of positive diagonal entries.
Since $\mcatlr(t)$ is commutative with $\hat\Theta_{x,0}(\mcatx,\mcatx)$ and thus also with $Q$, the matrix $\hat\Xi(t)$ can be further decomposed as follows,
\begin{align}
    \hat\Xi(t)
    = \int_0^t \exp(Q D Q^T \cdot \mcatlr(\tau) \cdot S) \mathrm{d}\tau
    = \int_0^t Q \exp(D \mcatlr(\tau) S) Q^T \mathrm{d}\tau
    = Q A(t) Q^T \cdot a(t),
\end{align}
where $a(t) := \lambda_{\max} \{ \int_0^t \exp(D \mcatlr(\tau) S) \mathrm{d}\tau \}$ is a strictly increasing scale function
and $A(t) := \frac{1}{a(t)} \int_0^t \exp(D \mcatlr(\tau) S) \mathrm{d}\tau$ is a matrix that $\sup_{t\geq 0} \|Q A(t) Q^T\|_2 \leq 1$.
The here is to decouple the unbounded $a(t)$ from $\hat\Xi(t)$ and remain others being bounded, which can simplify our analysis.

Then, for the exponential term in the AT dynamics in Eq.~(\ref{eq:closed-form-at}), substituting $\hat\Xi(t)$ and we have
\begin{align}
    &e^{-\hat\Theta_{\theta,0}(\mcatx, \mcatx) \cdot \hat\Xi(t)}
    = Q A(t)^{-\frac{1}{2}} \cdot \exp( - A(t)^{\frac{1}{2}} Q^T \cdot \hat\Theta_{\theta,0}(\mcatx, \mcatx) \cdot Q A(t)^{\frac{1}{2}} \cdot a(t) ) \cdot A(t)^{\frac{1}{2}} Q^T,
    \label{eq:intermediate-exp-term}
\end{align}
where the calculation details is given in Appendix~\ref{app:robust-overfit:calc-detail}.
We further assume that:
(1) the adversarial perturbation scale is small enough such that the symmetric $A(\infty)^{\frac{1}{2}} Q^T  \hat\Theta_{\theta,0}(\mcatx, \mcatx)  Q A(\infty)^{\frac{1}{2}}$ stays positive definite,
and (2) $a(t) \rightarrow\infty$ as $t\rightarrow\infty$.
Under the first assumption, we have
$A(\infty)^{\frac{1}{2}} Q^T  \hat\Theta_{\theta,0}(\mcatx, \mcatx)  Q A(\infty)^{\frac{1}{2}} := Q' D' {Q'^{T}}$
where $Q'$ is an orthogonal matrix and $D'$ is a diagonal matrix consisting of positive diagonal entries.
Combined with the second one, we have
\begin{align}
    \exp(-A(\infty)^{\frac{1}{2}} Q^T \cdot \hat\Theta_{\theta,0}(\mcatx, \mcatx) \cdot Q A(\infty)^{\frac{1}{2}} \cdot a(\infty))
    = Q' e^{ -D' a(\infty)} {Q'^{T}}
    = 0,
\end{align}
where the calculation is presented in Appendix~\ref{app:robust-overfit:calc-detail}.
As a result,
\begin{align}
    \lim_{t\rightarrow \infty} e^{-\hat\Theta_{\theta,0}(\mcatx, \mcatx) \cdot \hat\Xi(t)}
    = Q A(\infty)^{-\frac{1}{2}} \cdot 0 \cdot A(\infty)^{\frac{1}{2}}Q^T
    = 0.
    \label{eq:adk-disappear}
\end{align}
Eq.~(\ref{eq:adk-disappear}) indicates that in a long-term AT, the regularization matrix $\hat\Xi(t)$ which captures robustness brought by AT will gradually fade away.
Moreover, when at the infinite training time limit, the AT dynamics will converge to $f_0(x) - \hat\Theta_{\theta,0}(x,\mcatx) \cdot \hat\Theta^{-1}_{\theta,0}(\mcatx, \mcatx) \cdot (f_0(\mcatx) - \mcaty)$, which is exactly the same limit as that of the standard training dynamics given in Eq.~(\ref{eq:closed-form-clean}).
{\updcolor Notice that the analysis up to now relies on that the adversarial perturbation is small such that the matrix in Eq.~(\ref{eq:intermediate-exp-term}) is positive definite when $t = \infty$. 
Please refer to Appendix~\ref{app:large-perturb-discuss} for further discussion when the perturbation is large.}

In conclusion, the analysis in this section suggests a novel \textit{\textbf{AT degeneration}} phenomenon that in a long-term AT, the impact brought by AT will graduate disappear and the adversarially trained DNN will eventually degenerate to that obtained without AT.
The AT degeneration phenomenon clearly illustrates the mechanism behind robust overfitting in DNNs.
It can also explain the empirical finding in \citet{rice2020overfitting} that early-stop can significantly mitigate robust overfitting:
it is because early-stop can effectively preserve the regularization matrix $\hat\Xi(t)$ brought by AT.

\begin{algorithm}[t]
\caption{Adv-NTK (Solving Eq.~(\ref{eq:adv-ntk-objective}) with SGD and GradNorm)}
\label{algo:advntk}
\begin{algorithmic}[1]
\Require
Training set $\mathcal{D}$, validation set size $M_{\mathrm{val}}$, learning rate $\zeta$, training iteration $T$, PGD function for finding adversarial validation data.
\Ensure An infinite-width adversarially robust DNN.
    \State Randomly separate $\mathcal D$ into subsets $\mathcal D_{\mathrm{opt}}$ and $\mathcal D_{\mathrm{val}}$ such that $| \mathcal D_{\mathrm{val}} | = M_{\mathrm{val}}$.
    \State Initialize trainable parameter $\varpi_0 \in \mathbb{R}^{|\mathcal D_{\mathrm{val}}| \cdot c}$ with zeros.
    \For{$t$ \textbf{in} $1, \cdots, T$}
        \State Sample a minibatch $(x, y) \sim \mathcal D_{\mathrm{val}}$.
        \State $x' \leftarrow \mathrm{PGD}(x,y,f_{\varpi_{t-1}})$ 
        \Comment{Finding adversarial validation examples.}
        \State $g_t \leftarrow \partial_{\varpi} \frac{1}{2} \| f_{\varpi_{t-1}}(x') - y \|_2^2$
        \State $\varpi_t \leftarrow \varpi_{t-1} - \zeta \cdot \frac{g_t}{\|g_t\|_2}$
        \Comment{Update model parameter via SGD and $\ell_2$-GardNorm.}
    \EndFor
    \State \Return $f_{\varpi_T}$
\end{algorithmic}
\end{algorithm}

\subsection{Infinite Width Adversarial Training}

We have known that the robustness brought by AT can be characterized by $\hat\Xi(t)$ defined in Theorem~\ref{thm:closed-form-at}.
Then, it is natural to ask if one can directly optimize $\hat\Xi(t)$ to mitigate robust overfitting.
Since $\hat\Xi(t)$ is a $Mc \times Mc$ block diagonal matrix, optimizing it requires maintaining $M c^2$ variables and is computationally costly.
Fortunately, Corollary~\ref{cor:closed-form-at-ensemble} indicates that in the infinite-width limit, matrix $\hat\Xi(t)$ will converge to a diagonal matrix $\Xi(t)$ where only $Mc$ variables need to be maintained.
Based on the observation, we propose \textit{\textbf{Adv-NTK}}, the first AT algorithm for infinite-width DNNs.

Specifically, for a given training set $\mathcal D$, we separate it into two disjoint subsets $\mathcal D_{\mathrm{opt}}$ and $\mathcal D_{\mathrm{val}}$, in which $\mathcal D_{\mathrm{opt}}$ is for constructing the infinite width DNN while $\mathcal D_{\mathrm{val}}$ is a validation set for model selection.
Then, the infinite-width DNN that will be trained in Adv-NTK is constructed as follows based on the infinite-width DNN defined as Eq.~(\ref{eq:closed-form-ensemble-inf-clean}) in Corollary~\ref{cor:closed-form-at-ensemble},
\begin{align}
    f_{\varpi}(x) = \Theta_{\theta}(x, \mcatx_{\mathrm{opt}}) \cdot \Theta^{-1}_{\theta}(\mcatx_{\mathrm{opt}}, \mcatx_{\mathrm{opt}}) \cdot \left( I - e^{-\Theta_{\theta}(\mcatx_{\mathrm{opt}}, \mcatx_{\mathrm{opt}}) \cdot \mathrm{Diag}(\varpi)} \right) \cdot \mcaty_{\mathrm{opt}},
    \quad \forall x \in \mathcal X,
    \label{eq:adv-ntk-model}
\end{align}
where $\varpi \in \mathbb{R}^{|\mathcal D_{\mathrm{opt}}| \cdot c}$ is the trainable parameter in Adv-NTK, $\Theta_{\theta}$ is the NTK function at the infinite-width limit (see Theorem~\ref{thm:conv-init-informal}), and $\mcatx_{\mathrm{opt}}$ and $\mcaty_{\mathrm{opt}}$ are concatenations of features and labels in the subset $\mathcal D_{\mathrm{opt}}$.
Note that the parameter $\varpi$ consists exactly of the diagonal entries of the diagonal matrix $\Xi(t)$.
Then, the Adv-NTK algorithm aims to enhance the adversarial robustness of the infinite-width DNN $f_{\varpi}$ via solving the following minimax optimization problem,
\begin{align}
    \min_{\varpi} \frac{1}{|\mathcal D_{\mathrm{val}}|} \sum_{(x,y) \in \mathcal D_{\mathrm{val}}} \max_{\|x'-x\| \leq \rho} \frac{1}{2} \|f_{\varpi}(x') - y \|_2^2,
    \label{eq:adv-ntk-objective}
\end{align}
where $\rho > 0$ is the same adversarial perturbation radius as that in the standard AT (see Eq.~(\ref{eq:at-origin})), and the inner maximization problem can be solved via projected gradient descent (PGD)~\citep{madry2018towards}.
The above Eq.~(\ref{eq:adv-ntk-objective}) shares similar idea with the early stop method~\citep{rice2020overfitting}: they both use a validation set for model selection.
The difference is that early stop uses model robust accuracy on the validation set as an indicator to select the model parameter indirectly, while Eq.~(\ref{eq:adv-ntk-objective}) directly optimizes the model parameter with the validation set.
Finally, to further improve the training stability, Adv-NTK leverages stochastic gradient descent (SGD) and gradient normalization (GradNorm) to solve Eq.~(\ref{eq:adv-ntk-objective}).
The overall procedures are presented as Algorithm~\ref{algo:advntk}.

\section{Empirical Analysis of Adv-NTK}
\label{sec:exp}

This section empirically verifies the effectiveness of Adv-NTK on the CIFAR-10~\citep{krizhevsky2009learning} dataset.
We briefly introduce the experiment and leave details in Appendix~\ref{app:experiment}.
Please also refer to Appendix~\ref{sec:exp-result:svhn} for analogous experiments on SVHN~\citep{netzer2011reading} dataset.

\textbf{Loss \& Dataset \& adversarial perturbation.}
Squared loss $\mathcal L(f(x),y) := \frac{1}{2}\|f(x)-y\|_2^2$ is used in all experiments.
In every experiment, we randomly draw $12,000$ samples from the trainset for model training and use the whole test set to evaluate the robust generalization ability of the model.
Projected gradient descent (PGD; \citet{madry2018towards}) is used to perform adversarial perturbations in both training and evaluation.
We adopt $\ell_\infty$-perturbation with radius $\rho \in \{4/255, 8/255\}$. 

\textbf{Baseline methods.}
We adopt two existing methods for comparison.
They are:
(1) \textbf{AT}, which aims to enhance the robustness of finite-width DNNs via solving the minimax problem in Eq.~(\ref{eq:at-origin}),
and (2)~\textbf{NTK}, which directly obtains closed-form infinite-width DNNs from Eq.~(\ref{eq:closed-form-ensemble-inf-clean}) without training.

\textbf{Model architectures.}
We study two types of multi-layer DNNs, MLPs and CNNs.
Although our theory is originally for MLPs, it can be generalized for CNNs.
We use ``MLP-x'' and ``CNN-x'' to denote an MLP consisting of $x$ fully-connected (FC) layers and CNN consists $x$ convolutional layers and one FC layer, respectively.
The architecture depth $x$ is choosen from the set $\{3,4,5,8,10\}$.

\textbf{Model training.}
For \textbf{Adv-NTK}, we use $10,000$ data to construct the infinite-width DNN defined in Eq.~(\ref{eq:adv-ntk-model}) and $2,000$ data as the validation data for model training.
For \textbf{AT}, we use the overall $12,000$ data to train the model following Eq.~(\ref{eq:at-origin}).
For \textbf{NTK}, there is no need for model training and we use the overall $12,000$ data to construct the closed-form infinite-width DNN defined in Eq.~(\ref{eq:closed-form-ensemble-inf-clean}).

\textbf{Results.}
The robust test accuracy of models trained with different methods on CIFAR-10 is reported in Table~\ref{tab:robust-test-acc}.
We have two observations:
\textbf{Firstly}, Adv-NTK achieves significantly higher robust test accuracy than NTK in almost every experiment, which suggests that Adv-NTK can improve the robustness of infinite-width DNNs and $\Xi(t)$ indeed captures robustness brought by AT.
\textbf{Secondly}, in some experiments, Adv-NTK achieves higher performance than AT, which suggests that Adv-NTK has the potential to be used as an empirical tool to study adversarial robustness.
\textbf{In summary}, these results not only indicate the effectiveness of our algorithm but also justify our theoretical findings.

{\updcolor
We further plot the curves of robust test accuracy of finite-width DNNs along AT, as shown in Fig.~\ref{fig:rob-test-acc-curve:c10}.
We have two observations:
\textbf{Firstly,} in most of the cases, the robust test accuracy will first rapidly increase and then slowly decrease, which illustrates a clear robust overfitting phenomenon. Similar results with larger models and longer AT can also be found in \citet{rice2020overfitting}.
\textbf{Secondly,} although Adv-NTK can achieve comparable or higher performance than the final model obtained by AT, it could not beat the best model during AT.
we deduce that it is because the non-linearity of finite-width DNNs in AT can capture additional robustness, which will be left for future studies.
}

\begin{table}[t]
\centering
\caption{Robust test accuracy (\%) of models trained with different methods on CIFAR-10. 
Every experiment is repeated 3 times.
A high robust test accuracy suggests a strong robust generalizability.}
\vspace{-3mm}
\scriptsize
\begin{tabular}{c l c c c c c c}
\toprule
& \multirow{2}{0.5cm}{\centering Depth} & \multicolumn{3}{c}{Adv. Acc. ($\ell_\infty$; $\rho=4/255$) (\%)} & \multicolumn{3}{c}{Adv. Acc. ($\ell_\infty$; $\rho=8/255$) (\%)}\\
\cmidrule(lr){3-5} \cmidrule(lr){6-8}
& & AT & NTK & Adv-NTK (Ours) & AT & NTK & Adv-NTK (Ours) \\

\midrule

\multirow{5}{1.5cm}{\centering MLP-x \\ + \\ CIFAR-10 \\ (Subset 12K)}
& 3  & \textbf{30.64$\pm$0.42} &  9.93$\pm$0.19 & 27.35$\pm$0.66 & \textbf{26.93$\pm$0.07} &  2.81$\pm$0.27 & 23.45$\pm$0.80 \\
& 4  & \textbf{30.35$\pm$0.09} & 13.67$\pm$0.20 & 28.47$\pm$0.62 & \textbf{26.44$\pm$0.39} &  3.61$\pm$0.09 & 23.01$\pm$0.24 \\
& 5  & 28.70$\pm$0.45 & 16.24$\pm$0.26 & \textbf{29.04$\pm$0.38} & 21.05$\pm$0.21 &  4.74$\pm$0.43 & \textbf{21.90$\pm$0.60} \\
& 8  & 10.00$\pm$0.00 & 22.44$\pm$0.27 & \textbf{30.56$\pm$0.48} & 10.00$\pm$0.00 &  8.23$\pm$0.15 & \textbf{20.91$\pm$0.72} \\
& 10 & 10.00$\pm$0.00 & 24.43$\pm$0.37 & \textbf{30.91$\pm$0.12} & 10.00$\pm$0.00 & 10.04$\pm$0.25 & \textbf{20.21$\pm$0.21} \\
\midrule
\multirow{5}{1.5cm}{\centering CNN-x \\ + \\ CIFAR-10 \\ (Subset 12K)}
& 3  & 18.29$\pm$0.40 &  5.01$\pm$0.54 & \textbf{29.31$\pm$0.61} & 12.62$\pm$1.78  &  1.31$\pm$0.03 & \textbf{26.79$\pm$2.25} \\
& 4  & 19.30$\pm$0.26 &  6.23$\pm$0.69 & \textbf{31.04$\pm$0.55} & 10.39$\pm$0.20  &  1.68$\pm$0.14 & \textbf{25.57$\pm$0.56} \\
& 5  & 20.10$\pm$1.32 &  7.99$\pm$0.37 & \textbf{30.46$\pm$0.59} & 11.12$\pm$0.14  &  1.65$\pm$0.07 & \textbf{23.48$\pm$0.48} \\
& 8  & 12.68$\pm$4.64 & 13.07$\pm$0.26 & \textbf{28.26$\pm$0.54} & 10.00$\pm$0.00  &  2.55$\pm$0.18 & \textbf{16.14$\pm$0.83} \\
& 10 & 10.00$\pm$0.00 & 16.02$\pm$0.50 & \textbf{26.61$\pm$0.41} & 10.00$\pm$0.00  &  3.50$\pm$0.09 & \textbf{13.13$\pm$0.28} \\
\bottomrule
\end{tabular}
\label{tab:robust-test-acc}
\vspace{-2mm}
\end{table}

\begin{figure}[t]
    \centering
    \begin{subfigure}{0.22\linewidth}
        \centering
        \includegraphics[width=\linewidth]{./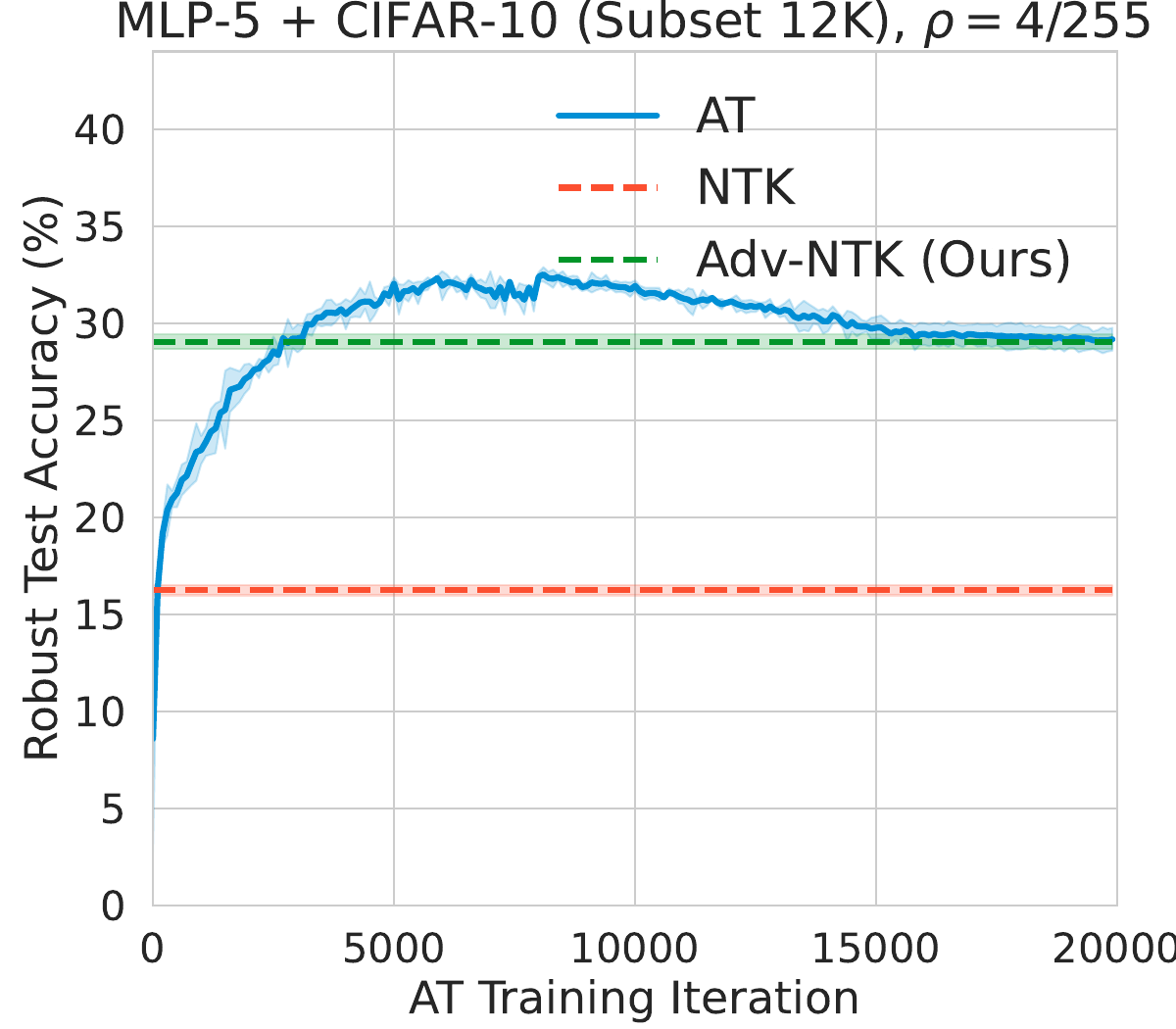}
    \end{subfigure}
    \hspace{1mm}
    \begin{subfigure}{0.22\linewidth}
        \centering
        \includegraphics[width=\linewidth]{./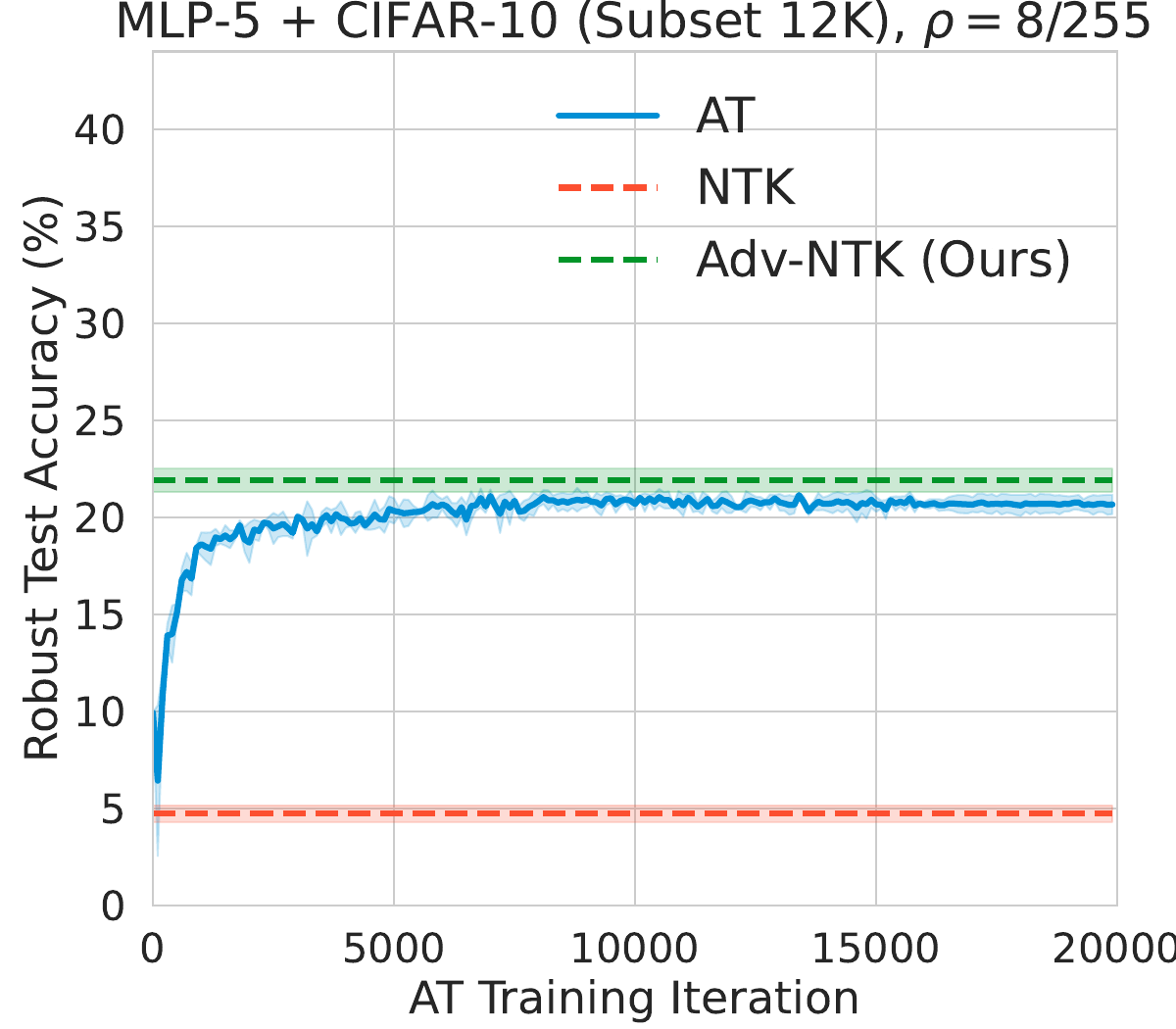}
    \end{subfigure}
    \hspace{1mm}
    \begin{subfigure}{0.22\linewidth}
        \centering
        \includegraphics[width=\linewidth]{./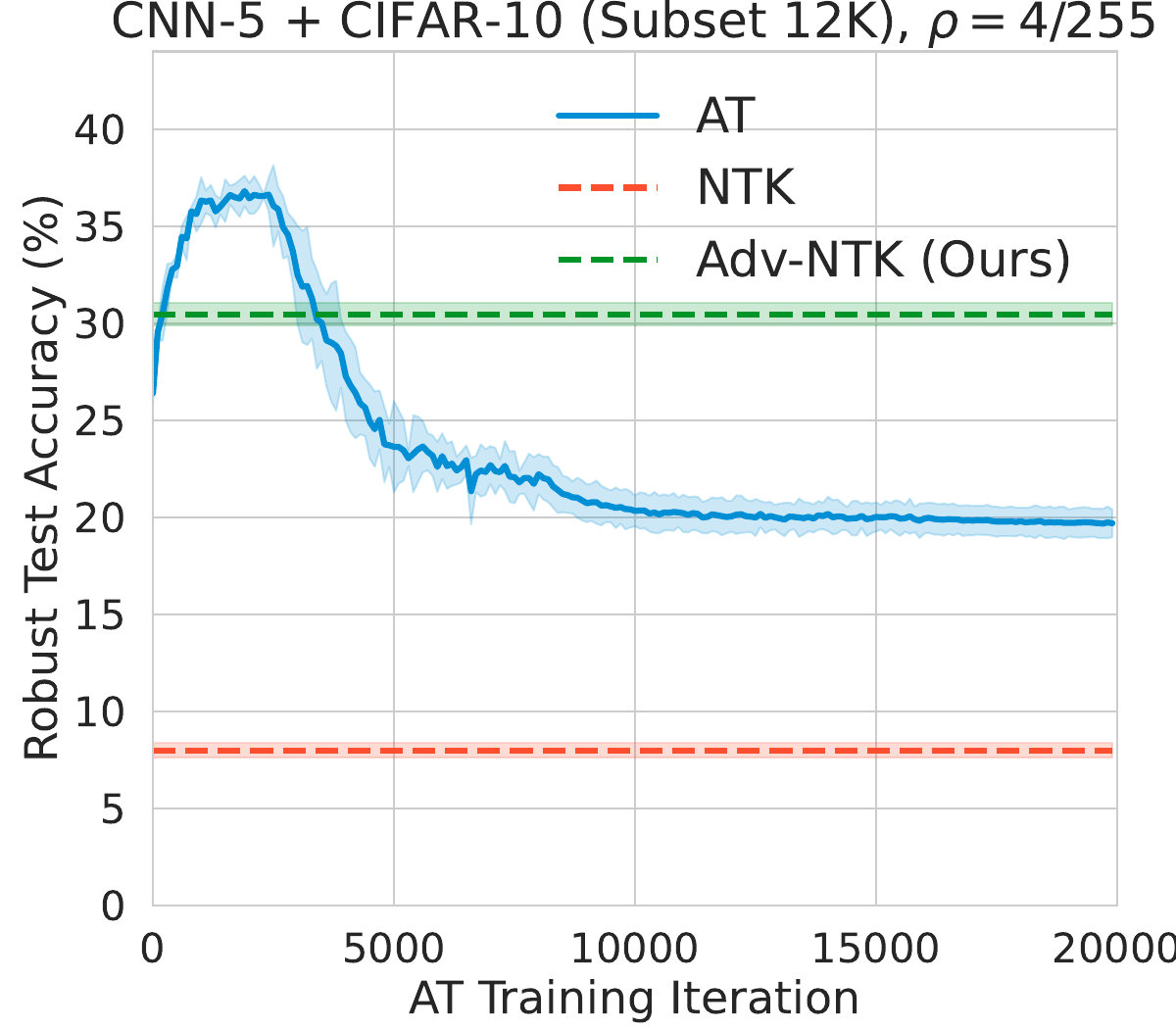}
    \end{subfigure}
    \hspace{1mm}
    \begin{subfigure}{0.22\linewidth}
        \centering
        \includegraphics[width=\linewidth]{./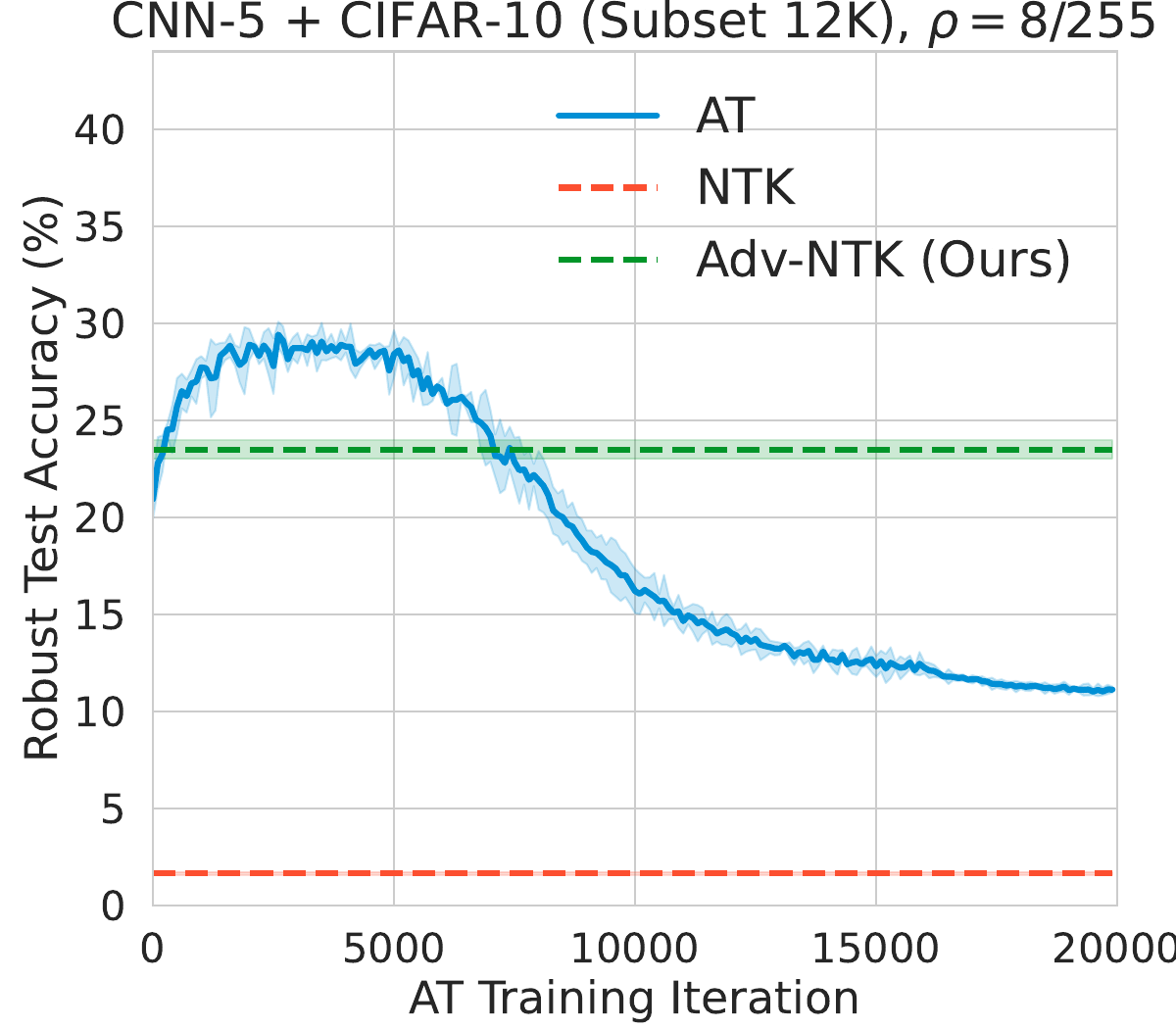}
    \end{subfigure}
    \caption{\updcolor
    The robust test accuracy curves of finite-width MLP-5/CNN-5 along AT on CIFAR-10.
    The robust test accuracy of infinite width DNNs learned by NTK and Adv-NTK are also plotted.
    }
    \label{fig:rob-test-acc-curve:c10}
    \vspace{-5mm}
\end{figure}

\section{Conclusions}
\label{sec:conclusion}

This paper presents a novel theoretical analysis of the robust overfitting of DNNs.
By extending the NTK theory, we proved that a wide DNN in AT can be strictly approximated by its linearized counterpart, and also calculated the closed-form AT dynamics of the linearized DNN when the squared loss is used.
Based on our theory, we suggested analyzing robust overfitting of DNNs with the closed-form AT dynamics of linearized DNNs and revealed a novel \textit{\textbf{AT degeneration}} phenomenon that a DNN in long-term AT will gradually degenerate to that obtained without AT.
Further, we designed the first AT algorithm for infinite-width DNNs, named \textit{\textbf{Adv-NTK}}, by directly optimizing the regularization brought by AT.
Empirical studies verified the effectiveness of our proposed method.

\section*{Acknowledgements}
Di Wang and Shaopeng Fu are supported in part by the baseline funding BAS/1/1689-01-01, funding from the CRG grand URF/1/4663-01-01, FCC/1/1976-49-01 from CBRC, and funding from the AI Initiative REI/1/4811-10-01 of King Abdullah University of Science and Technology (KAUST).  They are also supported by the funding of the SDAIA-KAUST Center of Excellence in Data Science and Artificial Intelligence (SDAIA-KAUST AI).

\bibliography{atntk}
\bibliographystyle{iclr2024_conference}

\newpage
\appendix

\section{Preliminaries}

\subsection{Additional Assumptions and Notations}

To avoid technicalities, we assume that all functions are differentiable throughout this paper.
Furthermore, the order of differentiation and integration is assumed to be interchangeable.

\subsection{Notations}
\label{app:notation}

This section presents the full list of notations.

\begin{table}[h]
\renewcommand\arraystretch{1.2}
\centering
\caption{List of notations.}
 \begin{tabular}{r | l}
\toprule
\textbf{Notations} & \textbf{Descriptions} \\
\midrule
$\oplus$ & Concatenation. \\
$\otimes$ & Kronecker product. \\
$\mathrm{Diag}(\cdot)$ & A diagonal matrix constructed from a given input. \\
$\mathrm{Vec}(\cdot)$ & Vectorization function. \\
$\partial_{x} f(x)$ & A $m \times n$ Jacobian matrix of the function, $f : \mathbb{R}^n \rightarrow \mathbb{R}^m$. \\
$\lambda_{\max}(\cdot)$ & The maximum eigenvalue of a given matrix. \\
$\xrightarrow{P}$ & Convergence in probability. See Definition~\ref{def:converge-in-prob} \\
$O_p(\cdot)$ & Big O notation in probability. See Definition~\ref{def:big-o-prob}. \\
$I_n$ & An $n \times n$ identity matrix. \\
$0_n$ & An $n$-dimensional all-zero vector. \\
$1_n$ & An $n$-dimensional all-one vector. \\
$[n]$ & An integer set $\{1,2,\cdots,n\}$. \\
$[l:r]$ & An integer set $\{l,l+1,\cdots,r\}$. \\
$T$ & The overall training time. \\
$S$ & The overall time usage for searching adversarial examples in each training step. \\
$W^{(l)}_t$ & Weight matrix in the $l$-th layer at the training time $t$. \\
$b^{(l)}_t$ & bias vector in the $l$-th layer at the training time $t$. \\
$h^{(l)}_t$ & Pre-activated output function in the $l$-th layer at the training time $t$. \\
$x^{(l)}_t$ & Post-activated output function in the $l$-th layer at the training time $t$. \\
$\hat\Theta_{\theta,t}$ & Empirical NTK at the training time $t$. \\
$\hat\Theta_{x,t}$ & Empirical ARK at the training time $t$. \\
$\Theta_{\theta}$ & Converged NTK in the infinite width limit. \\
$\Theta_{x}$ & Converged ARK in the infinite width limit. \\
\bottomrule
\end{tabular}
\end{table}

\subsection{Definitions}
\label{app:definition}

This section collects definitions that are omitted from the main text.

\begin{definition}[Lipschitz continuity]
A function $f: \mathbb{R}\rightarrow\mathbb{R}$ is called $K$-Lipschitz continuous if $|f(x) - f(x')| \leq K \cdot |x-x'|$ holds for any $x$ and $x'$ from the domain of $f$.
When $f$ is differentiable, we further have $|\partial_x f(x)| \leq K$ holds for any $x$ from the domain of $f$.
\end{definition}

\begin{definition}[Lipschitz smoothness]
A differentiable function $f: \mathbb{R}\rightarrow\mathbb{R}$ is called $K$-Lipschitz smooth if $|\partial_x f(x) - \partial_x f(x')| \leq K \cdot |x-x'|$ holds for any $x$ and $x'$ from the domain of $f$.
When $f$ is twice-differentiable, we further have $|\partial^2_x f(x)| \leq K$ holds for any $x$ from the domain of $f$.
\end{definition}

\begin{definition}[Convergence in probability]
\label{def:converge-in-prob}
For a set of random variables $X_n$ indexed by $n$ and an additional random variable $X$, we say $X_n$ converges in probability to $X$, written by $X_n \xrightarrow{P} X$, if $\lim_{n\rightarrow\infty} \mathbb{P}(|X_n - X| > \epsilon) = 0$ for any $\epsilon > 0$.
\end{definition}

\begin{definition}[$O_p(\cdot)$; Big O notation in probability]
\label{def:big-o-prob}
For two sets of random variables $X_n$ and $Y_n$ indexed by $n$, we say $X_n = O_p(Y_n)$ as $n\rightarrow\infty$ if for any $\delta > 0$, there exists a finite $\varepsilon > 0$ and a finite $N \in \mathbb{N}^+$ such that $\mathbb{P}(|X_n / Y_n| > \varepsilon) < \delta$, $\forall n > N$.
\end{definition}

\subsection{Technical Lemmas}

This section presents several technical lemmas that will be used in our proofs.

We first present a result that enables the efficient calculation of Kronecker products.

\begin{lemma}[c.f. Lemma 4.2.15 in \citet{horn1991topics}]
\label{lem:kronecker}
Suppose real matrices $A$ and $B$ have singular value decompositions $A = U_1 \Sigma_1 V_1^T$ and $B = U_2 \Sigma_2 V_2^T$.
Let $r_1 := \mathrm{Rank}(A)$ and $r_2 := \mathrm{Rank}(B)$.
Then, $A \otimes B = (U_1 \otimes U_2) \cdot (\Sigma_1 \otimes \Sigma_2) \cdot (V_1 \otimes V_2)^T$.
The nonzero singular values of $A \otimes B$ are the $r_1 r_2$ positive numbers $\{\lambda_i(A) \lambda_j(B) : 1 \leq i \leq r_1, \ 1 \leq j \leq r_2\}$, where $\lambda_i(\cdot)$ denotes the $i$-th largest singular value (including multiplicities) of a given matrix.
\end{lemma}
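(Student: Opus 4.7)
The plan is to verify the stated factorization directly via the mixed-product property of the Kronecker product and then read off the singular values from the entries of $\Sigma_1 \otimes \Sigma_2$. First, I would invoke the identity $(P_1 \otimes P_2)(Q_1 \otimes Q_2) = (P_1 Q_1) \otimes (P_2 Q_2)$ twice together with $(M_1 \otimes M_2)^T = M_1^T \otimes M_2^T$ to compute
\[
(U_1 \otimes U_2)(\Sigma_1 \otimes \Sigma_2)(V_1 \otimes V_2)^T = (U_1 \Sigma_1 V_1^T) \otimes (U_2 \Sigma_2 V_2^T) = A \otimes B,
\]
which establishes the claimed factorization.

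Next, I would verify that $U_1 \otimes U_2$ and $V_1 \otimes V_2$ are themselves orthogonal, again via the mixed-product rule: $(U_1 \otimes U_2)^T (U_1 \otimes U_2) = (U_1^T U_1) \otimes (U_2^T U_2) = I \otimes I = I$, and similarly for $V_1 \otimes V_2$. Thus the right-hand side of the identity above is already a valid SVD-like decomposition of $A \otimes B$, at least up to the canonical convention on the ordering of the diagonal factor.

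To identify the singular values, I would inspect the entries of $\Sigma_1 \otimes \Sigma_2$. Since $\Sigma_1$ and $\Sigma_2$ are rectangular diagonal with exactly $r_1$ and $r_2$ positive diagonal entries respectively, their Kronecker product is nonzero only at positions where both factors simultaneously contribute a diagonal entry, and the corresponding nonzero values are exactly the products $\{\lambda_i(A)\lambda_j(B) : 1\le i \le r_1,\ 1\le j \le r_2\}$. The subtlety is that $\Sigma_1 \otimes \Sigma_2$ is generally not ``diagonal'' in the canonical SVD sense (with entries descending along the main diagonal); this is remedied by inserting a pair of permutation matrices to reorder the rows of $U_1 \otimes U_2$ and the columns of $V_1 \otimes V_2$ accordingly, which preserves orthogonality of both factors since permutation matrices are orthogonal.

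The main obstacle is precisely this permutation step: one must specify the reordering that places the nonzero entries of $\Sigma_1 \otimes \Sigma_2$ onto the main diagonal of a rectangular matrix of matching shape, and verify that it absorbs cleanly into the left and right orthogonal factors without altering the multiset of nonzero entries. Once this bookkeeping is settled, the claim that the $r_1 r_2$ nonzero singular values of $A \otimes B$ are exactly $\{\lambda_i(A)\lambda_j(B)\}$ follows immediately by reading them off the rearranged diagonal, completing the proof.
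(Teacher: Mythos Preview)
Your argument is correct and is essentially the standard proof of this classical fact. The paper itself does not give a proof of this lemma: it is stated with a citation to Lemma~4.2.15 of \citet{horn1991topics} and used only to derive the immediate corollary $\|A\otimes B\|_2 = \|A\|_2\,\|B\|_2$. So there is no ``paper's own proof'' to compare against; your mixed-product/orthogonality verification, together with the permutation bookkeeping to bring $\Sigma_1\otimes\Sigma_2$ into canonical diagonal form, is exactly the argument one finds in the cited reference.
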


\begin{corollary}[$\ell_2$-Norm of Kronecker Product]
\label{cor:kronecker}
For any real matrices $A$ and $B$, we have that $\|A \otimes B\|_2 = \|A\|_2 \cdot \|B\|_2$.
\end{corollary}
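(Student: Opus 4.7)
The plan is to derive the identity as an immediate consequence of Lemma~\ref{lem:kronecker}. First, I would recall that for a real matrix $M$, the spectral norm $\|M\|_2$ equals its largest singular value $\lambda_1(M)$, so the claim reduces to showing that the largest singular value of $A \otimes B$ equals the product of the largest singular values of $A$ and $B$.

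Next, I would invoke Lemma~\ref{lem:kronecker}, which gives that the nonzero singular values of $A \otimes B$ are exactly the pairwise products $\{\lambda_i(A)\lambda_j(B) : 1 \leq i \leq r_1,\ 1 \leq j \leq r_2\}$, where $r_1 = \mathrm{Rank}(A)$ and $r_2 = \mathrm{Rank}(B)$. Since all singular values are nonnegative and the $\lambda_i(\cdot)$ are arranged in nonincreasing order, the maximum over $i,j$ of $\lambda_i(A)\lambda_j(B)$ is attained at $i=j=1$, giving $\lambda_1(A)\lambda_1(B) = \|A\|_2 \cdot \|B\|_2$.

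Finally, I would handle the degenerate case in which $A$ or $B$ is the zero matrix separately: then $A \otimes B = 0$ and both sides of the identity vanish, so the equality still holds. The main (and essentially only) obstacle is bookkeeping to make sure Lemma~\ref{lem:kronecker} is applied correctly to recover the spectral norm rather than some other norm, and this is already taken care of by the explicit characterization of singular values given in that lemma. No genuinely hard step is expected.
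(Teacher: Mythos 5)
Your proof is correct and follows the same route as the paper: read off the spectral norm of $A\otimes B$ from the singular-value characterization in Lemma~\ref{lem:kronecker}. You are somewhat more careful than the paper, which silently assumes both matrices are nonzero; your separate treatment of the zero-matrix case tidies up that edge case.
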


\begin{proof}
According to Lemma~\ref{lem:kronecker}, we have
$\|A \otimes B\|_2 = \lambda_{\max}(A) \cdot \lambda_{\max}(B) = \|A\|_2 \cdot \|B\|_2$.
\end{proof}

We then introduce two Gr{\"o}nwall-type inequalities.

\begin{lemma}[Gr{\"o}nwall's Inequality \citep{gronwall1919note,bellman1943stability}]
\label{lem:gronwall}
Let $u(t)$ and $f(t)$ be non-negative continuous functions defined on the interval $[a,b]$ that satisfies
\begin{gather*}
     u(t) \leq A + \int_a^t u(s) f(s) \mathrm{d}s, \quad \forall t \in [a,b],
\end{gather*}
then
\begin{gather*}
    u(t) \leq A \ \exp\left( {\int_a^t f(s) \mathrm{d}s} \right), \quad \forall t \in [a,b].
\end{gather*}
\end{lemma}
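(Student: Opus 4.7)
The plan is to reduce the integral inequality to a differential inequality by studying the right-hand side itself as an auxiliary function, and then apply an integrating-factor trick to obtain an exact bound. This is the classical strategy for Grönwall-type estimates and it keeps the proof short.

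First I would define the auxiliary function
\begin{equation*}
v(t) := A + \int_a^t u(s) f(s)\,\mathrm{d}s,
\end{equation*}
so that by hypothesis $u(t)\le v(t)$ for every $t\in[a,b]$, and $v$ is continuously differentiable with $v(a)=A$ and $v'(t) = u(t) f(t)$. Using the non-negativity of $f$ together with $u(t)\le v(t)$, I would deduce the differential inequality
\begin{equation*}
v'(t) \le v(t) f(t), \qquad t\in[a,b].
\end{equation*}

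Next I would introduce the integrating factor $\mu(t):=\exp\!\bigl(-\int_a^t f(s)\,\mathrm{d}s\bigr)$, which is positive and differentiable with $\mu'(t) = -f(t)\mu(t)$. Multiplying the differential inequality above by $\mu(t)$ and recognizing the product rule gives
\begin{equation*}
\frac{\mathrm{d}}{\mathrm{d}t}\bigl[v(t)\mu(t)\bigr] = \bigl(v'(t) - v(t)f(t)\bigr)\mu(t) \le 0,
\end{equation*}
so $v(t)\mu(t)$ is non-increasing on $[a,b]$. Integrating from $a$ to $t$ and using $v(a)\mu(a)=A$ yields $v(t) \le A \exp\!\bigl(\int_a^t f(s)\,\mathrm{d}s\bigr)$. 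Combining with $u(t)\le v(t)$ delivers the desired conclusion.

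The only genuine subtlety is the boundary case $A=0$: then the differential inequality is still valid, and the argument above still gives $v(t)\le 0$, which combined with $v(t)\ge 0$ (a consequence of the non-negativity of $u$ and $f$) forces $u(t)\equiv 0$, matching the claimed bound. If I wished to avoid even this mild case analysis, I could instead apply the $A>0$ version to $A+\varepsilon$ for arbitrary $\varepsilon>0$ and let $\varepsilon\downarrow 0$. No step requires the constant $A$ to have any sign constraint other than what is implicitly forced by the hypothesis, and no regularity beyond continuity of $u$ and $f$ is used, since $v$ is automatically $C^1$ by the fundamental theorem of calculus. I do not anticipate a real obstacle; the integrating-factor step is the only nontrivial idea.
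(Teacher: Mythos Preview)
Your proof is correct and is precisely the classical integrating-factor argument for Gr\"onwall's inequality. The paper does not supply its own proof of this lemma; it simply cites the original references \citep{gronwall1919note,bellman1943stability} and uses the result as a black box, so there is nothing to compare against.
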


\begin{lemma}[Adapted from Lemma 1 in \citet{lasalle1949uniqueness}]
\label{lem:gronwall-nonlinear}
Suppose that
\begin{enumerate}
\item
$g(x)$ is a non-negative function defined on the interval $[0,a]$,

\item
$u(x)$ is a function such that $\int_0^x u(t) \mathrm{d}t$ exists for all $x \in [0,a]$.

\item
$f(x)$ is a positive, non-decreasing continuous function defined on $[0, +\infty)$, and the integral $F(x):= \int_0^x \frac{\mathrm{d}t}{f(t)}$ exists for any $x \in [0, +\infty)$,

\item
The inequality $g(x) \leq \int_0^x u(t) f(g(t)) \mathrm{d}t$ holds for all $x \in [0,a]$.
\end{enumerate}
Then we have
\begin{gather*}
    F(g(x)) \leq \int_0^x u(t) \mathrm{d}t, \quad \forall x \in [0,a].
\end{gather*}
\end{lemma}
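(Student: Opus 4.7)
The plan is to convert the given implicit integral inequality into an explicit differential inequality via a monotonicity argument, and then recognize the resulting expression as the derivative of $F \circ G$ for a suitable majorant $G$. First I would introduce the auxiliary function $G(x) := \int_0^x u(t) f(g(t))\,\mathrm{d}t$ on $[0,a]$, which by hypothesis (4) satisfies $g(x) \le G(x)$, and also $G(0)=0$. By the fundamental theorem of calculus, $G$ is absolutely continuous on $[0,a]$ with $G'(x) = u(x)\,f(g(x))$ almost everywhere.

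Next I would exploit the monotonicity of $f$ to replace $g$ by $G$ inside $f$. Since $f$ is non-decreasing, $g(x)\le G(x)$ yields $f(g(x)) \le f(G(x))$, and therefore (using the standing assumption $u \ge 0$, which is the usual convention in LaSalle's formulation and is what makes the inequality meaningful when combined with $g \ge 0$ and $f>0$)
\begin{equation*}
G'(x) \;=\; u(x)\,f(g(x)) \;\le\; u(x)\,f(G(x)) \quad \text{a.e. on } [0,a].
\end{equation*}
Since $f(G(x)) > 0$, I can divide by it and obtain $\dfrac{G'(x)}{f(G(x))} \le u(x)$.

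Then I would integrate. By the chain rule combined with $F'(y)=1/f(y)$, the left-hand side is exactly $\dfrac{d}{dx} F(G(x))$ (almost everywhere; $F\circ G$ is absolutely continuous because $F$ is Lipschitz on any bounded interval in $[0,\infty)$ — indeed $F'=1/f$ is bounded on compact sets by the positivity and continuity of $f$ — and $G$ is absolutely continuous). Integrating from $0$ to $x$ and using $F(G(0))=F(0)=0$ gives $F(G(x)) \le \int_0^x u(t)\,\mathrm{d}t$. Finally, because $F$ is non-decreasing (its derivative $1/f$ is strictly positive) and $g(x)\le G(x)$, I conclude
\begin{equation*}
F(g(x)) \;\le\; F(G(x)) \;\le\; \int_0^x u(t)\,\mathrm{d}t,
\end{equation*}
which is the desired inequality.

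The main obstacle I anticipate is purely a regularity bookkeeping one: justifying that $G$ is absolutely continuous so that $G'$ exists a.e. with the expected formula, and that the chain-rule identity $\frac{d}{dx}F(G(x)) = F'(G(x))\,G'(x)$ holds a.e. and can be integrated back (this is where Lipschitz-ness of $F$ on compact subsets of $[0,\infty)$ is used, since $F' = 1/f$ is continuous and hence bounded there). The only other subtlety is confirming the sign convention on $u$; if $u$ is not assumed non-negative then the step dividing by $f(G(x))$ and subsequently integrating the inequality can fail, so I would either invoke this as an implicit hypothesis (matching LaSalle's original formulation) or restrict the argument to the set where $u\ge 0$ and handle the complementary set separately by noting it does not contribute to the right-hand side in the direction required.
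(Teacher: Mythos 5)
The paper itself does not prove this lemma; it is cited directly from \citet{lasalle1949uniqueness} as a known result, so there is no in-paper proof to compare against. Your argument is the standard proof of the Bihari--LaSalle inequality and it is correct: you majorize $g$ by the explicit function $G(x) = \int_0^x u(t) f(g(t))\,\mathrm{d}t$, use monotonicity of $f$ to pass from $f(g)$ to $f(G)$, recognize $G'/f(G)$ as the a.e.\ derivative of $F \circ G$, and integrate, finally using monotonicity of $F$ to go from $G$ back to $g$. Your handling of the two technical points is also right: $F \circ G$ is absolutely continuous because $F' = 1/f$ is continuous and bounded on the compact range of $G$ and $G$ is absolutely continuous, so the fundamental theorem of calculus may be applied; and the step $u(x) f(g(x)) \le u(x) f(G(x))$ genuinely requires $u \ge 0$, since for $u(x) < 0$ the monotonicity of $f$ would flip the inequality the wrong way. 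That nonnegativity hypothesis is implicit in LaSalle's original formulation but is indeed omitted from the paper's statement; in the paper's single application of this lemma (inside the proof of Lemma~\ref{lem:scale-converge}) the integrand plays the role of $u$ and equals $1 + \|\partial_{f(\mcatx)}\mathcal L(\cdot)\|_2 \ge 0$, so the omission is harmless there, but you are correct to flag it as a gap in the statement as written.
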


Also, we need the following Lemma to bound the norms of Gaussian random matrices.

\begin{lemma}[c.f. Corollary 5.35, \citet{vershynin2010introduction}]
\label{lem:matrix-norm}
Let A be an $N \times n$ matrix whose entries are independent standard normal random variables.
Then for every $t \geq 0$, with probability at least $1 - 2 \exp(-t^2/2)$ one has
\begin{gather*}
    \sqrt{N} - \sqrt{n} - t \leq \lambda_{\min}(A) \leq \lambda_{\max}(A) \leq \sqrt{N} + \sqrt{n} + t,
\end{gather*}
where $\lambda_{\min}(A)$ is the minimal eigenvalue of $A$ and $\lambda_{\max}(A)$ is the maximal eigenvalue of $A$.
\end{lemma}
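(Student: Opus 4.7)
The statement is a classical Gaussian concentration bound for extreme singular values of a rectangular matrix with i.i.d. standard normal entries. My plan is to combine two standard ingredients: (i) a mean bound obtained via Gaussian process comparison, and (ii) Lipschitz concentration of the singular-value map under the Frobenius metric.

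First, I would invoke the variational characterizations $\lambda_{\max}(A)=\sup_{\|u\|=\|v\|=1} u^{T}Av$ and $\lambda_{\min}(A)=\inf_{\|u\|=1}\sup_{\|v\|=1} u^{T}Av$ (the latter when $N\ge n$; the case $N<n$ is symmetric). For fixed unit vectors $u\in\mathbb{R}^{N}$ and $v\in\mathbb{R}^{n}$, the scalar $u^{T}Av$ is a standard Gaussian by rotational invariance, and the covariance structure of the Gaussian process $(u,v)\mapsto u^{T}Av$ dominates that of the auxiliary process $(u,v)\mapsto g^{T}u+h^{T}v$ where $g\sim\mathcal{N}(0,I_{N})$ and $h\sim\mathcal{N}(0,I_{n})$ are independent. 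Applying Gordon's minimax comparison theorem (and Slepian's inequality for the $\sup\sup$ part) then yields the mean bounds
\begin{equation*}
\mathbb{E}\,\lambda_{\max}(A)\le \sqrt{N}+\sqrt{n},\qquad \mathbb{E}\,\lambda_{\min}(A)\ge \sqrt{N}-\sqrt{n}.
\end{equation*}

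Second, I would observe that the maps $A\mapsto\lambda_{\max}(A)$ and $A\mapsto\lambda_{\min}(A)$ are both $1$-Lipschitz with respect to the Frobenius norm on $\mathbb{R}^{N\times n}$: for any two matrices $A,B$, Weyl's perturbation inequality gives $|\lambda_{\max}(A)-\lambda_{\max}(B)|\le \|A-B\|_{2}\le \|A-B\|_{F}$, and similarly for $\lambda_{\min}$. Since the entries of $A$ are i.i.d.\ standard Gaussians, the Gaussian concentration (Borell--Tsirelson--Ibragimov--Sudakov) inequality applied to a $1$-Lipschitz functional gives, for every $t\ge 0$,
\begin{equation*}
\mathbb{P}\bigl(|\lambda_{\max}(A)-\mathbb{E}\lambda_{\max}(A)|\ge t\bigr)\le 2e^{-t^{2}/2},
\end{equation*}
and the same tail bound for $\lambda_{\min}(A)$. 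Combining these one-sided deviations with the mean bounds from the previous step yields the desired two-sided estimate with probability at least $1-2\exp(-t^{2}/2)$ on each side; a single union-free version of the statement is obtained because the lower tail of $\lambda_{\min}$ and the upper tail of $\lambda_{\max}$ are what enter the stated sandwich, and the conclusion follows by a direct application of the one-sided bound in each case.

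The main obstacle is the Gaussian process comparison step, since Gordon's inequality is not entirely elementary. An alternative I would fall back on, if a self-contained argument is preferred, is an $\varepsilon$-net discretization: cover the unit sphere in $\mathbb{R}^{n}$ by an $\varepsilon$-net $\mathcal{N}$ of cardinality at most $(1+2/\varepsilon)^{n}$, apply the sharp chi-squared tail bound to $\|Ax\|_{2}$ for each fixed $x\in\mathcal{N}$, and take a union bound; a standard approximation shows $\lambda_{\max}(A)\le(1-\varepsilon)^{-1}\sup_{x\in\mathcal{N}}\|Ax\|_{2}$ and a two-sided version controls $\lambda_{\min}$. This route gives the same order of magnitude with slightly worse numerical constants, but avoids any external comparison theorem. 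Either way, since the statement is quoted directly from Vershynin's corollary, in the paper itself I would simply cite the reference and rely on the above as justification.
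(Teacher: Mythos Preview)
Your proposal is correct and follows the standard argument from Vershynin's text (Gordon/Slepian comparison for the mean plus Gaussian Lipschitz concentration for the deviation). The paper itself does not give a proof of this lemma: it is stated as a technical lemma cited directly from \citet{vershynin2010introduction} and used as a black box, so your write-up already exceeds what the paper provides.
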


Finally, we prove the following convergence in probability result for centered Gaussian variables.

\begin{lemma}
\label{lem:max-gauss-zero-conv}
Suppose $Y_n = \max\{|X_{n,1}|, |X_{n,2}|, \cdots, |X_{n,n}|\}$, where $X_{n,i} \sim \mathcal N(0, \frac{\sigma_n^2}{n^{\nu}})$ and $\sigma > 0$ and $\nu>0$ are constants.
Then, we have $Y_n \xrightarrow{P} 0$.
\end{lemma}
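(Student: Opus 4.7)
The plan is to establish the claim by a union bound combined with a standard Gaussian tail estimate, exploiting the fact that the variance $\sigma^2/n^{\nu}$ shrinks polynomially in $n$ while the number of variables grows only linearly.

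First, I would fix an arbitrary $\epsilon > 0$ and aim to show $\mathbb{P}(Y_n > \epsilon) \to 0$. The event $\{Y_n > \epsilon\}$ equals $\bigcup_{i=1}^n \{|X_{n,i}| > \epsilon\}$, so a union bound gives
\begin{equation*}
\mathbb{P}(Y_n > \epsilon) \;\leq\; \sum_{i=1}^n \mathbb{P}(|X_{n,i}| > \epsilon) \;=\; n \cdot \mathbb{P}(|X_{n,1}| > \epsilon).
\end{equation*}
Since $X_{n,1} \sim \mathcal{N}(0, \sigma^2/n^{\nu})$, the random variable $Z := n^{\nu/2} X_{n,1}/\sigma$ is a standard Gaussian, and the classical tail inequality $\mathbb{P}(|Z| > u) \leq 2 e^{-u^2/2}$ applied with $u = \epsilon n^{\nu/2}/\sigma$ yields
\begin{equation*}
\mathbb{P}(|X_{n,1}| > \epsilon) \;\leq\; 2 \exp\!\left( -\frac{\epsilon^2 n^{\nu}}{2\sigma^2} \right).
\end{equation*}

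Combining the two bounds gives $\mathbb{P}(Y_n > \epsilon) \leq 2 n \exp\!\left( -\epsilon^2 n^{\nu}/(2\sigma^2) \right)$. For any $\nu > 0$ and any fixed $\epsilon > 0$, the super-polynomial decay $\exp(-c\, n^{\nu})$ (with $c = \epsilon^2/(2\sigma^2) > 0$) dominates the polynomial factor $n$, so the right-hand side tends to $0$ as $n \to \infty$. Since $\epsilon$ was arbitrary, this is exactly the definition of convergence in probability to $0$, and the lemma follows.

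There is no real obstacle here: the argument is a textbook application of union bound plus Gaussian tail. The only mild subtlety to flag explicitly in the write-up is that $n \exp(-c n^{\nu}) \to 0$ for every $\nu > 0$ (including $\nu \in (0,1)$), which is immediate from $\log n - c n^{\nu} \to -\infty$. No additional assumptions on independence among the $X_{n,i}$ are required, since the union bound does not use independence.
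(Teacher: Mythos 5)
Your proof is correct, but it takes a genuinely different route from the paper's. You control $\mathbb{P}(Y_n > \epsilon)$ from above via a union bound, $\mathbb{P}(Y_n > \epsilon) \leq n\,\mathbb{P}(|X_{n,1}| > \epsilon)$, and then invoke the standard sub-Gaussian tail $\mathbb{P}(|Z|>u)\leq 2e^{-u^2/2}$. The paper instead derives a (slightly sharper, $\tfrac{\sqrt{2}\sigma}{\sqrt{n^\nu\pi}\,\epsilon}e^{-n^\nu\epsilon^2/(2\sigma^2)}$) pointwise tail bound by multiplying the Gaussian density by $x/\epsilon \geq 1$ inside the integral, and then works with the complement: it writes $\mathbb{P}(Y_n \leq \epsilon) = \mathbb{P}(\forall i,\ |X_{n,i}|\leq\epsilon) \geq (1-p)^n$ and shows this tends to $1$ via $\bigl(1-p_n\bigr)^n = e^{-n p_n(1+o(1))} \to 1$. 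That product step implicitly assumes independence of the $X_{n,i}$, which holds in the paper's application (Gaussian weight entries) but is not stated in the lemma. Your union-bound route avoids any independence assumption and is also shorter; the paper's route gets a marginally tighter per-coordinate tail but at no asymptotic benefit since either tail already decays faster than any polynomial. Both arguments are valid, and your observation that independence is not needed is a genuine (small) improvement over the stated lemma.
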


\begin{proof}
For any $X_{n,i}$ and $\epsilon > 0$, we have
\begin{align*}
    \mathbb{P}(|X_{n,i}| > \epsilon   )
    &= 2 \cdot \int_{\epsilon}^{+\infty} \frac{\sqrt{n^\nu}}{\sigma \sqrt{2\pi}} \cdot \exp\left(-\frac{n^\nu x^2}{2\sigma^2}\right) \mathrm{d}x \\
    &\leq 2 \cdot \int_{\epsilon}^{+\infty} \frac{x}{\epsilon} \cdot \frac{\sqrt{n^\nu}}{\sigma \sqrt{2\pi}} \cdot \exp\left(-\frac{n^\nu x^2}{2\sigma^2}\right) \mathrm{d}x \\
    &= \frac{-\sqrt{2} \sigma}{\sqrt{n^\nu \pi} \epsilon} \cdot \left[ \exp\left(-\frac{n^\nu x^2}{2\sigma^2}\right) \right]_{\epsilon}^{+\infty} \\
    &= \frac{\sqrt{2} \sigma}{\sqrt{n^\nu \pi} \epsilon} \cdot \exp\left(-\frac{n^\nu \epsilon^2}{2\sigma^2}\right),
\end{align*}
which means
\begin{align*}
    \mathbb{P}(|X_{n,i}| \leq \epsilon   )
    = 1 - \mathbb{P}(|X_{n,i}| > \epsilon   )
    \geq 1 - \frac{\sqrt{2} \sigma}{\sqrt{n^\nu \pi} \epsilon} \cdot \exp\left(-\frac{n^\nu \epsilon^2}{2\sigma^2}\right).
\end{align*}
Therefore,
\begin{align*}
    \mathbb{P}(Y_n \leq \epsilon | \sigma)
    = \mathbb{P}(|X_{n,i}| \leq \epsilon, \forall i\in[n])
    \geq \left( 1 - \frac{\sqrt{2} \sigma}{\sqrt{n^\nu \pi} \epsilon} \cdot \exp\left(-\frac{n^\nu \epsilon^2}{2\sigma^2}\right) \right)^n.
\end{align*}
Since $\lim_{n\rightarrow\infty} \frac{\sqrt{2} \sigma}{\sqrt{n^\nu \pi \epsilon}} \cdot \exp\left(-\frac{n^\nu \epsilon^2}{2\sigma^2}\right) = 0$, we thus have
\begin{align*}
    \lim_{n\rightarrow\infty} \mathbb{P}(Y_n > \epsilon)
    &= 1 - \lim_{n\rightarrow\infty} \left( 1 - \frac{\sqrt{2} \sigma}{\sqrt{n^\nu \pi} \epsilon} \cdot \exp\left(-\frac{n^\nu \epsilon^2}{2\sigma^2}\right) \right)^n \\
    &= 1 - \lim_{n\rightarrow\infty} \left(\frac{1}{e}\right)^{ \frac{\sqrt{2} \sigma}{\sqrt{n^\nu \pi} \epsilon} \cdot \exp\left(-\frac{n^\nu \epsilon^2}{2\sigma^2}\right) \cdot n } \\
    &= 1 - \left( \frac{1}{e} \right)^0 = 0,
\end{align*}
which indicates $Y_n \xrightarrow{P} 0$.

The proof is completed.
\end{proof}

\section{Proof of Theorem~\ref{thm:conv-init-informal}}
\label{app:proof:conv-init}

This section presents the proof of Theorem~\ref{thm:conv-init-informal}.

We first introduce the following Lemma~\ref{lem:ref-jacot}.

\begin{lemma}[Proposition~1 in \citet{jacot2018neural}]
\label{lem:ref-jacot}
As the network widths $n_0, \cdots, n_L \rightarrow \infty$ sequentially, the output functions $h^{(l)}_{0,i}$ for $i = 1,\cdots,n_l$ at the $l$-th layer tend to centered Gaussian processes of covariance $\Sigma^{(l)}$, where $\Sigma^{(l)}$ is defined recursively by:
\begin{align*}
    &\Sigma^{(1)}(x, x') = \lim_{n_0\rightarrow \infty} \frac{\sigma^2_W}{n_0} x^T x' + \sigma^2_b, \\
    &\Sigma^{(l+1)}(x, x') = \sigma^2_W \mathbb{E}_{f \sim \mathcal{GP}(0, \Sigma^{(l)})} [\phi(f(x)) \phi(f(x'))] + \sigma^2_b,
\end{align*}
where $\mathcal{GP}(0, \Sigma^{(l)})$ denotes a centered Gaussian process with covariance $\Sigma^{(l)}$.
\end{lemma}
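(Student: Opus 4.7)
The plan is to proceed by induction on the layer index $l$. For the base case ($l=1$), the pre-activation is
$$h^{(1)}_{0,i}(x) = \frac{1}{\sqrt{n_0}} \sum_{j=1}^{n_0} W^{(1)}_{0,ij} x_j + b^{(1)}_{0,i},$$
which at any finite width $n_0$ is already an exact centered Gaussian process on $\mathcal X$ because it is a linear combination of jointly Gaussian weights and biases. Direct computation yields covariance $\frac{\sigma_W^2}{n_0} x^T x' + \sigma_b^2$ between two inputs, and independence across the output index $i$ is inherited from the independence of the rows of $W^{(1)}_0$ and of the entries of $b^{(1)}_0$. Passing $n_0 \to \infty$ then gives exactly $\Sigma^{(1)}$ as stated.

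For the inductive step, assume that after taking $n_0,\dots,n_{l-1}\to\infty$ sequentially, the functions $\{h^{(l)}_{0,j}\}_{j=1}^{n_l}$ are iid centered Gaussian processes with common covariance $\Sigma^{(l)}$. Write
$$h^{(l+1)}_{0,i}(x) = b^{(l+1)}_{0,i} + \frac{1}{\sqrt{n_l}} \sum_{j=1}^{n_l} W^{(l+1)}_{0,ij} \, \phi\!\left(h^{(l)}_{0,j}(x)\right).$$
Fix any finite collection of test points $\{x^{(1)},\dots,x^{(k)}\}\subset\mathcal X$ and consider the iid random vectors $V_j := \bigl(W^{(l+1)}_{0,ij}\,\phi(h^{(l)}_{0,j}(x^{(a)}))\bigr)_{a=1}^k$ indexed by $j$. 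Each $V_j$ has mean zero (by $\mathbb{E}[W^{(l+1)}_{0,ij}]=0$ and independence of $W^{(l+1)}_{0,ij}$ from $h^{(l)}_{0,j}$) and, by the induction hypothesis, covariance $\sigma_W^2\,\mathbb{E}_{f\sim\mathcal{GP}(0,\Sigma^{(l)})}[\phi(f(x^{(a)}))\phi(f(x^{(b)}))]$. The Lipschitz bound on $\phi$ in Assumption~\ref{ass:active}, together with Gaussian moments of $f$, guarantees the second moment (and in fact higher moments) needed to invoke the multivariate Lindeberg central limit theorem on $\frac{1}{\sqrt{n_l}}\sum_{j=1}^{n_l} V_j$ as $n_l\to\infty$. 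Adding the independent bias $b^{(l+1)}_{0,i}$ contributes $\sigma_b^2$ to the diagonal of the limit covariance, producing exactly $\Sigma^{(l+1)}(x^{(a)},x^{(b)})$. Independence across output indices $i$ again follows from the disjointness of the rows of $W^{(l+1)}_0$. Since all finite-dimensional marginals converge to the corresponding Gaussian with covariance $\Sigma^{(l+1)}$, the process $h^{(l+1)}_{0,i}$ converges weakly to $\mathcal{GP}(0,\Sigma^{(l+1)})$.

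The main obstacle is the careful management of the \emph{sequential} (rather than joint) infinite-width limits. One must justify that, after sending $n_0,\ldots,n_{l-1}\to\infty$, the nonlinearity $\phi(h^{(l)}_{0,j})$ defines a bona fide iid family of random functions whose law is the pushforward of $\mathcal{GP}(0,\Sigma^{(l)})$ through $\phi$, so that the CLT at the next stage can be applied with precisely this limiting covariance rather than with a mixed or partially limited object. This requires uniform integrability when exchanging the order of limit and expectation in $\mathbb{E}[\phi(f(x))\phi(f(x'))]$, which is provided by Lipschitz continuity of $\phi$ combined with Gaussian tail bounds. A secondary subtlety is to verify joint (not merely marginal) convergence over arbitrary finite sets of test inputs, which is what actually upgrades pointwise CLT conclusions to convergence in law as a Gaussian \emph{process}.
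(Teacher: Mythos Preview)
Your proof sketch is correct and follows the standard induction-plus-CLT argument that is used in the original source \citep{jacot2018neural}. Note, however, that the present paper does not actually prove this lemma: it is stated as a cited result (Proposition~1 of \citet{jacot2018neural}) and used as a black box in the proof of Theorem~\ref{thm:conv-init-formal}, so there is no ``paper's own proof'' to compare against here.
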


Then, Theorem~\ref{thm:conv-init-informal} is proved as the following Theorem~\ref{thm:conv-init-formal}.

\begin{theorem}[Kernels limits at initialization; Formal version of Theorem~\ref{thm:conv-init-informal}]
\label{thm:conv-init-formal}
Let $\hat\Theta^{(l)}_{t,0}$ and $\hat\Theta^{(l)}_{x,0}$ denote the empirical NTK and ARK in the $l$-th layer.
Then, for any $x, x' \in \mathcal X$, we have that:
\begin{enumerate}
\item
(Theorem~1 in \citet{jacot2018neural}) For any $1 \leq l \leq L+1$,
\begin{align*}
    \lim_{n_{l-1}\rightarrow\infty}\cdots\lim_{n_0\rightarrow\infty} \hat\Theta^{(l)}_{\theta,0}(x, x') = \Theta^{(l)}_{\theta}(x,x') := \Theta^{\infty,(l)}_{\theta}(x,x') \cdot I_{n_l},
\end{align*}
where $\Theta^{\infty,(l)}_{\theta}: \mathcal X \times \mathcal X \rightarrow \mathbb{R}$ is a deterministic kernel function that can be defined recursively as follows,
\begin{align*}
    & \Theta^{\infty,(1)}_{\theta}(x,x') = \lim_{n_0\rightarrow\infty} \frac{1}{n_0} x^T x' + 1, \\
    & \Theta^{\infty,(l+1)}_{\theta}(x,x') = \sigma_W^2 \cdot \Theta^{\infty,(l)}_{\theta}(x,x') \cdot \mathbb{E}_{f \sim \mathcal{GP}(0,\Sigma^{(l)})} [\phi'(f(x)) \phi'(f(x'))] \\
    &\qquad\qquad\qquad\qquad + \mathbb{E}_{f \sim \mathcal{GP}(0, \Sigma^{(l)})} [\phi(f(x)) \phi(f(x'))] + 1.
\end{align*}

\item For any $1\leq l \leq L+1$,
\begin{align*}
    \lim_{n_{l-1}\rightarrow\infty}\cdots\lim_{n_0\rightarrow\infty} \hat\Theta^{(l)}_{x,0}(x,x') = \Theta^{(l)}_x(x,x') := \Theta^{\infty,(l)}_x(x,x') \cdot I_{n_l},
\end{align*}
where $\Theta^{\infty,(l)}_x : \mathcal X \times \mathcal X \rightarrow \mathbb R$ is a deterministic kernel function that can be defined recursively as follows,
\begin{align*}
    &\Theta^{\infty,(1)}_x(x,x') = \sigma^2_W, \\
    &\Theta^{\infty,(l+1)}_x(x,x') = \sigma^2_W \cdot \Theta^{\infty,(l)}_x(x,x') \cdot \mathbb{E}_{f \sim \mathcal{GP}(0, \Sigma^{(l)})}[ \phi'(f(x)) \phi'(f(x')) ].
\end{align*}
\end{enumerate}
\end{theorem}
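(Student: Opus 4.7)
The plan is to prove the first claim by quoting Theorem~1 of \citet{jacot2018neural} directly, since it is exactly the NTK limit established in that paper. For the ARK recursion, I would proceed by induction on the layer index $l$, pushing the widths $n_0, n_1, \ldots$ to infinity sequentially and leaning on two ingredients that are already in place: Lemma~\ref{lem:ref-jacot}, which tells us that each pre-activation $h^{(l)}_i$ tends to a centered Gaussian process of covariance $\Sigma^{(l)}$ in this sequential limit, and the fact that at initialization $W^{(l+1)}$ is independent of every quantity built from $W^{(1)}, \ldots, W^{(l)}, b^{(1)}, \ldots, b^{(l)}$.

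For the base case, I would observe that $\partial_x h^{(1)}(x) = \tfrac{1}{\sqrt{n_0}} W^{(1)}$ is independent of $x$, so $\hat\Theta^{(1)}_{x,0}(x,x') = \tfrac{1}{n_0} W^{(1)} (W^{(1)})^T$; its diagonal entries are averages of $n_0$ i.i.d.\ $\mathcal{N}(0,\sigma_W^2)$ squares (hence concentrate at $\sigma_W^2$), and its off-diagonal entries are averages of products of independent centered Gaussians (hence vanish), giving $\Theta^{\infty,(1)}_x(x,x') \equiv \sigma_W^2$. For the inductive step, I would differentiate the forward recursion to obtain
\begin{align*}
    \partial_x h^{(l+1)}(x) = \tfrac{1}{\sqrt{n_l}} W^{(l+1)} D^{(l)}(x) \, \partial_x h^{(l)}(x), \qquad D^{(l)}(x) := \mathrm{Diag}(\phi'(h^{(l)}(x))),
\end{align*}
whence
\begin{align*}
    \hat\Theta^{(l+1)}_{x,0}(x,x') = \tfrac{1}{n_l} W^{(l+1)} D^{(l)}(x) \, \hat\Theta^{(l)}_{x,0}(x,x') \, D^{(l)}(x') \, (W^{(l+1)})^T.
\end{align*}
Substituting the inductive hypothesis $\hat\Theta^{(l)}_{x,0}(x,x') \approx \Theta^{\infty,(l)}_x(x,x') \cdot I_{n_l}$ collapses the central factor into a diagonal matrix whose $k$-th entry is $\Theta^{\infty,(l)}_x(x,x') \cdot \phi'(h^{(l)}_k(x)) \phi'(h^{(l)}_k(x'))$. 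Conditioning on $h^{(l)}$ and using the independence of $W^{(l+1)}$, the Wishart-like outer sandwich $\tfrac{1}{n_l} W^{(l+1)} M (W^{(l+1)})^T$ (with $M$ that diagonal matrix) has conditional mean $\sigma_W^2 \cdot \tfrac{\mathrm{tr}\, M}{n_l} \cdot I_{n_{l+1}}$ and variance $O(1/n_l)$, while Lemma~\ref{lem:ref-jacot} combined with a law of large numbers replaces $\tfrac{1}{n_l}\mathrm{tr}\, M$ by $\Theta^{\infty,(l)}_x(x,x') \cdot \mathbb{E}_{f \sim \mathcal{GP}(0, \Sigma^{(l)})}[\phi'(f(x)) \phi'(f(x'))]$. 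Combining these reproduces exactly the claimed recursion for $\Theta^{\infty,(l+1)}_x$.

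The main obstacle I anticipate is the clean decoupling of the two sources of randomness ($W^{(l+1)}$ versus $h^{(l)}$) within the sequential-limit framework: one needs a uniform-in-$n_l$ form of the inductive hypothesis that can survive the bounded Lipschitz sandwiching by $D^{(l)}(x)$ and $D^{(l)}(x')$ (boundedness of these factors is guaranteed by Assumption~\ref{ass:active}), together with a concentration bound for $\tfrac{1}{n_l} W^{(l+1)} M (W^{(l+1)})^T$ that is uniform over $h^{(l)}$-measurable $M$. Both can be handled by conditioning on $h^{(l)}$, using Lemma~\ref{lem:matrix-norm} to control $\|W^{(l+1)}\|_2$, and then unconditioning by Fubini, but care is needed because the convergence in Lemma~\ref{lem:ref-jacot} is itself a sequential (not joint) limit over widths, so all intermediate error terms must be driven to zero one layer at a time before the next limit is taken.
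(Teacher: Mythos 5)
Your proposal matches the paper's own proof: the NTK part is cited directly from Jacot et al., and the ARK part is proved by induction on $l$ using the forward recursion for $\partial_x h^{(l+1)}$, the inductive hypothesis that the previous ARK is $\Theta^{\infty,(l)}_x \cdot I_{n_l}$, independence of $W^{(l+1)}$, Lemma~\ref{lem:ref-jacot}'s Gaussian-process limit for $h^{(l)}_0$, and a law of large numbers as $n_l\to\infty$. The paper carries out the computation entrywise rather than in matrix form, which also sidesteps the uniformity concern you raise, since in the sequential-width limit the preactivations $h^{(l)}_0(\cdot)_j$ become exactly i.i.d.\ draws from $\mathcal{GP}(0,\Sigma^{(l)})$ before the LLN over $j$ is invoked.
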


\begin{proof}
The first proposition, {\it i.e.}, the NTK limit at initialization, has been proved by \citet{jacot2018neural}.
Thus the remaining task is to prove the ARK limit at initialization, which is done through mathematical induction.

Specifically, for the base case where $l=1$, for the $i$-th row and $i'$-th column entry of the ARK matrix $\hat\Theta^{(1)}_{x,0}(x,x')$, we have that
\begin{gather*}
    \hat\Theta^{(1)}_{x,0}(x,x')_{i,i'} = \frac{1}{n_0} \sum_{j=1}^{n_0} W^{(1)}_{i,j,0} {W^{(1)}_{i',j,0}}.
\end{gather*}
Since each $W^{(1)}_{i,j,0}$ is drawn from the Gaussian $\mathcal N(0,\sigma_W^2)$, we have
\begin{gather*}
    \lim_{n_0\rightarrow\infty} \hat\Theta^{(1)}_{x,0}(x,x')_{i,i'} = 1_{[i=i']} \cdot \sigma^2_W = 1_{[i=i']} \cdot \Theta^{\infty,(1)}_x(x,x')
\end{gather*}
which indicates
\begin{gather*}
    \lim_{n_0\rightarrow\infty} \hat\Theta^{(1)}_{x,0}(x,x') = \sigma^2_W \cdot I_{n_1} = \Theta^{\infty,(1)}_x(x,x') \cdot I_{n_1}.
\end{gather*}

For the induction step, suppose the lemma already holds for the $l$-th layer and we aim to prove it also holds for the $(l+1)$-th layer.
For the ARK matrix $\hat\Theta^{(l+1)}_{x,0}(x,x')$, we have that
\begin{align*}
    &\lim_{n_{l}\rightarrow\infty}\cdots\lim_{n_0\rightarrow\infty} \hat\Theta^{(l+1)}_{x,0}(x,x')_{i,i'} \\
    &= \lim_{n_{l}\rightarrow\infty} \frac{1}{n_l} \sum_{1\leq j,j' \leq n_l} \left( \lim_{n_{l-1}\rightarrow\infty}\cdots\lim_{n_0\rightarrow\infty} \hat\Theta^{(l)}_{x,0}(x,x')_{j,j'} \right) \cdot \partial_{h^{(l)}(x)_j} h^{(l+1)}_0(x)_i \cdot \partial_{h^{(l)}(x)_{j'}} h^{(l+1)}_0(x')_{i'} \\
    &= \lim_{n_{l}\rightarrow\infty} \frac{1}{n_l} \sum_{1\leq j,j' \leq n_l} \underbrace{ 1_{[j=j']} \cdot \Theta^{\infty,(l)}_x(x,x') }_{\text{Induction hypothesis}} \cdot W^{(l)}_{i,j,0} \cdot \phi'(h^{(l)}_{0}(x)_{j}) \cdot W^{(l)}_{i',j',0} \cdot \phi'(h^{(l)}_{0}(x')_{j'}) \\
    &= \lim_{n_{l}\rightarrow\infty} \frac{1}{n_l} \sum_{1\leq j \leq n_l} \Theta^{\infty,(l)}_x(x,x') \cdot W^{(l)}_{i,j,0} \cdot \phi'(h^{(l)}_{0}(x)_{j}) \cdot W^{(l)}_{i',j,0} \cdot \phi'(h^{(l)}_{0}(x')_{j}) \\
\end{align*}
By Lemma~\ref{lem:ref-jacot}, $h^{(l)}_{0}(\cdot)_k$ converges to the centered Gaussian process $\mathcal{GP}(0, \Sigma^{(l)})$, which further indicates
\begin{align*}
    &\lim_{n_{l}\rightarrow\infty}\cdots\lim_{n_0\rightarrow\infty} \hat\Theta^{(l+1)}_{x,0}(x,x')_{i,i'} \\
    &= \lim_{n_{l}\rightarrow\infty} \frac{1}{n_l} \sum_{1\leq j \leq n_l} \Theta^{\infty,(l)}_x(x,x') \cdot \phi'(h^{(l)}_0(x)_j) \cdot W^{(l+1)}_{0,i,j} \cdot W^{(l+1)}_{0,i',j} \cdot \phi'(h^{(l)}_0(x')_{j}) \\
    &= 1_{[i=i']} \cdot \sigma^2_W \cdot \Theta^{\infty,(l)}_x(x) \cdot \mathbb{E}_{f \sim \mathcal{GP}(0, \Sigma^{(l)})}[ \phi'(f(x)) \phi'(f(x')) ] \\
    &= 1_{[i=i']} \cdot \Theta^{\infty,(l+1)}_x(x,x').
\end{align*}
As a result,
\begin{align*}
    \lim_{n_{l}\rightarrow\infty}\cdots\lim_{n_0\rightarrow\infty} \hat\Theta^{(l+1)}_{x,0}(x,x') = \Theta^{\infty,(l+1)}_x(x,x') \cdot I_{n_l},
\end{align*}
which justifies the induction step.

The proof is completed.
\end{proof}

\section{Proof of Theorem~\ref{thm:equival-linear}}
\label{app:proof:equival}

This section presents the proof of Theorem~\ref{thm:equival-linear}.

{\updcolor

\subsection{Proof Skeleton}

The Proof idea of Theorem~\ref{thm:equival-linear} is inspired by that in \citet{jacot2018neural}.
Specifically, we will first extend the result of Theorem~\ref{thm:conv-init-formal} and show that the empirical NTK $\hat\Theta_{\theta,t}$ and empirical ARK $\hat\Theta_{x,t}$ during AT also converge to the same deterministic kernels as that in Theorem~\ref{thm:conv-init-formal}.
These results are stated as Theorems~\ref{lem:conv-training-xts-xts}, \ref{lem:conv-training-x-xts} and presented as follows:

\begin{theorem}
\label{lem:conv-training-xts-xts}
Suppose Assumptions~\ref{ass:active}, \ref{ass:opt-dir}, \ref{ass:lr} hold.
Then, if there exists $\tilde n \in \mathbb{N}^+$ such that $\min\{n_0,\cdots,n_L\} \geq \tilde n$, we have as $\tilde n \rightarrow\infty$,
\begin{align*}
    &\lim_{n_L\rightarrow\infty} \cdots \lim_{n_0\rightarrow\infty} \sup_{t\in[0,T], s\in[0,S]} \| \hat\Theta_{\theta,t}(\mcatx_{t,s}, \mcatx_{t,s}) - \Theta_{\theta}(\mcatx, \mcatx) \|_2 \xrightarrow{P} 0, \\
    &\lim_{n_L\rightarrow\infty} \cdots \lim_{n_0\rightarrow\infty} \sup_{t\in[0,T], s\in[0,S]} \| \hat\Theta_{x,t}(\mcatx_{t,s}, \mcatx_{t,s}) - \Theta_{x}(\mcatx, \mcatx) \|_2 \xrightarrow{P} 0,
\end{align*}
where
\begin{align*}
    &\Theta_{\theta}(\mcatx, \mcatx) := \Theta^{\infty}_{\theta}(\mcatx, \mcatx) \otimes I_{n_{L+1}}, \\
    &\Theta_{x}(\mcatx, \mcatx) := \mathrm{Diag}( \Theta^{\infty}_{x}(x_1, x_1), \cdots,  \Theta^{\infty}_{x}(x_M, x_M) ) \otimes I_{n_{L+1}}.
\end{align*}
\end{theorem}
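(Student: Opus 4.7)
The plan is to reduce the uniform-in-$(t,s)$ convergence to the already-established initialization convergence in Theorem~\ref{thm:conv-init-formal}, by showing that during AT both the parameters $\theta_t$ and the intermediate adversarial inputs $\mcatx_{t,s}$ stay within a vanishing neighborhood of $(\theta_0, \mcatx)$ as the widths grow, and then invoking Lipschitz regularity of the kernel maps. The first step is to establish a priori estimates at initialization: using Assumption~\ref{ass:active} together with Gaussian random-matrix tail bounds (Lemma~\ref{lem:matrix-norm}) and the $1/\sqrt{n_{l-1}}$ NTK normalization, one checks layer by layer that the pre-activations $h^{(l)}_0$, the Jacobians $\partial_\theta f_0$ and $\partial_x f_0$, and their second-order derivatives with respect to both $\theta$ and $x$ are $O_p(1)$ in operator norm, uniformly over a compact neighborhood of $\mcatx$.

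The second step is a Gr\"onwall-bootstrap argument. From the AT dynamics one has $\|\theta_t-\theta_0\|_2 \le \int_0^t \|\partial_\theta f_\tau(\mcatx_{\tau,S})\|_2 \, \|\partial_{f(\mcatx)}\mathcal L\|_2\, \mathrm{d}\tau$, and from the adversarial search $\|\mcatx_{t,s}-\mcatx\|_2 \le \int_0^s \|\partial_x f_t(\mcatx_{t,\sigma})\|_2\,\|\mcatlr(t)\|_2\,\|\partial_{f(\mcatx)}\mathcal L\|_2\, \mathrm{d}\sigma$. Assumption~\ref{ass:opt-dir} bounds the time-integrated and search-integrated loss directions in $O_p(1)$, Assumption~\ref{ass:lr} bounds $\mcatlr$, and the initialization Jacobian bounds plus the Lipschitz smoothness of the layerwise maps furnish closure: one posits a small neighborhood in which the Jacobian norms remain controlled, derives the integral inequalities on the exit time, and applies Lemma~\ref{lem:gronwall} (or Lemma~\ref{lem:gronwall-nonlinear}) to show the trajectory never exits it. With appropriate normalization this yields $\|\theta_t-\theta_0\|_2 = O_p(1)$ while the per-weight deviations vanish, and $\sup_{s\in[0,S]}\|\mcatx_{t,s}-\mcatx\|_2 \xrightarrow{P} 0$, both uniformly over $t\in[0,T]$.

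The third step then assembles the kernel estimate. By the chain rule and the a priori second-order bounds, the maps $(\theta, x, x') \mapsto \hat\Theta_{\theta,t}(x,x')$ and $(\theta, x, x')\mapsto \hat\Theta_{x,t}(x,x')$ have stochastically bounded partials in the neighborhood identified above, so
\begin{align*}
\sup_{t,s}\bigl\|\hat\Theta_{\theta,t}(\mcatx_{t,s},\mcatx_{t,s}) - \hat\Theta_{\theta,0}(\mcatx,\mcatx)\bigr\|_2 \xrightarrow{P} 0, \quad \sup_{t,s}\bigl\|\hat\Theta_{x,t}(\mcatx_{t,s},\mcatx_{t,s}) - \hat\Theta_{x,0}(\mcatx,\mcatx)\bigr\|_2 \xrightarrow{P} 0,
\end{align*}
and combining with the block-diagonal-in-output-index limits from Theorem~\ref{thm:conv-init-formal} (rewritten via the Kronecker identities of Lemma~\ref{lem:kronecker}) delivers the two stated limits. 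The main obstacle will be the self-consistent control of the Jacobians along the joint trajectory: at $t=0$ random matrix theory pins down their norms, but the evolution of $\theta_t$ depends on the inner adversarial search run to $s=S$, which in turn depends on the current $\theta_t$, so one must bootstrap both clocks simultaneously while tracking the correct width scaling at every layer. Verifying that the layerwise Lipschitz/smoothness bounds survive the sequential limit $\lim_{n_L}\cdots\lim_{n_0}$ — rather than merely a coupled limit — is the most delicate part of the bookkeeping.
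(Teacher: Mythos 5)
Your Step 2 asserts $\sup_{s\in[0,S]}\|\mcatx_{t,s}-\mcatx\|_2 \xrightarrow{P} 0$, but this is false under the assumptions of the theorem, and this is the crux that your proposal misses. The adversarial search runs for a fixed time $S>0$ with learning rates $\eta_i(t)$ that do not shrink with width, and the Jacobian norm $\|\partial_x f_t\|_2$ is $\Theta_P(1)$ in the infinite-width limit, so the perturbation $\mcatx_{t,s}-\mcatx$ stays $O_p(1)$ --- it does not vanish. (In the paper this is exactly what Lemma~\ref{lem:xts-delta-bd} gives: $\sup_{s_1,s_2}\|x^{(l)}_t(\mcatx_{t,s_1})-x^{(l)}_t(\mcatx_{t,s_2})\|_2 \le O_p(1)\cdot\mathrm{Poly}_t$, an $O_p(1)$ bound, not an $o_p(1)$ one.) Consequently your Step 3, which appeals to Lipschitz regularity of the kernel maps to transfer convergence from $(\theta_0,\mcatx)$ to $(\theta_t,\mcatx_{t,s})$, only yields a trivial $O_p(1)$ bound on the kernel deviation rather than the required $o_p(1)$.

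What actually makes the theorem true, and what the paper proves, is a more delicate per-coordinate statement: the pre-activations change only in $\ell_\infty$ norm, $\sup_{t,s}\|h^{(l)}_t(x_{m,t,s}) - h^{(l)}_0(x_m)\|_\infty \xrightarrow{P} 0$, even though the corresponding $\ell_2$ change is non-vanishing (Claims~\ref{clm:inf-norm-bd:x-train} and~\ref{clm:inf-norm-bd:advx-train}, whose proofs rely on Lemma~\ref{lem:max-gauss-zero-conv} and the sequential layer limits). That $\ell_\infty$ control is what makes the diagonal factors $\mathrm{Diag}(\phi'(h^{(l)}_t(\cdot)))$ in the kernel recursion converge, and this cannot be extracted from any operator-norm Lipschitz bound on the map $x\mapsto\hat\Theta_{\theta,t}(x,x')$ combined with $O_p(1)$ input perturbations. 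The paper accordingly structures the argument as a layerwise induction (Claim~\ref{clm:detail-conv-xts}) over $l=1,\dots,L+1$, proving the layer-$l$ kernel limit before proving the layer-$(l+1)$ $\ell_\infty$ bound that feeds the next inductive step. Your Gr\"onwall-bootstrap idea and the a priori Jacobian bounds at initialization are sound and do appear (in the form of Lemma~\ref{lem:scale-converge} and Lemmas~\ref{lem:fx-bd:W-init}--\ref{lem:fx-bd:h-test}), but without the $\ell_\infty$ refinement they cannot close the gap between an $O_p(1)$ perturbed input and a converging kernel.
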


\begin{theorem}
\label{lem:conv-training-x-xts}
Suppose Assumptions~\ref{ass:active}, \ref{ass:opt-dir}, \ref{ass:lr} hold.
Then, if there exists $\tilde n \in \mathbb{N}^+$ such that $\{n_0,\cdots,n_L\} \geq \tilde n$, we have for any $x\in\mathcal X$, as $\tilde n \rightarrow\infty$,
\begin{align*}
    &\lim_{n_L\rightarrow\infty} \cdots \lim_{n_0\rightarrow\infty} \sup_{t\in[0,T], s\in[0,S]} \| \hat\Theta_{\theta,t}(x, \mcatx_{t,s}) - \Theta_{\theta}(x, \mcatx) \|_2 \xrightarrow{P} 0, \\
    &\lim_{n_L\rightarrow\infty} \cdots \lim_{n_0\rightarrow\infty} \sup_{t\in[0,T], s\in[0,S]} \| \hat\Theta_{x,t}(x, \mcatx_{t,s}) - \Theta_{x}(x, \mcatx) \|_2 \xrightarrow{P} 0,
\end{align*}
where
\begin{align*}
    &\Theta_{\theta}(x, \mcatx) := \Theta^{\infty}_{\theta}(x, \mcatx) \otimes I_{n_{L+1}} \\
    &\Theta_{x}(x, \mcatx) := \mathrm{Diag}( \Theta^{\infty}_{x}(x, x_1), \cdots, \Theta^{\infty}_{x}(x, x_M) ) \otimes I_{n_{L+1}}.
\end{align*}
\end{theorem}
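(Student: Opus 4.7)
The plan is to adapt the strategy used for the companion Theorem~\ref{lem:conv-training-xts-xts} to the simpler situation in which one slot of the kernel is occupied by a fixed test point $x\in\mathcal X$ rather than the evolving adversarial point $\mcatx_{t,s}$. Throughout, let $\bullet\in\{\theta,x\}$ stand for either kernel choice, so a single argument covers both claims. The goal is to bound, in probability and uniformly over $(t,s)\in[0,T]\times[0,S]$, the deviation of the empirical kernel from its infinite-width limit.

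First, I would decompose the deviation via the triangle inequality into three pieces:
\[
\|\hat\Theta_{\bullet,t}(x,\mcatx_{t,s}) - \Theta_{\bullet}(x,\mcatx)\|_2 \le \underbrace{\|\hat\Theta_{\bullet,t}(x,\mcatx_{t,s}) - \hat\Theta_{\bullet,0}(x,\mcatx_{t,s})\|_2}_{\text{parameter drift}} + \underbrace{\|\hat\Theta_{\bullet,0}(x,\mcatx_{t,s}) - \hat\Theta_{\bullet,0}(x,\mcatx)\|_2}_{\text{input drift}} + \underbrace{\|\hat\Theta_{\bullet,0}(x,\mcatx) - \Theta_{\bullet}(x,\mcatx)\|_2}_{\text{initialization}}.
\]
The initialization term vanishes by applying Theorem~\ref{thm:conv-init-formal} to each pair $(x,x_i)$, $i\in[M]$, and taking the block union over~$i$.

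For the input-drift term, I would first establish $\sup_{t\in[0,T],s\in[0,S]}\|\mcatx_{t,s}-\mcatx\|_2 = O_p(1)$ by integrating the adversarial search dynamics in Section~\ref{sec:adv-search}: Assumption~\ref{ass:opt-dir} controls $\int_0^S\|\partial_{f(\mcatx)}\mathcal L\|_2\,\mathrm ds$, Assumption~\ref{ass:lr} controls $\mcatlr(t)$, and a high-probability bound on the input-Jacobian norms comes from Assumption~\ref{ass:active} together with Lemma~\ref{lem:matrix-norm}. Then I would invoke local Lipschitz continuity of $\partial_\theta f_0(\cdot)$ and $\partial_x f_0(\cdot)$ in the input argument, which follows by a layer-wise induction using the Lipschitz smoothness of $\phi$ (Assumption~\ref{ass:active}) and standard width-normalized weight bounds. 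Because one kernel slot is anchored at the fixed $x$, the Lipschitz-in-input argument needs to be applied to only a single slot, which keeps the bound strictly simpler than its counterpart in Theorem~\ref{lem:conv-training-xts-xts}.

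The parameter-drift term is the main obstacle and is the step where I expect the technical weight of the proof to lie. The plan is to control how far $\theta_t$ travels under the AT flow of Section~\ref{sec:at-dynamics}: Assumption~\ref{ass:opt-dir} bounds $\int_0^T\|\partial_{f(\mcatx)}\mathcal L(f_t(\mcatx_{t,S}),\mcaty)\|_2\,\mathrm dt$, while width-normalized Jacobian bounds from Lemma~\ref{lem:matrix-norm} and Assumption~\ref{ass:active} bound $\|\partial_\theta f_t(\mcatx_{t,S})\|_2$. A Gronwall argument (Lemma~\ref{lem:gronwall}, or Lemma~\ref{lem:gronwall-nonlinear} if the nonlinearity forces it) then yields that each scalar weight displacement decays at rate $1/\sqrt{n_l}$; Lemma~\ref{lem:max-gauss-zero-conv} promotes the entry-wise decay to convergence in probability of the maximum entry. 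Combining this with Lipschitz dependence of $\partial_\theta f_t$ and $\partial_x f_t$ on $\theta$, which is proved by propagating Assumption~\ref{ass:active} through the network layers, closes the parameter-drift bound uniformly over $(t,s)$. Since this machinery is already developed to prove Theorem~\ref{lem:conv-training-xts-xts}, the present theorem follows by reusing the same estimates with one Jacobian slot evaluated at the deterministic $x$, which, if anything, tightens the constants. Assembling the three bounds and sending the widths to infinity sequentially yields the claimed convergence in probability.
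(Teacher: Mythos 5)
Your three-term decomposition (parameter drift, input drift, initialization) is a sensible high-level strategy, but it differs from the paper's actual route, which is a layer-wise induction on $l$ through Claim~\ref{clm:detail-conv-xts-test}, mirroring the proof of Theorem~\ref{lem:conv-training-xts-xts} via Claim~\ref{clm:detail-conv-xts} with the additional ingredient $\sup_{t}\|h^{(l)}_t(x)-h^{(l)}_0(x)\|_\infty\xrightarrow{P}0$ supplied by Lemma~\ref{lem:fx-bd:h-test} and the argument of Claim~\ref{clm:inf-norm-bd:x-train}. The bigger issue, though, is that your treatment of the input-drift term has a genuine gap.

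You establish only $\sup_{t,s}\|\mcatx_{t,s}-\mcatx\|_2=O_p(1)$ (which is correct, see Lemma~\ref{lem:xts-delta-bd}), and then propose to close the input-drift bound via ``local Lipschitz continuity of $\partial_\theta f_0(\cdot)$ and $\partial_x f_0(\cdot)$ in the input argument.'' But $O_p(1)$ is boundedness, not smallness: multiplying an $O_p(1)$ displacement by an order-one Lipschitz constant gives $O_p(1)$, not $o_p(1)$, and the limiting kernel $\Theta_\theta(x,\cdot)$ is genuinely non-constant in its second argument, so you cannot expect a width-independent Lipschitz constant to vanish. What actually makes the theorem true is more delicate: even though the adversarial perturbation is $\Theta(1)$ in Euclidean norm, the induced drift of the hidden pre-activations is $o_p(1)$ in the $\ell^\infty$ norm, i.e.\ \emph{coordinate-wise}. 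That is precisely what Claims~\ref{clm:inf-norm-bd:x-train} and~\ref{clm:inf-norm-bd:advx-train} prove, and the mechanism is the Gaussian max bound of Lemma~\ref{lem:max-gauss-zero-conv} applied to $\max_j|W^{(l+1)}_{i,j,0}|/\sqrt{n_l}$: the gradient-flow search direction, after passing through a random width-$n_l$ layer, spreads its mass so thinly that every individual coordinate moves by a vanishing amount. The $\ell^\infty$ control is indispensable because the kernel recursion involves $D^{(l+1)}_t(x')=\tfrac{1}{\sqrt{n_l}}W^{(l+1)}_t\,\mathrm{Diag}(\phi'(h^{(l)}_t(x')))$ and $\|\mathrm{Diag}(v)-\mathrm{Diag}(v')\|_2=\|v-v'\|_\infty$; an $\ell^2$-Lipschitz estimate simply does not close this step. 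Your parameter-drift argument is closer to the mark (Lemma~\ref{lem:scale-converge} does give the rescaled weight drift $\xrightarrow{P}0$ via Gr\"onwall), but it likewise must be threaded through the layer-wise $\ell^\infty$ estimate before it controls $\mathrm{Diag}(\phi'(\cdot))$. Without isolating the $\ell^\infty$ mechanism, the proposal does not prove the theorem.
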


Based on the kernels convergences during AT, we then prove the final Theorem~\ref{thm:equival-linear} which show that the adversarially trained DNN $f_t$ is equivalent to the linearized DNN $f^{\mathrm{lin}}_t$.
Compared with \citet{jacot2018neural}, the main technical challenge in our proof arises from bounding terms that involve adversarial examples such as $\mcatx_{t,s}$, $\partial_t \mcatx_{t,s}$, $\partial_t h^{(l)}_t(\mcatx_{t,s})$.

The rest of this section is organized as follows:
\begin{enumerate}
\item
In Appendix~\ref{app:rescale-converge}, we prove that as the widths of $f_t$ become infinite, its parameters and outputs will converge to those at the initial of AT by properly rescaling.

\item
In Appendix~\ref{app:kernel-converge}, we prove Theorems~\ref{lem:conv-training-xts-xts} and \ref{lem:conv-training-x-xts} that in the infinite-widths limit, the NTK and ARK in AT will converge to deterministic kernel functions.
The proofs are based on the results in Appendix~\ref{app:rescale-converge}.

\item
Finally, in Appendix~\ref{app:equivalence} we prove Theorem~\ref{thm:equival-linear} that in the infinite-widths limit, $f_t$ is equivalent to $f^{\mathrm{lin}}_t$.
The proof is based on Theorems~\ref{lem:conv-training-xts-xts} and \ref{lem:conv-training-x-xts}.
\end{enumerate}

}

\subsection{Convergences of Parameters and Outputs Under Rescaling}
\label{app:rescale-converge}

The goal of this section is to prove Lemma~\ref{lem:scale-converge}, which indicates that by properly rescaling, for the wide DNN $f_t$, its parameter $W^{(l)}_t$, pre-activation $x^{(l)}_t(\mcatx_{t,S})$, and post-activation $h^{(l)}_t(\mcatx_{t,S})$ in each layer can converge to those at the initialization of AT.

Firstly, we let
\begin{gather*}
    \mathrm{Poly}_t := \mathrm{Poly}\left( \frac{1}{\sqrt{n_{l-1}}} \|W^{(l)}_t\|_2, \frac{1}{\sqrt{n_{l}}} \|h^{(l)}_t(\mcatx_{t,S})\|_2, \frac{1}{\sqrt{n_{l}}} \|x^{(l)}_t(\mcatx_{t,S})\|_2 \right)
\end{gather*}
denote any deterministic polynomial with \textbf{finite degree} and \textbf{finite positive constant coefficients}, and depends only on terms $\frac{1}{\sqrt{n_{l-1}}} \|W^{(l)}_t\|_2$ (where $l \in [1:L+1]$), $\frac{1}{\sqrt{n_l}} \|h^{(l)}_t(\mcatx_{t,S})\|_2$  (where $l \in [1:L]$), and $\frac{1}{\sqrt{n_l}} \|x^{(l)}_t(\mcatx_{t,S})\|_2$ (where $(l \in [0:L]$).

From the definition of $\mathrm{Poly}_t$, the following properties worth highlighting:
\begin{itemize}
\item
The sum of any two (different) such type polynomials is also such type polynomial:
\begin{gather*}
    \mathrm{Poly}_t + \mathrm{Poly}_t = \mathrm{Poly}_t.
\end{gather*}

\item
The product of any two (different) such type polynomials is also such type polynomial:
\begin{gather*}
    \mathrm{Poly}_t \cdot \mathrm{Poly}_t = \mathrm{Poly}_t.
\end{gather*}
\end{itemize}

With the definition of $\mathrm{Poly}_t$, we then bound $\|\partial_t W^{(l)}_t\|_2$, $\|\partial_t h^{(l)}_t(\mcatx_{t,S})\|_2$, and $\|\partial_t x^{(l)}_t(\mcatx_{t,S})\|_2$, as shown in the following Lemmas~\ref{lem:Wbt-dir-bd}, \ref{lem:xts-delta-bd}, \ref{lem:xts-dir-bd}, \ref{lem:ht-xt-dir-bd}.

\begin{lemma}
\label{lem:Wbt-dir-bd}
Suppose Assumption~\ref{ass:active} holds.
Then for every $1 \leq l \leq L+1$ and $t \in [0,T]$, we have
\begin{align*}
    &\|\partial_t W^{(l)}_t\|_2, \ \ \|\partial_t W^{(l)}_t\|_F, \ \ \|\partial_t b^{(l)}_t\|_2 \leq \mathrm{Poly}_t \cdot \|\partial_{f(\mcatx)} \mathcal L(f_t(\mcatx_{t,S}, \vec y)\|_2.
\end{align*}
\end{lemma}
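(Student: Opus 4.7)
The plan is to combine the AT gradient-flow equation from Section~\ref{sec:at-dynamics} with a layer-by-layer chain-rule bound, pushing every factor into the $\mathrm{Poly}_t$ form. Reading off the $W^{(l)}$ and $b^{(l)}$ components of $\partial_t \theta_t = -\partial^T_\theta f_t(\mcatx_{t,S}) \cdot \partial^T_{f(\mcatx)} \mathcal L$ gives
\begin{align*}
\partial_t W^{(l)}_t &= -\partial^T_{W^{(l)}} f_t(\mcatx_{t,S}) \cdot \partial^T_{f(\mcatx)} \mathcal L, \\
\partial_t b^{(l)}_t &= -\partial^T_{b^{(l)}} f_t(\mcatx_{t,S}) \cdot \partial^T_{f(\mcatx)} \mathcal L.
\end{align*}
Vectorizing yields $\|\partial_t W^{(l)}_t\|_F = \|\partial_t \mathrm{Vec}(W^{(l)}_t)\|_2 \leq \|\partial_{\mathrm{Vec}(W^{(l)})} f_t(\mcatx_{t,S})\|_2 \cdot \|\partial_{f(\mcatx)} \mathcal L\|_2$, and the spectral-norm statement on $\partial_t W^{(l)}_t$ then follows from $\|\cdot\|_2 \leq \|\cdot\|_F$. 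Thus it suffices to bound $\|\partial_{\mathrm{Vec}(W^{(l)})} f_t(\mcatx_{t,S})\|_2$ and $\|\partial_{b^{(l)}} f_t(\mcatx_{t,S})\|_2$ by $\mathrm{Poly}_t$.

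Since $f_t(\mcatx_{t,S}) = \oplus_{i=1}^M f_t(x_{i,t,S})$ with $M$ a fixed constant, up to a multiplicative $\sqrt{M}$ it is enough to control each $\|\partial_{\mathrm{Vec}(W^{(l)})} f_t(x_{i,t,S})\|_2$ individually. The standard backpropagation decomposition factors this as a product of per-layer Jacobians $J^{(k)}_t(x) := \tfrac{1}{\sqrt{n_{k-1}}} W^{(k)}_t \cdot \mathrm{Diag}(\phi'(h^{(k-1)}_t(x)))$ for $k = l+1,\dots,L+1$, composed with $\partial_{\mathrm{Vec}(W^{(l)})} h^{(l)}_t(x_{i,t,S}) = \tfrac{1}{\sqrt{n_{l-1}}} I_{n_l} \otimes (x^{(l-1)}_t(x_{i,t,S}))^T$. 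The bias case is identical but with $I_{n_l}$ in place of the Kronecker factor and no trailing input term.

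Each per-layer Jacobian satisfies $\|J^{(k)}_t(x)\|_2 \leq K \cdot \tfrac{\|W^{(k)}_t\|_2}{\sqrt{n_{k-1}}}$ via the $K$-Lipschitz bound on $\phi$ from Assumption~\ref{ass:active}, while Corollary~\ref{cor:kronecker} reduces the Kronecker term to $\tfrac{\|x^{(l-1)}_t(x_{i,t,S})\|_2}{\sqrt{n_{l-1}}}$, which is bounded by the concatenated-sample norm $\tfrac{\|x^{(l-1)}_t(\mcatx_{t,S})\|_2}{\sqrt{n_{l-1}}}$. Every factor is thus one of the canonical quantities entering the definition of $\mathrm{Poly}_t$; a finite (depth-bounded) product of such factors is again $\mathrm{Poly}_t$ by the closure properties highlighted in the text, yielding the desired bound after re-introducing $\|\partial_{f(\mcatx)} \mathcal L\|_2$.

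The main technical caution lies in the Frobenius/operator-norm bookkeeping: the naive step $\|\partial_t W^{(l)}_t\|_F \leq \sqrt{n_l n_{l-1}} \cdot \|\partial_t W^{(l)}_t\|_2$ would inject width-dependent constants that cannot be absorbed into $\mathrm{Poly}_t$. Going through the vectorized form and using $\|I_{n_l} \otimes v^T\|_2 = \|v\|_2$ from Corollary~\ref{cor:kronecker} avoids this spurious dimensional factor and keeps the $1/\sqrt{n_{l-1}}$ scaling perfectly aligned with the canonical quantities in $\mathrm{Poly}_t$. Beyond this, the argument reduces to a routine walk through the finite-depth chain rule.
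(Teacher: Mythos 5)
Your proof is correct and takes essentially the same route as the paper's: read off the $W^{(l)}$ and $b^{(l)}$ components of the AT gradient flow, use $\|\cdot\|_2 \le \|\cdot\|_F$ together with the vectorized form, expand the Jacobian $\partial_{\mathrm{Vec}(W^{(l)})} h^{(L+1)}_t(\cdot)$ via the chain rule into per-layer factors, bound the $\phi'$ factors by $K$ from Assumption~\ref{ass:active} and the Kronecker factor via Corollary~\ref{cor:kronecker}, and absorb everything into $\mathrm{Poly}_t$. The only (cosmetic) difference is that you decompose per-sample and reassemble with a $\sqrt{M}$, whereas the paper keeps the concatenated $\mcatx_{t,S}$ throughout and gets factors like $\|I_M \otimes \tfrac{1}{\sqrt{n_{l'}}} W^{(l'+1)}_t\|_2$; both yield the same bound.
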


\begin{proof}
By the fact that $\|\cdot\|_2 \leq \|\cdot\|_F$, we have
\begin{align*}
    \| \partial_t W^{(l)}_t \|_2
    \leq \| \partial_t W^{(l)}_t \|_F
    &= \| - \partial^T_{\mathrm{Vec}(W^{(l)})} h^{(L+1)}_t(\mcatx_{t,S}) \cdot \partial^T_{f(\mcatx)} \mathcal L(f_t(\mcatx_{t,S}), \mcaty) \|_2 \nonumber \\
    &\leq \| \partial_{\mathrm{Vec}(W^{(l)})} h^{(L+1)}_t(\mcatx_{t,S}) \|_2 \cdot \| \partial_{f(\mcatx)} \mathcal L(f_t(\mcatx_{t,S}), \mcaty) \|_2.
\end{align*}
By applying Lipschitz activation Assumption~\ref{ass:active} and Corollary~\ref{cor:kronecker},
\begin{align*}
    &\| \partial_{\mathrm{Vec}(W^{(l)})} h^{(L+1)}_t(\mcatx_{t,S}) \|_2 \nonumber \\
    &\leq \left( \prod_{l'=l}^L \| \partial_{x^{(l')}(\mcatx)} h^{(l'+1)}_t(\mcatx_{t,S}) \|_2 \cdot \| \partial_{h^{(l')}(\mcatx)} x^{(l')}_t(\mcatx_{t,S}) \|_2 \right) \cdot \| \partial_{\mathrm{Vec}(W^{(l)})} h^{(l)}_t(\mcatx_{t,S}) \|_2 \nonumber \\
    &\leq \left( \prod_{l'=l}^L \left\| I_{M} \otimes \frac{1}{\sqrt{n_{l'}}} W^{(l'+1)}_t \right\|_2 \cdot K \right)
        \cdot \sqrt{ \frac{1}{n_l} \left\| \left( {x^{(l)}(x_{i,t,S})}^T x^{(l)}(x_{j,t,S}) \right) \otimes I_{n_l} \right\|_2 } \nonumber \\
    &\leq K^{L-l+1} \cdot \prod_{l'=l}^{L} \frac{1}{\sqrt{n_{l'}}} \|W^{(l'+1)}_t\|_2 \cdot \frac{1}{\sqrt{n_l}} \left\|\left( x^{(l)}_t(x_{1,t,S}), \cdots, x^{(l)}_t(x_{M,t,S}) \right)\right\|_2 \nonumber \\
    &\leq \mathrm{Poly}_t \cdot \frac{1}{\sqrt{n_l}} \left\|\left( x^{(l)}_t(x_{1,t,S}), \cdots, x^{(l)}_t(x_{M,t,S}) \right)\right\|_F \nonumber \\
    &= \mathrm{Poly}_t \cdot \frac{1}{\sqrt{n_l}} \| x^{(l)}_t(\mcatx_{t,S}) \|_2
    = \mathrm{Poly}_t,
\end{align*}
where $\left( {x^{(l)}(x_{i,t,S})}^T x^{(l)}(x_{j,t,S}) \right)$ is a $M\times M$ matrix such that its $i$-th row and $j$-th column entry is ${x^{(l)}(x_{i,t,S})}^T x^{(l)}(x_{j,t,S})$.
Combining the above results, we therefore have
\begin{gather*}
    \|\partial_t W^{(l)}_t\|_2 \leq \mathrm{Poly}_t \cdot \|\partial_{f(\mcatx)} \mathcal L(f_t(\mcatx_{t,S}, \mcaty)\|_2.
\end{gather*}

For $\|\partial_t b_t\|_2$, we similarly have that
\begin{align*}
    \|\partial_t b^{(l)}_t\|_2
    &= \| \partial^T_{b^{(l)}} h^{(L+1)}_t(\mcatx_{t,S}) \cdot \partial^T_{f(\mcatx)} \mathcal L(f_t(\mcatx_{t,S}), \mcaty) \|_2 \nonumber \\
    &\leq \|\partial_{b^{(l)}} h^{(l)}_t(\mcatx_{t,S})\|_2 \cdot \|\partial_{h^{(l)}(\mcatx)} h^{(L+1)}_t(\mcatx_{t,S})\|_2 \cdot \|\partial_{f(\mcatx)} \mathcal L(f_t(\mcatx_{t,S}), \mcaty)\|_2 \nonumber \\
    &= \| 1_{M} \otimes I_{n_l} \|_2 \cdot \|\partial_{h^{(l)}(\mcatx)} h^{(L+1)}_t(\mcatx_{t,S})\|_2 \cdot \|\partial_{f(\mcatx)} \mathcal L(f_t(\mcatx_{t,S}), \mcaty)\|_2 \nonumber \\
    &\leq \sqrt{M} \cdot \mathrm{Poly}_t \cdot \|\partial_{f(\mcatx)} \mathcal L(f_t(\mcatx_{t,S}), \mcaty)\|_2 \nonumber \\
    &= \mathrm{Poly}_t \cdot \|\partial_{f(\mcatx)} \mathcal L(f_t(\mcatx_{t,S}), \mcaty)\|_2.
\end{align*}

The proof is completed.
\end{proof}

\begin{lemma}
\label{lem:xts-delta-bd}
Suppose Assumptions~\ref{ass:active}, \ref{ass:opt-dir}, \ref{ass:lr} hold.
Then for any $t \in [0,T]$, as $\min_{l'\in[0:L]} \{n_{l'}\} \rightarrow \infty$, we have that
\begin{align*}
    &\sup_{s_1,s_2 \in [0,S]} \|x^{(l)}_t(\mcatx_{t,s_1}) - x^{(l)}_t(\mcatx_{t,s_2})\|_2 \leq O_p(1) \cdot \mathrm{Poly}_t,
\end{align*}
where $0 \leq l \leq L$.
\end{lemma}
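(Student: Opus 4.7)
The plan is to bound the displacement of the layer-$l$ post-activation along the adversarial search trajectory by an integral of the instantaneous rate of change, then to separate the integrand into a deterministic $\mathrm{Poly}_t$ factor (depending only on weight norms at training time $t$) and a stochastically bounded factor supplied by Assumption~\ref{ass:opt-dir}.

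First, by the fundamental theorem of calculus and the chain rule, for any $s_1,s_2\in[0,S]$ with $s_1\le s_2$,
\begin{equation*}
x^{(l)}_t(\mcatx_{t,s_2}) - x^{(l)}_t(\mcatx_{t,s_1})
= \int_{s_1}^{s_2} \partial_{\mcatx} x^{(l)}_t(\mcatx_{t,s}) \cdot \partial_s \mcatx_{t,s}\, \mathrm{d}s.
\end{equation*}
Substituting the adversarial-search ODE
$\partial_s \mcatx_{t,s} = \partial^T_{\mcatx} f_t(\mcatx_{t,s}) \cdot \mcatlr(t) \cdot \partial^T_{f(\mcatx)} \mathcal{L}(f_t(\mcatx_{t,s}), \mcaty)$
from Section~\ref{sec:at-dynamics} and taking norms, I would obtain
\begin{equation*}
\|x^{(l)}_t(\mcatx_{t,s_2}) - x^{(l)}_t(\mcatx_{t,s_1})\|_2
\le \int_0^S \|\partial_{\mcatx} x^{(l)}_t(\mcatx_{t,s})\|_2 \cdot \|\partial_{\mcatx} f_t(\mcatx_{t,s})\|_2 \cdot \|\mcatlr(t)\|_2 \cdot \|\partial_{f(\mcatx)} \mathcal L(f_t(\mcatx_{t,s}), \mcaty)\|_2\, \mathrm{d}s.
\end{equation*}

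Next, I would show that both input-Jacobian norms are dominated by $\mathrm{Poly}_t$ uniformly in $s$. Unrolling
$\partial_{\mcatx} x^{(k)}_t = D^{(k)}_t \cdot \bigl(I_M \otimes \tfrac{1}{\sqrt{n_{k-1}}}W^{(k)}_t\bigr) \cdot \partial_{\mcatx} x^{(k-1)}_t$,
where $D^{(k)}_t$ is the block-diagonal matrix of $\phi'$ values at the intermediate pre-activations, and using $|\phi'|\le K$ from Assumption~\ref{ass:active} together with Corollary~\ref{cor:kronecker}, one obtains
$\|\partial_{\mcatx} x^{(l)}_t(\mcatx_{t,s})\|_2 \le K^l \prod_{k=1}^l \tfrac{1}{\sqrt{n_{k-1}}}\|W^{(k)}_t\|_2 \le \mathrm{Poly}_t$,
and analogously $\|\partial_{\mcatx} f_t(\mcatx_{t,s})\|_2 \le \mathrm{Poly}_t$. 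The key point here is that these bounds depend only on the weight-matrix norms at time $t$ (via the uniform Lipschitz constant of $\phi$), so they factor out of the $\mathrm{d}s$ integral.

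Finally, by Assumption~\ref{ass:lr} the map $t\mapsto\mcatlr(t)$ is continuous on the compact interval $[0,T]$, so $\sup_{t\in[0,T]}\|\mcatlr(t)\|_2$ is a finite deterministic constant; and Assumption~\ref{ass:opt-dir} gives $\sup_{t\in[0,T]}\int_0^S \|\partial_{f(\mcatx)} \mathcal L(f_t(\mcatx_{t,s}), \mcaty)\|_2\, \mathrm{d}s = O_p(1)$ as $\min_{l'\in[0:L]} n_{l'} \to \infty$. Multiplying the pieces and then taking the supremum over $s_1,s_2\in[0,S]$ on the left-hand side yields the claimed $O_p(1)\cdot\mathrm{Poly}_t$ bound. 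The only real subtlety is the step where the two Jacobian norms must be bounded uniformly in $s$ so that they cleanly factor out of the integral as a $\mathrm{Poly}_t$; this relies essentially on the global Lipschitz hypothesis on $\phi$ rather than on any local linearization of $f_t$ around $\mcatx$, which is important because $\mcatx_{t,s}$ is allowed to drift arbitrarily during the inner search loop.
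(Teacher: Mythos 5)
Your proof is correct and follows essentially the same route as the paper's. The only organizational difference is that the paper first establishes the bound for $l=0$ (the raw input displacement $\|\mcatx_{t,s_1}-\mcatx_{t,s_2}\|_2$) by integrating $\partial_s\mcatx_{t,s}$ and bounding $\|\partial_{\mcatx}h^{(L+1)}_t(\mcatx_{t,\tau})\|_2\le\mathrm{Poly}_t$, and then propagates the bound to deeper layers via the one-step recursion $\|x^{(l)}_t(\cdot)-x^{(l)}_t(\cdot)\|_2 \le K\frac{\|W^{(l)}_t\|_2}{\sqrt{n_{l-1}}}\|x^{(l-1)}_t(\cdot)-x^{(l-1)}_t(\cdot)\|_2$; whereas you collapse that recursion by bounding the full input-Jacobian $\|\partial_{\mcatx}x^{(l)}_t\|_2\le K^l\prod_{k\le l}\frac{\|W^{(k)}_t\|_2}{\sqrt{n_{k-1}}}$ directly inside the integral. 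The two are algebraically identical, and your observation that the $\mathrm{Poly}_t$ factors must be uniform in $s$ (via the global Lipschitz bound on $\phi$) so they can be pulled out of the $\mathrm{d}s$ integral is exactly the point the paper uses at its Eq.~(\ref{lem:xts-delta-bd:pf:eq1}).
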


\begin{proof}
By Assumption~\ref{ass:lr}, $\mcatlr(t)$ is continuous on the closed interval $[0,T]$, which indicates there exists a constant $C$ such that $\sup_{t\in[0,T]} \|\mcatlr(t)\|_2 \leq C$.

Then, when $l=0$, according to definition, we have
\begin{align*}
    \sup_{s_1, s_2 \in [0,S]} \|\mcatx_{t,s_1} - \mcatx_{t,s_2}\|_2
    &= \sup_{s_1, s_2 \in [0,S]} \left\| \int_{s_2}^{s_1} \partial_\tau x^{(l)}_{t}(\mcatx_{t,\tau}) \mathrm{d}\tau \right\|_2
    \leq \int_{0}^{S} \| \partial_\tau x^{(l)}_{t}(\mcatx_{t,\tau}) \|_2 \mathrm{d}\tau
    \nonumber \\
    &\leq \|\mcatlr(t)\|_2 \cdot \int_0^S \| \partial_{\mcatx} h^{(L+1)}_t(\mcatx_{t,\tau}) \|_2 \cdot  \| \partial_{f(\mcatx)} \mathcal L(f_t(\mcatx_{t,\tau}), \mcaty) \|_2 \mathrm{d}\tau \\
    &\leq C \cdot \int_0^S \| \partial_{\mcatx} h^{(L+1)}_t(\mcatx_{t,\tau}) \|_2 \cdot  \| \partial_{f(\mcatx)} \mathcal L(f_t(\mcatx_{t,\tau}), \mcaty) \|_2 \mathrm{d}\tau.
\end{align*}
By applying Lipschitz activation Assumption~\ref{ass:active},
\begin{align}
    &\sup_{\tau \in [0,S]} \|\partial_{\mcatx} h^{(L+1)}_t(\mcatx_{t,\tau})\|_2 \nonumber \\
    &\leq \sup_{\tau \in [0,S]} \left\{ \left( \prod_{l'=1}^{L} \| \partial_{x^{(l')}(\mcatx)} h^{(l'+1)}_t(\mcatx_{t,\tau}) \|_2 \cdot \| \partial_{h^{(l')}(\mcatx)} x^{(l')}_t(\mcatx_{t,\tau}) \|_2 \right) \cdot \| \partial_{\mcatx} h^{(1)}_t(\mcatx_{t,\tau}) \|_2 \right\} \nonumber \\
    &\leq \left( \prod_{l'=1}^{L} \frac{1}{\sqrt{n_{l'}}} \|W^{(l'+1)}_t\|_2 \cdot K \right) \cdot \frac{1}{\sqrt{n_0}} \|W^{(1)}_t\|_2
    = \mathrm{Poly}_t.
    \label{lem:xts-delta-bd:pf:eq1}
\end{align}
Combining with Assumption~\ref{ass:opt-dir},
\begin{align}
    \sup_{s_1, s_2 \in [0,S]} \|\mcatx_{t,s_1} - \mcatx_{t,s_2}\|_2
    &\leq C \cdot \mathrm{Poly}_t \cdot \sup_{t\in [0,T]} \int_0^S \| \partial_{f(\mcatx)} \mathcal L(f_t(\mcatx_{t,\tau}), \mcaty) \|_2 \mathrm{d}\tau \nonumber \\
    &= \mathrm{Poly}_t \cdot O_p(1).
    \label{lem:xts-delta-bd:pf:eq2}
\end{align}

On the other hand, for any $1 \leq l \leq L$, by again using Assumption~\ref{ass:active},
\begin{align*}
    &\sup_{s_1, s_2 \in [0,S]} \| x^{(l)}_t(\mcatx_{t,s_1}) - x^{(l)}_t(\mcatx_{t,s_2}) \|_2
    = \sup_{s_1, s_2 \in [0,S]} \| \phi(h^{(l)}_t(\mcatx_{t,s_1})) - \phi(h^{(l)}_t(\mcatx_{t,s_2})) \|_2 \\
    &\leq K \cdot \sup_{s_1, s_2 \in [0,S]} \| h^{(l)}_t(\mcatx_{t,s_1}) - h^{(l)}_t(\mcatx_{t,s_2}) \|_2 \\
    &\leq K \cdot \frac{1}{\sqrt{n_{l-1}}} \|W^{(l)}_t\|_2 \cdot \sup_{s_1,s_2 \in [0,S]} \|x^{(l-1)}_t(\mcatx_{t,s_1}) - x^{(l-1)}_t(\mcatx_{t,s_2})\|_2 \\
    &= \mathrm{Poly}_t \cdot \sup_{s_1,s_2 \in [0,S]} \|x^{(l-1)}_t(\mcatx_{t,s_1}) - x^{(l-1)}_t(\mcatx_{t,s_2})\|_2 \\
    &\leq \cdots \leq \mathrm{Poly}_t \cdot \sup_{s_1,s_2 \in [0,S]} \|x^{(0)}_t(\mcatx_{t,s_1}) - x^{(0)}_t(\mcatx_{t,s_2})\|_2 \\
    &\leq \mathrm{Poly}_t \cdot \mathrm{Poly}_t \cdot O_p(1)
    = O_p(1) \cdot \mathrm{Poly}_t.
\end{align*}

The proof is completed.
\end{proof}

\begin{lemma}
\label{lem:xts-dir-bd}
Suppose Assumptions~\ref{ass:active}, \ref{ass:opt-dir}, \ref{ass:lr} hold.
Then, as $\min_{l \in [0:L]}\{n_l\} \rightarrow \infty$, for any $t \in [0,T]$, we uniformly have that
\begin{align*}
    &\sup_{s\in[0,S]} \|\partial_t \vec x_{t,s}\|_2
    \leq \exp( O_p(1) \cdot \mathrm{Poly}_t )
        \cdot O_p(1) \cdot \mathrm{Poly}_t \cdot ( \|\partial_{f(\mcatx)} \mathcal L(f_t(\mcatx_{t,S}), \mcaty)\|_2 + 1 ).
\end{align*}
\end{lemma}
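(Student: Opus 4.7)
The plan is to write $\partial_t \mcatx_{t,s}$ as the solution of an inhomogeneous linear integral equation in the variable $s$, bound the forcing term and the linear coefficient separately using the tools already developed (Lemmas~\ref{lem:Wbt-dir-bd} and \ref{lem:xts-delta-bd}, together with Assumptions~\ref{ass:active}--\ref{ass:lr}), and then close the loop with Gr\"onwall's inequality (Lemma~\ref{lem:gronwall}), which is precisely the source of the outer $\exp(O_p(1)\cdot\mathrm{Poly}_t)$ factor stated in the conclusion.

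Concretely, I would start from the integral form
\[
\mcatx_{t,s} = \mcatx + \int_0^s \partial^T_{\mcatx} f_t(\mcatx_{t,\tau}) \cdot \mcatlr(t) \cdot \partial^T_{f(\mcatx)} \mathcal L(f_t(\mcatx_{t,\tau}),\mcaty)\,\mathrm{d}\tau,
\]
differentiate both sides in $t$, and sort the right-hand side into two pieces: (i) a forcing term gathering every contribution that does \emph{not} contain $\partial_t\mcatx_{t,\tau}$, i.e.\ the terms produced by $\partial_t$ hitting $\theta_t$ inside $\partial_\mcatx f_t$, hitting the leading learning-rate matrix $\mcatlr(t)$, and hitting the loss gradient $\partial_{f(\mcatx)}\mathcal L(f_t(\mcatx_{t,\tau}),\mcaty)$; and (ii) a linear term of the form $K_t(\tau)\cdot\partial_t\mcatx_{t,\tau}$, where $K_t(\tau)$ is the Jacobian in $\mcatx$ of the adversarial drift field, namely $\partial_\mcatx\!\bigl(\partial^T_\mcatx f_t(\cdot)\,\mcatlr(t)\,\partial^T_{f(\mcatx)}\mathcal L(f_t(\cdot),\mcaty)\bigr)$ evaluated at $\mcatx_{t,\tau}$.

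For the forcing term I would control each of the three contributions: $\|\partial_t\theta_t\|_2$ is bounded by $\mathrm{Poly}_t\cdot\|\partial_{f(\mcatx)}\mathcal L(f_t(\mcatx_{t,S}),\mcaty)\|_2$ via Lemma~\ref{lem:Wbt-dir-bd}, the derivative $\partial_t\mcatlr(t)$ is bounded by a constant on $[0,T]$ by Assumption~\ref{ass:lr}, and the cumulated $\int_0^S\|\partial_t\partial_{f(\mcatx)}\mathcal L(f_t(\mcatx_{t,s}),\mcaty)\|_2\,\mathrm{d}s$ is $O_p(1)$ by Assumption~\ref{ass:opt-dir}; the remaining layerwise Jacobians are handled by exactly the same layer-by-layer Lipschitz/Kronecker argument used to prove Eq.~(\ref{lem:xts-delta-bd:pf:eq1}) in Lemma~\ref{lem:xts-delta-bd}, giving $\mathrm{Poly}_t$ factors. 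This yields a forcing term bounded uniformly in $s\in[0,S]$ by $O_p(1)\cdot\mathrm{Poly}_t\cdot(\|\partial_{f(\mcatx)}\mathcal L(f_t(\mcatx_{t,S}),\mcaty)\|_2+1)$, where the ``$+1$'' absorbs the piece that comes from $\partial_t\mcatlr(t)$ and $\partial_t$ hitting the loss gradient (neither of which is multiplicatively coupled to the AT loss direction at time $S$).

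For the linear coefficient $K_t(\tau)$, differentiating $\partial^T_\mcatx f_t$ in $\mcatx$ produces exactly the kinds of pre-activation Hessian terms that are controlled by the Lipschitz-smooth activation assumption (Assumption~\ref{ass:active}) combined with the weight-norm polynomial bounds already absorbed into $\mathrm{Poly}_t$; combined with the Lipschitz-smooth loss (Assumption~\ref{ass:loss}) this yields $\|K_t(\tau)\|_2\le \mathrm{Poly}_t\cdot(\|\partial_{f(\mcatx)}\mathcal L(f_t(\mcatx_{t,\tau}),\mcaty)\|_2+1)$. Integrating in $\tau$ over $[0,S]$ and invoking Assumption~\ref{ass:opt-dir} again bounds $\int_0^S\|K_t(\tau)\|_2\,\mathrm{d}\tau$ by $O_p(1)\cdot\mathrm{Poly}_t$. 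Plugging these two bounds into the linear integral inequality and applying Gr\"onwall (Lemma~\ref{lem:gronwall}) produces
\[
\sup_{s\in[0,S]}\|\partial_t\mcatx_{t,s}\|_2 \le \exp(O_p(1)\cdot\mathrm{Poly}_t)\cdot O_p(1)\cdot\mathrm{Poly}_t\cdot\bigl(\|\partial_{f(\mcatx)}\mathcal L(f_t(\mcatx_{t,S}),\mcaty)\|_2+1\bigr),
\]
uniformly in $t\in[0,T]$, as desired.

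The main obstacle is the bookkeeping in step (ii): when $\partial_t$ hits $f_t(\mcatx_{t,\tau})$ one picks up both a parameter-derivative piece (through $\theta_t$) and an input-derivative piece (through $\mcatx_{t,\tau}$), and only the latter should be grouped into the Gr\"onwall-linear term while the former must be isolated into the forcing term; doing this consistently through all $L+1$ layers, while keeping every auxiliary norm inside the definition of $\mathrm{Poly}_t$ rather than as a free quantity, is the only delicate part of the argument.
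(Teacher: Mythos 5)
Your high-level strategy---separate the integral form of $\partial_t\mcatx_{t,s}$ into a forcing term plus a Gr\"onwall-linear term $K_t(\tau)\cdot\partial_t\mcatx_{t,\tau}$, bound both, and close with Lemma~\ref{lem:gronwall}---is exactly the structure of the paper's proof. However, there is a genuine gap in how you carve up the $\partial_t\mcatx_{t,\tau}$-dependent pieces, and this gap pulls in an assumption that Lemma~\ref{lem:xts-dir-bd} does not grant you.

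The issue is your definition of $K_t(\tau)$. You take it to be the $\mcatx$-Jacobian of the \emph{entire} adversarial drift field $\partial^T_\mcatx f_t(\cdot)\,\mcatlr(t)\,\partial^T_{f(\mcatx)}\mathcal{L}(f_t(\cdot),\mcaty)$, so that $K_t(\tau)$ includes the piece where $\partial_\mcatx$ hits $\partial^T_{f(\mcatx)}\mathcal{L}(f_t(\cdot),\mcaty)$; to bound that piece you invoke the Lipschitz-smoothness of $\mathcal{L}$, i.e.\ Assumption~\ref{ass:loss}. But Assumption~\ref{ass:loss} is \emph{not} among the hypotheses of Lemma~\ref{lem:xts-dir-bd} (only Assumptions~\ref{ass:active}, \ref{ass:opt-dir}, \ref{ass:lr} are), so this step is not available to you. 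At the same time you also invoke the third bullet of Assumption~\ref{ass:opt-dir}, namely $\sup_{t}\int_0^S\|\partial_t\partial_{f(\mcatx)}\mathcal{L}(f_t(\mcatx_{t,s}),\mcaty)\|_2\,\mathrm{d}s = O_p(1)$, as a \emph{forcing-term} contribution. That $\partial_t$ is the total time derivative, which already includes the chain through $\mcatx_{t,s}$, i.e.\ $\partial^T_{f(\mcatx)}\partial_{f(\mcatx)}\mathcal{L}\cdot\partial_\mcatx f_t\cdot\partial_t\mcatx_{t,s}$. So you are accounting for the same $\partial_t\mcatx$-through-loss-gradient contribution twice: once inside the $O_p(1)$ forcing bound via Assumption~\ref{ass:opt-dir}, and once inside $K_t(\tau)$ via Assumption~\ref{ass:loss}. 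The double-count does not falsify the inequality (you are only weakening an upper bound), but it signals a decomposition that is not internally coherent, and it is precisely the cause of the illegal appeal to Assumption~\ref{ass:loss}.

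The paper's bookkeeping avoids both problems and this is the one design choice you need to adopt. The Gr\"onwall-linear piece (the paper's term $\mathrm{II}_{t,\tau}$) is restricted to $\partial_{\mcatx_{t,\tau}}$ hitting \emph{only} the network Jacobian $\partial_\mcatx h^{(L+1)}_t(\mcatx_{t,\tau})$; this uses only the twice-differentiable, Lipschitz-smooth activation of Assumption~\ref{ass:active} and yields $\mathrm{II}_{t,\tau}\le\mathrm{Poly}_t\cdot\|\partial_t\mcatx_{t,\tau}\|_2$, with the linear coefficient that enters Gr\"onwall being $\mathrm{Poly}_t\cdot\|\partial_{f(\mcatx)}\mathcal{L}(f_t(\mcatx_{t,\tau}),\mcaty)\|_2$ (whose integral over $[0,S]$ is $O_p(1)\cdot\mathrm{Poly}_t$ by the second bullet of Assumption~\ref{ass:opt-dir}). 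The \emph{entire} time derivative of the loss gradient, including its implicit dependence on $\mcatx_{t,\tau}$, is treated as a single forcing term and controlled by the third bullet of Assumption~\ref{ass:opt-dir}---that is precisely why that bullet is phrased as a bound on $\int_0^S\|\partial_t\partial_{f(\mcatx)}\mathcal{L}\|_2\,\mathrm{d}s$ rather than on a partial derivative, and why Assumption~\ref{ass:loss} is not needed here. To repair your proof: shrink $K_t(\tau)$ to $\partial_\mcatx\bigl(\partial^T_\mcatx f_t(\cdot)\bigr)\,\mcatlr(t)\,\partial^T_{f(\mcatx)}\mathcal{L}(f_t(\mcatx_{t,\tau}),\mcaty)$ (Jacobian only of the network factor), drop all use of Assumption~\ref{ass:loss}, and keep the cumulated total derivative of the loss gradient as a pure forcing term bounded by Assumption~\ref{ass:opt-dir}. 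With that correction the remainder of your argument goes through as sketched.
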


\begin{proof}
By Assumption~\ref{ass:lr}, both $\mcatlr(t)$ and $\partial_t \mcatlr(t)$ are continuous on the closed interval $[0,T]$, which indicates there exists a constant $C$ such that $\sup_{t\in[0,T]} \{ \|\mcatlr(t)\|_2, \|\partial_t \mcatlr(t)\|_2 \} \leq C$.

Then, by assuming differentiation and integration to be interchangeable, for any $s \in [0,S]$, we have
\begin{align}
    &\| \partial_t \mcatx_{t,s} \|_2
    = \left\| \partial_t \int_0^{s} \partial^T_{\mcatx} h^{(L+1)}_t(\mcatx_{t,\tau}) \cdot \mcatlr(t) \cdot \partial^T_{f(\mcatx)} \mathcal L(f_t(\mcatx_{t,\tau}), \mcaty) \mathrm{d}\tau \right\|_2 \nonumber \\
    &\leq \int_0^{s} \left\| \partial_t \left( \partial^T_{\mcatx} h^{(L+1)}_t(\mcatx_{t,\tau}) \cdot \mcatlr(t) \cdot \partial^T_{f(\mcatx)} \mathcal L(f_t(\mcatx_{t,\tau}), \mcaty) \right) \right\|_2 \mathrm{d}\tau \nonumber \\
    &\leq \underbrace{ \sup_{\tau\in[0,S]} \left\|
        \sum_{i} \partial_{[\theta]_i} \partial_{\mcatx} h^{(L+1)}_{t}(\mcatx_{t,\tau}) \cdot \partial_t [\theta_{t}]_i
    \right\|_2 }_{\mathrm{I}_t} \cdot C \cdot \int_0^{S} \|\partial_{f(\mcatx)} \mathcal L(f_t(\mcatx_{t,\tau}), \mcaty)\|_2 \mathrm{d}\tau \nonumber \\
    &\quad + C \cdot \int_0^{s} \underbrace{ \left\| \sum_{i} \partial_{[\mcatx_{t,\tau}]_i} \partial_{\mcatx} h^{(L+1)}_{t}(\mcatx_{t,\tau}) \cdot \partial_t [\mcatx_{t,\tau}]_i \right\|_2 }_{\mathrm{II}_{t,\tau}} \cdot \|\partial_{f(\mcatx)} \mathcal L(f_t(\mcatx_{t,\tau}), \mcaty)\|_2 \mathrm{d}\tau \nonumber \\
    &\quad + \underbrace{ \sup_{\tau\in[0,S]} \| \partial_{\mcatx} h^{(L+1)}_t(\mcatx_{t,\tau}) \|_2 }_{\mathrm{III}_t, \ \text{bounded by Eq.~(\ref{lem:xts-delta-bd:pf:eq1})}}
        \cdot C \cdot \int_0^S \| \partial_{f(\mcatx)} \mathcal L(f_t(\mcatx_{t,\tau}, \mcaty) \|_2 \mathrm{d}\tau \nonumber \\
    &\quad + \underbrace{ \sup_{\tau\in[0,S]} \|\partial_{\mcatx} h^{(L+1)}_t(\mcatx_{t,\tau})\|_2 }_{\mathrm{III}_t, \ \text{bounded by Eq.~(\ref{lem:xts-delta-bd:pf:eq1})}} \cdot C \cdot \int_0^S \|\partial_t \partial_{f(\mcatx)} \mathcal L(f_t(\mcatx_{t,\tau}), \mcaty) \|_2 \mathrm{d}\tau,
    \label{lem:xts-dir-t-bd:pf:eq1}
\end{align}
where $[\cdot]_i$ denotes the $i$-th entry of a given vector.
In the above Eq.~(\ref{lem:xts-dir-t-bd:pf:eq1}), the term $\mathrm{III}_t$ can be bounded by Eq.~(\ref{lem:xts-delta-bd:pf:eq1}) from the proof of Lemma~\ref{lem:xts-delta-bd}.
Thereby, the remaining task is to first bound terms $\mathrm{I}_t$ and $\mathrm{II}_{t,\tau}$ respectively, and then bound the overall $\sup_{s\in[0,S]} \| \partial_t \mcatx_{t,s} \|_2$.

\textbf{Stage 1:}
Bounding term $\mathrm{I}_t$ in Eq.~(\ref{lem:xts-dir-t-bd:pf:eq1}).

By Assumption~\ref{ass:active}, we have
\begin{align}
    & \mathrm{I}_t = \sup_{\tau \in[0,S]} \left\| \sum_{i} \partial_{[\theta]_i} \partial_{\mcatx} h^{(L+1)}_{t}(\mcatx_{t,\tau}) \cdot \partial_t [\theta_t]_i \right\|_2 \nonumber \\
    &= \sup_{\tau \in[0,S]} \left\| \sum_{l=1}^{L+1}
        \partial_{\mathrm{Vec}(W^{(l)}), b^{(l)}}
        \left( \prod_{l'=L}^{1} {\partial_{h^{(l')}(\mcatx)} h^{(l'+1)}_{t}(\mcatx_{t,\tau})}
        \cdot \partial_{\mcatx} h^{(1)}_t(\mcatx_{t,\tau}) \right)
        \cdot \partial_t ({\mathrm{Vec}(W^{(l)}_t), b^{(l)}_t})
        \right\|_2 \nonumber \\
    &\leq \sup_{\tau \in[0,S]} \left\{ \prod_{l'=L}^1 \| \partial_{h^{(l')}(\mcatx)} h^{(l'+1)}_t(\mcatx_{t,\tau}) \|_2 \cdot \| \partial_{\mathrm{Vec}(W^{(1)}), b^{(1)}} \partial_{\mcatx} h^{(1)}_t(\mcatx_{t,\tau}) \cdot \partial_t(\mathrm{Vec}(W^{(1)}_t), b^{(1)}_t) \|_2 \right\} \nonumber \\
    & + \mathrm{Poly}_t \cdot \sup_{\tau \in[0,S]} \left\{ \sum_{l \leq l'+1} \prod_{l'' \ne l'} \| \partial_{h^{(l'')}(\mcatx)} h^{(l''+1)}_t(\mcatx_{t,\tau}) \|_2 \cdot \| \partial_{\mathrm{Vec}(W^{(l)}),b^{(l)}} \partial_{h^{(l')}(\mcatx)} h^{(l'+1)}_t(\mcatx_{t,\tau}) \cdot \partial_t( \mathrm{Vec}(W^{(l)}_t), b^{(l)}_t ) \|_2 \right\} \nonumber \\
    &\leq \mathrm{Poly}_t \cdot \sum_{l=1}^{L+1} \frac{1}{\sqrt{n_{l-1}}} \underbrace{   \| \partial_{\mathrm{Vec}(W^{(l)})} W^{(l)}_t \cdot \partial_t \mathrm{Vec}(W^{(l)}_t) \|_2  }_{\mathrm{I}_{t,l}^{(1)}} \nonumber \\
    &\quad + \mathrm{Poly}_t \cdot \sum_{l=1}^{L} \sum_{l'=l}^L \underbrace{  \sup_{\tau \in[0,S]} \| \partial_{\mathrm{Vec}(W^{(l)}),b^{(l)}} \mathrm{Diag}(\phi'(h^{(l')}_t(\mcatx_{t,\tau}))) \cdot \partial_t( \mathrm{Vec}(W^{(l)}_t), b^{(l)}_t ) \|_2  }_{\mathrm{I}_{t,l,l'}^{(2)}}.
    \label{lem:xts-dir-t-bd:pf:eq2}
\end{align}

For the term $\mathrm{I}_{t,l}^{(1)}$, according to Lemma~\ref{lem:Wbt-dir-bd}, for $1 \leq l \leq L+1$,
\begin{align}
    &\mathrm{I}_{t,l}^{(1)} = \|W^{(l)}_t\|_2 \leq \mathrm{Poly}_t \cdot \|\partial_{f(\mcatx)} \mathcal L(f_t(\mcatx_{t,S}), \mcaty)\|_2.
    \label{lem:xts-dir-t-bd:pf:eq2-1}
\end{align}

Besides, the term $\mathrm{I}^{(2)}_{t,l,l'}$ can be expanded as below,
\begin{align}
    &\mathrm{I}^{(2)}_{t,l,l'}
    = \sup_{\tau \in[0,S]} \|\partial_{\mathrm{Vec}(W^{(l)}), b^{(l)}} \mathrm{Diag}(\phi'(h^{(l')}_t(\mcatx_{t,\tau}))) \cdot \partial_t( \mathrm{Vec}(W^{(l)}_t), b^{(l)}_t )\|_2 \nonumber \\
    &\leq \sup_{\tau \in[0,S]} \|\partial_{\mathrm{Vec}(W^{(l)}), b^{(l)}} \phi'(h^{(l')}_t(\mcatx_{t,\tau})) \cdot \partial_t( \mathrm{Vec}(W^{(l)}_t), b^{(l)}_t )\|_2 \nonumber \\
    &\leq \sup_{\tau \in[0,S]} \|\partial_{h^{(l)}(\mcatx)} \phi'(h^{(l')}_t(\mcatx_{t,\tau}))\|_2 \cdot \|\partial_{\mathrm{Vec}(W^{(l)}), b^{(l)}} h^{(l)}_t(\mcatx_{t,\tau}) \cdot \partial_t( \mathrm{Vec}(W^{(l)}_t), b^{(l)}_t )\|_2 \nonumber \\
    &\leq \sup_{\tau \in[0,S]} \|\phi''(h^{(l')}_t(\mcatx_{t,\tau}))\|_\infty \cdot \|\partial_{h^{(l)}(\mcatx)} h^{(l')}_t(\mcatx_{t,\tau})\|_2 \cdot \left( \|\partial_{\mathrm{Vec}(W^{(l)})} h^{(l)}_t(\mcatx_{t,\tau}) \cdot \partial_t \mathrm{Vec}(W^{(l)}_t)\|_2 + \|\partial_{b^{(l)}} h^{(l)}_t(\mcatx_{t,\tau}) \cdot \partial_t b^{(l)}_t\|_2 \right) \nonumber \\
    &\leq K \cdot \mathrm{Poly}_t \cdot \sup_{\tau \in[0,S]} \left( \sqrt{ \frac{1}{n_{l-1}} \|(x^{(l-1)}_{t}(x_{i,t,\tau})^T \cdot x^{(l-1)}_t(x_{j,t,\tau})) \otimes I_{n_l}\|_2 } \cdot \|\partial_t \mathrm{Vec}(W^{(l)}_t)\|_2 + \|1_M \otimes I_{n_l} \| \cdot \|\partial_t b^{(l)}_t\|_2 \right) \nonumber \\
    &\leq \mathrm{Poly}_t \cdot \sup_{\tau \in[0,S]} \left( \frac{1}{\sqrt{n_{l-1}}} \|x^{(l-1)}_t(\mcatx_{t,\tau})\|_2 \cdot \underbrace{ \mathrm{Poly}_t \cdot \|\partial_{f(\mcatx)} \mathcal L(f_t(\mcatx_{t,S}), \mcaty)\|_2 }_{\text{Lemma~\ref{lem:Wbt-dir-bd}}} + \sqrt{M} \cdot \underbrace{ \mathrm{Poly}_t \cdot \|\partial_{f(\mcatx)} \mathcal L(f_t(\mcatx_{t,S}), \mcaty)\|_2 }_{\text{Lemma~\ref{lem:Wbt-dir-bd}}} \right) \nonumber \\
    &\leq \mathrm{Poly}_t \cdot \|\partial_{f(\mcatx)} \mathcal L(f_t(\mcatx_{t,S}), \vec y)\|_2 \cdot \sup_{\tau \in [0,S]} \left( \frac{1}{\sqrt{n_{l-1}}} \|x^{(l-1)}_t(\mcatx_{t,\tau})\|_2 + 1\right),
    \label{lem:xts-dir-t-bd:pf:eq2-2}
\end{align}
where $(x^{(l-1)}_{t}(x_{i,t,\tau})^T \cdot x^{(l-1)}_t(x_{j,t,\tau}))$ denotes a $M\times M$ matrix that its $i$-th row and $j$-th column entry is $x^{(l-1)}_{t}(x_{i,t,\tau})^T \cdot x^{(l-1)}_t(x_{j,t,\tau})$.
By Lemma~\ref{lem:xts-delta-bd}, we further have
\begin{align}
    &\sup_{\tau \in [0,S]} \left( \frac{1}{\sqrt{n_{l-1}}} \|x^{(l-1)}_t(\mcatx_{t,\tau})\|_2 + 1\right) \nonumber \\
    &\leq \sup_{\tau \in [0,S]} \left( \frac{1}{\sqrt{n_{l-1}}} \left( \|x^{(l-1)}_t(\mcatx_{t,S})\|_2 + \|x^{(l-1)}_t(\mcatx_{t,\tau}) - x^{(l-1)}_t(\mcatx_{t,S})\|_2 \right) + 1\right) \nonumber \\
    &= \mathrm{Poly}_t + \frac{1}{\sqrt{n_{l-1}}} \sup_{\tau \in [0,S]} \|x^{(l-1)}_t(\mcatx_{t,\tau}) - x^{(l-1)}_t(\mcatx_{t,S})\|_2 \nonumber \\
    &\leq \mathrm{Poly}_t + \frac{1}{\sqrt{n_{l-1}}} \cdot \underbrace{ O_p(1) \cdot \mathrm{Poly}_t }_{ \text{Lemma~\ref{lem:xts-delta-bd}} }
    = \left( 1 + \frac{O_p(1)}{\sqrt{n_{l-1}}} \right) \cdot \mathrm{Poly}_t.
    \label{lem:xts-dir-t-bd:pf:eq2-3}
\end{align}
Combining, Eqs.~(\ref{lem:xts-dir-t-bd:pf:eq2-2}) and (\ref{lem:xts-dir-t-bd:pf:eq2-3}), the term $\mathrm{I}_{t,l,l'}^{(2)}$ can then be bounded as follows,
\begin{align}
    \mathrm{I}_{t,l,l'}^{(2)}
    &\leq \left( 1 + \frac{ O_p(1) }{\sqrt{ \min_{l'' \in [0:L]}\{n_{l''}\} }} \right) \cdot \mathrm{Poly}_t \cdot \|\partial_{f(\mcatx)} \mathcal L(f_t(\mcatx_{t,S}), \mcaty)\|_2 \nonumber \\
    &\leq O_p(1) \cdot \mathrm{Poly}_t \cdot \|\partial_{f(\mcatx)} \mathcal L(f_t(\mcatx_{t,S}), \mcaty)\|_2.
    \label{lem:xts-dir-t-bd:pf:eq2-5}
\end{align}

Finally, by inserting Eqs.~(\ref{lem:xts-dir-t-bd:pf:eq2-1}) and (\ref{lem:xts-dir-t-bd:pf:eq2-5}) into Eq.~(\ref{lem:xts-dir-t-bd:pf:eq2}), we have
\begin{align}
    \mathrm{I}_t
    \leq O_p(1) \cdot \mathrm{Poly}_t \cdot \|\partial_{f(\mcatx)} \mathcal L(f_t(\mcatx_{t,S}), \mcaty)\|_2.
    \label{lem:xts-dir-t-bd:pf:eq3}
\end{align}

\textbf{Stage 2:}
Bounding term $\mathrm{II}_{t,\tau}$ in Eq.~(\ref{lem:xts-dir-t-bd:pf:eq1}).

We expand the term as follows,
\begin{align*}
    &\mathrm{II}_{t,\tau} = \left\| \sum_{i} \partial_{[\mcatx_{t,\tau}]_{i}} \partial_{\mcatx} h^{(L+1)}_{t}(\mcatx_{t,\tau}) \cdot \partial_t [\mcatx_{t,\tau}]_{i} \right\|_2 \nonumber \\
    &= \left\| \sum_i \partial_{[\mcatx_{t,\tau}]_{i}} \left( \prod_{l=L}^1 \partial_{h^{(l)}_t(\mcatx)} h^{(l+1)}_{t}(\mcatx_{t,\tau}) \cdot \partial_{\mcatx} h^{(1)}_{t}(\mcatx_{t,\tau}) \right) \cdot \partial_t [\mcatx_{t,\tau}]_{i} \right\|_2 \nonumber \\
    &\leq \| \partial_{\mcatx} h^{(1)}_{t}(\mcatx_{t,\tau}) \|_2 \cdot \sum_{l=1}^L \prod_{1\leq l' \leq L, l'\neq l} \| \partial_{h^{(l')}_t(\mcatx)} h^{(l'+1)}_{t}(\mcatx_{t,\tau}) \|_2 \cdot  \| \partial_{\mcatx_{t,\tau}} ( \partial_{h^{(l)}_t(\mcatx)} h^{(l+1)}_{t}(\mcatx_{t,\tau}) ) \cdot \partial_t \mcatx_{t,\tau} \|_2 \nonumber \\
    &\leq \mathrm{Poly}_t \cdot \| \partial_{\mcatx_{t,\tau}} \mathrm{Diag}( \phi'(h^{(l)}_{t}(\mcatx_{t,\tau})) ) \cdot \partial_t \mcatx_{t,\tau} \|_2.
\end{align*}
Since
\begin{align*}
    &\| \partial_{\mcatx_{t,\tau}} \mathrm{Diag}( \phi'(h^{(l)}_{t}(\mcatx_{t,\tau})) ) \cdot \partial_t \mcatx_{t,\tau} \|_2 \nonumber \\
    &\leq \| \partial_{\mcatx_{t,\tau}} \phi'(h^{(l)}_{t}(\mcatx_{t,\tau})) \cdot \partial_t \mcatx_{t,\tau} \|_2 \nonumber \\
    &\leq \|\phi''(h^{(l)}_t(\mcatx_{t,\tau}))\|_2 \cdot \prod_{l'=1}^{l-1} \| \partial_{h^{(l')}(\mcatx)} h^{(l'+1)}_t(\mcatx_{t,\tau}) \|_2 \cdot \|\partial_{\mcatx_{t,\tau}} h^{(1)}_t(\mcatx_{t,\tau}) \cdot \partial_t \mcatx_{t,\tau} \|_2 \nonumber \\
    &\leq \mathrm{Poly}_t \cdot \|\partial_{\mcatx_{t,\tau}} h^{(1)}_t(\mcatx_{t,\tau}) \|_2 \cdot \| \partial_t \mcatx_{t,\tau} \|_2
    = \mathrm{Poly}_t \cdot \|\partial_t \mcatx_{t,\tau}\|_2,
\end{align*}
therefore, $\mathrm{II}_{t,\tau}$ is eventually bounded as below,
\begin{align}
    \mathrm{II}_{t,\tau}
    \leq \mathrm{Poly}_t \cdot \mathrm{Poly}_t \cdot \|\partial_t \mcatx_{t,\tau}\|_2
    = \mathrm{Poly}_t \cdot \|\partial_t \mcatx_{t,\tau}\|_2.
    \label{lem:xts-dir-t-bd:pf:eq4}
\end{align}

\textbf{Stage 3:}
Bounding the original $\|\partial_t \mcatx_{t,s}\|_2$ via the Gr{\"o}nwall's inequality (see Lemma~\ref{lem:gronwall}).

By inserting Eqs.~(\ref{lem:xts-delta-bd:pf:eq1}), (\ref{lem:xts-dir-t-bd:pf:eq3}) and (\ref{lem:xts-dir-t-bd:pf:eq4}) into Eq.~(\ref{lem:xts-dir-t-bd:pf:eq1}) and applying Assumption~\ref{ass:opt-dir}, we have that for every $s\in[0,S]$,
\begin{align}
    &\|\partial_t \mcatx_{t,s}\|_2 \nonumber \\
    &\leq O_p(1) \cdot \mathrm{Poly}_t \cdot \|\partial_{f(\mcatx)} \mathcal L(f_t(\mcatx_{t,S}), \mcaty)\|_2 \cdot C \cdot O_p(1) \nonumber \\
    &\quad + \mathrm{Poly}_t \cdot C \cdot \int_0^s \|\partial_t \mcatx_{t,\tau}\|_2 \cdot \|\partial_{f(\mcatx)} \mathcal L(f_t(\mcatx_{t,\tau}), \mcaty)\|_2 \mathrm{d}\tau
        + \mathrm{Poly}_t \cdot C \cdot O_p(1)
        + \mathrm{Poly}_t \cdot C \cdot O_p(1) \nonumber \\
    &\leq \mathrm{Poly}_t \cdot \int_0^s \|\partial_t \mcatx_{t,\tau}\|_2 \cdot \|\partial_{f(\mcatx)} \mathcal L(f_t(\mcatx_{t,\tau}), \mcaty)\|_2 \mathrm{d}\tau
        + O_p(1) \cdot \mathrm{Poly}_t \cdot ( \|\partial_{f(\mcatx)} \mathcal L(f_t(\mcatx_{t,S}), \mcaty)\|_2 + 1 ). \nonumber
\end{align}

Note that both $\|\partial_t \mcatx_{t,s}\|_2$ and $\|\partial_{f(\mcatx)} \mathcal L(f_t(\mcatx_{t,s}), \mcaty)\|_2$ are non-negative functions with respect to $s$ on the interval $[0,S]$.
Therefore, by applying Gr{\"o}nwall's inequality (Lemma~\ref{lem:gronwall}), we have
\begin{align*}
    & \|\partial_t \mcatx_{t,s}\|_2 \nonumber \\
    &\leq \exp\left( \mathrm{Poly}_t \cdot \int_0^s \|\partial_{f(\mcatx)} \mathcal L(f_t(\mcatx_{t,s}), \mcaty)\|_2 \mathrm{d}\tau \right)
        \cdot O_p(1) \cdot \mathrm{Poly}_t \cdot ( \|\partial_{f(\mcatx)} \mathcal L(f_t(\mcatx_{t,S}), \mcaty)\|_2 + 1 ) \nonumber \\
    &= \exp( O_p(1) \cdot \mathrm{Poly}_t )
        \cdot O_p(1) \cdot \mathrm{Poly}_t \cdot ( \|\partial_{f(\mcatx)} \mathcal L(f_t(\mcatx_{t,S}), \mcaty)\|_2 + 1 ).
\end{align*}

The proof is completed.
\end{proof}

\begin{lemma}
\label{lem:ht-xt-dir-bd}
Suppose Assumptions~\ref{ass:active} and \ref{ass:opt-dir} hold.
Then, for every $t \in [0,T]$, we have that
\begin{align*}
    &\|\partial_t h^{(l_1)}_t(\mcatx_{t,S}) \|_2, \ \ \|\partial_t x^{(l_2)}_t(\mcatx_{t,S})\|_2
    \leq \mathrm{Poly}_t \cdot \left( \|\partial_{f(\mcatx)} \mathcal L(f_t(\mcatx_{t,S}), \mcaty)\|_2 + \|\partial_t \mcatx_{t,S}\|_2 \right),
\end{align*}
where $l_1 \in [1:L+1]$ and $l_2 \in [1:L]$.
\end{lemma}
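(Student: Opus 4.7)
The plan is to proceed by induction on the layer index $l$, proving the pre-activation and post-activation bounds in tandem. The base case is essentially free: since $x^{(0)}_t(\mcatx_{t,S}) = \mcatx_{t,S}$, we have $\|\partial_t x^{(0)}_t(\mcatx_{t,S})\|_2 = \|\partial_t \mcatx_{t,S}\|_2$, which already matches the claimed form with the trivial polynomial $\mathrm{Poly}_t = 1$.

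For the inductive step, I would expand the total time derivative of $h^{(l)}_t(\mcatx_{t,S})$. Because both the parameters $(W^{(l)}_t, b^{(l)}_t)$ and the evaluation point $\mcatx_{t,S}$ depend on $t$, the chain rule produces three contributions:
\[
\partial_t h^{(l)}_t(\mcatx_{t,S}) = \tfrac{1}{\sqrt{n_{l-1}}} (I_M \otimes \partial_t W^{(l)}_t) \cdot x^{(l-1)}_t(\mcatx_{t,S}) + \tfrac{1}{\sqrt{n_{l-1}}} (I_M \otimes W^{(l)}_t) \cdot \partial_t x^{(l-1)}_t(\mcatx_{t,S}) + 1_M \otimes \partial_t b^{(l)}_t.
\]
I would then bound each summand by the triangle inequality. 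The first and third contributions are immediately controlled by $\mathrm{Poly}_t \cdot \|\partial_{f(\mcatx)} \mathcal L(f_t(\mcatx_{t,S}), \mcaty)\|_2$ via Lemma~\ref{lem:Wbt-dir-bd}; the residual factors $\tfrac{1}{\sqrt{n_{l-1}}} \|x^{(l-1)}_t(\mcatx_{t,S})\|_2$ and $\sqrt{M}$ fold into the $\mathrm{Poly}_t$ notation by its very definition. For the middle summand, the factor $\tfrac{1}{\sqrt{n_{l-1}}} \|W^{(l)}_t\|_2$ is itself a $\mathrm{Poly}_t$, and the induction hypothesis applied to $\|\partial_t x^{(l-1)}_t(\mcatx_{t,S})\|_2$ gives exactly the desired right-hand-side.

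To pass from the bound on $\partial_t h^{(l)}_t(\mcatx_{t,S})$ to that on $\partial_t x^{(l)}_t(\mcatx_{t,S}) = \phi'(h^{(l)}_t(\mcatx_{t,S})) \odot \partial_t h^{(l)}_t(\mcatx_{t,S})$, I would invoke the Lipschitz condition on $\phi$ from Assumption~\ref{ass:active}, which gives $\|\phi'(\cdot)\|_\infty \le K$; this only contributes a benign multiplicative constant that is absorbed into $\mathrm{Poly}_t$. Since $\mathrm{Poly}_t \cdot \mathrm{Poly}_t = \mathrm{Poly}_t$ and $\mathrm{Poly}_t + \mathrm{Poly}_t = \mathrm{Poly}_t$, collecting the three pieces closes the induction in one step and delivers the final bound.

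The only real subtlety is making sure the total time derivative properly accounts for \emph{both} sources of $t$-dependence in $x^{(l-1)}_t(\mcatx_{t,S})$, namely the parameters $\theta_t$ and the adversarial evaluation point $\mcatx_{t,S}$; the $\|\partial_t \mcatx_{t,S}\|_2$ term in the claim is precisely the price of the second source, propagated up through the layers by the induction. Once this bookkeeping is in place, the argument requires no analogue of the Gr{\"o}nwall estimate used in Lemma~\ref{lem:xts-dir-bd}, because the recursion in $l$ is finite and each layer contributes only a bounded multiplicative $\mathrm{Poly}_t$ factor.
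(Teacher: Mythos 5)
Your proof is correct and follows essentially the same approach as the paper's: expand $\partial_t h^{(l)}_t(\mcatx_{t,S})$ by the product/chain rule, absorb the $\partial_t W^{(l)}_t$ and $\partial_t b^{(l)}_t$ contributions via Lemma~\ref{lem:Wbt-dir-bd}, recurse on $\partial_t x^{(l-1)}_t(\mcatx_{t,S})$ through the Lipschitz bound $\|\phi'\|_\infty \le K$, and bottom out at $\|\partial_t \mcatx_{t,S}\|_2$. The paper unrolls the recursion with a ``$\le\cdots\le$'' chain rather than framing it as a formal induction, and its displayed expansion of $\partial_t h^{(1)}_t$ silently omits the $\partial_t b^{(1)}_t$ term that you carry explicitly, but the substance of the argument is identical.
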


\begin{proof}
For $\| \partial_t h^{(l_1)}_t(\mcatx_{t,S}) \|_2$, when $l_1 = 1$, by applying Assumption~\ref{ass:active} and Lemma~\ref{lem:Wbt-dir-bd}, we have that
\begin{align}
    &\| \partial_t h^{(1)}_t(\mcatx_{t,S}) \|_2
    = \frac{1}{\sqrt{n_{0}}} \| \partial_t (W^{(1)}_t \cdot \mcatx_{t,S}) \|_2 \nonumber \\
    &\leq \frac{1}{\sqrt{n_0}} \cdot \left( \|\partial_t W^{(1)}_t\|_2 \cdot \|\mcatx_{t,S}\|_2 + \|W^{(1)}_t\|_2 \cdot \|\partial_t \mcatx_{t,S}\|_2 \right) \nonumber \\
    &= \mathrm{Poly}_t \cdot \|\partial_{f(\mcatx)} \mathcal L(f_t(\mcatx_{t,S}), \mcaty)\|_2 + \mathrm{Poly}_t \cdot \|\partial_t \mcatx_{t,S}\|_2.
    \label{lem:ht-xt-dir-bd:pf:eq1}
\end{align}
Meanwhile, when $l_1 \geq 2$,
\begin{align}
    \|\partial_t h^{(l_1)}_t(\mcatx_{t,S})\|_2
    &= \frac{1}{\sqrt{n_{l_1-1}}} \| \partial_t (W^{(l_1)}_t x^{(l_1-1)}_t(\mcatx_{t,S})) \|_2 \nonumber \\
    &\leq \frac{1}{\sqrt{n_{l_1-1}}} \cdot \left( \| \partial_t W^{(l_1)}_t \|_2 \cdot \|x^{(l_1-1)}_t(\mcatx_{t,S})\|_2 + \|W^{(l_1)}_t\|_2 \cdot \| \partial_t x^{(l_1-1)}_t(\mcatx_{t,S}) \|_2 \right) \nonumber \\
    &\leq \mathrm{Poly}_t \cdot \|\partial_t W^{(l)}_t\|_2 + \mathrm{Poly}_t \cdot \|\partial_t x^{(l-1)}_t(\mcatx_{t,S})\|_2 \nonumber \\
    &\leq \mathrm{Poly}_t \cdot \underbrace{ \|\partial_{f(\mcatx)} \mathcal L(f_t(\mcatx_{t,S}), \mcaty)\|_2 }_{\text{Lemma~\ref{lem:Wbt-dir-bd}}} + \mathrm{Poly}_t \cdot \underbrace{ K \cdot \|\partial_t h^{(l-1)}_t(\mcatx_{t,S})\|_2 }_{\text{Assumption~\ref{ass:active}}} \nonumber \\
    &\leq \cdots
    \leq \mathrm{Poly}_t \cdot \|\partial_{f(\mcatx)} \mathcal L(f_t(\mcatx_{t,S}), \mcaty)\|_2 + \mathrm{Poly}_t \cdot \|\partial_t \mcatx_{t,S}\|_2.
    \label{lem:ht-xt-dir-bd:pf:eq2}
\end{align}
Combining Eqs.(\ref{lem:ht-xt-dir-bd:pf:eq1}) and (\ref{lem:ht-xt-dir-bd:pf:eq2}), we thus have for every $l_1 \in [1:L+1]$,
\begin{align*}
    \|\partial_t h^{(l_1)}_t(\mcatx_{t,S})\|_2
    \leq \mathrm{Poly}_t \cdot \left( \|\partial_{f(\mcatx)} \mathcal L(f_t(\mcatx_{t,S}), \mcaty)\|_2 + \|\partial_t \mcatx_{t,S}\|_2 \right).
\end{align*}

On the other hand, by applying Assumption~\ref{ass:active}, for every $l_2 \in [1:L]$,
\begin{align*}
    \|\partial_t x^{(l_2)}_t(\mcatx_{t,S})\|_2
    &\leq \|\partial_{h^{(l_2)}_t(\mcatx)} x^{(l_2)}_t(\mcatx_{t,S})\|_2 \cdot \| \partial_t h^{(l_2)}_t(\mcatx_{t,S}) \|_2 \nonumber \\
    &\leq K \cdot \| \partial_t h^{(l_2)}_t(\mcatx_{t,S}) \|_2 \nonumber \\
    &\leq \mathrm{Poly}_t \cdot \left( \|\partial_{f(\mcatx)} \mathcal L(f_t(\mcatx_{t,S}), \mcaty)\|_2 + \|\partial_t \mcatx_{t,S}\|_2 \right).
\end{align*}

The proof is completed.
\end{proof}

Based on Lemmas~\ref{lem:Wbt-dir-bd}-\ref{lem:ht-xt-dir-bd}, we are now able to prove the following Lemma~\ref{lem:scale-converge}.

\begin{lemma}
\label{lem:scale-converge}
Suppose Assumptions~\ref{ass:active}, \ref{ass:opt-dir}, \ref{ass:lr} hold.
Then as $\min_{l\in[0:L]}\{n_l\} \rightarrow \infty$, we have
\begin{align*}
    \sup_{t \in [0,T]} \max_{1 \leq l \leq L+1} \left\{ \frac{1}{\sqrt{n_{l-1}}} \|W^{(l)}_t - W^{(l)}_0\|_2, \ \frac{1}{\sqrt{n_{l-1}}} \|W^{(l)}_t - W^{(l)}_0\|_F \right\} &\xrightarrow{P} 0, \\
    \sup_{t \in [0,T]} \max_{1 \leq l \leq L} \left\{ \frac{1}{\sqrt{n_{l}}} \| h^{(l)}_t(\mcatx_{t,S}) - h^{(l)}_0(\mcatx_{0,S})\|_2 \right\} &\xrightarrow{P} 0, \\
    \sup_{t \in [0,T]} \max_{0 \leq l \leq L} \left\{ \frac{1}{\sqrt{n_{l}}} \| x^{(l)}_t(\mcatx_{t,S}) - x^{(l)}_0(\mcatx_{0,S})\|_2 \right\} &\xrightarrow{P} 0.
\end{align*}
\end{lemma}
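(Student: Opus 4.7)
The plan is to combine the derivative bounds from Lemmas~\ref{lem:Wbt-dir-bd}--\ref{lem:ht-xt-dir-bd} with a bootstrap argument that propagates initial scale control forward in time. Each of those lemmas has the form ``time-derivative at $t$'' $\leq \mathrm{Poly}_t \cdot (\text{factor that integrates to } O_p(1))$, where $\mathrm{Poly}_t$ is a polynomial in precisely the rescaled norms we are trying to control and the integrating factor is handled by Assumption~\ref{ass:opt-dir}. Hence if I can show $\sup_{t\in[0,T]}\mathrm{Poly}_t = O_p(1)$, then the displacements $\|W^{(l)}_t - W^{(l)}_0\|_2$, $\|h^{(l)}_t(\mcatx_{t,S}) - h^{(l)}_0(\mcatx_{0,S})\|_2$ and $\|x^{(l)}_t(\mcatx_{t,S}) - x^{(l)}_0(\mcatx_{0,S})\|_2$ are themselves $O_p(1)$, so dividing by $\sqrt{n_{l-1}}$ or $\sqrt{n_l}$ gives each of the three claimed convergences in probability.

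First I would verify that $\mathrm{Poly}_0 = O_p(1)$ at initialization. For $\frac{1}{\sqrt{n_{l-1}}}\|W^{(l)}_0\|_2$ this follows from Lemma~\ref{lem:matrix-norm} applied to the Gaussian weights. For $\frac{1}{\sqrt{n_l}}\|h^{(l)}_0(\mcatx_{0,S})\|_2$ and $\frac{1}{\sqrt{n_l}}\|x^{(l)}_0(\mcatx_{0,S})\|_2$, Lemma~\ref{lem:ref-jacot} says $h^{(l)}_0$ tends to a centered Gaussian process with bounded covariance, so the rescaled norms at the clean input $\mcatx$ concentrate around finite constants, and $\phi$-Lipschitzness (Assumption~\ref{ass:active}) transfers the bound to $x^{(l)}_0$. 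The shift from $\mcatx$ to $\mcatx_{0,S}$ is absorbed by Lemma~\ref{lem:xts-delta-bd}, whose bound is itself of the form $O_p(1) \cdot \mathrm{Poly}_0$ and therefore finite at $t=0$.

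Next I would introduce a stopping time $\tau^* := \inf\{t \geq 0 : \mathrm{Poly}_t > C\}$ for a deterministic $C$ chosen strictly above the limiting value of $\mathrm{Poly}_0$. On the event $\{\tau^* \geq T\}$ I may replace $\mathrm{Poly}_\tau$ by $C$ throughout $[0,T]$, so Lemma~\ref{lem:Wbt-dir-bd} combined with Assumption~\ref{ass:opt-dir} yields
\begin{align*}
\sup_{t\in[0,T]} \|W^{(l)}_t - W^{(l)}_0\|_2 \leq C \int_0^T \|\partial_{f(\mcatx)}\mathcal L(f_\tau(\mcatx_{\tau,S}), \mcaty)\|_2 \, d\tau = O_p(1),
\end{align*}
and dividing by $\sqrt{n_{l-1}}$ gives convergence to $0$ in probability. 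Entirely parallel reasoning based on Lemmas~\ref{lem:xts-dir-bd} and \ref{lem:ht-xt-dir-bd} (with Lemma~\ref{lem:xts-delta-bd} to move between search times $s=0$ and $s=S$) handles the two remaining claims for $h^{(l)}$ and $x^{(l)}$. Since each such displacement is $O_p(1)$ while the rescaling divides by $\sqrt{n_{l-1}}$ or $\sqrt{n_l}$, the change in $\mathrm{Poly}_t$ over $[0, \tau^* \wedge T]$ is $o_p(1)$, so $\mathbb{P}(\tau^* < T) \to 0$ as $\min_l n_l \to \infty$ and the bootstrap closes.

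The hard part will be taming this circularity: the derivative bounds involve $\mathrm{Poly}_t$, which is built from the very quantities we are trying to control. The stopping-time device above is the standard way out, but two subtleties need care. First, the factor $\exp(O_p(1)\cdot \mathrm{Poly}_t)$ appearing in Lemma~\ref{lem:xts-dir-bd} must be bounded on $\{\tau^* \geq T\}$ by a finite constant depending only on $C$ and $T$; this is automatic once $\mathrm{Poly}_t \leq C$ is enforced, but it explains why the proof cannot tolerate an a priori superpolynomial blow-up of $\mathrm{Poly}_t$. Second, the constant $C$ must be selected uniformly in the widths, which is exactly what the initialization limits provided by Lemmas~\ref{lem:matrix-norm} and \ref{lem:ref-jacot} deliver: they pin $\mathrm{Poly}_0$ to a deterministic finite limit independent of $n_0,\dots,n_L$.
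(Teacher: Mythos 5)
Your proposal is correct in spirit but takes a genuinely different route from the paper. The paper does \emph{not} use a stopping-time bootstrap; it defines a single aggregate
\begin{align*}
A_t := \sum_{l} \tfrac{1}{\sqrt{n_{l-1}}}\bigl(\|W^{(l)}_0\|_F + \|W^{(l)}_t - W^{(l)}_0\|_F\bigr) + \sum_{l} \tfrac{1}{\sqrt{n_l}}\bigl(\|h^{(l)}_0(\mcatx_{0,S})\|_2 + \|h^{(l)}_t(\mcatx_{t,S}) - h^{(l)}_0(\mcatx_{0,S})\|_2\bigr) + \cdots
\end{align*}
bounds $\partial_t A_t \leq \tfrac{1}{\sqrt{\min_l n_l}} \cdot g(A_t, O_p(1))\cdot(1+\|\partial_{f(\mcatx)}\mathcal L\|_2)$ with $g(x,y) = yP(x)(1+e^{yP(x)})$, and then applies LaSalle's \emph{nonlinear} Gr{\"o}nwall inequality (Lemma~\ref{lem:gronwall-nonlinear}) to conclude $G_C(A_t - A_0) \leq \tfrac{O_p(1)}{\sqrt{\min_l n_l}}$, after which continuity of $G_C^{-1}$ at $0$ yields the claim. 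Your stopping-time/bootstrap device reaches the same endpoint: you freeze the $\mathrm{Poly}_t$-dependence by working on $\{\tau^* \geq T\}$, integrate the linear bounds from Lemmas~\ref{lem:Wbt-dir-bd}, \ref{lem:xts-dir-bd}, \ref{lem:ht-xt-dir-bd} to get the unrescaled displacement $=O_p(1)$, and argue the rescaled displacement is therefore $o_p(1)$. What each buys: the nonlinear Gr{\"o}nwall route never has to introduce a good event or argue $\mathbb{P}(\tau^* < T) \to 0$ — the self-reference is resolved analytically in one integration — whereas the bootstrap is a more familiar probabilistic device and avoids invoking the less standard LaSalle lemma, at the cost of the extra event-by-event bookkeeping.

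Two points in your sketch warrant tightening. First, your stopping time $\tau^* := \inf\{t : \mathrm{Poly}_t > C\}$ is not well-posed as written, because $\mathrm{Poly}_t$ denotes a \emph{class} of polynomials, not a single scalar process; you would need to instantiate a concrete aggregate (the paper's $A_t$ works) and define $\tau^*$ on that. Second, the closing step ``the change in $\mathrm{Poly}_t$ over $[0, \tau^* \wedge T]$ is $o_p(1)$, so $\mathbb{P}(\tau^* < T) \to 0$'' needs to be quantified uniformly in the widths: the $O_p(1)$ constants in Assumption~\ref{ass:opt-dir} and Lemma~\ref{lem:matrix-norm} must be trapped on a high-probability event \emph{before} the contradiction argument (if $\tau^* < T$ then $C = \mathrm{Poly}_{\tau^*} \leq \mathrm{Poly}_0 + o_p(1) < C$) is made, and $C$ must be chosen strictly above the deterministic limit of the aggregate at initialization. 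Your sketch gestures at both subtleties but does not carry them out; they are the real work of the lemma, as the paper's more detailed treatment of the same issue via $G_C$ and Definition~\ref{def:big-o-prob} illustrates.
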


\begin{proof}

Suppose $A_t$ denotes
\begin{align*}
    A_t :=&\sum_{l=1}^{L+1} \frac{1}{\sqrt{n_{l-1}}} \left( \|W^{(l)}_0\|_F + \|W^{(l)}_t - W^{(l)}_0\|_F \right) \nonumber \\
        &+\sum_{l=0}^{L} \frac{1}{\sqrt{n_l}} \left( \|x^{(l)}_0(\mcatx_{0,S})\|_2 + \|x^{(l)}_t(\mcatx_{t,S}) - x^{(l)}_0(\mcatx_{0,S})\|_2 \right) \nonumber \\
        &+ \sum_{l=1}^{L} \frac{1}{\sqrt{n_l}} \left( \|h^{(l)}_0(\mcatx_{0,S})\|_2 + \|h^{(l)}_t(\mcatx_{t,S}) - h^{(l)}_0(\mcatx_{0,S})\|_2 \right).
\end{align*}

Then, by applying Lemmas~\ref{lem:Wbt-dir-bd}, \ref{lem:ht-xt-dir-bd} and the fact that $\partial_t \|\cdot\|_2 \leq \|\partial_t \cdot\|_2$ holds for any given vector, we have the following,
\begin{align}
    \partial_t A_t
    &\leq \sum_{l=1}^{L+1} \frac{1}{\sqrt{n_{l-1}}} \|\partial_t W^{(l)}_t\|_F 
        + \sum_{l=0}^{L} \frac{1}{\sqrt{n_l}} \|\partial_t x^{(l)}_t(\mcatx_{t,S})\|_2 + \sum_{l=1}^{L} \frac{1}{\sqrt{n_l}} \|\partial_t h^{(l)}_t(\mcatx_{t,S})\|_2 \nonumber \\
    &\leq \frac{L+1}{\sqrt{\min_{l \in [0:L]}\{n_l\}}} \cdot \underbrace{ \mathrm{Poly}_t \cdot \|\partial_{f(\mcatx)} \mathcal L(f_t(\mcatx_{t,S}), \mcaty)\|_2 }_{\text{Lemma~\ref{lem:Wbt-dir-bd}}} \nonumber \\
    &\quad + \frac{(L+1) + L}{\sqrt{\min_{l\in[0:L]}\{n_l\}}} \cdot \mathrm{Poly}_t \cdot \underbrace{ \left( \|\partial_{f(\mcatx)} \mathcal L(f_t(\mcatx_{t,S}), \mcaty)\|_2 + \|\partial_t \mcatx_{t,S}\|_2 \right) }_{\text{Lemma~\ref{lem:ht-xt-dir-bd}}} \nonumber \\
    &\leq \frac{1}{\sqrt{\min_{l \in [0:L]}\{n_l\}}} \cdot \left( \mathrm{Poly}_t \cdot \|\partial_{f(\mcatx)} \mathcal L(f_t(\mcatx_{t,S}), \mcaty)\|_2 + \|\partial_t \mcatx_{t,S}\|_2 \right).
    \label{lem:wide-converge:pf:eq2}
\end{align}

By further applying Lemma~\ref{lem:xts-dir-bd} into Eq.~(\ref{lem:wide-converge:pf:eq2}), we have that for every $t \in [0,T]$,
\begin{align}
    &\partial_t A_t \nonumber \\
    &\leq \frac{ \poly_t \cdot \|\partial_{f(\mcatx)} \mathcal L(f_{t}(\mcatx_{t,S}), \mcaty)\|_2 + \exp( O_p(1) \cdot \poly_t ) \cdot O_p(1) \cdot \poly_t \cdot (\|\partial_{f(\mcatx)} \mathcal L(f_{t}(\mcatx_{t,S}), \mcaty)\|_2 + 1) }{\sqrt{\min_{l \in [0:L]}\{n_l\}}} \nonumber \\
    &\leq \frac{1}{\sqrt{\min_{l \in [0:L]}\{n_l\}}} \cdot \left(1 + e^{ O_p(1) \cdot \poly_t } \right) \cdot O_p(1) \cdot \poly_t \cdot \left( 1 + \|\partial_{f(\mcatx)} \mathcal L(f_{t}(\mcatx_{t,S}), \mcaty)\|_2 \right).
    \label{lem:wide-converge:pf:eq3}
\end{align}
According to the definition, the polynomial $\mathrm{Poly}_t$ in the above Eq.~(\ref{lem:wide-converge:pf:eq3}) is a deterministic combination of $\|W^{(l)}_t\|_F$, $\|x^{(l)}_t(\mcatx_{t,S})\|_2$, and $\|h^{(l)}_t(\mcatx_{t,S})\|_2$.
Therefore, given the fact that
\begin{align*}
    \left\{ \begin{aligned}
        &\|W^{(l)}_t\|_F \leq \|W^{(l)}_0\|_F + \|W^{(l)}_t - W^{(l)}_0\|_F \\
        &\|x^{(l)}_t(\mcatx_{t,S})\|_2 \leq \|x^{(l)}_0(\mcatx_{0,S})\|_2 + \|x^{(l)}_t(\mcatx_{t,S}) - x^{(l)}_0(\mcatx_{0,S})\|_2 \\
        &\|h^{(l)}_t(\mcatx_{t,S})\|_2 \leq \|h^{(l)}_0(\mcatx_{0,S})\|_2 + \|h^{(l)}_t(\mcatx_{t,S}) - h^{(l)}_0(\mcatx_{0,S})\|_2
    \end{aligned} \right.,
\end{align*}
for the polynomial $\mathrm{Poly}_t$ in Eq.~(\ref{lem:wide-converge:pf:eq3}), one can find a polynomial function $P(\cdot)$ with finite degree and finite positive coefficients such that
\begin{align*}
    \mathrm{Poly}_t \leq P(A_t)
\end{align*}
holds for every $t \in [0,T]$.
As a result, $\partial_t A_t$ can be further bounded as below,
\begin{align*}
    \partial_t A_t
    \leq \frac{1}{\sqrt{\min_{l \in [0:L]}\{n_l\}}} \cdot \left(1 + e^{ O_p(1) \cdot P(A_t) } \right) \cdot O_p(1) \cdot P(A_t) \cdot \left( 1 + \|\partial_{f(\mcatx)} \mathcal L(f_{t}(\mcatx_{t,S}), \mcaty)\|_2 \right).
\end{align*}
which means for every $t \in [0,T]$,
\begin{align*}
    A_t - A_0
    &\leq \int_0^t \partial_{\tau} A_{\tau} \mathrm{d}\tau
    \leq \frac{1}{\sqrt{\min_{l\in[0:L]}\{n_l\}}} \cdot \int_0^t g(A_{\tau}, O_p(1)) \cdot \left( 1 + \|\partial_{f(\mcatx)} \mathcal L(f_\tau(\mcatx_{\tau,S}), \mcaty)\|_2 \right) \mathrm{d}\tau,
\end{align*}
where $g(x, y)$ is defined as
\begin{gather*}
    g(x, y) := y \cdot P(x) \cdot \left(1 + e^{y \cdot P(x)}\right).
\end{gather*}

By the definition of $O_p(1)$, we have that for any $\delta > 0$, there exists a finite $C > 0$ and a finite $N \in \mathbb{N}^+$ such that $\mathbb{P}( O_p(1) \leq C ) \geq 1 - \delta$ as long as $\min_{l\in[0:L]} \{n_l\} > N$.
Notice that $g(x,y)$ is increasing concerning $y$, thus when $\min_{l\in[0:L]} \{n_l\} > N$, with probability at least $1-\delta$, we have
\begin{align*}
    A_t - A_0
    \leq \frac{1}{\sqrt{\min_{l\in[0:L]}\{n_l\}}} \cdot \int_0^t g(A_{\tau}, C) \cdot \left( 1 + \|\partial_{f(\mcatx)} \mathcal L(f_\tau(\mcatx_{\tau,S}), \mcaty)\|_2 \right) \mathrm{d}\tau.
\end{align*}

Furthermore, notice that (1) $(A_t - A_0)$ is non-negative on $[0,T]$ by definition, (2) $(1+\|\partial_{f(\mcatx)} \mathcal L(f_t(\mcatx_{t,S}), \mcaty)\|_2)$ is integrable on any sub-interval $[0,a] \subset [0,T]$,
and (3) $g(x, C)$ is continuous and positive on $[0, +\infty)$ concerning $x$ and thus $1/g(x,C)$ is also integerable on any interval $[0,a]$.
Therefore, we can apply a nonlinear form of the Gr{\"o}nwall's inequality, {\it i.e.}, Lemma~\ref{lem:gronwall-nonlinear}, as well as Assumption~\ref{ass:opt-dir}, and have that
\begin{align*}
    \sup_{t\in[0,T]} G_C(A_t - A_0)
    &\leq \frac{1}{\sqrt{\min_{l\in[0:L]}\{n_l\}}} \int_0^t \left(1 + \|\partial_{f(\mcatx)} \mathcal L(f_\tau(\mcatx_{\tau,S}), \mcaty)\|_2 \right) \mathrm{d}\tau \nonumber \\
    &\leq \frac{O_p(1)}{\sqrt{\min_{l\in[0:L]}\{n_l\}}} 
    \leq \frac{C}{\sqrt{\min_{l\in[0:L]}\{n_l\}}},
\end{align*}
where $G_C(x) := \int_0^x \frac{\mathrm{d}\tau}{g(\tau, C)}$ is a continuous increasing function on $[0,+\infty)$ and thus the inverse function $G_C^{-1}$ also exists and is continuous on $[0,G_C(+\infty))$.

Thus, as $\min_{l\in[0:L]}\{n_l\} \rightarrow \infty$, we have that $\sup_{t\in[0:T]} G_C(A_t - A_0) \rightarrow 0$.
Combining with the fact that $G_C(x) = 0$ in the range $[0,+\infty)$ if and only if $x=0$ and the continuity of $G_C^{-1}$ on $[0,+\infty)$, we further have that with probability at least $1 - \delta$, $\sup_{t\in[0,T]} (A_t - A_0)$ can be arbitrarily small for sufficiently large  $\min_{l\in[0:L]} \{n_l\}$.

The above justification suggests that as $\min_{l\in[0:L]} \{n_l\} \rightarrow\infty$,
\begin{align*}
    &\sup_{t\in[0,T]}\left\{
    \frac{\|W^{(l)}_t - W^{(l)}_0\|_F}{\sqrt{n_{l-1}}}, \
    \frac{\|h^{(l)}_t(\mcatx_{t,S}) - h^{(l)}_0(\mcatx_{0,S})\|_2}{\sqrt{n_l}}, \
    \frac{\|x^{(l)}_t(\mcatx_{t,S}) - x^{(l)}_0(\mcatx_{0,S})\|_2}{\sqrt{n_l}} \right\} \nonumber \\
    &\leq \sup_{t\in[0,T]} (A_t - A_0) \xrightarrow{P} 0.
\end{align*}

The proof is completed.
\end{proof}

\subsection{Convergences of Kernels During AT}
\label{app:kernel-converge}

This section aims to prove Theorems~\ref{lem:conv-training-xts-xts} and \ref{lem:conv-training-x-xts}, which show that as network widths $n_0,\cdots,n_L$ approach infinite limits, the empirical NTK $\hat\Theta_{\theta,t}$ and the empirical ARK $\hat\Theta_{x,t}$ will converge to the deterministic kernels $\Theta_{\theta}$ and $\Theta_{x}$ defined in Theorem~\ref{thm:conv-init-formal} respectively.

\subsubsection{Preparations}

We first show that rescaled network parameters $\frac{1}{\sqrt{n_{l-1}}} W^{(l)}_t$ are bounded during AT (Lemmas~\ref{lem:fx-bd:W-init},~\ref{lem:fx-bd:W-training}), which thus indicates the optimization directions  (Lemma~\ref{lem:fx-bd:train-dir}) as well as network outputs (Lemmas~\ref{lem:fx-bd:h-x-s}, \ref{lem:fx-bd:h-training}, \ref{lem:fx-bd:h-test}) in each layer are also bounded.
The proofs relies on the key Lemma~\ref{lem:scale-converge} presented in the previous section.

\begin{lemma}
\label{lem:fx-bd:W-init}
Suppose there exists $\tilde n \in \mathbb{N}^+$ such that $\{n_0, \cdots, n_L\} \geq \tilde n$.
For a given $l\in [1:L+1]$, suppose $\frac{n_l}{n_{l-1}} = O_p(1)$ as $\tilde n \rightarrow \infty$.
Then, we have that as $\tilde n \rightarrow \infty$,
\begin{gather*}
    \frac{1}{\sqrt{n_{l-1}}} \|W^{(l)}_0\|_2
    = O_p(1).
\end{gather*}
\end{lemma}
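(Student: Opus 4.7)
The plan is to directly invoke the Gaussian random matrix concentration result recorded as Lemma~\ref{lem:matrix-norm}. Recall that by construction each entry of $W^{(l)}_0$ is drawn i.i.d.\ from $\mathcal{N}(0,\sigma_W^2)$, so we can factor $W^{(l)}_0 = \sigma_W \cdot A$, where $A \in \mathbb{R}^{n_l \times n_{l-1}}$ has i.i.d.\ standard normal entries. Since $\|\cdot\|_2$ is absolutely homogeneous, $\|W^{(l)}_0\|_2 = \sigma_W \|A\|_2$, and bounding $\|A\|_2$ is exactly what Lemma~\ref{lem:matrix-norm} does.

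Applying Lemma~\ref{lem:matrix-norm} to $A$, for every $t \geq 0$, with probability at least $1 - 2\exp(-t^2/2)$ we have $\|A\|_2 \leq \sqrt{n_l} + \sqrt{n_{l-1}} + t$. Dividing by $\sqrt{n_{l-1}}$ then yields
\begin{align*}
    \frac{1}{\sqrt{n_{l-1}}} \|W^{(l)}_0\|_2
    \leq \sigma_W \left( \sqrt{\frac{n_l}{n_{l-1}}} + 1 + \frac{t}{\sqrt{n_{l-1}}} \right).
\end{align*}

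To finish, fix an arbitrary $\delta > 0$ and choose $t = \sqrt{2 \log(2/\delta)}$, so that the above bound holds with probability at least $1 - \delta$ uniformly in $\tilde n$. By the hypothesis $n_l / n_{l-1} = O_p(1)$, there exist a finite constant $C$ and an index $N \in \mathbb{N}^+$ such that $\sqrt{n_l/n_{l-1}} \leq C$ with probability at least $1-\delta$ whenever $\tilde n \geq N$; moreover, $t / \sqrt{n_{l-1}} \leq t / \sqrt{\tilde n} \to 0$. Combining these facts via a union bound shows that with probability at least $1 - 2\delta$, $\frac{1}{\sqrt{n_{l-1}}}\|W^{(l)}_0\|_2 \leq \sigma_W(C + 1 + 1)$ for all sufficiently large $\tilde n$, matching the definition of $O_p(1)$ (Definition~\ref{def:big-o-prob}).

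I do not anticipate any serious obstacle here: the argument is a direct application of a known Gaussian random matrix tail bound combined with the ratio assumption $n_l / n_{l-1} = O_p(1)$. The only subtlety is keeping track of the $O_p(1)$ bookkeeping so that the failure probabilities from Lemma~\ref{lem:matrix-norm} and from the ratio assumption can be combined by a simple union bound.
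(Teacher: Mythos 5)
Your proof is correct and follows essentially the same route as the paper: normalize $W^{(l)}_0$ by $\sigma_W$, invoke Lemma~\ref{lem:matrix-norm} to get the high-probability bound $\sqrt{n_l}+\sqrt{n_{l-1}}+t$, divide by $\sqrt{n_{l-1}}$, and absorb the three resulting terms using the hypothesis $n_l/n_{l-1}=O_p(1)$ and $n_{l-1}\geq\tilde n\to\infty$. The only cosmetic difference is the bookkeeping of the tail parameter $t$ (you keep it fixed and let $t/\sqrt{n_{l-1}}\to 0$, whereas the paper bounds $t$ by $\sqrt{\tilde n}$ before dividing), which is an immaterial variation.
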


\begin{proof}
By definition, each entry of the matrix $\frac{1}{\sigma_W} W^{(l)}_0 \in \mathbb{R}^{n_l \times n_{l-1}}$ is drawn from the standard Gaussian $\mathcal{N}(0,1)$.
Therefore, by applying Lemma~\ref{lem:matrix-norm}, we have for any $\delta > 0$, when $\tilde n \geq 2 \ln \frac{2}{\delta}$, with probability at least $1-\delta$,
\begin{gather*}
    \left\| \frac{1}{\sigma_W} W^{(l)}_0 \right\|_2
    \leq \sqrt{n_l} + \sqrt{n_{l-1}} + \sqrt{ 2 \ln \frac{2}{\delta} }
    \leq \sqrt{n_l} + \sqrt{n_{l-1}} + \sqrt{\tilde n}
    \leq \sqrt{n_l} + \sqrt{n_{l-1}} + \sqrt{n_l},
\end{gather*}
which means
\begin{gather*}
    \frac{1}{\sqrt{n_{l-1}}} \|W^{(l)}_0\|_2 \leq \sigma_W \cdot \left( 2 + \sqrt{\frac{n_l}{n_{l-1}}} \right)
    = 2 \sigma_W + O_p(1)
    =O_p(1)
\end{gather*}
as $\tilde n \rightarrow \infty$.
Therefore,
\begin{align*}
    \frac{1}{\sqrt{n_{l-1}}} \|W^{(l)}_0\|_2
    = O_p(1)
\end{align*}
as $\tilde n \rightarrow \infty$, which completes the proof.
\end{proof}

\begin{lemma}
\label{lem:fx-bd:W-training}
Suppose Assumptions~\ref{ass:active}, \ref{ass:opt-dir} and \ref{ass:lr} hold, and there exists $\tilde n \in \mathbb{N}^+$ such that $\{n_0,\cdots,n_L\} \geq \tilde n$.
For a given $l\in[1:L+1]$, suppose $\frac{n_l}{n_{l-1}} = O_p(1)$ as $\tilde n \rightarrow \infty$.
Then, we have that as $\tilde n \rightarrow \infty$,
\begin{align*}
    \sup_{t\in[0,T]} \left\{ \frac{1}{\sqrt{n_{l-1}}} \|W^{(l)}_t\|_2 \right\} = O_p(1).
\end{align*}
\end{lemma}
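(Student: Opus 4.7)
The plan is to reduce the statement to the already-established control at initialization (Lemma~\ref{lem:fx-bd:W-init}) and the rescaled closeness-to-initialization estimate during AT (Lemma~\ref{lem:scale-converge}), glued together by the triangle inequality. Concretely, for every $t\in[0,T]$,
\begin{equation*}
\frac{1}{\sqrt{n_{l-1}}}\|W^{(l)}_t\|_2 \;\leq\; \frac{1}{\sqrt{n_{l-1}}}\|W^{(l)}_0\|_2 + \frac{1}{\sqrt{n_{l-1}}}\|W^{(l)}_t - W^{(l)}_0\|_2,
\end{equation*}
so that taking $\sup_{t\in[0,T]}$ on both sides (the first term on the right is independent of $t$, and supremum distributes over the second) produces the sum of two quantities, each of which I will argue is $O_p(1)$ as $\tilde n\to\infty$.

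First I would invoke Lemma~\ref{lem:fx-bd:W-init}: the hypothesis $n_l/n_{l-1} = O_p(1)$ appearing in the present statement is exactly what that lemma needs, so $\frac{1}{\sqrt{n_{l-1}}}\|W^{(l)}_0\|_2 = O_p(1)$. Next I would invoke Lemma~\ref{lem:scale-converge}: Assumptions~\ref{ass:active}, \ref{ass:opt-dir}, \ref{ass:lr} are in force, and $\min_{l'\in[0:L]}\{n_{l'}\}\ge\tilde n\to\infty$, which is precisely the width regime required by that lemma. It then gives $\sup_{t\in[0,T]}\frac{1}{\sqrt{n_{l-1}}}\|W^{(l)}_t - W^{(l)}_0\|_2 \xrightarrow{P} 0$, and convergence in probability to $0$ is in particular an $o_p(1)$, hence an $O_p(1)$, bound.

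Combining via the closure of $O_p(1)$ under finite sums yields $\sup_{t\in[0,T]}\frac{1}{\sqrt{n_{l-1}}}\|W^{(l)}_t\|_2 = O_p(1)$, as required. There is no genuine obstacle here beyond bookkeeping: the heavy lifting is entirely outsourced to Lemma~\ref{lem:scale-converge}, whose proof already absorbed the intricate gradient-flow/Gr\"onwall analysis. The only points that merit a line of justification are (i) checking that both invoked lemmas' hypotheses transfer verbatim from the present statement, and (ii) the standard fact that $o_p(1) \subset O_p(1)$ and that $O_p(1) + O_p(1) = O_p(1)$.
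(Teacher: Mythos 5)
Your proof is correct and matches the paper's argument exactly: both decompose $\tfrac{1}{\sqrt{n_{l-1}}}\|W^{(l)}_t\|_2$ via the triangle inequality into the initialization term (bounded by Lemma~\ref{lem:fx-bd:W-init}) and the deviation term (which Lemma~\ref{lem:scale-converge} sends to $0$ in probability), and then close with $O_p(1)+O_p(1)=O_p(1)$. No differences worth noting.
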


\begin{proof}
We have that
\begin{align*}
    \sup_{t\in[0,T]} \left\{ \frac{1}{\sqrt{n_{l-1}}} \|W^{(l)}_t\|_2 \right\}
    \leq \underbrace{ \frac{\|W^{(l)}_0\|_2}{\sqrt{n_{l-1}}} }_{\Upsilon_1}
        + \underbrace{ \sup_{t\in[0,T]} \frac{\|W^{(l)}_t - W^{(l)}_0\|_2}{\sqrt{n_{l-1}}} }_{\Upsilon_2}.
\end{align*}
By Lemma~\ref{lem:fx-bd:W-init}, we have $\Upsilon_1 = O_p(1)$ as $\tilde n \rightarrow\infty$.
By Lemma~\ref{lem:scale-converge}, we have $\Upsilon_2 \xrightarrow{P} 0$ as $\tilde n \rightarrow\infty$, which further indicates $\Upsilon_2 = O_p(1)$.
As a result,
\begin{align*}
    & \sup_{t\in[0,T]} \left\{ \frac{1}{\sqrt{n_{l-1}}} \|W^{(l)}_t\|_2 \right\}
    \leq O_p(1) + O_p(1)
    = O_p(1),
\end{align*}
which demostrates that
$\sup_{t\in[0,T]} \left\{ \frac{1}{\sqrt{n_{l-1}}} \|W^{(l)}_t\|_2 \right\} = O_p(1)$ as $\tilde n \rightarrow\infty$.

The proof is completed.
\end{proof}

\begin{lemma}
\label{lem:fx-bd:train-dir}
Suppose Assumptions~\ref{ass:active}, \ref{ass:opt-dir} and \ref{ass:lr} hold, and there exists $\tilde n \in \mathbb{N}^+$ such that $\{n_0,\cdots,n_L\} \geq \tilde n$.
For a given $l \in [1:L+1]$, suppose $\frac{n_{l'+1}}{n_l'} = O_p(1)$ as $\tilde n\rightarrow\infty$ holds for any $l \leq l' \leq L+1$.
Then, as $\tilde n \rightarrow \infty$, we have that
\begin{align*}
    \sup_{t\in[0,T],s\in[0,S]} \| \partial_{h^{(l)}(\mcatx)} h^{(L+1)}_t(\mcatx_{t,s}) \|_2 = O_p(1).
\end{align*}
\end{lemma}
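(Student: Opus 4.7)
The plan is to apply the chain rule to decompose the Jacobian into a product of per-layer factors, and then bound each factor uniformly in $s$ and $t$ using the already-established scale-bound on the weight matrices (Lemma~\ref{lem:fx-bd:W-training}) together with the Lipschitz continuity of $\phi$ (Assumption~\ref{ass:active}).

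First I would write, via the chain rule,
\begin{align*}
\partial_{h^{(l)}(\mcatx)} h^{(L+1)}_t(\mcatx_{t,s})
= \left( \prod_{l'=L}^{l+1} \partial_{h^{(l')}(\mcatx)} h^{(l'+1)}_t(\mcatx_{t,s}) \right) \cdot \partial_{h^{(l)}(\mcatx)} h^{(l+1)}_t(\mcatx_{t,s}),
\end{align*}
where each factor equals $\tfrac{1}{\sqrt{n_{l'}}} \left( I_M \otimes W^{(l'+1)}_t \right) \cdot \mathrm{Diag}\bigl( \phi'(h^{(l')}_t(\mcatx_{t,s})) \bigr)$. Taking operator norms, using sub-multiplicativity, Corollary~\ref{cor:kronecker}, and the fact that $|\phi'| \leq K$ by Assumption~\ref{ass:active}, I would obtain
\begin{align*}
\sup_{t\in[0,T],\,s\in[0,S]} \| \partial_{h^{(l)}(\mcatx)} h^{(L+1)}_t(\mcatx_{t,s}) \|_2
\leq K^{L-l+1} \cdot \prod_{l'=l}^{L} \sup_{t\in[0,T]} \frac{1}{\sqrt{n_{l'}}} \|W^{(l'+1)}_t\|_2.
\end{align*}
Note that the dependence on $s$ has already been eliminated by the uniform $K$-bound on $\phi'$, so the sup over $s$ is trivially absorbed.

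Second, for each index $l' \in [l:L]$, the hypothesis $n_{l'+1}/n_{l'} = O_p(1)$ lets me invoke Lemma~\ref{lem:fx-bd:W-training} applied to layer $l'+1 \in [l+1:L+1]$, yielding $\sup_{t\in[0,T]} \tfrac{1}{\sqrt{n_{l'}}} \|W^{(l'+1)}_t\|_2 = O_p(1)$ as $\tilde n \to \infty$. Since a finite product of $O_p(1)$ quantities is again $O_p(1)$, and $K^{L-l+1}$ is a deterministic constant, the claim follows.

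The main obstacle is really just bookkeeping the ratio condition: one must check that the assumption $n_{l'+1}/n_{l'} = O_p(1)$ stated for $l \leq l' \leq L+1$ covers exactly the indices needed to apply Lemma~\ref{lem:fx-bd:W-training} at each layer $l'+1$ in the chain-rule product, so that the scale-bound is available uniformly along the chain. Beyond this, no new probabilistic estimate is required — the proof is a direct combination of the chain rule, the Lipschitz bound on $\phi$, and the previously established weight-matrix bound.
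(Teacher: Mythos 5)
Your proposal is correct and matches the paper's proof essentially line for line: both decompose the Jacobian via the chain rule into per-layer factors, bound each factor by $K \cdot \tfrac{1}{\sqrt{n_{l'}}}\|W^{(l'+1)}_t\|_2$ using Assumption~\ref{ass:active}, and invoke Lemma~\ref{lem:fx-bd:W-training} to conclude that the finite product is $O_p(1)$. The bookkeeping of the ratio hypothesis is handled the same way in both.
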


\begin{proof}
Since
\begin{align*}
    &\sup_{t\in[0,T],s\in[0,S]} \| \partial_{h^{(l)}(\mcatx)} h^{(L+1)}_t(\mcatx_{t,s}) \|_2
    \leq \prod_{l'=l}^{L} \sup_{t\in[0,T],s\in[0,S]} \| \partial_{h^{(l')}(\mcatx)} h^{(l'+1)}_t(\mcatx_{t,s}) \|_2 \\
    &\leq \prod_{l'=l}^{L} \sup_{t\in[0,T]} \underbrace{ K }_{\text{Assumption~\ref{ass:active}}} \cdot \frac{ \| W^{(l'+1)}_t \|_2 }{ \sqrt{n_{l'}} }
    \leq \prod_{l'=l}^{L} K \cdot \underbrace{ O_p(1) }_{\text{Lemma~\ref{lem:fx-bd:W-training}}}
    = O_p(1),
\end{align*}
thus
$\sup_{t\in[0,T],s\in[0,S]} \| \partial_{h^{(l)}(\mcatx)} h^{(L+1)}_t(\mcatx_{t,s}) \|_2 = O_p(1)$
as $\tilde n \rightarrow \infty$.

The proof is completed.
\end{proof}

\begin{lemma}
\label{lem:fx-bd:h-x-s}
Suppose Assumptions~\ref{ass:active}, \ref{ass:opt-dir} and \ref{ass:lr} hold, and there exists $\tilde n \in \mathbb{N}^+$ such that $\{n_0,\cdots,n_L\} \geq \tilde n$.
Suppose for any $l \in[0:L]$, we have $\frac{n_{l+1}}{n_l} = O_p(1)$ as $\tilde n\rightarrow\infty$.
Then, as $\tilde n \rightarrow \infty$,
\begin{align*}
    \max_{l\in[0:L]} \sup_{t\in[0,T]} \sup_{s_1,s_2\in[0,S]} \| x^{(l)}_t(\mcatx_{t,s_1}) - x^{(l)}_t(\mcatx_{t,s_2}) \|_2 = O_p(1), \\
    \max_{l\in[1:L+1]} \sup_{t\in[0,T]} \sup_{s_1,s_2\in[0,S]} \| h^{(l)}_t(\mcatx_{t,s_1}) - h^{(l)}_t(\mcatx_{t,s_2}) \|_2 = O_p(1).
\end{align*}
\end{lemma}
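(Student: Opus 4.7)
The plan is to invoke Lemma~\ref{lem:xts-delta-bd}, which already shows pointwise-in-$t$ that $\sup_{s_1,s_2\in[0,S]} \|x^{(l)}_t(\mcatx_{t,s_1}) - x^{(l)}_t(\mcatx_{t,s_2})\|_2 \leq O_p(1) \cdot \mathrm{Poly}_t$, and then upgrade it to a uniform-in-$t$ bound by showing that $\sup_{t\in[0,T]} \mathrm{Poly}_t = O_p(1)$. Since $\mathrm{Poly}_t$ is by definition a finite-degree polynomial with positive constant coefficients in the rescaled quantities $\frac{1}{\sqrt{n_{l-1}}}\|W^{(l)}_t\|_2$, $\frac{1}{\sqrt{n_l}}\|x^{(l)}_t(\mcatx_{t,S})\|_2$, and $\frac{1}{\sqrt{n_l}}\|h^{(l)}_t(\mcatx_{t,S})\|_2$, and since sums and products of $O_p(1)$ quantities are $O_p(1)$, it suffices to show that each constituent is $O_p(1)$ uniformly in $t\in[0,T]$. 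The post-activation claim then follows by maximising over the finitely many layers $l\in[0:L]$.

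Concretely, the rescaled weight norms are handled directly by Lemma~\ref{lem:fx-bd:W-training}. For the rescaled activation norms, I would decompose via the triangle inequality
\begin{align*}
    \sup_{t\in[0,T]} \frac{\|x^{(l)}_t(\mcatx_{t,S})\|_2}{\sqrt{n_l}}
    \leq \frac{\|x^{(l)}_0(\mcatx_{0,S})\|_2}{\sqrt{n_l}}
        + \sup_{t\in[0,T]} \frac{\|x^{(l)}_t(\mcatx_{t,S}) - x^{(l)}_0(\mcatx_{0,S})\|_2}{\sqrt{n_l}},
\end{align*}
where the perturbation term is $o_p(1)$ by Lemma~\ref{lem:scale-converge}. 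The initialization term I would bound by induction on $l$: at $l=0$, $\mcatx_{0,S}=\mcatx$ is a deterministic constant; the induction step combines the Lipschitz estimate $|\phi(u)|\leq|\phi(0)|+K|u|$ from Assumption~\ref{ass:active} with the recursion $h^{(l+1)}_0 = \frac{1}{\sqrt{n_l}} W^{(l+1)}_0 x^{(l)}_0(\mcatx) + b^{(l+1)}_0$ and Lemma~\ref{lem:fx-bd:W-init} (which uses the ratio hypothesis $n_{l+1}/n_l = O_p(1)$), together with standard Gaussian concentration for $\|b^{(l+1)}_0\|_2/\sqrt{n_{l+1}}$. The analogous recursion controls $\|h^{(l)}_t(\mcatx_{t,S})\|_2/\sqrt{n_l}$. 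This yields $\sup_{t\in[0,T]}\mathrm{Poly}_t = O_p(1)$ and hence the first claim.

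For the pre-activation claim, since $b^{(l)}_t$ does not depend on the search time $s$, the bias cancels in the difference:
\begin{align*}
    h^{(l)}_t(\mcatx_{t,s_1}) - h^{(l)}_t(\mcatx_{t,s_2})
    = \tfrac{1}{\sqrt{n_{l-1}}} (I_M \otimes W^{(l)}_t) \bigl( x^{(l-1)}_t(\mcatx_{t,s_1}) - x^{(l-1)}_t(\mcatx_{t,s_2}) \bigr).
\end{align*}
Combining Corollary~\ref{cor:kronecker} ($\|I_M\otimes W^{(l)}_t\|_2 = \|W^{(l)}_t\|_2$), Lemma~\ref{lem:fx-bd:W-training}, and the already-established post-activation bound gives the second claim uniformly for $l\in[1:L+1]$ (the output layer $l=L+1$ being affine in $x^{(L)}_t$ fits the same identity).

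\textbf{Main obstacle.} The delicate point is upgrading Lemma~\ref{lem:xts-delta-bd}'s pointwise-in-$t$ bound to one that is uniform over $t\in[0,T]$. This requires the simultaneous uniform-in-$t$ control of all the activation and weight factors that enter $\mathrm{Poly}_t$ across the finitely many layers, which in turn relies on the inductive control of the initialization norms $\|x^{(l)}_0(\mcatx)\|_2/\sqrt{n_l}$ and $\|h^{(l)}_0(\mcatx)\|_2/\sqrt{n_l}$; this induction only goes through because $|\phi(0)|<+\infty$ and the width ratios $n_{l+1}/n_l$ are $O_p(1)$ by hypothesis. Beyond this, the rest is bookkeeping.
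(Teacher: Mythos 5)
Your plan — revisit Lemma~\ref{lem:xts-delta-bd} and upgrade its pointwise-in-$t$ conclusion to a uniform one by arguing $\sup_{t\in[0,T]}\mathrm{Poly}_t = O_p(1)$ — is structurally the right idea, and you correctly identify the new ingredient (the width-ratio hypothesis). But you are doing substantially more work than the paper, which takes this proof as a one-liner: ``apply Lemma~\ref{lem:fx-bd:train-dir} to the proof of Lemma~\ref{lem:xts-delta-bd}.'' The reason the short route works is that the \emph{specific} $\mathrm{Poly}_t$ factors that arise in the proof of Lemma~\ref{lem:xts-delta-bd} are products of rescaled weight norms only: the $l=0$ step uses $\sup_\tau\|\partial_{\mcatx} h^{(L+1)}_t(\mcatx_{t,\tau})\|_2\le K^L\prod_{l'}\tfrac{1}{\sqrt{n_{l'-1}}}\|W^{(l')}_t\|_2$, and the recursion for $l\ge 1$ multiplies by $K\cdot\tfrac{1}{\sqrt{n_{l-1}}}\|W^{(l)}_t\|_2$ at each step. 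No $\tfrac{1}{\sqrt{n_l}}\|x^{(l)}_t(\mcatx_{t,S})\|_2$ or $\tfrac{1}{\sqrt{n_l}}\|h^{(l)}_t(\mcatx_{t,S})\|_2$ factors ever enter. So Lemma~\ref{lem:fx-bd:W-training} (equivalently Lemma~\ref{lem:fx-bd:train-dir}, which is built from it) already makes every relevant coefficient uniformly $O_p(1)$ over $t$, and the remaining $O_p(1)$ comes from Assumption~\ref{ass:opt-dir} as in Lemma~\ref{lem:xts-delta-bd}. Your whole second block — triangle inequality via Lemma~\ref{lem:scale-converge} plus induction over initialization norms — is therefore unnecessary for this particular statement.

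There is also a genuine slip in your unnecessary block: you assert that ``at $l=0$, $\mcatx_{0,S}=\mcatx$ is a deterministic constant.'' It is not. The search process runs for duration $S$ at every training time including $t=0$, with initial condition $\mcatx_{t,0}=\mcatx$, so $\mcatx_{0,S}$ is the adversarial example produced from the randomly initialized network $f_0$ and depends on $\theta_0$. To anchor the triangle inequality you would additionally need $\|\mcatx_{0,S}-\mcatx\|_2 = O_p(1)$, which holds by integrating the search dynamics at $t=0$ and using the same product-of-rescaled-weight-norms bound plus Assumption~\ref{ass:opt-dir} — but that is exactly the content of Lemma~\ref{lem:xts-delta-bd} you set out to improve, so the circle closes cleanly only once you notice (as above) that only weight norms are involved. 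Your pre-activation step — bias cancellation in $h^{(l)}_t(\mcatx_{t,s_1})-h^{(l)}_t(\mcatx_{t,s_2})$ together with Corollary~\ref{cor:kronecker} and Lemma~\ref{lem:fx-bd:W-training} — is correct and matches the paper's implicit handling.
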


\begin{proof}
The proof is completed by applying Lemma~\ref{lem:fx-bd:train-dir} to the proof of Lemma~\ref{lem:xts-delta-bd}.
\end{proof}

\begin{lemma}
\label{lem:fx-bd:h-training}
Suppose Assumptions~\ref{ass:active}, \ref{ass:opt-dir} and \ref{ass:lr} hold, and there exists $\tilde n \in \mathbb{N}^+$ such that $\{n_0,\cdots,n_L\} \geq \tilde n$.
For a given $l \in [1:L+1]$, suppose $\frac{n_{l'+1}}{n_{l'}} = O_p(1)$ as $\tilde n\rightarrow\infty$ holds for any $l \leq l' \leq L$.
Then, as $\tilde n \rightarrow \infty$,
\begin{align*}
    \lim_{n_{l-1}\rightarrow\infty}\cdots\lim_{n_0\rightarrow\infty} \sup_{t\in[0,T],s\in[0,S]} \frac{1}{\sqrt{n_l}} \| h^{(l)}_t(\mcatx_{t,s}) \|_2 = O_p(1), \\
    \lim_{n_{l-1}\rightarrow\infty}\cdots\lim_{n_0\rightarrow\infty} \sup_{t\in[0,T],s\in[0,S]} \| h^{(l)}_t(\mcatx_{t,s}) - h^{(l)}_0(\mcatx) \|_2 = O_p(1).
\end{align*}
\end{lemma}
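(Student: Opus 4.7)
The plan is to prove the second assertion by induction on $l \in [1:L+1]$ and then derive the first assertion from it. Once the second assertion is available, the triangle inequality $\tfrac{1}{\sqrt{n_l}}\|h^{(l)}_t(\mcatx_{t,s})\|_2 \le \tfrac{1}{\sqrt{n_l}}\|h^{(l)}_0(\mcatx)\|_2 + \tfrac{1}{\sqrt{n_l}}\|h^{(l)}_t(\mcatx_{t,s}) - h^{(l)}_0(\mcatx)\|_2$ reduces the first claim to two facts: the first summand is $O_p(1)$ because Lemma~\ref{lem:ref-jacot} makes each entry of $h^{(l)}_0(\mcatx)$ a centered Gaussian of bounded variance under the sequential limits (so the squared $\ell_2$-norm grows like $M n_l$), while the second summand is $o_p(1)$ once its numerator is $O_p(1)$.

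For the inductive step on the second claim, I would split
\[
h^{(l)}_t(\mcatx_{t,s}) - h^{(l)}_0(\mcatx) = \big[h^{(l)}_t(\mcatx_{t,s}) - h^{(l)}_t(\mcatx)\big] + \big[h^{(l)}_t(\mcatx) - h^{(l)}_0(\mcatx)\big],
\]
treat the first bracket as $O_p(1)$ by Lemma~\ref{lem:fx-bd:h-x-s} (setting $s_1 = s$, $s_2 = 0$, since $\mcatx = \mcatx_{t,0}$), and further decompose the second bracket as
\[
\tfrac{1}{\sqrt{n_{l-1}}}(W^{(l)}_t - W^{(l)}_0)\,x^{(l-1)}_0(\mcatx) + \tfrac{1}{\sqrt{n_{l-1}}}W^{(l)}_t\big(x^{(l-1)}_t(\mcatx) - x^{(l-1)}_0(\mcatx)\big) + (b^{(l)}_t - b^{(l)}_0).
\]
The three summands are controlled respectively by: (i) integrating the bound $\|\partial_t W^{(l)}_t\|_F \le \mathrm{Poly}_t \cdot \|\partial_{f(\mcatx)} \mathcal L\|_2$ from Lemma~\ref{lem:Wbt-dir-bd} along $[0,T]$ using Assumption~\ref{ass:opt-dir} to yield $\|W^{(l)}_t - W^{(l)}_0\|_F = O_p(1)$, combined with the initialization estimate $\|x^{(l-1)}_0(\mcatx)\|_2 = O_p(\sqrt{n_{l-1}})$ from Lemma~\ref{lem:ref-jacot} and Assumption~\ref{ass:active}; (ii) invoking Lemma~\ref{lem:fx-bd:W-training} for $\tfrac{1}{\sqrt{n_{l-1}}} \|W^{(l)}_t\|_2 = O_p(1)$ together with $\|x^{(l-1)}_t(\mcatx) - x^{(l-1)}_0(\mcatx)\|_2 \le K\|h^{(l-1)}_t(\mcatx) - h^{(l-1)}_0(\mcatx)\|_2 = O_p(1)$ from Assumption~\ref{ass:active} and the inductive hypothesis; and (iii) integrating the bound on $\|\partial_t b^{(l)}_t\|_2$ from Lemma~\ref{lem:Wbt-dir-bd} via Assumption~\ref{ass:opt-dir}. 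The base case $l = 1$ is the same argument, with the middle summand of the second bracket vanishing because $x^{(0)}$ is the identity.

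The main obstacle is showing that $\mathrm{Poly}_t$ from Lemma~\ref{lem:Wbt-dir-bd} is itself uniformly $O_p(1)$ on $[0,T]$, since this is what converts Assumption~\ref{ass:opt-dir}'s integrable direction-norm into the $O_p(1)$ parameter-change bound invoked in step (i) and step (iii). Each ingredient of $\mathrm{Poly}_t$ must be uniformly controlled along the AT trajectory: $\tfrac{1}{\sqrt{n_{l'-1}}}\|W^{(l')}_t\|_2$ by Lemma~\ref{lem:fx-bd:W-training}; the scaled pre-activation norms $\tfrac{1}{\sqrt{n_{l'}}}\|h^{(l')}_t(\mcatx_{t,S})\|_2$ at layer indices $l' < l$ by the already-verified first claim at lower layers (combined with Lemma~\ref{lem:scale-converge} and Lemma~\ref{lem:fx-bd:h-x-s} to pass between $\mcatx_{t,s}$ and $\mcatx_{t,S}$); and the corresponding post-activation norms through $|\phi(y)| \le K|y| + |\phi(0)|$ from Assumption~\ref{ass:active}. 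The iterated-limit convention $\lim_{n_{l-1} \to \infty} \cdots \lim_{n_0 \to \infty}$ mirrors that of Lemma~\ref{lem:ref-jacot} and is what legitimises the Gaussian estimate on $\|x^{(l-1)}_0(\mcatx)\|_2$ used in step (i).
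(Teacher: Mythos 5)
Your proof follows the same inductive route the paper's proof takes: pass from $h^{(l)}_t(\mcatx_{t,s})$ to $h^{(l)}_t(\mcatx)$ via Lemma~\ref{lem:fx-bd:h-x-s}, integrate the time flow of $h^{(l)}_t(\mcatx) - h^{(l)}_0(\mcatx)$, bound the resulting parameter-derivative factors via Lemma~\ref{lem:fx-bd:train-dir} and Assumption~\ref{ass:opt-dir}, and use the Gaussian-process norms from Lemma~\ref{lem:ref-jacot} to close the first claim. The one place you should sharpen the argument is the assertion that the $\mathrm{Poly}_t$ arising from $\|\partial_t W^{(l)}_t\|_F$ involves only pre-activation norms at layer indices $l' < l$. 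The abstract definition of $\mathrm{Poly}_t$ allows scaled quantities at \emph{any} layer, so appealing to it as a black box in the middle of an induction on $l$ would be circular. What rescues the step is exactly what a careful reading of the proof of Lemma~\ref{lem:Wbt-dir-bd} reveals: the chain rule factors $\partial_{\mathrm{Vec}(W^{(l)})} h^{(L+1)}_t$ into a downstream Jacobian $\partial_{h^{(l)}(\mcatx)} h^{(L+1)}_t$ --- controlled by Lemma~\ref{lem:fx-bd:train-dir} using only scaled weight norms $\|W^{(l')}_t\|_2/\sqrt{n_{l'-1}}$ for $l' > l$, for which the ratio hypothesis $\tfrac{n_{l'+1}}{n_{l'}}=O_p(1)$ and Lemma~\ref{lem:fx-bd:W-training} suffice --- times the local factor $\partial_{\mathrm{Vec}(W^{(l)})} h^{(l)}_t$, whose operator norm is $\|x^{(l-1)}_t(\mcatx_{t,S})\|_2/\sqrt{n_{l-1}}$, a layer-$(l-1)$ quantity covered by your inductive hypothesis (with the $l=1$ base case handled separately via $\|\mcatx_{t,S}\|_2/\sqrt{n_0}$ and Lemma~\ref{lem:fx-bd:h-x-s}). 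This is precisely why the paper side-steps the abstract $\mathrm{Poly}_t$ at this point and re-expands $\partial_\tau W^{(l)}_\tau$ into the explicit Jacobian-times-loss-gradient product; once you make that tracking explicit, your argument coincides with theirs. Your three-summand decomposition of $h^{(l)}_t(\mcatx) - h^{(l)}_0(\mcatx)$ is a cleaner bookkeeping device that also surfaces the $\tfrac{1}{\sqrt{n_{l-1}}}W^{(l)}_t\big(x^{(l-1)}_t(\mcatx)-x^{(l-1)}_0(\mcatx)\big)$ contribution that the paper's displayed derivation leaves implicit.
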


\begin{proof}
The proof is based on mathematical induction.
For the base case where $l=1$, as $\tilde n \rightarrow\infty$,
\begin{align*}
    &\lim_{n_0\rightarrow\infty} \sup_{t\in[0,T],s\in[0,S]} \| h^{(1)}_t(\mcatx_{t,s}) - h^{(1)}_0(\mcatx) \|_2
    = \underbrace{ \lim_{n_0\rightarrow\infty} \sup_{t\in[0,T]} \| h^{(1)}_t(\mcatx) - h^{(1)}_0(\mcatx) \|_2 + O_p(1) }_{\text{Lemma~\ref{lem:fx-bd:h-x-s}}} \\
    &\leq O_p(1) + \lim_{n_0\rightarrow\infty} \sup_{t\in[0,T]} \int_0^t \| \partial_\tau W^{(1)}_\tau \|_2 \cdot \frac{ \|\mcatx\|_2 }{\sqrt{n_0}} \mathrm{d}\tau \\
    &\leq O_p(1) + \lim_{n_0\rightarrow\infty}  \frac{ \|\mcatx\|_2 }{\sqrt{n_0}} \cdot \lim_{n_0\rightarrow\infty} \sup_{t\in[0,T]} \frac{ \|\partial_{\mathrm{Vec}(W^{(1)})} h^{(1)}_t(\mcatx_{t,S})\|_2 }{\sqrt{n_0}} \cdot \| \partial_{h^{(1)}(\mcatx)}  \cdot h^{(L+1)}(\mcatx_{t,S}) \|_2 \cdot \int_0^T \|\partial_{f(\mcatx)} \mathcal L(f_\tau(\mcatx), \mcaty)\|_2 \mathrm{d}\tau \\
    &= O_p(1) + O_p(1) \cdot \lim_{n_0\rightarrow\infty} \sup_{t\in[0,T]} \frac{\|\mcatx_{t,S}\|_2}{\sqrt{n_0}} \underbrace{ O_p(1) }_{\text{Lemma~\ref{lem:fx-bd:train-dir}}} \cdot \underbrace{ O_p(1) }_{\text{Assumption~\ref{ass:opt-dir}}} \\
    &\leq O_p(1) + \underbrace{ \lim_{n_0\rightarrow\infty} \frac{\|\mcatx\|_2 + O_p(1)}{\sqrt{n_0}} }_{\text{Lemma~\ref{lem:fx-bd:h-x-s}}} \cdot O_p(1) = O_p(1),
\end{align*}
which indicates $\lim_{n_0\rightarrow\infty} \sup_{t\in[0,T],s\in[0,S]} \| h^{(1)}_t(\mcatx_{t,s}) - h^{(1)}_0(\mcatx) \|_2 = O_p(1)$.

As a result,
\begin{align*}
    &\lim_{n_0\rightarrow\infty} \sup_{t\in[0,T],s\in[0,S]} \frac{1}{\sqrt{n_1}} \| h^{(1)}_t(\mcatx_{t,s}) \|_2
    \leq \lim_{n_0\rightarrow\infty}  \frac{1}{\sqrt{n_1}} \| h^{(1)}_0(\mcatx) \|_2 + \frac{O_p(1)}{\sqrt{n_1}} \\
    &\leq \underbrace{ \sum_{m=1}^M \sqrt{\Sigma^{(1)}(x_m,x_m)} }_{\text{Lemma~\ref{lem:ref-jacot}}} + O_p(1)
    = O_p(1),
\end{align*}
which indicates $\lim_{n_0\rightarrow\infty} \sup_{t\in[0,T],s\in[0,S]} \frac{1}{\sqrt{n_1}} \| h^{(1)}_t(\mcatx_{t,s}) \|_2 = O_p(1)$.

Then, for the induction step, suppose the lemma already holds for the $l$-th case and we aim to show it also holds for the $(l+1)$-th case.
Following the same derivation as that in the proof of base case, we have as $\tilde n \rightarrow\infty$,
\begin{align*}
    &\lim_{n_{l-1}\rightarrow\infty}\cdots\lim_{n_0\rightarrow\infty} \sup_{t\in[0,T],s\in[0,S]} \| h^{(l+1)}_t(\mcatx_{t,s}) - h^{(l+1)}_0(\mcatx) \|_2 \\
    &\leq O_p(1) + \lim_{n_{l-1}\rightarrow\infty}\cdots\lim_{n_0\rightarrow\infty}  \frac{ \|x^{(l)}_0(\mcatx)\|_2 }{\sqrt{n_l}} \cdot \lim_{n_{l-1}\rightarrow\infty}\cdots\lim_{n_0\rightarrow\infty} \sup_{t\in[0,T]} \frac{\|x^{(l)}_t(\mcatx_{t,S})\|_2}{\sqrt{n_l}} \cdot O_p(1) \\
    &\leq O_p(1) + \underbrace{ \sum_{i=1}^M \sqrt{ \mathbb{E}_{f\sim\mathcal{GP}(0,\Sigma^{(l)})}[\phi(f(x_m))^2] } }_{\text{Lemma~\ref{lem:ref-jacot}}} \cdot \underbrace{ \lim_{n_{l-1}}\cdots\lim_{n_0\rightarrow\infty} \frac{\|x^{(l)}_0(\mcatx)\|_2 + O_p(1)}{\sqrt{n_l}} }_{\text{Induction hypothesis}} \cdot O_p(1) \\
    &\leq O_p(1) + O_p(1) \cdot \underbrace{ \sum_{i=1}^M \sqrt{ \mathbb{E}_{f\sim\mathcal{GP}(0,\Sigma^{(l)})}[\phi(f(x_m))^2] } }_{\text{Lemma~\ref{lem:ref-jacot}}} \cdot O_p(1)
    = O_p(1).
\end{align*}
Therefore,
\begin{align*}
    &\lim_{n_{l-1}\rightarrow\infty}\cdots\lim_{n_0\rightarrow\infty} \sup_{t\in[0,T],s\in[0,S]} \frac{1}{\sqrt{n_l}} \| h^{(l)}_t(\mcatx_{t,s}) \|_2 \\
    &\leq \lim_{n_{l-1}\rightarrow\infty}\cdots\lim_{n_0\rightarrow\infty} \frac{1}{\sqrt{n_l}} \| h^{(l)}_0(\mcatx) \|_2 + \frac{O_p(1)}{\sqrt{n_l}} \\
    &= \underbrace{ \sum_{m=1}^M \sqrt{\Sigma^{(l)}(x_m,x_m)} }_{\text{Lemma~\ref{lem:ref-jacot}}} + O_p(1)
    = O_p(1).
\end{align*}

The proof is completed.
\end{proof}

\begin{lemma}
\label{lem:fx-bd:h-test}
Suppose Assumptions~\ref{ass:active}, \ref{ass:opt-dir} and \ref{ass:lr} hold, and there exists $\tilde n \in \mathbb{N}^+$ such that $\{n_0,\cdots,n_L\} \geq \tilde n$.
For a given $l \in [1:L+1]$, suppose $\frac{n_{l'+1}}{n_{l'}} = O_p(1)$ as $\tilde n\rightarrow\infty$ holds for any $l \leq l' \leq L$.
Then, for any $x\in\mathcal X$, as $\tilde n \rightarrow \infty$,
\begin{align*}
    \lim_{n_{l-1}\rightarrow\infty}\cdots\lim_{n_0\rightarrow\infty} \sup_{t\in[0,T]} \frac{1}{\sqrt{n_l}} \| h^{(l)}_t(x) \|_2 = O_p(1), \\
    \lim_{n_{l-1}\rightarrow\infty}\cdots\lim_{n_0\rightarrow\infty} \sup_{t\in[0,T]} \| h^{(l)}_t(x) - h^{(l)}_0(x) \|_2 = O_p(1).
\end{align*}
\end{lemma}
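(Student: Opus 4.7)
The plan is to mirror the induction used for Lemma~\ref{lem:fx-bd:h-training}, but exploit the fact that a fixed test point $x$ does not evolve under the adversarial search dynamics. This removes any dependence on Lemma~\ref{lem:fx-bd:h-x-s} (the $s_1,s_2$ bound on displaced adversarial inputs) and leaves only a parameter-drift analysis. The two main ingredients will be Lemma~\ref{lem:scale-converge} (which gives vanishing rescaled parameter drift) and Lemma~\ref{lem:ref-jacot} (which controls the initial pre-activation magnitude).

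Base case $l=1$: I would write $h^{(1)}_t(x)-h^{(1)}_0(x) = \tfrac{1}{\sqrt{n_0}}(W^{(1)}_t-W^{(1)}_0)x + (b^{(1)}_t-b^{(1)}_0)$, bound the first term by $\tfrac{1}{\sqrt{n_0}}\|W^{(1)}_t-W^{(1)}_0\|_2 \cdot \|x\|_2$ which is $O_p(1)$ by Lemma~\ref{lem:scale-converge} (in fact $\xrightarrow{P}0$), and bound the second via $\|b^{(1)}_t-b^{(1)}_0\|_2 \leq \int_0^T \|\partial_\tau b^{(1)}_\tau\|_2\mathrm{d}\tau$, to which Lemma~\ref{lem:Wbt-dir-bd} and Assumption~\ref{ass:opt-dir} apply to give $O_p(1)$. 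Taking sup over $t\in[0,T]$ yields the second claim for $l=1$. The first claim then follows from the triangle inequality and $\tfrac{1}{\sqrt{n_1}}\|h^{(1)}_0(x)\|_2 \to \sqrt{\Sigma^{(1)}(x,x)}$ via Lemma~\ref{lem:ref-jacot}.

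Induction step: assuming both claims for layer $l$, decompose
\begin{align*}
h^{(l+1)}_t(x)-h^{(l+1)}_0(x) = \tfrac{1}{\sqrt{n_l}}W^{(l+1)}_t\bigl(x^{(l)}_t(x)-x^{(l)}_0(x)\bigr) + \tfrac{1}{\sqrt{n_l}}\bigl(W^{(l+1)}_t-W^{(l+1)}_0\bigr)x^{(l)}_0(x) + \bigl(b^{(l+1)}_t-b^{(l+1)}_0\bigr).
\end{align*}
For the first summand, Lemma~\ref{lem:fx-bd:W-training} gives $\tfrac{1}{\sqrt{n_l}}\|W^{(l+1)}_t\|_2 = O_p(1)$, and Assumption~\ref{ass:active} together with the induction hypothesis gives $\|x^{(l)}_t(x)-x^{(l)}_0(x)\|_2 \leq K\|h^{(l)}_t(x)-h^{(l)}_0(x)\|_2 = O_p(1)$. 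For the second summand, Lemma~\ref{lem:scale-converge} yields $\tfrac{1}{\sqrt{n_l}}\|W^{(l+1)}_t-W^{(l+1)}_0\|_2 \xrightarrow{P}0$, while $\tfrac{1}{\sqrt{n_l}}\|x^{(l)}_0(x)\|_2 = O_p(1)$ follows from Lemma~\ref{lem:ref-jacot} combined with the Lipschitz/smoothness bounds on $\phi$ in Assumption~\ref{ass:active} (so that $\tfrac{1}{n_l}\|x^{(l)}_0(x)\|_2^2$ converges to $\mathbb{E}_{f\sim\mathcal{GP}(0,\Sigma^{(l)})}[\phi(f(x))^2]$, which is finite). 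The bias term is handled exactly as in the base case. Taking sup over $t\in[0,T]$ establishes the second claim at layer $l+1$, and then the first claim follows from the triangle inequality and $\tfrac{1}{\sqrt{n_{l+1}}}\|h^{(l+1)}_0(x)\|_2 \to \sqrt{\Sigma^{(l+1)}(x,x)}$.

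I do not anticipate any real obstacle: the statement is a strict simplification of Lemma~\ref{lem:fx-bd:h-training} because $x$ is fixed, so the adversarial-search perturbation (which was the source of technical difficulty in the training-input case) does not enter. The only point requiring mild care is verifying that the \emph{sup} over $t$ commutes with the limits $\lim_{n_{l-1}\to\infty}\cdots\lim_{n_0\to\infty}$: this is handled, as in the proofs of the preceding lemmas, by first applying Assumption~\ref{ass:opt-dir} to pass from pointwise to integrated bounds on $[0,T]$, and then invoking the convergence statements of Lemmas~\ref{lem:scale-converge} and \ref{lem:ref-jacot} layer by layer, in the same sequential-limit order as in Lemma~\ref{lem:fx-bd:h-training}.
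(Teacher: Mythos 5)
Your proposal correctly identifies the two simplifications relative to Lemma~\ref{lem:fx-bd:h-training} (no adversarial search of the test input, so Lemma~\ref{lem:fx-bd:h-x-s} drops out) and it mirrors the paper's layer-by-layer induction. However, the route you take is an algebraic decomposition of $h^{(l+1)}_t(x)-h^{(l+1)}_0(x)$ rather than the paper's integral of the time derivative, and this decomposition has a gap.

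The problematic step is the ``second summand''
\begin{equation*}
\frac{1}{\sqrt{n_l}}\bigl(W^{(l+1)}_t-W^{(l+1)}_0\bigr)x^{(l)}_0(x),
\end{equation*}
which you control by citing Lemma~\ref{lem:scale-converge}. That lemma gives only the \emph{rescaled} bound $\frac{1}{\sqrt{n_l}}\|W^{(l+1)}_t-W^{(l+1)}_0\|_2 \xrightarrow{P} 0$, i.e.\ $\|W^{(l+1)}_t-W^{(l+1)}_0\|_2 = o_p(\sqrt{n_l})$. At the same time $\|x^{(l)}_0(x)\|_2$ scales like $\sqrt{n_l}$ (this is exactly why you record $\frac{1}{\sqrt{n_l}}\|x^{(l)}_0(x)\|_2 = O_p(1)$ from Lemma~\ref{lem:ref-jacot}). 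So the operator-norm bound gives
\begin{equation*}
\frac{1}{\sqrt{n_l}}\|W^{(l+1)}_t-W^{(l+1)}_0\|_2 \cdot \|x^{(l)}_0(x)\|_2
= \sqrt{n_l}\cdot\Bigl(\tfrac{1}{\sqrt{n_l}}\|W^{(l+1)}_t-W^{(l+1)}_0\|_2\Bigr)\cdot\Bigl(\tfrac{1}{\sqrt{n_l}}\|x^{(l)}_0(x)\|_2\Bigr),
\end{equation*}
which is $\sqrt{n_l}\cdot o_p(1)\cdot O_p(1)$, not $O_p(1)$. The two facts you list are ``sharing'' a single factor $\frac{1}{\sqrt{n_l}}$ that actually appears only once. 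The same issue is present in your base case $l=1$: there $\|x\|_2 = O(\sqrt{n_0})$ in this paper's convention (the lemma explicitly takes $\lim_{n_0\to\infty}$ and Lemma~\ref{lem:ref-jacot} uses $\frac{1}{n_0}x^Tx'\to$ constant), so the claim that $\frac{1}{\sqrt{n_0}}\|W^{(1)}_t-W^{(1)}_0\|_2\cdot\|x\|_2\xrightarrow{P}0$ does not follow from Lemma~\ref{lem:scale-converge} alone.

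What you need for these terms is the \emph{unrescaled} estimate $\sup_{t}\|W^{(l+1)}_t-W^{(l+1)}_0\|_2 = O_p(1)$. This is available, but by integrating the time derivative rather than by invoking Lemma~\ref{lem:scale-converge}: from Lemma~\ref{lem:Wbt-dir-bd} one has $\|\partial_\tau W^{(l+1)}_\tau\|_F \le \mathrm{Poly}_\tau\cdot\|\partial_{f(\mcatx)}\mathcal L(f_\tau(\mcatx_{\tau,S}),\mcaty)\|_2$, and combining $\sup_\tau\mathrm{Poly}_\tau=O_p(1)$ (from Lemmas~\ref{lem:fx-bd:W-training} and~\ref{lem:fx-bd:h-training}) with Assumption~\ref{ass:opt-dir} gives $\|W^{(l+1)}_t-W^{(l+1)}_0\|_F \le \int_0^T\|\partial_\tau W^{(l+1)}_\tau\|_F\,\mathrm d\tau = O_p(1)$. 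This is exactly the argument you already use for the bias term; applying it to the weight matrix closes the gap. Note that this route does use Lemma~\ref{lem:fx-bd:h-training} (through $\mathrm{Poly}_\tau$), which is why the paper's one-line proof says ``also applying Lemma~\ref{lem:fx-bd:h-training} itself'': the lemma is not a strict simplification of the training-input case but depends on it, since $\partial_t\theta_t$ is driven by the adversarially perturbed training data regardless of where the network is evaluated.
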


\begin{proof}
The proof is completed by following the proof of Lemma~\ref{lem:fx-bd:h-training} and also applying Lemma~\ref{lem:fx-bd:h-training} itself.
\end{proof}

\subsubsection{Proof of Theorem~\ref{lem:conv-training-xts-xts}}

Based on Lemmas~\ref{lem:fx-bd:W-init}-\ref{lem:fx-bd:h-test}, we are now able to prove Theorem~\ref{lem:conv-training-xts-xts} as follows.

\begin{proof}[Proof of \textbf{Theorem~\ref{lem:conv-training-xts-xts}}]
To prove Theorem~\ref{lem:conv-training-xts-xts}, it is enough to prove the following Claim~\ref{clm:detail-conv-xts}.

\begin{claim}
\label{clm:detail-conv-xts}
For a given $l \in [L+1]$, suppose $\frac{n_{l+1}}{n_l}, \cdots, \frac{n_{L+1}}{n_L} = O_p(1)$,
Then, for any $m, m' \in [M]$, as $\tilde n\rightarrow\infty$,
\begin{align*}
    &\lim_{n_{l-1}\rightarrow\infty} \cdots \lim_{n_0\rightarrow\infty} \sup_{t\in[0,T],s\in[0,S]} \| \hat\Theta^{(l)}_{\theta,t}(x_{m,t,s}, x_{m',t,s}) - \Theta^{(l)}_{\theta}(x_m, x_{m'}) \|_2 \xrightarrow{P} 0, \\
    &\lim_{n_{l-1}\rightarrow\infty} \cdots \lim_{n_0\rightarrow\infty} \sup_{t\in[0,T],s\in[0,S]} \| \hat\Theta^{(l)}_{x,t}(x_{m,t,s}, x_{m,t,s}) - \Theta^{(l)}_{x}(x_m, x_{m}) \|_2 \xrightarrow{P} 0.
\end{align*}

\end{claim}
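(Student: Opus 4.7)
The plan is to prove Claim~\ref{clm:detail-conv-xts} by induction on $l$, mirroring the inductive structure used in the proof of Theorem~\ref{thm:conv-init-formal} but adding two new ingredients: controlling the drift of parameters $\theta_t$ away from $\theta_0$ during AT, and controlling the drift of adversarial examples $x_{m,t,s}$ away from the clean $x_m$. All required drift estimates are already available: Lemma~\ref{lem:scale-converge} gives $\sup_t \|W^{(l)}_t - W^{(l)}_0\|_2/\sqrt{n_{l-1}} \xrightarrow{P} 0$ together with analogous bounds for the rescaled pre- and post-activations, Lemma~\ref{lem:xts-delta-bd} controls $\sup_{s_1,s_2}\|x^{(l)}_t(\mcatx_{t,s_1})-x^{(l)}_t(\mcatx_{t,s_2})\|_2$ by $O_p(1)\cdot\mathrm{Poly}_t$, and Lemmas~\ref{lem:fx-bd:W-training}--\ref{lem:fx-bd:h-training} supply uniform $O_p(1)$ bounds on all relevant rescaled norms over $(t,s)\in[0,T]\times[0,S]$.

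For the base case $l=1$, a direct calculation yields $\hat\Theta^{(1)}_{\theta,t}(x,x') = (x^Tx'/n_0+1)I_{n_1}$, which is independent of $t$, so uniform convergence reduces to showing $|x_{m,t,s}^T x_{m',t,s} - x_m^T x_{m'}|/n_0 \xrightarrow{P} 0$ uniformly in $(t,s)$; this follows from Lemma~\ref{lem:xts-delta-bd} at level $l=0$ combined with boundedness of the fixed inputs. The ARK $\hat\Theta^{(1)}_{x,t}(x,x') = W^{(1)}_t(W^{(1)}_t)^T/n_0$ is independent of the input, so Lemma~\ref{lem:scale-converge} reduces it to the initialization limit already established in Theorem~\ref{thm:conv-init-formal}. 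For the inductive step, I would work with the backprop recursion
\begin{align*}
\hat\Theta^{(l+1)}_{\theta,t}(x,x') &= \frac{1}{n_l} W^{(l+1)}_t \mathrm{Diag}(\phi'(h^{(l)}_t(x))) \, \hat\Theta^{(l)}_{\theta,t}(x,x') \, \mathrm{Diag}(\phi'(h^{(l)}_t(x'))) (W^{(l+1)}_t)^T \\
&\quad + \frac{(x^{(l)}_t(x))^T x^{(l)}_t(x')}{n_l} I_{n_{l+1}} + I_{n_{l+1}},
\end{align*}
together with the analogous recursion for $\hat\Theta^{(l+1)}_{x,t}$ (without the bias and direct layer-$(l+1)$ contributions), and decompose the difference from the target in three pieces via triangle inequality: (i) swap $\hat\Theta^{(l)}_{\theta,t}(x_{m,t,s},x_{m',t,s})$ for $\Theta^{(l)}_\theta(x_m,x_{m'})$ using the induction hypothesis; (ii) swap the activations $\phi'(h^{(l)}_t(x_{m,t,s}))$ and $x^{(l)}_t(x_{m,t,s})$ for their initialization, clean-input counterparts using the drift bounds, keeping the prefactors $\|W^{(l+1)}_t\|_2/\sqrt{n_l}$ and $\|x^{(l)}_t(\mcatx_{t,s})\|_2/\sqrt{n_l}$ bounded via Lemmas~\ref{lem:fx-bd:W-training}--\ref{lem:fx-bd:h-training}; (iii) invoke Theorem~\ref{thm:conv-init-formal} for the remaining initialization-only expression.

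The main technical obstacle lies in the sandwich term $\frac{1}{n_l}W^{(l+1)}_t M (W^{(l+1)}_t)^T$, where $M$ depends on activations. Even when $W^{(l+1)}_t$ is close to $W^{(l+1)}_0$, the random matrix product requires a Gaussian concentration argument analogous to the one inside the proof of Theorem~\ref{thm:conv-init-formal}: one averages over the $n_l$ independent Gaussian rows of $W^{(l+1)}_0$ to produce the expectation $\sigma_W^2 \cdot \mathbb{E}_{f\sim\mathcal{GP}(0,\Sigma^{(l)})}[\phi'(f(x))\phi'(f(x'))]$ that appears in $\Theta^{\infty,(l+1)}_\theta$, and Lemma~\ref{lem:scale-converge} then absorbs the deviation of $W^{(l+1)}_t$ from $W^{(l+1)}_0$. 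A second delicate point is upgrading pointwise convergence at each fixed $(t,s)$ to uniform convergence on the compact set $[0,T]\times[0,S]$; this can be handled either by observing that the maps $(t,s)\mapsto(\theta_t,\mcatx_{t,s})$ are Lipschitz with $O_p(1)$ constants (via the directional bounds in Lemmas~\ref{lem:Wbt-dir-bd} and \ref{lem:xts-dir-bd}) so one can pull the supremum inside each estimate, or more simply by absorbing the $(t,s)$-dependence into the $\mathrm{Poly}_t$ factors, which are themselves uniformly $O_p(1)$ over $[0,T]$ by Lemma~\ref{lem:fx-bd:h-training}.
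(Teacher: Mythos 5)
Your high-level inductive plan matches the paper's, and the base case $l=1$ is handled the same way. But there is a genuine gap in step (ii) of your inductive decomposition: you propose to ``swap the activations $\phi'(h^{(l)}_t(x_{m,t,s}))$ \ldots\ for their initialization, clean-input counterparts using the drift bounds,'' but the drift bounds you cite are in the wrong norm. The operator norm of the diagonal matrix $\mathrm{Diag}(\phi'(h^{(l)}_t(x_{m,t,s}))) - \mathrm{Diag}(\phi'(h^{(l)}_0(x_m)))$ equals the $\ell_\infty$-norm of the entrywise difference, and hence (by Lipschitz smoothness of $\phi$) requires $\| h^{(l)}_t(x_{m,t,s}) - h^{(l)}_0(x_m) \|_\infty \xrightarrow{P} 0$. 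Lemma~\ref{lem:scale-converge} only gives $\frac{1}{\sqrt{n_l}}\| h^{(l)}_t - h^{(l)}_0 \|_2 \xrightarrow{P} 0$, which is far weaker: it shows $\|h^{(l)}_t - h^{(l)}_0\|_\infty = o_p(\sqrt{n_l})$, not $o_p(1)$. Lemma~\ref{lem:fx-bd:h-training} is even weaker, yielding only $\|h^{(l)}_t(\mcatx_{t,s}) - h^{(l)}_0(\mcatx)\|_2 = O_p(1)$, which does not vanish. None of the lemmas you invoke supplies the needed $\ell_\infty$-norm control.

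The paper addresses exactly this by proving, inside the induction step, the additional $\ell_\infty$ estimates stated as Claims~\ref{clm:inf-norm-bd:x-train} and~\ref{clm:inf-norm-bd:advx-train}. These are not consequences of the earlier drift lemmas; their proofs unfold the gradient-flow ODEs for $h^{(l)}_t$, express the resulting integrands through rows of $\hat\Theta^{(l)}_{\theta,\tau}$ (resp.\ $\hat\Theta^{(l)}_{x,\tau}$) contracted against $W^{(l+1)}_{\tau}$, invoke the \emph{induction hypothesis} to replace these row objects by their deterministic limits, and then appeal to Lemma~\ref{lem:max-gauss-zero-conv} to show $\max_{j} |W^{(l+1)}_{i,j,0}|/\sqrt{n_l} \xrightarrow{P} 0$ for the Gaussian initialization. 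This max-entry vanishing rate is the crucial input that produces the entrywise (not rescaled-$\ell_2$) convergence. Without this step, your piece (ii) does not close and the induction fails. Your remark about a Gaussian concentration argument is pointing in the right direction, but you place it at the sandwich term $\frac{1}{n_l}W^{(l+1)}_t M (W^{(l+1)}_t)^T$, where it is handled by invoking Theorem~\ref{thm:conv-init-formal}; the genuinely new concentration argument is needed one level earlier, to control the activation diagonals entrywise.
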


Theorem~\ref{lem:conv-training-xts-xts} can be directly obtained by setting $l = L+1$ in Claim~\ref{clm:detail-conv-xts}.
Therefore, we now turn to prove this claim.

\begin{proof}
The proof is based on mathematical induction.
For the base case where $l=1$, for $\hat\Theta^{(1)}_{\theta,t}$, as $\tilde n \rightarrow\infty$,
\begin{align*}
    &\sup_{t\in[0,T],s\in[0,S]} \| \hat\Theta^{(1)}_{\theta,t}(x_{m,t,s}, x_{m',t,s}) - \hat\Theta^{(1)}_{\theta,0}(x_{m},x_{m'}) \|_2
    = \sup_{t\in[0,T],s\in[0,S]} \frac{ \| x^{T}_{m,t,s} x_{m',t,s} - x^{T}_m x_{m'} \|_2 }{n_0}.
\end{align*}
By Lemma~\ref{lem:fx-bd:h-training} and Assumption~\ref{ass:active}, we have
\begin{align*}
    \lim_{n_0\rightarrow\infty} \frac{ \|x_{m,t,s} - x_m\|_2 }{\sqrt{n_0}} = \lim_{n_0\rightarrow\infty} \frac{O_p(1)}{\sqrt{n_0}}
    \xrightarrow{P} 0,
\end{align*}
which thus indicates
\begin{align*}
    &\lim_{n_0\rightarrow\infty} \sup_{t\in[0,T],s\in[0,S]} \| \hat\Theta^{(1)}_{\theta,t}(x_{m,t,s}, x_{m',t,s}) - \hat\Theta^{(1)}_{\theta,0}(x_{m},x_{m'}) \|_2
    \xrightarrow{P} 0.
\end{align*}

Besides, for $\hat\Theta^{(1)}_{x,t}$, we have
\begin{align*}
    &\sup_{t\in[0,T],s\in[0,S]} \| \hat\Theta^{(1)}_{x,t}(x_{m,t,s}, x_{m,t,s}) - \Theta^{(1)}_{x}(x_m, x_{m}) \|_2
    = \sup_{t\in[0,T],s\in[0,S]} \frac{ \| W^{(1)}_t {W^{(1)}_t}^T - W^{(1)}_0 {W^{(1)}_0}^T \|_2 }{n_0}.
\end{align*}
By Lemma~\ref{lem:scale-converge}, we have $\lim_{n_0\rightarrow\infty} \frac{\|W^{(1)}_t - W^{(1)}_0\|_2}{\sqrt{n_0}} \xrightarrow{P} 0$ as $\tilde n \rightarrow\infty$, which further demonstrates
\begin{align*}
    \lim_{n_0\rightarrow\infty} \sup_{t\in[0,T],s\in[0,S]} \| \hat\Theta^{(1)}_{x,t}(x_{m,t,s}, x_{m,t,s}) - \Theta^{(1)}_{x}(x_m, x_{m}) \|_2
    \xrightarrow{P} 0
\end{align*}
and thereby completes the proof for the base case.

For the induction step, suppose Claim~\ref{clm:detail-conv-xts} already holds for the $l$-th case and we want to prove it also holds for the case $(l+1)$ in which $\frac{n_{l+2}}{n_{l+1}}, \cdots, \frac{n_{L+1}}{n_{L}} = O_p(1)$ holds.

To do so, we first prove technical Claims~\ref{clm:inf-norm-bd:x-train} and \ref{clm:inf-norm-bd:advx-train}.

\begin{claim}
\label{clm:inf-norm-bd:x-train}
As $\tilde n \rightarrow\infty$,
\begin{align*}
    \lim_{n_l\rightarrow\infty}\cdots\lim_{n_0\rightarrow\infty} \sup_{t\in[0,T]} \| h^{(l)}_t(x_{m}) - h^{(l)}_0(x_{m}) \|_{\infty}
    \xrightarrow{P} 0.
\end{align*}
\end{claim}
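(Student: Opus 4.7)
The plan is to prove the claim by induction on $l$, upgrading the $\ell_2$-level control $\sup_{t\in[0,T]}\|h^{(l)}_t(x_m) - h^{(l)}_0(x_m)\|_2 = O_p(1)$ from Lemma~\ref{lem:fx-bd:h-test} into the $\ell_\infty$ convergence the claim asks for. The guiding intuition is that of the ``lazy''/NTK regime: during AT each scalar coordinate of $h^{(l)}_t(x_m)$ drifts from its initial value only at rate $O_p(1/\sqrt{\tilde n})$, while the $\ell_2$-norm stays order one because $n_l = \Omega(\tilde n)$ such small coordinates are summed.

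For the base case $l = 1$, I would decompose
\[
h^{(1)}_t(x_m) - h^{(1)}_0(x_m) = \tfrac{1}{\sqrt{n_0}}\bigl(W^{(1)}_t - W^{(1)}_0\bigr) x_m + \bigl(b^{(1)}_t - b^{(1)}_0\bigr)
\]
and bound $\max_i$ of each summand. Expanding $\partial_\tau W^{(1)}_{\tau,i,j}$ and $\partial_\tau b^{(1)}_{\tau,i}$ by the chain rule on the AT dynamics, back-propagation through $\tfrac{1}{\sqrt{n_1}} W^{(2)}_\tau$ supplies an explicit scalar factor $1/\sqrt{n_1}$; combined with Lemma~\ref{lem:fx-bd:train-dir}, Lemma~\ref{lem:fx-bd:h-training}, and Assumption~\ref{ass:opt-dir}, this yields a uniform row-wise bound $\sup_{t\in[0,T]}\max_i \|(W^{(1)}_t - W^{(1)}_0)_{i,:}\|_2 = O_p(1/\sqrt{n_1})$ (and the analogous coordinate-wise bound on the bias), from which Cauchy--Schwarz gives $\sup_{t\in[0,T]}\|h^{(1)}_t(x_m) - h^{(1)}_0(x_m)\|_\infty = O_p(\|x_m\|_2/\sqrt{n_0 n_1}) \xrightarrow{P} 0$.

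For the induction step, I would write
\begin{align*}
h^{(l+1)}_t(x_m)_i - h^{(l+1)}_0(x_m)_i
&= \tfrac{1}{\sqrt{n_l}}\bigl[(W^{(l+1)}_t - W^{(l+1)}_0)\, x^{(l)}_t(x_m)\bigr]_i \\
&\quad + \tfrac{1}{\sqrt{n_l}}\bigl[W^{(l+1)}_0 \bigl(x^{(l)}_t(x_m) - x^{(l)}_0(x_m)\bigr)\bigr]_i + \bigl(b^{(l+1)}_t - b^{(l+1)}_0\bigr)_i.
\end{align*}
The first piece is handled as in the base case, now using the $1/\sqrt{n_{l+1}}$ back-prop factor through $W^{(l+2)}$ together with $\|x^{(l)}_t(x_m)\|_2 = O_p(\sqrt{n_l})$ from Lemma~\ref{lem:fx-bd:h-training}. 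The second piece, which is the main delicate term, involves the Gaussian matrix $W^{(l+1)}_0$ multiplied by a vector whose $\ell_\infty$-norm vanishes by induction but whose $\ell_2$-norm is only $O_p(1)$; I would split $x^{(l)}_t - x^{(l)}_0$ into a part independent of $W^{(l+1)}_0$, for which $\tfrac{1}{\sqrt{n_l}}W^{(l+1)}_0$ acting on it produces, coordinate-wise, a centered Gaussian of variance $O_p(1/n_l)$ so that Lemma~\ref{lem:max-gauss-zero-conv} gives $\max_i \xrightarrow{P} 0$, plus a residual small in $\ell_\infty$ handled directly by the inductive hypothesis.

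The main obstacle is exactly this Gaussian-concentration step in the induction: the Frobenius-level estimates of Lemma~\ref{lem:scale-converge} and Lemma~\ref{lem:fx-bd:h-test} are too coarse for an $\ell_\infty$ conclusion, so one must both track the per-entry $1/\sqrt{n_{l'}}$ scalings inside the AT gradient flow and carefully disentangle the independent Gaussian weight $W^{(l+1)}_0$ from the training-induced perturbation of $x^{(l)}_t(x_m)$, which depends on $W^{(l+1)}_0$ through the chain of parameter updates. Once this separation is made, a union bound over the $n_{l+1}$ output coordinates of layer $l+1$, in conjunction with Lemma~\ref{lem:max-gauss-zero-conv}, closes the induction and establishes the claim.
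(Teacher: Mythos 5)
Your proof attempt takes a genuinely different route from the paper, and the point of divergence is precisely where the proposal breaks down. The paper does not go through a layer-by-layer decomposition of $h^{(l+1)}_t(x_m)-h^{(l+1)}_0(x_m)$ into weight-difference and activation-difference pieces. Instead it writes the $j$-th coordinate of $h^{(l)}_t(x_m)-h^{(l)}_0(x_m)$ as the time integral of $\partial_\tau h^{(l)}_\tau(x_m)_j$, substitutes the gradient-flow dynamics for $\partial_\tau\theta_\tau$, and \emph{recognizes the contraction $\partial_\theta h^{(l)}_\tau(x_m)_j\cdot\partial^T_\theta h^{(l)}_\tau(\mcatx_{\tau,S})$ as the $j$-th row of the layer-$l$ empirical NTK}. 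By the available induction hypothesis — layer-$l$ kernel convergence, not $\ell_\infty$ convergence — this row converges to $\Theta^{\infty,(l)}_\theta(x_m,x_{m'})\,e_j^\top$, a scalar times a standard basis vector, so its composition with the remaining back-propagation factor $\mathrm{Diag}(\phi')\,{W^{(l+1)}_\tau}^\top/\sqrt{n_l}$ collapses to essentially a single column $W^{(l+1)}_{:,j,0}/\sqrt{n_l}$. Taking $\max_j$ then reduces to $\max_{i,j}|W^{(l+1)}_{i,j,0}|/\sqrt{n_l}$ and Lemma~\ref{lem:max-gauss-zero-conv} finishes. This NTK-row contraction is the load-bearing idea, not a cosmetic shortcut.

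Your decomposition forces you to control $\tfrac{1}{\sqrt{n_l}}W^{(l+1)}_0\bigl(x^{(l)}_t(x_m)-x^{(l)}_0(x_m)\bigr)$ in $\ell_\infty$, and that is where the real gap is. You correctly flag it as the ``main delicate term,'' but the mechanism you propose does not close it. First, $x^{(l)}_t(x_m)-x^{(l)}_0(x_m)$ is genuinely \emph{not} independent of $W^{(l+1)}_0$: the gradient flow back-propagates through $W^{(l+1)}_\tau$ so that every earlier-layer parameter at $t>0$ is a non-trivial function of $W^{(l+1)}_0$, and you do not say how to extract an independent part with a controlled remainder — no such split appears anywhere in the paper's toolbox. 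Second, even granting such a split, a remainder $r$ that is merely small in $\ell_\infty$ is not enough: row-wise Cauchy--Schwarz gives $|[W^{(l+1)}_0 r]_i|/\sqrt{n_l}\lesssim \sigma_W\|r\|_2$, so you would need $\|r\|_2\to0$, while Lemma~\ref{lem:fx-bd:h-test} only yields $\|x^{(l)}_t(x_m)-x^{(l)}_0(x_m)\|_2=O_p(1)$. The paper never needs this estimate because the NTK-row step sidesteps the product $W^{(l+1)}_0\cdot(\text{activation difference})$ entirely. A secondary concern is the base case: the claimed $\sup_t\max_i\|(W^{(1)}_t-W^{(1)}_0)_{i,:}\|_2=O_p(1/\sqrt{n_1})$ does not follow from ``pulling out the $1/\sqrt{n_1}$'' alone, since the back-propagation also produces the column $W^{(2)}_{t,:,i}$ of norm $\sim\sqrt{n_2}$, and showing its product with the deeper back-propagation is $O_p(1)$ rather than $O_p(\sqrt{n_2})$ is again a nested Gaussian-concentration fact that the paper obtains for free from kernel convergence rather than re-deriving by hand. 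I would suggest re-starting from the time-derivative representation and looking for the layer-$l$ NTK row.
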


\begin{proof}

As $\tilde n \rightarrow\infty$, for every $j\in[n_{l}]$, we uniformly have that
\begin{align}
    &\sup_{t\in[0,T]} \|h^{(l)}_t(x_{m}) - h^{(l)}_0(x_{m})\|_{\infty} \nonumber \\
    &\leq K \cdot \sup_{t\in[0,T]} \max_{j\in[n_l]} \int_0^t | \partial_\tau h^{(l)}_\tau(x_{m})_j | \mathrm{d}\tau 
    \leq K \cdot \max_{j\in[n_l]} \int_0^T | \partial_\theta h^{(l)}_\tau(x_{m})_j \cdot \partial_\tau \theta_t | \mathrm{d}\tau \nonumber \\
    &= K \cdot \max_{j\in[n_l]} \int_0^T | \partial_\theta h^{(l)}_\tau(x_{m})_j \cdot \partial^T_\theta h^{(l)}_\tau(\mcatx_{t,S}) \cdot \partial^T_{h^{(l)}(\mcatx)} h^{(l+1)}_t(\mcatx_{t,S}) \cdot \partial^T_{h^{(l+1)}(\mcatx)} h^{(L+1)}_\tau(\mcatx_{t,S}) \cdot \partial^T_{f(\mcatx)} \mathcal L(f_\tau(\mcatx_{t,S}), \mcaty) | \mathrm{d}\tau \nonumber \\
    &\leq K \cdot \max_{j\in[n_l]} \int_0^T \| \hat\Theta^{(l)}_{\theta,\tau}(x_m, \mcatx_{\tau,S})_{j,:} \cdot \partial^T_{h^{(l)}(\mcatx)} h^{(l+1)}_\tau(\mcatx_{\tau,S}) \|_2 \cdot \| \partial_{h^{(l+1)}(\mcatx)} h^{(L+1)}_\tau(\mcatx_{\tau,S}) \|_2 \cdot \| \partial_{f(\mcatx)} \mathcal L(f_\tau(\mcatx_{\tau,S}), \mcaty) \|_2 \mathrm{d}\tau \nonumber \\
    &\leq K \cdot \sup_{\tau\in[0,T]} \max_{j\in[n_l]} \frac{1}{\sqrt{n_l}} \| \hat\Theta^{(l)}_{\theta,\tau}(x_m, \mcatx_{\tau,S})_{j,:} \cdot \diag( \underbrace{ ( W^{(l+1)}_\tau, \cdots, W^{(l+1)}_\tau ) }_{\text{$M$ matrices $W^{(l+1)}_\tau$ at all}} )^T \|_2 \cdot \underbrace{ O_p(1) }_{\text{Lemma~\ref{lem:fx-bd:train-dir}}} \cdot \underbrace{ O_p(1) }_{\text{Assumption~\ref{ass:opt-dir}}} \nonumber \\
    &\leq K \cdot \max_{m'\in [M]} \sup_{\tau\in[0,T]} \max_{j\in[n_l]} \frac{1}{\sqrt{n_l}} \| \hat\Theta^{(l)}_{\theta,\tau}(x_m, x_{m',\tau,S})_{j,:} \cdot {W^{(l+1)}_{\tau}}^T \|_2 \cdot O_p(1), 
    \label{clm:inf-norm-bd:x-train:eq1}
\end{align}
where $\hat\Theta^{(l)}_{\theta,\tau}(\cdot, \cdot)_{j,:}$ denotes the $j$-th row of $\hat\Theta^{(l)}_{\theta,\tau}(\cdot, \cdot)$ and $W^{(l+1)}_{i,:,\tau}$ denotes the $i$-th row of $W^{(l+1)}_\tau$.

Then, by the induction hypothesis and Lemma~\ref{lem:scale-converge}, we have
\begin{align*}
    &\lim_{n_l \rightarrow\infty}\cdots\lim_{n_0 \rightarrow\infty} \max_{m'\in [M]} \sup_{\tau\in[0,T]} \max_{j\in[n_l]} \| \hat\Theta^{(l)}_{\theta,\tau}(x_m, x_{m',\tau,S})_{j,:} - \Theta^{(l)}_{\theta}(x_m, x_{m'})_{j,:} \|_2 \xrightarrow{P} 0, \\
    &\lim_{n_l \rightarrow\infty}\cdots\lim_{n_0 \rightarrow\infty} \max_{m'\in [M]} \sup_{\tau\in[0,T]} \max_{j\in[n_l]} \frac{\|W^{(l+1)}_t -W^{(l+1)}_0\|_2}{\sqrt{n_l}} \xrightarrow{P} 0,
\end{align*}
which indicates
\begin{align}
    &\lim_{n_l\rightarrow\infty}\cdots\lim_{n_0\rightarrow\infty} \max_{m'\in [M]} \sup_{\tau\in[0,T]} \max_{j\in[n_l]} \frac{1}{\sqrt{n_l}} \| \hat\Theta^{(l)}_{\theta,\tau}(x_m, x_{m',\tau,S})_{j,:} \cdot {W^{(l+1)}_{\tau}}^T \|_2 \nonumber \\
    &\xrightarrow{P} \lim_{n_l\rightarrow\infty} \max_{m'\in [M]} \max_{j\in[n_l]} \frac{1}{\sqrt{n_l}} \| \Theta^{(l)}_{\theta}(x_m, x_{m'})_{j,:} \cdot {W^{(l+1)}_{0}}^T \|_2 \nonumber \\
    &\leq \lim_{n_l\rightarrow\infty} \max_{m'\in [M]} \max_{i\in[n_{l+1}], j\in[n_l]} \frac{ \sqrt{n_{l+1}} \cdot | \Theta^{\infty,(l)}_{\theta}(x_m, x_{m'}) | \cdot | {W^{(l+1)}_{i,j,0}} | }{\sqrt{n_l}} \nonumber \\
    &= \sqrt{n_{l+1}} \cdot \max_{m'\in[M]} | \Theta^{\infty,(l)}_{\theta}(x_m, x_{m'}) | \cdot \max_{i\in[n_{l+1}]} \lim_{n_l\rightarrow\infty} \frac{ \max_{j\in[n_l]} | {W^{(l+1)}_{i,j,0}} | }{\sqrt{n_l}} \nonumber \\
    &= \sqrt{n_{l+1}} \cdot \max_{m'\in[M]} | \Theta^{\infty,(l)}_{\theta}(x_m, x_{m'}) | \cdot \underbrace{ 0 }_{\text{Lemma~\ref{lem:max-gauss-zero-conv}}}
    = 0.
    \label{clm:inf-norm-bd:x-train:eq2}
\end{align}

Combining Eqs.~(\ref{clm:inf-norm-bd:x-train:eq1}) and (\ref{clm:inf-norm-bd:x-train:eq2}), we finally have
\begin{align*}
    \lim_{n_l\rightarrow\infty}\cdots\lim_{n_0\rightarrow\infty} \sup_{t\in[0,T]} \|h^{(l)}_t(x_{m}) - h^{(l)}_0(x_{m})\|_{\infty}
    \xrightarrow{P} 0.
\end{align*}

The proof is completed.
\end{proof}

\begin{claim}
\label{clm:inf-norm-bd:advx-train}
As $\tilde n \rightarrow\infty$,
\begin{align*}
    \lim_{n_l\rightarrow\infty}\cdots\lim_{n_0\rightarrow\infty} \sup_{t\in[0,T]} \| h^{(l)}_t(x_{m,t,s}) - h^{(l)}_0(x_{m}) \|_{\infty}
    \xrightarrow{P} 0.
\end{align*}
\end{claim}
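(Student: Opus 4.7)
The plan is to reduce Claim~\ref{clm:inf-norm-bd:advx-train} to the already-established Claim~\ref{clm:inf-norm-bd:x-train} via the triangle inequality
\begin{align*}
\| h^{(l)}_t(x_{m,t,s}) - h^{(l)}_0(x_m) \|_\infty \leq \| h^{(l)}_t(x_{m,t,s}) - h^{(l)}_t(x_m) \|_\infty + \| h^{(l)}_t(x_m) - h^{(l)}_0(x_m) \|_\infty,
\end{align*}
so that the second summand already vanishes in probability by Claim~\ref{clm:inf-norm-bd:x-train}. All remaining work lies in controlling the first summand, which measures the drift of the layer-$l$ activations caused by replacing the clean input $x_m$ by the adversarial iterate $x_{m,t,s}$ at the current parameters $\theta_t$.

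For this first summand I would integrate along the adversarial search trajectory of Eq.~(\ref{eq:at-dynamics:single-x}): since $x_{m,t,0} = x_m$, for every coordinate $j \in [n_l]$,
\begin{align*}
h^{(l)}_t(x_{m,t,s})_j - h^{(l)}_t(x_m)_j = \int_0^s \eta_m(t) \cdot [\partial_x h^{(l)}_t(x_{m,t,\tau})]_{j,:} \cdot \partial^T_x h^{(L+1)}_t(x_{m,t,\tau}) \cdot \partial^T_{f(x)} \mathcal L(f_t(x_{m,t,\tau}), y_m) \, \mathrm{d}\tau.
\end{align*}
Factoring through layer $l$, the product $[\partial_x h^{(l)}_t(x_{m,t,\tau})]_{j,:} \cdot \partial^T_x h^{(L+1)}_t(x_{m,t,\tau})$ equals $\hat\Theta^{(l)}_{x,t}(x_{m,t,\tau}, x_{m,t,\tau})_{j,:} \cdot \partial^T_{h^{(l)}(x)} h^{(L+1)}_t(x_{m,t,\tau})$. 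From this point the estimate mirrors Eqs.~(\ref{clm:inf-norm-bd:x-train:eq1})--(\ref{clm:inf-norm-bd:x-train:eq2}): the induction hypothesis of Claim~\ref{clm:detail-conv-xts} (whose $\sup_{t,s}$ formulation is crucial) replaces the ARK row by the deterministic limit $\Theta^{(l)}_x(x_m, x_m)_{j,:} = \Theta^{\infty,(l)}_x(x_m, x_m) \cdot e_j^T$ up to an error vanishing in probability; peeling off one more step to write $\partial^T_{h^{(l)}(x)} h^{(L+1)}_t = \mathrm{Diag}(\phi'(h^{(l)}_t)) \cdot \frac{1}{\sqrt{n_l}} {W^{(l+1)}_t}^T \cdot \partial^T_{h^{(l+1)}(x)} h^{(L+1)}_t$ isolates the Gaussian column-max factor $\max_{j\in[n_l]} \frac{1}{\sqrt{n_l}} \| [W^{(l+1)}_t]_{:,j} \|_2$, which tends to $0$ by Lemma~\ref{lem:max-gauss-zero-conv} applied to $W^{(l+1)}_0$ combined with the bound on $\|W^{(l+1)}_t - W^{(l+1)}_0\|_2 / \sqrt{n_l}$ from Lemma~\ref{lem:scale-converge}; Lemma~\ref{lem:fx-bd:train-dir}, Assumption~\ref{ass:active} and Assumption~\ref{ass:opt-dir} uniformly control $\|\partial_{h^{(l+1)}(x)} h^{(L+1)}_t\|_2$, $\|\phi'(\cdot)\|_\infty$, and $\|\partial_{f(x)} \mathcal L\|_2$; and Assumption~\ref{ass:lr} uniformly bounds $|\eta_m(t)|$ on $[0,T]$. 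The finite integration length $S$ contributes only a constant multiplicative factor.

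The main obstacle I anticipate is the $\tau$-dependence inside the integrand: unlike in Claim~\ref{clm:inf-norm-bd:x-train}, where the ARK row is evaluated at the fixed training point $x_m$, here it is evaluated at the moving adversarial iterate $x_{m,t,\tau}$. This is precisely why the induction hypothesis in Claim~\ref{clm:detail-conv-xts} is stated with a $\sup_{t\in[0,T], s\in[0,S]}$, and combined with Lemma~\ref{lem:fx-bd:h-x-s} (which controls the $\tau$-drift of the intermediate activations entering $\phi'(h^{(l)}_t(x_{m,t,\tau}))$) it supplies the uniform-in-$\tau$ control needed to push the kernel replacement under the integral sign. Once this coordination is settled, taking the iterated limits $\lim_{n_l \to \infty}\cdots\lim_{n_0 \to \infty}$ and the supremum $\sup_{t\in[0,T]}$ on both triangle summands yields the desired $\xrightarrow{P} 0$.
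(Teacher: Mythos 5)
Your proposal is correct and follows essentially the same route as the paper's proof: split via the triangle inequality (invoking Claim~\ref{clm:inf-norm-bd:x-train} for the training-time drift at the fixed input $x_m$), then bound the adversarial-search drift $\|h^{(l)}_t(x_{m,t,s})-h^{(l)}_t(x_m)\|_\infty$ by integrating the search ODE, factoring through layer $l$ to expose the ARK row $\hat\Theta^{(l)}_{x,t}(\cdot,\cdot)_{j,:}$, peeling off $\frac{1}{\sqrt{n_l}}{W^{(l+1)}_t}^T$, replacing the ARK by its deterministic limit via the $\sup_{t,s}$ induction hypothesis of Claim~\ref{clm:detail-conv-xts} and Lemma~\ref{lem:scale-converge}, and killing the remaining Gaussian column-max factor with Lemma~\ref{lem:max-gauss-zero-conv} while absorbing downstream Jacobians, $\phi'$, $\bm{\upeta}$, and $\partial_{f(x)}\mathcal L$ into $O_p(1)$ via Lemma~\ref{lem:fx-bd:train-dir}, Assumptions~\ref{ass:active}, \ref{ass:lr}, \ref{ass:opt-dir}. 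The only cosmetic difference is that you invoke Lemma~\ref{lem:fx-bd:h-x-s} to motivate the uniform-in-$\tau$ control, whereas the paper gets this directly from the $\sup_{s\in[0,S]}$ form of the induction hypothesis, which is the same mechanism you ultimately rely on.
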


\begin{proof}
We first bound $\|h^{(l)}_{t}(x_{m,t,s}) - h^{(l)}_t(x_{m})\|_\infty$.
As $\tilde n \rightarrow \infty$, have that
\begin{align}
    &\sup_{t\in[0,T],s\in[0,S]} \|h^{(l)}_{t}(x_{m,t,s}) - h^{(l)}_t(x_{m})\|_\infty \nonumber \\
    &\leq K \cdot \sup_{t\in[0,T],s\in[0,S]} \max_{j\in[n_l]} \int_0^s | \partial_\tau h^{(l)}_t(x_{m,t,\tau})_j | \mathrm{d}\tau
    \leq K \cdot \sup_{t\in[0,T]} \max_{j\in[n_l]} \int_0^S | \partial_x h^{(l)}_t(x_{m,t,\tau})_j \cdot \partial_\tau x_{m,t,\tau} | \mathrm{d}\tau  \nonumber \\
    &\leq K
        \cdot \sup_{t\in[0,T], \tau\in[0,S]} \max_{j\in[n_l]} \|\hat\Theta^{(l)}_{x,t}(x_{m,t,\tau}, x_{m,t,\tau})_{j,:} \cdot \partial^T_{h^{(l)}(x)} h^{(l+1)}_t(x_{m,t,\tau})\|_2 \nonumber \\
        &\quad \cdot \sup_{t\in[0,T], \tau\in[0,S]} \| \partial_{h^{(l+1)}(x)} h^{(L+1)}_t(x_{m,t,\tau}) \|_2
        \cdot \sup_{t\in[0,T]}\|\mcatlr(t)\|_2 \cdot \int_0^S \| \partial_{f(x)} \mathcal L(f_t(x_{m,t,s}), y_m) \|_2 \mathrm{d}\tau \nonumber \\
    &\leq \sup_{t\in[0,T], \tau\in[0,S]} \max_{j\in[n_l]} \frac{1}{\sqrt{n_l}} \|\hat\Theta^{(l)}_{x,t}(x_{m,t,\tau}, x_{m,t,\tau})_{j,:} \cdot {W^{(l+1)}_t}^T \|_2
        \cdot \underbrace{ O_p(1) }_{\text{Lemma~\ref{lem:fx-bd:train-dir}}}
        \cdot \underbrace{ O_p(1) }_{\text{Assumption~\ref{ass:lr}}}
        \cdot \underbrace{ O_p(1) }_{\text{Assumption~\ref{ass:opt-dir}}},
    \label{clm:inf-norm-bd:advx-train:eq1}
\end{align}
where $\hat\Theta^{(l)}_{x,t}(x_{m,t,\tau}, x_{m,t,\tau})_{j,:}$ is the $j$-th row of $\hat\Theta^{(l)}_{x,t}(x_{m,t,\tau}, x_{m,t,\tau})$.

Similar to that in the proof of Claim~\ref{clm:inf-norm-bd:x-train}, by the induction hypothesis and Lemma~\ref{lem:scale-converge}, we have
\begin{align*}
    &\lim_{n_{l}\rightarrow\infty} \cdots \lim_{n_{0}\rightarrow\infty} \sup_{t\in[0,T],\tau\in[0,S]} \max_{j\in[n_l]} \| \hat\Theta^{(l)}_{x,t}(x_{m,t,\tau}, x_{m,t,\tau})_{j,:} - \Theta^{(l)}_{x}(x_m, x_{m'})_{j,:} \|_2 \xrightarrow{P} 0, \\
    &\lim_{n_{l}\rightarrow\infty} \cdots \lim_{n_{0}\rightarrow\infty} \sup_{t\in[0,T]} \frac{\| W^{(l+1)}_t - W^{(l+1)}_0 \|_2 }{\sqrt{n_l}} \xrightarrow{P} 0,
\end{align*}
which leads to
\begin{align}
    &\lim_{n_{l}\rightarrow\infty} \cdots \lim_{n_{0}\rightarrow\infty} \sup_{t\in[0,T], \tau\in[0,S]} \max_{j\in[n_l]} \frac{1}{\sqrt{n_l}} \|\hat\Theta^{(l)}_{x,t}(x_{m,t,\tau}, x_{m,t,\tau})_{j,:} \cdot {W^{(l+1)}_t}^T \|_2 \nonumber \\
    &\xrightarrow{P} \lim_{n_{l}\rightarrow\infty} \cdots \lim_{n_{0}\rightarrow\infty} \max_{j\in[n_l]} \frac{1}{\sqrt{n_l}} \|\Theta^{(l)}_{x}(x_{m}, x_{m})_{j,:} \cdot {W^{(l+1)}_0}^T \|_2 \nonumber \\
    &= \lim_{n_{l}\rightarrow\infty} \max_{j\in[n_l]} \frac{1}{\sqrt{n_l}} |\Theta^{\infty,(l)}_{x}(x_{m}, x_{m})| \cdot \| W^{(l+1)}_{:,j,0} \|_2 \nonumber \\
    &\leq \sqrt{n_{l+1}} \cdot |\Theta^{\infty,(l)}_{x}(x_{m}, x_{m})| \cdot \max_{i\in[n_{l+1}]} \lim_{n_{l}\rightarrow\infty}  \frac{ \max_{j\in[n_l]} |W^{(l+1)}_{i,j,0}| }{\sqrt{n_l}} \nonumber \\
    &\leq \sqrt{n_{l+1}} \cdot |\Theta^{\infty,(l)}_{x}(x_{m}, x_{m})| \cdot \underbrace{0}_{\text{Lemma~\ref{lem:max-gauss-zero-conv}}}
    = 0.
    \label{clm:inf-norm-bd:advx-train:eq2}
\end{align}

Combining Eqs.(\ref{clm:inf-norm-bd:advx-train:eq1}) and (\ref{clm:inf-norm-bd:advx-train:eq2}), we thus have that as $\tilde n \rightarrow\infty$,
\begin{align*}
    &\lim_{n_{l}\rightarrow\infty} \cdots \lim_{n_{0}\rightarrow\infty}\sup_{t\in[0,T],s\in[0,S]} \|h^{(l)}_{t}(x_{m,t,s}) - h^{(l)}_t(x_{m})\|_\infty \xrightarrow{P} 0.
\end{align*}
As a result, by further applying Claim~\ref{clm:inf-norm-bd:x-train},
\begin{align*}
    &\lim_{n_{l}\rightarrow\infty} \cdots \lim_{n_{0}\rightarrow\infty} \sup_{t\in[0,T],s\in[0,S]} \|h^{(l)}_{t}(x_{m,t,s}) - h^{(l)}_0(x_{m})\|_\infty \\
    &\leq \lim_{n_{l}\rightarrow\infty} \cdots \lim_{n_{0}\rightarrow\infty} \sup_{t\in[0,T],s\in[0,S]} \left( \|h^{(l)}_{t}(h_{m,t,s}) - x^{(l)}_t(x_{m})\|_\infty + \|h^{(l)}_{t}(x_{m}) - h^{(l)}_0(x_{m})\|_\infty \right) \\
    &\xrightarrow{P} 0 + 0 = 0.
\end{align*}

The proof is completed.
\end{proof}

Based on Claims~\ref{clm:inf-norm-bd:x-train} and \ref{clm:inf-norm-bd:advx-train}, we are now able to prove the induction step.
Specifically, for the NTK $\hat\Theta^{(l+1)}_{\theta,t}$, we have
\begin{align*}
    &\| \hat\Theta^{(l+1)}_{\theta,t}(x_{m,t,s}, x_{m',t,s}) - \hat\Theta^{(l+1)}_{\theta,0}(x_{m}, x_{m'}) \|_2 \\
    &\leq \underbrace{ \| D^{(l+1)}_t(x_{m,t,s}) \cdot \hat\Theta^{(l)}_{\theta,t}(x_{m,t,s},x_{m',t,s}) \cdot D^{(l+1)}_t(x_{m',t,s})^T - D^{(l+1)}_0(x_{m}) \cdot \hat\Theta^{(l)}_{\theta,0}(x_{m},x_{m'}) \cdot D^{(l+1)}_0(x_{m'})^T \|_2 }_{\mathrm{I}_{t,s}} \nonumber \\
        &\quad + \underbrace{ \frac{| x^{(l)}_t(x_{m,t,s})^T x^{(l)}_t(x_{m',t,s}) - x^{(l)}_t(x_{m})^T x^{(l)}_t(x_{m'})  |}{n_l} }_{\mathrm{II}_{t,s}}
\end{align*}
where $D^{(l+1)}_t(x) := \frac{ W^{(l+1)}_t \cdot \diag(\phi'(h^{(l)}_t(x))) }{\sqrt{n_l}}$.

For $\diag(\phi'(h^{(l)}_t(x_{m,t,s})))$ in $D^{(l+1)}_t(x_{m,t,s})$, by Claim~\ref{clm:inf-norm-bd:advx-train},
\begin{align*}
    &\lim_{n_l\rightarrow\infty}\cdots\lim_{n_0\rightarrow\infty} \sup_{t\in[0,T],s\in[0,S]} \| \diag(\phi'(h^{(l)}_t(x_{m,t,s}))) - \diag(\phi'(h^{(l)}_0(x_{m}))) \|_2 \\
    &= \lim_{n_l\rightarrow\infty}\cdots\lim_{n_0\rightarrow\infty} \sup_{t\in[0,T],s\in[0,S]} \| \phi'(h^{(l)}_t(x_{m,t,s})) - \phi'(h^{(l)}_0(x_{m})) \|_\infty \\
    &\leq \lim_{n_l\rightarrow\infty}\cdots\lim_{n_0\rightarrow\infty} \sup_{t\in[0,T],s\in[0,S]} K\cdot \| h^{(l)}_t(x_{m,t,s}) - h^{(l)}_0(x_{m}) \|_\infty \\
    &\xrightarrow{P} 0.
\end{align*}
Combining with Lemma~\ref{lem:scale-converge} which shows that $\lim_{n_l\rightarrow\infty}\cdots\lim_{n_0\rightarrow\infty} \sup_{t\in[0,T]} \frac{\|W^{(l+1)}_t - W^{(l+1)}_0\|_2}{\sqrt{n_l}} \xrightarrow{P}0$ as $\tilde n\rightarrow\infty$, we thus have
\begin{align*}
    \lim_{n_l\rightarrow\infty}\cdots\lim_{n_0\rightarrow\infty} \sup_{t\in[0,T],s\in[0,S]} \|D^{(l+1)}_t(x_{m,t,s}) - D^{(l+1)}_0(x_m) \|_2 \xrightarrow{P} 0.
\end{align*}
Further applying the induction hypothesis that $\lim_{n_l\rightarrow\infty}\cdots\lim_{n_0\rightarrow\infty} \sup_{t\in[0,T],s\in[0,S]} \|\hat\Theta^{(l)}_{\theta,t}(x_{m,t,s}, x_{m',t,s}) - \hat\Theta^{(l)}_{\theta,0}(x_{m}, x_{m'})\|_2 \xrightarrow{P}0$, we finally have
\begin{align*}
    \lim_{n_l\rightarrow\infty}\cdots\lim_{n_0\rightarrow\infty} \sup_{t\in[0,T],s\in[0,S]} \mathrm{I}_{t,s} \xrightarrow{P} 0.
\end{align*}

Besides, for term $\mathrm{II}_{t,s}$, by Lemma~\ref{lem:fx-bd:h-training} and Assumption~\ref{ass:active}, we have
\begin{align*}
    &\lim_{n_l\rightarrow\infty}\cdots\lim_{n_0\rightarrow\infty} \sup_{t\in[0,T],s\in[0,S]} \frac{\|x^{(l)}_t(x_{m,t,s}) - x^{(l)}_0(x_{m})\|_2}{\sqrt{n_l}} \\
    &\leq \lim_{n_l\rightarrow\infty}\cdots\lim_{n_0\rightarrow\infty} \sup_{t\in[0,T],s\in[0,S]} \frac{K \cdot \|h^{(l)}_t(x_{m,t,s}) - h^{(l)}_0(x_{m})\|_2}{\sqrt{n_l}} \\
    &\leq \lim_{n_l\rightarrow\infty}\cdots\lim_{n_0\rightarrow\infty} \frac{K \cdot O_p(1)}{\sqrt{n_l}}
    \xrightarrow{P} 0,
\end{align*}
which leads to
\begin{align*}
    \lim_{n_l\rightarrow\infty}\cdots\lim_{n_0\rightarrow\infty} \sup_{t\in[0,T],s\in[0,S]} \mathrm{II}_{t,s} \xrightarrow{P} 0.
\end{align*}

Finally, Based on the convergences of $\mathrm{I}_{t,s}$ and $\mathrm{II}_{t,s}$, we have that as $\tilde n \rightarrow\infty$,
\begin{align*}
    &\lim_{n_l\rightarrow\infty}\cdots\lim_{n_0\rightarrow\infty} \| \hat\Theta^{(l+1)}_{\theta,t}(x_{m,t,s}, x_{m',t,s}) - \hat\Theta^{(l+1)}_{\theta,0}(x_{m}, x_{m'}) \|_2
    \xrightarrow{P} 0.
\end{align*}
According to Theorem~\ref{thm:conv-init-formal}, $\lim_{n_l\rightarrow\infty}\cdots\lim_{n_0\rightarrow\infty} \hat\Theta^{(l+1)}_{\theta,0}(x_{m}, x_{m'}) = \Theta^{(l+1)}_{\theta}(x_{m}, x_{m'})$, which thus indicates
\begin{align*}
    &\lim_{n_l\rightarrow\infty}\cdots\lim_{n_0\rightarrow\infty} \| \hat\Theta^{(l+1)}_{\theta,t}(x_{m,t,s}, x_{m',t,s}) - \Theta^{(l+1)}_{\theta}(x_{m}, x_{m'}) \|_2
    \xrightarrow{P} 0.
\end{align*}

Then, for the ARK $\hat\Theta^{(l+1)}_{x,t}$, we have
\begin{align*}
    &\| \hat\Theta^{(l+1)}_{x,t}(x_{m,t,s}, x_{m',t,s}) - \hat\Theta^{(l+1)}_{x,0}(x_{m}, x_{m'}) \|_2 \\
    &\leq \| D^{(l+1)}_t(x_{m,t,s}) \cdot \hat\Theta^{(l)}_{x,t}(x_{m,t,s},x_{m',t,s}) \cdot D^{(l+1)}_t(x_{m',t,s})^T - D^{(l+1)}_0(x_{m}) \cdot \hat\Theta^{(l)}_{x,0}(x_{m},x_{m'}) \cdot D^{(l+1)}_0(x_{m'})^T \|_2.
\end{align*}
Therefore, following similar derivation as that for $\hat\Theta^{(l+1)}_{\theta,t}$, we will have
\begin{align*}
    &\lim_{n_l\rightarrow\infty}\cdots\lim_{n_0\rightarrow\infty} \| \hat\Theta^{(l+1)}_{x,t}(x_{m,t,s}, x_{m',t,s}) - \Theta^{(l+1)}_{x}(x_{m}, x_{m'}) \|_2
    \xrightarrow{P} 0
\end{align*}
as $\tilde n \rightarrow\infty$, which justifies the induction step and also completes the proof of Claim~\ref{clm:detail-conv-xts}.
\end{proof}

The proof is completed.
\end{proof}

\subsubsection{Proof of Theorem~\ref{lem:conv-training-x-xts}}

{\updcolor Theorem~\ref{lem:conv-training-x-xts} is similarly proved as follows.}

\begin{proof}[Proof of \textbf{Theorem~\ref{lem:conv-training-x-xts}}]
The proof is similar to that of Theorem~\ref{lem:conv-training-xts-xts}.
Specifically, to prove Theorem~\ref{lem:conv-training-x-xts}, it is enough to prove the following Claim~\ref{clm:detail-conv-xts-test}.

\begin{claim}
\label{clm:detail-conv-xts-test}
For a given $l \in [L+1]$, suppose $\frac{n_{l+1}}{n_l}, \cdots, \frac{n_{L+1}}{n_L} = O_p(1)$,
Then, for any $m \in [M]$, as $\tilde n\rightarrow\infty$, we have that
\begin{align*}
    &\lim_{n_{l-1}\rightarrow\infty} \cdots \lim_{n_0\rightarrow\infty} \sup_{t\in[0,T],s\in[0,S]} \| \hat\Theta^{(l)}_{\theta,t}(x, x_{m,t,s}) - \Theta^{(l)}_{\theta}(x, x_{m}) \|_2 \xrightarrow{P} 0, \\
    &\lim_{n_{l-1}\rightarrow\infty} \cdots \lim_{n_0\rightarrow\infty} \sup_{t\in[0,T],s\in[0,S]} \| \hat\Theta^{(l)}_{x,t}(x, x_{m,t,s}) - \Theta^{(l)}_{x}(x, x_{m}) \|_2 \xrightarrow{P} 0.
\end{align*}
\end{claim}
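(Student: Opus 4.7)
The plan is to prove Claim~\ref{clm:detail-conv-xts-test} by induction on $l$, mirroring the skeleton of the proof of Claim~\ref{clm:detail-conv-xts} but with the second kernel argument $x_{m',t,s}$ replaced throughout by a generic test point $x \in \mathcal X$. For the base case $l=1$, the NTK reduces to $\hat\Theta^{(1)}_{\theta,t}(x, x_{m,t,s}) = \tfrac{1}{n_0} x^\top x_{m,t,s} + 1$ (times $I_{n_1}$), and the convergence to $\Theta^{(1)}_{\theta}(x,x_m)$ follows from showing $\sup_{t,s} \tfrac{|x^\top (x_{m,t,s} - x_m)|}{n_0} \xrightarrow{P} 0$, which is immediate from $\sup_{t,s} \|x_{m,t,s} - x_m\|_2 = O_p(1)$ (Lemma~\ref{lem:fx-bd:h-training} and Lemma~\ref{lem:fx-bd:h-x-s}) together with $\|x\|_2$ being a fixed constant. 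The ARK $\hat\Theta^{(1)}_{x,t}(x, x_{m,t,s}) = \tfrac{1}{n_0} W^{(1)}_t {W^{(1)}_t}^\top$ does not depend on either input and is already handled in the base case of Claim~\ref{clm:detail-conv-xts} via Lemma~\ref{lem:scale-converge}.

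For the inductive step, assume the claim holds at layer $l$ and use the standard recursion
\[
\hat\Theta^{(l+1)}_{\theta,t}(x, x_{m,t,s}) = D^{(l+1)}_t(x)\, \hat\Theta^{(l)}_{\theta,t}(x, x_{m,t,s})\, D^{(l+1)}_t(x_{m,t,s})^\top + \tfrac{x^{(l)}_t(x)^\top x^{(l)}_t(x_{m,t,s})}{n_l} I_{n_{l+1}},
\]
with $D^{(l+1)}_t(\cdot) := W^{(l+1)}_t \diag(\phi'(h^{(l)}_t(\cdot)))/\sqrt{n_l}$, and the analogous identity for $\hat\Theta^{(l+1)}_{x,t}$ (without the inner-product correction). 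Three pieces must be controlled: (i) $\hat\Theta^{(l)}_{*,t}(x, x_{m,t,s}) \to \Theta^{(l)}_{*}(x, x_m)$ is the induction hypothesis; (ii) $\|D^{(l+1)}_t(x_{m,t,s}) - D^{(l+1)}_0(x_m)\|_2 \xrightarrow{P} 0$ was already established inside the inductive step of Claim~\ref{clm:detail-conv-xts}; and (iii) the inner-product correction converges because $\|x^{(l)}_t(x) - x^{(l)}_0(x)\|_2$ and $\|x^{(l)}_t(x_{m,t,s}) - x^{(l)}_0(x_m)\|_2$ are both $O_p(1)$ by Lemmas~\ref{lem:fx-bd:h-test}, \ref{lem:fx-bd:h-training}, and \ref{lem:fx-bd:h-x-s}, so after dividing by $n_l$ the difference vanishes while the initial-value contribution converges by Theorem~\ref{thm:conv-init-formal}.

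The main obstacle is a new test-point ingredient absent from the training-data argument, namely $\sup_{t\in[0,T]} \|D^{(l+1)}_t(x) - D^{(l+1)}_0(x)\|_2 \xrightarrow{P} 0$, which in turn reduces to a test-point analog of Claim~\ref{clm:inf-norm-bd:x-train}: $\sup_{t\in[0,T]} \|h^{(l)}_t(x) - h^{(l)}_0(x)\|_\infty \xrightarrow{P} 0$. I would prove this exactly as Claim~\ref{clm:inf-norm-bd:x-train} is proved: expand $\partial_\tau h^{(l)}_\tau(x)_j$ via the chain rule to isolate a factor $\tfrac{1}{\sqrt{n_l}} \|\hat\Theta^{(l)}_{\theta,\tau}(x, \mcatx_{\tau,S})_{j,:} \cdot \diag(W^{(l+1)}_\tau,\dots,W^{(l+1)}_\tau)^\top\|_2$, bound the remaining back-propagation factors by Lemma~\ref{lem:fx-bd:train-dir} and the adversarial and training integrals by Assumptions~\ref{ass:opt-dir} and \ref{ass:lr}, invoke the induction hypothesis of Claim~\ref{clm:detail-conv-xts-test} at layer $l$ to replace $\hat\Theta^{(l)}_{\theta,\tau}(x, \mcatx_{\tau,S})$ by its deterministic limit, use Lemma~\ref{lem:scale-converge} to replace $W^{(l+1)}_\tau$ by $W^{(l+1)}_0$, and finally apply Lemma~\ref{lem:max-gauss-zero-conv} to conclude $\max_{i,j} |W^{(l+1)}_{i,j,0}|/\sqrt{n_l} \xrightarrow{P} 0$. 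The Lipschitz smoothness of $\phi$ (Assumption~\ref{ass:active}) then upgrades this $\infty$-norm convergence to $\|\diag(\phi'(h^{(l)}_t(x))) - \diag(\phi'(h^{(l)}_0(x)))\|_2 \xrightarrow{P} 0$, which combined with the weight-matrix convergence gives the desired bound on $D^{(l+1)}_t(x) - D^{(l+1)}_0(x)$ and closes the induction. The only subtlety is that the induction must be structured so that the test-point $\infty$-norm bound and the test-point kernel convergence are proved jointly at each layer, since the former at layer $l$ feeds directly into the latter at layer $l+1$.
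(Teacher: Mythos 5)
Your proposal is correct and matches the paper's proof: the paper likewise runs the induction of Claim~\ref{clm:detail-conv-xts} with the second argument replaced by the test point, and identifies the single new ingredient as the test-point $\infty$-norm bound $\sup_{t\in[0,T]}\|h^{(l)}_t(x)-h^{(l)}_0(x)\|_\infty\xrightarrow{P}0$, to be established via Lemma~\ref{lem:fx-bd:h-test} and the derivation of Claim~\ref{clm:inf-norm-bd:x-train}. Your observation that the test-point $\infty$-norm bound at layer $l$ and the test-point kernel convergence at layer $l{+}1$ must be carried jointly through the induction is exactly the structure the paper's terse sketch relies on.
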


Then, Theorem~\ref{lem:conv-training-x-xts} can be directly obtained by setting $l = L+1$ in Claim~\ref{clm:detail-conv-xts-test}.

The proof of Claim~\ref{clm:detail-conv-xts-test} basically follows that of Claim~\ref{clm:detail-conv-xts}.
The only difference is that Claim~\ref{clm:detail-conv-xts-test} requires one to further prove that
\begin{align*}
    \lim_{n_l\rightarrow\infty} \cdots \lim_{n_0\rightarrow\infty} \sup_{t\in[0,T]} \|h^{(l)}_t(x) - h^{(l)}_0(x)\|_\infty
    \xrightarrow{P} 0
\end{align*}
for any $x\in\mathcal X$ as $\tilde n\rightarrow\infty$ in the $(l+1)$-th induction step, which can be done by using Lemma~\ref{lem:fx-bd:h-test} and following the derivation in Claim~\ref{clm:inf-norm-bd:x-train}.
\end{proof}

\subsection{Equivalence between wide DNN and linearized DNN}
\label{app:equivalence}

With the kernel convergences proved in Appendix~\ref{app:kernel-converge}, we are now able to prove Theorem~\ref{thm:equival-linear}.

We will first prove the following Lemma~\ref{lem:equiv:diff-advx}, and then prove Theorem~\ref{thm:equival-linear}.

\begin{lemma}
\label{lem:equiv:diff-advx}
Suppose Assumptions~\ref{ass:active}, \ref{ass:opt-dir}, \ref{ass:lr}, and \ref{ass:loss} hold, and there exists $\tilde n \in \mathbb{N}^+$ such that $\{n_0,\cdots,n_L\} \geq \tilde n$.
Then, as $\tilde n \rightarrow \infty$,
we have that
\begin{align*}
    &\lim_{n_L\rightarrow\infty} \cdots \lim_{n_0\rightarrow\infty} \left\{ \sup_{t\in[0,T],s\in[0,S]} \|\partial_{f(\mcatx)} \mathcal L(f_t(\mcatx_{t,s}), \mcaty) - \partial_{f(\mcatx)} \mathcal L(f^{\mathrm{lin}}_t(\mcatx^{\mathrm{lin}}_{t,s}), \mcaty)\|_2 \right\} \xrightarrow{P} 0, \\
    &\lim_{n_L\rightarrow\infty} \cdots \lim_{n_0\rightarrow\infty} \left\{ \sup_{t\in[0,T],s\in[0,S]} \|f_t(\mcatx_{t,s}) - f^{\mathrm{lin}}_t(\mcatx^{\mathrm{lin}}_{t,s})\|_2 \right\} \xrightarrow{P} 0.
\end{align*}
\end{lemma}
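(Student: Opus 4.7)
The plan is to first reduce the loss-gradient statement to the function-value statement by the $K$-Lipschitz smoothness of $\mathcal L$ in Assumption~\ref{ass:loss}, which gives
\[
\|\partial_{f(\mcatx)} \mathcal L(f_t(\mcatx_{t,s}), \mcaty) - \partial_{f(\mcatx)} \mathcal L(f^{\mathrm{lin}}_t(\mcatx^{\mathrm{lin}}_{t,s}), \mcaty)\|_2 \leq K\cdot\|f_t(\mcatx_{t,s}) - f^{\mathrm{lin}}_t(\mcatx^{\mathrm{lin}}_{t,s})\|_2.
\]
It then suffices to establish the second displayed claim. Denote $\Delta(t,s) := f_t(\mcatx_{t,s}) - f^{\mathrm{lin}}_t(\mcatx^{\mathrm{lin}}_{t,s})$, noting $\Delta(0,0)=0$ since $f^{\mathrm{lin}}_0 = f_0$ and $\mcatx^{\mathrm{lin}}_{0,0}=\mcatx_{0,0}=\mcatx$. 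I would write
\[
\Delta(t,s) = \int_0^t \partial_\tau \Delta(\tau,0)\, \mathrm{d}\tau + \int_0^s \partial_\sigma \Delta(t,\sigma)\, \mathrm{d}\sigma,
\]
and evaluate the two integrands using the training ODE (Eq.~(\ref{eq:df-nn-adv}) for the nonlinear branch and Eq.~(\ref{eq:linear-train-dynamics:2}) for the linear branch) and the search ODE (the second equation in Section~\ref{sec:at-dynamics} for the nonlinear branch, Eq.~(\ref{eq:linear-search-dynamics:2}) for the linear branch). The integrands then take the form of an NTK (resp. ARK) acting on a loss gradient, evaluated at the nonlinear adversarial iterate, minus the corresponding object evaluated at the linearized iterate with the frozen kernels $\hat\Theta_{\theta,0}(\mcatx,\mcatx)$ and $\hat\Theta_{x,0}(\mcatx,\mcatx)$.

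I would then insert and subtract the deterministic limiting kernels $\Theta_\theta(\mcatx,\mcatx)$ and $\Theta_x(\mcatx,\mcatx)$ from Theorems~\ref{lem:conv-training-xts-xts} and \ref{lem:conv-training-x-xts}. This decomposes each integrand into three parts: (i) kernel-mismatch terms of the form $\|\hat\Theta_{\theta,t}(\mcatx,\mcatx_{t,S})-\Theta_\theta(\mcatx,\mcatx)\|_2$ and $\|\hat\Theta_{x,t}(\mcatx_{t,s},\mcatx_{t,s})-\Theta_x(\mcatx,\mcatx)\|_2$ (and their at-initialization linearized analogues), all of which are uniformly $o_p(1)$ by Theorems~\ref{lem:conv-training-xts-xts}, \ref{lem:conv-training-x-xts}, while being multiplied by loss-gradient norms that integrate to $O_p(1)$ by Assumption~\ref{ass:opt-dir}; (ii) a common-kernel term that, after applying Lipschitz smoothness of $\mathcal L$ again, is bounded by $K\cdot\|\Theta_x(\mcatx,\mcatx)\|_2\cdot\sup_t\|\mcatlr(t)\|_2\cdot\|\Delta(t,\sigma)\|_2$ (using Assumption~\ref{ass:lr} to bound $\mcatlr$ on $[0,T]$); and (iii) an analogous NTK-side term bounded by $K\cdot\|\Theta_\theta(\mcatx,\mcatx)\|_2\cdot\|\Delta(\tau,S)\|_2$.

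Putting the three parts together yields an inequality of the form
\[
\|\Delta(t,s)\|_2 \;\leq\; \epsilon_{\tilde n} \;+\; C\int_0^t \|\Delta(\tau,S)\|_2\,\mathrm{d}\tau \;+\; C\int_0^s \|\Delta(t,\sigma)\|_2\,\mathrm{d}\sigma,
\]
where $\epsilon_{\tilde n} = o_p(1)$ uniformly in $(t,s)\in[0,T]\times[0,S]$ and $C=O_p(1)$. I would then take $\sup_{s\in[0,S]}$ on the left and apply Gr\"onwall's inequality (Lemma~\ref{lem:gronwall}) in $s$ to obtain $\sup_{s}\|\Delta(t,s)\|_2 \leq (\epsilon_{\tilde n}+C\int_0^t\sup_\sigma\|\Delta(\tau,\sigma)\|_2\mathrm{d}\tau)\cdot e^{CS}$, and a second application of Gr\"onwall in $t$ yields $\sup_{t,s}\|\Delta(t,s)\|_2 \leq \epsilon_{\tilde n}\cdot e^{CS}\cdot e^{CT e^{CS}} \xrightarrow{P} 0$.

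The main obstacle is the two-parameter coupling between the training time $t$ and the search time $s$: the search-time ODE is driven by a loss gradient whose discrepancy depends back on $\Delta(t,s)$ itself, while the training-time ODE is driven by the adversarial iterate at $s=S$. Handling this cleanly requires that every stochastic prefactor appearing in the Gr\"onwall step be tight in $\tilde n$, which is precisely what Assumption~\ref{ass:opt-dir} and the boundedness results of Appendix~\ref{app:rescale-converge} (via Lemmas~\ref{lem:xts-dir-bd}, \ref{lem:fx-bd:train-dir}) give, and that the kernel residuals be controlled uniformly over $(t,s)$ — which is the strengthened form in which Theorems~\ref{lem:conv-training-xts-xts}, \ref{lem:conv-training-x-xts} are already stated.
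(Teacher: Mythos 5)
Your proposal is correct and follows essentially the same route as the paper's proof: reduce the gradient statement to the function-value statement via Lipschitz smoothness of $\mathcal L$; express the difference via the search and training ODEs; use the kernel-convergence results (Theorems~\ref{lem:conv-training-xts-xts}, \ref{lem:conv-training-x-xts}) together with Assumption~\ref{ass:opt-dir} to isolate an $o_p(1)$ forcing term; and close with a double Gr\"onwall, first in $s$ and then in $t$. The only cosmetic difference is that the paper centers the kernel-mismatch terms at the empirical initialization kernels $\hat\Theta_{\theta,0},\hat\Theta_{x,0}$ rather than at the deterministic limits $\Theta_\theta,\Theta_x$, which avoids needing a bound on the linearized network's loss-gradient norm, but this is a minor bookkeeping choice that does not change the argument.
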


\begin{proof}
By Assumption~\ref{ass:loss}, we have that
\begin{align*}
    &\|\partial_{f(\mcatx)} \mathcal L(f_t(\mcatx_{t,s}), \mcaty) - \partial_{f(\mcatx)} \mathcal L(f^{\mathrm{lin}}_t(\mcatx^{\mathrm{lin}}_{t,s}), \mcaty)\|_2
    \leq K \cdot \|f_t(\mcatx_{t,s}) - f^{\mathrm{lin}}_t(\mcatx^{\mathrm{lin}}_{t,s})\|_2.
\end{align*}
Thus, to prove Lemma~\ref{lem:equiv:diff-advx}, it is enough to only show the convergence of $\|f_t(\mcatx_{t,s}) - f^{\mathrm{lin}}_t(\mcatx^{\mathrm{lin}}_{t,s})\|_2$.

Then, following the AT dynamics formalized in Section~\ref{sec:at-dynamics}, as $\tilde n \rightarrow\infty$, for any $t\in[0,T]$ and $s\in[0,S]$, we uniformly have
\begin{align*}
    &\|f_t(\mcatx_{t,s}) - f^{\mathrm{lin}}_t(\mcatx^{\mathrm{lin}}_{t,s})\|_2 \nonumber \\
    &= \left\| f_t(\mcatx) - f^{\mathrm{lin}}_t(\mcatx) + \int_0^s \left( \hat\Theta_{x,t}(\mcatx_{t,\tau}, \mcatx_{t,\tau}) \cdot \mcatlr(t) \cdot \partial^T_{f(\mcatx)} \mathcal L(f_t(\mcatx_{t,\tau}), \mcaty) - \hat\Theta_{x,0}(\mcatx, \mcatx) \cdot \mcatlr(t) \cdot \partial^T_{f(\mcatx)} \mathcal L(f^{\mathrm{lin}}_t(\mcatx^{\mathrm{lin}}_{t,\tau}), \mcaty) \right) \mathrm{d}\tau \right\|_2 \nonumber \\
    &\leq \| f_t(\mcatx) - f^{\mathrm{lin}}_t(\mcatx) \|_2
        + \int_0^S \| \hat\Theta_{x,t}(\mcatx_{t,\tau}, \mcatx_{t,\tau}) - \hat\Theta_{x,0}(\mcatx, \mcatx)\|_2 \cdot \|\mcatlr(t)\|_2 \cdot \|\partial_{f(\mcatx)}\mathcal L(f_t(\mcatx_{t,\tau}), \mcaty)\|_2 \mathrm{d}\tau \nonumber \\
        &\quad + \int_0^s \|\hat\Theta_{x,0}(\mcatx, \mcatx)\|_2 \cdot \|\mcatlr(t)\|_2 \cdot \|\partial_{f(\mcatx)} \mathcal L(f_t(\mcatx_{t,\tau}), \mcaty) - \partial_{f(\mcatx)} \mathcal L(f^{\mathrm{lin}}_t(\mcatx^{\mathrm{lin}}_{t,\tau}), \mcaty)\|_2
 \mathrm{d}\tau \\
    &\leq \| f_t(\mcatx) - f^{\mathrm{lin}}_t(\mcatx) \|_2
        + \sup_{\tau\in[0,S]} \| \hat\Theta_{x,t}(\mcatx_{t,\tau}, \mcatx_{t,\tau}) - \hat\Theta_{x,0}(\mcatx, \mcatx)\|_2 \cdot \underbrace{ O_p(1) }_{\text{Assumption~\ref{ass:lr}}} \cdot \underbrace{O_p(1)}_{\text{Assumption~\ref{ass:opt-dir}}} \nonumber \\
        &\quad + \|\hat\Theta_{x,0}(\mcatx, \mcatx)\|_2 \cdot \underbrace{ O_p(1) }_{\text{Assumption~\ref{ass:lr}}} \cdot \int_0^s \underbrace{ \|f_t(\mcatx_{t,\tau}) - f^{\mathrm{lin}}_t(\mcatx^{\mathrm{lin}}_{t,\tau})\|_2 }_{\text{Assumption~\ref{ass:loss}}} \mathrm{d}\tau.
\end{align*}
By applying Gr{\"o}nwall's inequality (Lemma~\ref{lem:gronwall}), we further have
\begin{align}
    &\|f_t(\mcatx_{t,s}) - f^{\mathrm{lin}}_t(\mcatx^{\mathrm{lin}}_{t,s})\|_2 \nonumber \\
    &\leq \left( \| f_t(\mcatx) - f^{\mathrm{lin}}_t(\mcatx) \|_2 + \sup_{\tau\in[0,S]} \| \hat\Theta_{x,t}(\mcatx_{t,\tau}, \mcatx_{t,\tau}) - \hat\Theta_{x,0}(\mcatx, \mcatx)\|_2 \cdot O_p(1) \right) \cdot \exp\left( \|\hat\Theta_{x,0}(\mcatx, \mcatx)\|_2 \cdot O_p(1) \right),
    \label{lem:equiv:diff-advx:eq1}
\end{align}
which indicates
\begin{align}
    &\lim_{n_L\rightarrow\infty}\cdots\lim_{n_0\rightarrow\infty} \sup_{t\in[0,T]} \|f_t(\mcatx_{t,s}) - f^{\mathrm{lin}}_t(\mcatx^{\mathrm{lin}}_{t,s})\|_2 \nonumber \nonumber \\
    &\leq \lim_{n_L\rightarrow\infty}\cdots\lim_{n_0\rightarrow\infty} \left( \sup_{t\in[0,T]} \| f_t(\mcatx) - f^{\mathrm{lin}}_t(\mcatx) \|_2 + \sup_{t\in[0,T],\tau\in[0,S]} \| \hat\Theta_{x,t}(\mcatx_{t,\tau}, \mcatx_{t,\tau}) - \hat\Theta_{x,0}(\mcatx, \mcatx)\|_2 \cdot O_p(1) \right) \nonumber \\
    &\quad\qquad\qquad\qquad\qquad \cdot \exp\left( \|\hat\Theta_{x,0}(\mcatx, \mcatx)\|_2 \cdot O_p(1) \right) \nonumber \\
    &\xrightarrow{P} \left( \lim_{n_L\rightarrow\infty}\cdots\lim_{n_0\rightarrow\infty} \sup_{t\in[0,T]} \| f_t(\mcatx) - f^{\mathrm{lin}}_t(\mcatx) \|_2 + \underbrace{O_p(\xi)}_{\text{Theorem~\ref{lem:conv-training-xts-xts}}} \right) \cdot \exp\left( \| \underbrace{ \Theta_{x}(\mcatx, \mcatx)\|_2 }_{\text{Theorem~\ref{thm:conv-init-formal}}}  \cdot O_p(1) \right) \nonumber \\
    &= \left( \lim_{n_L\rightarrow\infty}\cdots\lim_{n_0\rightarrow\infty} \sup_{t\in[0,T]} \| f_t(\mcatx) - f^{\mathrm{lin}}_t(\mcatx) \|_2 + O_p(\xi) \right) \cdot \exp( O_p(1) )
    \label{lem:equiv:diff-advx:eq2}
\end{align}
as $\tilde n \rightarrow\infty$, where $\xi$ denotes any term such that $\xi \rightarrow 0$ as $\tilde n \rightarrow\infty$.

For $\| f_t(\mcatx) - f^{\mathrm{lin}}_t(\mcatx) \|_2$ in the above Eq.~(\ref{lem:equiv:diff-advx:eq2}), we similarly have
\begin{align*}
    &\|f_t(\mcatx) - f^{\mathrm{lin}}_t(\mcatx)\|_2
    \leq \left\| \int_0^t \left( \hat\Theta_{\theta,\tau}(\mcatx, \mcatx_{\tau,S}) \cdot \partial^T_{f(\mcatx)} \mathcal L(f_\tau(\mcatx_{\tau,S}), \mcaty) - \hat\Theta_{\theta,0}(\mcatx, \mcatx) \cdot \partial^T_{f(\mcatx)} \mathcal L(f^{\mathrm{lin}}_\tau(\mcatx_{\tau,S}), \mcaty) \right) \mathrm{d}\tau \right\|_2 \nonumber \\
    &\leq \int_0^T \| \hat\Theta_{\theta,\tau}(\mcatx, \mcatx_{\tau,S}) - \hat\Theta_{\theta,0}(\mcatx, \mcatx) \|_2 \cdot \|\partial_{f(\mcatx)}\mathcal L(f_\tau(\mcatx_{\tau,S}), \mcaty)\|_2 \mathrm{d}\tau \nonumber \\
        &\quad + \int_0^t \|\hat\Theta_{\theta,0}(\mcatx, \mcatx)\|_2 \cdot \|\partial_{f(\mcatx)}\mathcal L(f_\tau(\mcatx_{\tau,S}), \mcaty) - \partial_{f(\mcatx)}\mathcal L(f^{\mathrm{lin}}_\tau(\mcatx^{\mathrm{lin}}_{\tau,S}), \mcaty)\|_2 \mathrm{d}\tau \nonumber \\
    &\leq \sup_{\tau\in[0,T]} \| \hat\Theta_{\theta,\tau}(\mcatx, \mcatx_{\tau,S}) - \hat\Theta_{\theta,0}(\mcatx, \mcatx) \|_2 \cdot \underbrace{ O_p(1) }_{\text{Assumption~\ref{ass:opt-dir}}}
        + \int_0^t \|\hat\Theta_{\theta,0}(\mcatx, \mcatx)\|_2 \cdot \underbrace{ \|f_\tau(\mcatx_{\tau,S}) - f^{\mathrm{lin}}_\tau(\mcatx^{\mathrm{lin}}_{\tau,S}) \|_2 }_{\text{Assumption~\ref{ass:loss}}} \mathrm{d}\tau.
\end{align*}
Adopting Eq.~(\ref{lem:equiv:diff-advx:eq1}) into the above inequality and again using Gr{\"o}nwall's Lemma~\ref{lem:gronwall} lead to
\begin{align*}
    &\|f_t(\mcatx) - f^{\mathrm{lin}}_t(\mcatx)\|_2 \nonumber \\
    &\leq \left( \sup_{\tau\in[0,T]} \| \hat\Theta_{\theta,\tau}(\mcatx, \mcatx_{\tau,S}) - \hat\Theta_{\theta,0}(\mcatx, \mcatx) \|_2 \right. \nonumber \\
        &\qquad \left. + \|\hat\Theta_{\theta,0}(\mcatx,\mcatx)\|_2 \cdot \sup_{\tau\in[0,S]} \| \hat\Theta_{x,t}(\mcatx_{t,\tau}, \mcatx_{t,\tau}) - \hat\Theta_{x,0}(\mcatx, \mcatx)\|_2 \cdot \exp\left( \|\hat\Theta_{x,0}(\mcatx, \mcatx)\|_2 \cdot O_p(1) \right) \right) \nonumber \\
        &\quad \cdot O_p(1) \cdot \exp\left( \|\hat\Theta_{\theta,0}(\mcatx,\mcatx)\|_2 \cdot e^{T \cdot \|\hat\Theta_{x,0}(\mcatx,\mcatx)\|_2\cdot O_p(1)} \right),
\end{align*}
which indicates
\begin{align*}
    &\lim_{n_L\rightarrow\infty}\cdots\lim_{n_0\rightarrow\infty} \sup_{t\in[0,T]} \|f_t(\mcatx) - f^{\mathrm{lin}}_t(\mcatx)\|_2 \nonumber \\
    &\leq \lim_{n_L\rightarrow\infty}\cdots\lim_{n_0\rightarrow\infty} \left\{ \text{Upper Bound of $\sup_{t\in[0,T]} \|f_t(\mcatx) - f^{\mathrm{lin}}_t(\mcatx)\|_2$} \right\} \nonumber \\
    &\xrightarrow{P} \left( \underbrace{ O_p(\xi) }_{\text{Theorem~\ref{lem:conv-training-xts-xts}}} + \underbrace{ \|\Theta_{\theta}(\mcatx,\mcatx)\|_2 }_{\text{Theorem~\ref{thm:conv-init-formal}}} \cdot \underbrace{ O_p(\xi) }_{\text{Theorem~\ref{lem:conv-training-xts-xts}}} \cdot \exp\left( \underbrace{ \|\Theta_{x}(\mcatx, \mcatx)\|_2 }_{\text{Theorem~\ref{thm:conv-init-formal}}} \cdot O_p(1) \right) \right) \nonumber \\
        &\quad \cdot O_p(1) \cdot \exp\left( \underbrace{ \|\Theta_{\theta}(\mcatx,\mcatx)\|_2  \cdot e^{\|\Theta_{x}(\mcatx,\mcatx)\|_2\cdot O_p(1)} }_{\text{Theorem~\ref{thm:conv-init-formal}}} \right) \nonumber \\
    &= \left( O_p(\xi) + O_p(1) \cdot O_p(\xi) \cdot e^{O_p(1)} \right) \cdot O_p(1) \cdot \exp\left( O_p(1) \cdot e^{O_p(1) \cdot O_p(1)}\right) \nonumber \\
    &= O_p(\xi) \xrightarrow{P} 0.
\end{align*}
as $\tilde n\rightarrow\infty$, where $\xi$ denotes any term such that $\xi\rightarrow 0$ as $\tilde n \rightarrow\infty$.
Thus,
\begin{align}
    &\lim_{n_L\rightarrow\infty}\cdots\lim_{n_0\rightarrow\infty} \sup_{t\in[0,T]} \|f_t(\mcatx) - f^{\mathrm{lin}}_t(\mcatx)\|_2
    \xrightarrow{P} 0
    \label{lem:equiv:diff-advx:eq3}
\end{align}
as $\tilde n\rightarrow\infty$.

Finally, inserting Eq.~(\ref{lem:equiv:diff-advx:eq3}) into Eq.~(\ref{lem:equiv:diff-advx:eq2}) and we have
\begin{align*}
    &\lim_{n_L\rightarrow\infty}\cdots\lim_{n_0\rightarrow\infty} \sup_{t\in[0,T]} \|f_t(\mcatx_{t,s}) - f^{\mathrm{lin}}_t(\mcatx^{\mathrm{lin}}_{t,s})\|_2 \nonumber \\
    &\leq \lim_{n_L\rightarrow\infty}\cdots\lim_{n_0\rightarrow\infty} \left\{ \text{Upper Bound of $\sup_{t\in[0,T],s\in[0,S]} \|f_t(\mcatx_{t,s}) - f^{\mathrm{lin}}_t(\mcatx_{t,s})\|_2$} \right\} \nonumber \\
    &\xrightarrow{P} (O_p(\xi) + O_p(\xi)) \cdot e^{O_p(1)} \\
    &\xrightarrow{P} (0 + 0) \cdot e^{O_p(1)} = 0,
\end{align*}
which means as $\tilde n \rightarrow\infty$,
\begin{align*}
    &\lim_{n_L\rightarrow\infty}\cdots\lim_{n_0\rightarrow\infty} \sup_{t\in[0,T],s\in[0,S]} \|f_t(\mcatx_{t,s}) - f^{\mathrm{lin}}_t(\mcatx^{\mathrm{lin}}_{t,s})\|_2
    \xrightarrow{P} 0.
\end{align*}

The proof is completed.
\end{proof}

Based on Lemma~\ref{lem:equiv:diff-advx}, we now prove Theorem~\ref{thm:equival-linear} as follows.

\begin{proof}[Proof of \textbf{Theorem~\ref{thm:equival-linear}}]
For any $x\in\mathcal X$ as $\tilde n \rightarrow\infty$, we uniformly have that for any $t\in[0,T]$,
\begin{align*}
    &\|f_t(x) - f^{\mathrm{lin}}_t(x)\|_2 \nonumber \\
    &=\left\| \int_0^t \left( \hat\Theta_{\theta,\tau}(x,\mcatx_{\tau,S}) \cdot \partial^T_{f(\mcatx)} \mathcal L(f_\tau(\mcatx_{\tau,S}), \vec y) - \hat\Theta_{\theta,0}(x,\mcatx) \cdot \partial^T_{f(\mcatx)} \mathcal L(f^{\mathrm{lin}}_\tau(\mcatx^{\mathrm{lin}}_{\tau,S}), \mcaty) \right) \mathrm{d}\tau \right\|_2 \nonumber \\
    &\leq \int_0^T \|\hat\Theta_{\theta,\tau}(x,\mcatx_{\tau,S}) - \hat\Theta_{\theta,0}(x,\mcatx)\|_2 \cdot \|\partial_{f(\mcatx)} \mathcal L(f_\tau(\mcatx_{\tau,S}), \mcaty)\|_2 \mathrm{d}\tau
        + \int_0^t \|\hat\Theta_{\theta,0}(x,\mcatx)\|_2 \cdot \|f_\tau(\mcatx_{\tau,S}) - f^{\mathrm{lin}}_\tau(\mcatx^{\mathrm{lin}}_{\tau,S})\|_2 \mathrm{d}\tau \nonumber \\
    &\leq \sup_{\tau\in[0,T]} \|\hat\Theta_{\theta,\tau}(x,\mcatx_{\tau,S}) - \hat\Theta_{\theta,0}(x,\mcatx)\|_2 \cdot \underbrace{ O_p(1) }_{\text{Assumption~\ref{ass:opt-dir}}}
        + \|\hat\Theta_{\theta,0}(x,\mcatx)\|_2 \cdot T \cdot \sup_{\tau\in[0,T]} \|f_\tau(\mcatx_{\tau,S}) - f^{\mathrm{lin}}_\tau(\mcatx^{\mathrm{lin}}_{\tau,S})\|_2.
\end{align*}
Therefore, by applying Lemma~\ref{lem:equiv:diff-advx},
\begin{align*}
    &\lim_{n_L\rightarrow\infty}\cdots\lim_{n_0\rightarrow\infty} \sup_{t\in[0,T]} \|f_t(x) - f^{\mathrm{lin}}_t(x)\|_2 \nonumber \\
    &\leq \lim_{n_L\rightarrow\infty}\cdots\lim_{n_0\rightarrow\infty} \left\{ \text{Upper Bound of $\sup_{t\in[0,T]} \|f_t(x) - f^{\mathrm{lin}}_t(x)\|_2$} \right\} \nonumber \\
    &\xrightarrow{P} \underbrace{ O_p(\xi) }_{\text{Theorem~\ref{lem:conv-training-x-xts}}} \cdot O_p(1) + \underbrace{ \|\Theta_{\theta}(x,\mcatx)\|_2 }_{\text{Theorem~\ref{thm:conv-init-formal}}} \cdot T \cdot \underbrace{ O_p(\xi) }_{\text{Lemma~\ref{lem:equiv:diff-advx}}} \nonumber \\
    &= O_p(\xi) \cdot O_p(1) + \cdot O_p(1) \cdot O_p(\xi) = O_p(\xi) \xrightarrow{P} 0,
\end{align*}
where $\xi$ denotes any term such that $\xi\rightarrow 0$ as $\tilde n\rightarrow\infty$.
This means that as $\tilde n \rightarrow\infty$,
\begin{align*}
    \lim_{n_L\rightarrow\infty}\cdots\lim_{n_0\rightarrow\infty} \|f_t(x) - f^{\mathrm{lin}}_t(x)\|_2
    \xrightarrow{P} 0.
\end{align*}

The proof is completed.
\end{proof}

\section{Proof of Theorem~\ref{thm:closed-form-at}}
\label{app:proof:closed-form}

This section presents the proof of Theorem~\ref{thm:closed-form-at}.

\subsection{Proof Skeleton}

To calculate the closed-form solution of the linearized DNN $f^{\mathrm{lin}}_t$ with squared loss in AT, the main technical challenge is that the learning rate matrix $\mcatlr(t)$ is non-linear on $[0,T]$, which results in $f^{\mathrm{lin}}_t$ also being non-linear on $[0,T]$ and thus is difficult to calculate a closed-form solution for it.
To tackle the challenge, we propose to first use a surrogate piecewise linear function $f^{\mathrm{sur}}_t$ to approximate $f^{\mathrm{lin}}_t$ and then directly calculate the closed-form solution of $f^{\mathrm{sur}}_t$ when the squared loss is used.

Specifically, for a given $t\in[0,T]$, the piecewise linear function $f^{\mathrm{sur}}_{v}$ on the interval $[0,t]$ (note that $v\in[0,t]$) is constructed in the following three steps:
\begin{enumerate}
\item
Choose a $D \in \mathbb{N}^+$, and divide the interval $[0,t]$ into $D$ equal-width sub-intervals, where the $d$-th ($d\in[D]$) interval is $[a_{d-1}, a_d]$ and $a_{d} := \frac{td}{D}$.

\item
Construct a surrogate learning rate matrix $\mcatlr^{\mathrm{sur}}(v)$ such that $\mcatlr^{\mathrm{sur}}(v) := \mcatlr(a_{d-1})$ when $v \in [a_{d-1}, a_d)$ for every $d \in [D]$.

\item
Initialize $f^{\mathrm{sur}}_v$ with parameter $\theta_0$ and train it following the same AT dynamics as that of $f^{\lin}_v$ except using the $\mcatlr^{\mathrm{sur}}(v)$ as the learning rate matrix for searching adversarial examples.
\end{enumerate}
After finishing the training on $[0,t]$, we will obtained the eventual surrogate DNN $f^{\mathrm{sur}}_t$.

In the rest of this section, we will show that:
(1) in Appendix~\ref{app:closed-form:approx}, the designed surrogate $f^{\mathrm{sur}}_t$ can well approximate the original $f^{\mathrm{lin}}_t$ as $D\rightarrow\infty$,
and (2) in Appendix~\ref{app:closed-form:sol}, the closed-form solution of the surrogate $f^{\mathrm{sur}}_t$ can be easily calculated, which then leads to the closed-form solution of $f^{\mathrm{lin}}_t$.

\subsection{Approximating Linearized DNN with Piecewise Linear Model}
\label{app:closed-form:approx}

\begin{lemma}
\label{lem:approx:x-training}
Suppose Assumptions~\ref{ass:lr} and \ref{ass:loss} hold.
For any $t \in [0,T]$, we construct $f^{\mathrm{sur}}_v$ on $[0,t]$ as that introduced in Appendix~\ref{app:proof:closed-form}.
Then, as $D\rightarrow\infty$, for any $v\in[0,t]$ and $s \in [0,S]$, we uniformly have that
\begin{align*}
    \|f^{\mathrm{lin}}_v(\mcatx^{\mathrm{lin}}_{v,s}) - f^{\mathrm{sur}}_{v}(\mcatx^{\mathrm{sur}}_{v,s})\|_2
    \rightarrow 0.
\end{align*}
\end{lemma}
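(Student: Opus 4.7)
The plan is to exploit the uniform continuity of $\mcatlr$ on the compact interval $[0,t]$, which is guaranteed by Assumption~\ref{ass:lr}, so that $\epsilon_D := \sup_{v \in [0,t]} \|\mcatlr(v) - \mcatlr^{\mathrm{sur}}(v)\|_2 \to 0$ as $D \to \infty$. Since $\hat\Theta_{\theta,0}$, $\hat\Theta_{x,0}$, $\mcatx$, $\mcaty$, and $f_0$ are all fixed and do not depend on $D$, and both $f^{\mathrm{lin}}_v$ and $f^{\mathrm{sur}}_v$ are linear in their (continuously evolving) parameter with the same base Jacobian $\partial_\theta f_0$, the remaining work is to run a nested Gr{\"o}nwall argument that propagates $\epsilon_D$ through the coupled adversarial-search/training dynamics.

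Concretely, I would introduce the two differences $\Delta^{\mathrm{out}}_v(x) := f^{\mathrm{lin}}_v(x) - f^{\mathrm{sur}}_v(x)$ and $\Delta^{\mathrm{adv}}_{v,s} := f^{\mathrm{lin}}_v(\mcatx^{\mathrm{lin}}_{v,s}) - f^{\mathrm{sur}}_v(\mcatx^{\mathrm{sur}}_{v,s})$, observing that $\Delta^{\mathrm{adv}}_{v,0} = \Delta^{\mathrm{out}}_v(\mcatx)$. For the inner loop I differentiate $\Delta^{\mathrm{adv}}_{v,s}$ in $s$ using Eqs.~(\ref{eq:linear-search-dynamics:1})--(\ref{eq:linear-search-dynamics:2}) and split it into a ``learning-rate gap'' contribution proportional to $\mcatlr(v) - \mcatlr^{\mathrm{sur}}(v)$, bounded by $\|\hat\Theta_{x,0}(\mcatx,\mcatx)\|_2 \cdot \epsilon_D$ times a bounded loss direction, and a ``loss-direction gap'' contribution bounded by $K \|\hat\Theta_{x,0}(\mcatx,\mcatx)\|_2 \|\mcatlr^{\mathrm{sur}}(v)\|_2 \|\Delta^{\mathrm{adv}}_{v,s}\|_2$ via Assumption~\ref{ass:loss}. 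Applying Lemma~\ref{lem:gronwall} in $s$ yields $\sup_{s\in[0,S]} \|\Delta^{\mathrm{adv}}_{v,s}\|_2 \leq C_1 (\|\Delta^{\mathrm{out}}_v(\mcatx)\|_2 + \epsilon_D)$ for a constant $C_1$ depending only on $S$, $K$, $\|\hat\Theta_{x,0}(\mcatx,\mcatx)\|_2$, and $\sup_{v\in[0,t]}\|\mcatlr(v)\|_2$. For the outer loop, Eqs.~(\ref{eq:linear-train-dynamics:1})--(\ref{eq:linear-train-dynamics:2}) combined with Assumption~\ref{ass:loss} and the inner bound give $\|\partial_v \Delta^{\mathrm{out}}_v(\mcatx)\|_2 \leq C_2 (\|\Delta^{\mathrm{out}}_v(\mcatx)\|_2 + \epsilon_D)$; since $\Delta^{\mathrm{out}}_0(\mcatx) = 0$, a second application of Lemma~\ref{lem:gronwall} in $v$ produces $\sup_{v\in[0,t]} \|\Delta^{\mathrm{out}}_v(\mcatx)\|_2 \leq C_3 \epsilon_D$. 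Substituting back into the inner-loop bound and sending $D \to \infty$ delivers the uniform convergence claimed.

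The main obstacle will be ensuring that the constants $C_1, C_2, C_3$ are genuinely independent of $D$, which in turn requires uniform-in-$v$ bounds on the loss directions $\partial_{f(\mcatx)} \mathcal L(f^{\mathrm{lin}}_v(\mcatx^{\mathrm{lin}}_{v,s}), \mcaty)$ and $\partial_{f(\mcatx)} \mathcal L(f^{\mathrm{sur}}_v(\mcatx^{\mathrm{sur}}_{v,s}), \mcaty)$ along both trajectories. I expect this to follow from a preliminary standalone Gr{\"o}nwall bound on each of $\sup_v \|f^{\mathrm{lin}}_v(\mcatx)\|_2$ and $\sup_v \|f^{\mathrm{sur}}_v(\mcatx)\|_2$, using the Lipschitz-smooth loss (Assumption~\ref{ass:loss}) together with $\sup_v \|\mcatlr^{\mathrm{sur}}(v)\|_2 \leq \sup_v \|\mcatlr(v)\|_2 < \infty$ on the compact interval $[0,t]$, after which the nested argument above goes through with deterministic, $D$-free constants.
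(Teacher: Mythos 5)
Your proposal is correct and follows essentially the same nested Gr{\"o}nwall strategy as the paper: an inner Gr{\"o}nwall in $s$ along the adversarial search, an outer Gr{\"o}nwall in $v$ along training (the paper instead chains a local Gr{\"o}nwall bound across the $D$ sub-intervals, which is the discrete analogue), and uniform continuity of $\mcatlr$ on $[0,t]$ to send $\epsilon_D \to 0$. The concern you raise about a $D$-uniform bound on $\partial_{f(\mcatx)} \mathcal L(f^{\mathrm{sur}}_v(\mcatx^{\mathrm{sur}}_{v,s}),\mcaty)$ in fact dissolves once the decomposition is arranged as you sketch --- the $f^{\mathrm{sur}}$ loss direction only enters through its difference from the $f^{\mathrm{lin}}$ one, which Assumption~\ref{ass:loss} folds into $\|\Delta^{\mathrm{adv}}_{v,s}\|_2$, while the standalone loss direction multiplying the learning-rate gap is the $D$-free $f^{\mathrm{lin}}$ one, which is exactly the arrangement the paper's proof uses.
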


\begin{proof}
We first upper bound the difference between $f^{\mathrm{lin}}_{v}(\mcatx^{\mathrm{lin}}_{v,s})$ and $f^{\mathrm{sur}}_{v}(\mcatx^{\mathrm{sur}}_{v,s})$ on the $d$-th interval.
Specifically, for any $v \in [a_{d-1}, a_d]$ and $s \in [0,S]$, we uniformly have that
\begin{align*}
    &\| f^{\mathrm{sur}}_v(\mcatx^{\mathrm{sur}}_{v,s}) - f^{\mathrm{lin}}_v(\mcatx^{\mathrm{lin}}_{v,s}) \|_2 \nonumber \\
    &\leq \| f^{\mathrm{sur}}_v(\mcatx) - f^{\mathrm{lin}}_v(\mcatx) \|_2
        + \int_0^{s} \underbrace{ \| \hat\Theta_{x,0}(\mcatx,\mcatx) \|_2 }_{\text{Constant}} \cdot \| \mcatlr^{\mathrm{sur}}(v) \cdot \partial^T_{f(\mcatx)}\mathcal L(f^{\mathrm{sur}}_v(\mcatx^{\mathrm{sur}}_{v,\tau}), \mcaty) - \mcatlr(v) \cdot \partial^T_{f(\mcatx)}\mathcal L(f^{\mathrm{lin}}_v(\mcatx^{\mathrm{lin}}_{v,\tau}), \mcaty) \|_2 \mathrm{d}\tau \\
    &\leq \| f^{\mathrm{sur}}_v(\mcatx) - f^{\mathrm{lin}}_v(\mcatx) \|_2 \nonumber \\
        &\quad + \int_0^{s} C \cdot \| \mcatlr^{\mathrm{sur}}(v) \|_2 \cdot \| \partial_{f(\mcatx)}\mathcal L(f^{\mathrm{sur}}_v(\mcatx^{\mathrm{sur}}_{v,\tau}), \mcaty) - \partial_{f(\mcatx)}\mathcal L(f^{\mathrm{lin}}_v(\mcatx^{\mathrm{lin}}_{v,\tau}), \mcaty) \|_2 \mathrm{d}\tau \\
        &\quad + \int_0^{s} C \cdot \| \mcatlr^{\mathrm{sur}}(v) - \mcatlr(v) \|_2 \cdot \|\partial_{f(\mcatx)}\mathcal L(f^{\mathrm{lin}}_v(\mcatx^{\mathrm{lin}}_{v,\tau}), \mcaty) \|_2 \mathrm{d}\tau \\
    &\leq \| f^{\mathrm{sur}}_v(\mcatx) - f^{\mathrm{lin}}_v(\mcatx) \|_2 \nonumber \\
        &\quad + C \cdot \underbrace{ \sup_{v\in[0,T]} \| \mcatlr^{\mathrm{sur}}(v) \|_2 }_{\text{Constant by Assumption~\ref{ass:lr}}} \cdot \int_0^{s} \| \partial_{f(\mcatx)}\mathcal L(f^{\mathrm{sur}}_v(\mcatx^{\mathrm{sur}}_{v,\tau}), \mcaty) - \partial_{f(\mcatx)}\mathcal L(f^{\mathrm{lin}}_v(\mcatx^{\mathrm{lin}}_{v,\tau}), \mcaty) \|_2 \mathrm{d}\tau \\
        &\quad + C \cdot \sup_{v\in[a_{d-1},a_d]} \| \mcatlr(a_{d-1}) - \mcatlr(v) \|_2 \cdot \underbrace{ \int_0^{s} \|\partial_{f(\mcatx)}\mathcal L(f^{\mathrm{lin}}_v(\mcatx^{\mathrm{lin}}_{v,\tau}), \mcaty) \|_2 \mathrm{d}\tau }_{\text{Constant}} \\
    &\leq \| f^{\mathrm{sur}}_v(\mcatx) - f^{\mathrm{lin}}_v(\mcatx) \|_2 
        + C \cdot \int_0^{s} \underbrace{ K \cdot \| f^{\mathrm{sur}}_v(\mcatx^{\mathrm{sur}}_{v,\tau}) - f^{\mathrm{lin}}_v(\mcatx^{\mathrm{lin}}_{v,\tau}) \|_2 }_{\text{Assumption~\ref{ass:loss}}} \mathrm{d}\tau
        + C \cdot \sup_{v_1,v_2\in[0,T], \ |v_1-v_2|\leq\frac{t}{D}} \| \mcatlr(v_1) - \mcatlr(v_2) \|_2,
\end{align*}
where $C$ denotes any non-negative constant.
Therefore, by Gr{\"o}nwall's Lemma~\ref{lem:gronwall}, we uniformly have that for any $v\in[a_{d-1}, a_d]$ and $s\in[0,S]$,
\begin{align}
    &\| f^{\mathrm{sur}}_v(\mcatx^{\mathrm{sur}}_{v,s}) - f^{\mathrm{lin}}_v(\mcatx^{\mathrm{lin}}_{v,s}) \|_2 \nonumber \\
    &\leq \left( \| f^{\mathrm{sur}}_v(\mcatx) - f^{\mathrm{lin}}_v(\mcatx) \|_2 
        + C \cdot \sup_{v_1,v_2\in[0,T], \ |v_1-v_2|\leq\frac{t}{D}} \| \mcatlr(v_1) - \mcatlr(v_2) \|_2 \right)
        \cdot \exp( C\cdot S \cdot K ) \nonumber \\
    &\leq C \cdot \| f^{\mathrm{sur}}_v(\mcatx) - f^{\mathrm{lin}}_v(\mcatx) \|_2
        + C \cdot \sup_{v_1,v_2\in[0,T], \ |v_1-v_2|\leq\frac{t}{D}} \| \mcatlr(v_1) - \mcatlr(v_2) \|_2,
    \label{lem:approx:x-training:eq1}
\end{align}
where $C$ denotes any non-negative constant.

For $\|f^{\mathrm{sur}}_v(\mcatx^{\mathrm{sur}}_{v,s}) - f^{\mathrm{lin}}_v(\mcatx^{\mathrm{lin}}_{v,s})\|_2$ in Eq.~(\ref{lem:approx:x-training:eq1}), we bound it on the interval $[a_{d-1}, a_d]$ as follows,
\begin{align*}
    &\|f^{\mathrm{sur}}_v(\mcatx) - f^{\mathrm{lin}}_v(\mcatx)\|_2 \nonumber \\
    &\leq \| f^{\mathrm{sur}}_{a_{d-1}}(\mcatx) - f^{\mathrm{lin}}_{a_{d-1}}(\mcatx) \|_2
        + \int_{a_{d-1}}^{v} \underbrace{ \|\hat\Theta_{\theta,0}(\mcatx, \mcatx)\|_2 }_{\text{Constant}} \cdot \| \partial_{f(\mcatx)} \mathcal L(f^{\mathrm{sur}}_\tau(\mcatx^{\mathrm{sur}}_{\tau,S}), \mcaty) - \partial_{f(\mcatx)} \mathcal L(f^{\mathrm{lin}}_\tau(\mcatx^{\mathrm{lin}}_{\tau,S}), \mcaty) \|_2 \mathrm{d}\tau \nonumber \\
    &\leq \| f^{\mathrm{sur}}_{a_{d-1}}(\mcatx) - f^{\mathrm{lin}}_{a_{d-1}}(\mcatx) \|_2
        + C \cdot \int_{a_{d-1}}^{v} \underbrace{ K \cdot \| f^{\mathrm{sur}}_\tau(\mcatx^{\mathrm{sur}}_{\tau,S}) - f^{\mathrm{lin}}_\tau(\mcatx^{\mathrm{lin}}_{\tau,S}) \|_2 }_{\text{Assumption~\ref{ass:loss}}} \mathrm{d}\tau \nonumber \\
    &\leq \| f^{\mathrm{sur}}_{a_{d-1}}(\mcatx) - f^{\mathrm{lin}}_{a_{d-1}}(\mcatx) \|_2
        + C \cdot \int_{a_{d-1}}^{v} \underbrace{ \left( C \cdot \| f^{\mathrm{sur}}_\tau(\mcatx) - f^{\mathrm{lin}}_\tau(\mcatx) \|_2 + C \cdot \sup_{v_1,v_2\in[0,T], \ |v_1-v_2|\leq\frac{t}{D}} \| \mcatlr(v_1) - \mcatlr(v_2) \|_2 \right) }_{\text{Eq.~(\ref{lem:approx:x-training:eq1})}} \mathrm{d}\tau \nonumber \\
    &\leq \| f^{\mathrm{sur}}_{a_{d-1}}(\mcatx) - f^{\mathrm{lin}}_{a_{d-1}}(\mcatx) \|_2
        + \frac{Ct}{D} \cdot \sup_{v_1,v_2\in[0,T], \ |v_1-v_2|\leq\frac{t}{D}} \| \mcatlr(v_1) - \mcatlr(v_2) \|_2 + C\cdot \int_{a_{d-1}}^{v} \| f^{\mathrm{sur}}_\tau(\mcatx) - f^{\mathrm{lin}}_\tau(\mcatx) \|_2  \mathrm{d}\tau,
\end{align*}
where $C$ denotes any non-negative constant.
As a result, by again using Gr{\"o}nwall's Lemma~\ref{lem:gronwall}, we uniformly have that for any $v\in[a_{d-1},a_d]$,
\begin{align}
    &\|f^{\mathrm{sur}}_v(\mcatx) - f^{\mathrm{lin}}_v(\mcatx)\|_2 \nonumber \\
    &\leq \left( \| f^{\mathrm{sur}}_{a_{d-1}}(\mcatx) - f^{\mathrm{lin}}_{a_{d-1}}(\mcatx) \|_2
        + \frac{Ct}{D} \cdot \sup_{v_1,v_2\in[0,T], \ |v_1-v_2|\leq\frac{t}{D}} \| \mcatlr(v_1) - \mcatlr(v_2) \|_2 \right) \cdot \exp\left( C \cdot \int_{a_{d-1}}^{v} \mathrm{d}\tau \right) \nonumber \\
    &\leq \left( \| f^{\mathrm{sur}}_{a_{d-1}}(\mcatx) - f^{\mathrm{lin}}_{a_{d-1}}(\mcatx) \|_2
        + \frac{C}{D} \cdot \sup_{v_1,v_2\in[0,T], \ |v_1-v_2|\leq\frac{t}{D}} \| \mcatlr(v_1) - \mcatlr(v_2) \|_2 \right) \cdot \exp\left( \frac{C}{D} \right),
    \label{lem:approx:x-training:eq2}
\end{align}
where $C$ denotes any non-negative constant.

Therefore, for any $a_d$ where $d\in[D]$, by recursively applying Eq.~(\ref{lem:approx:x-training:eq2}), we will have that
\begin{align}
    &\|f^{\mathrm{sur}}_{a_d}(\mcatx) - f^{\mathrm{lin}}_{a_d}(\mcatx)\|_2 \nonumber \\
    &\leq \left( \| f^{\mathrm{sur}}_{a_{d-1}}(\mcatx) - f^{\mathrm{lin}}_{a_{d-1}}(\mcatx) \|_2
        + \frac{C}{D} \cdot \sup_{v_1,v_2\in[0,T], \ |v_1-v_2|\leq\frac{t}{D}} \| \mcatlr(v_1) - \mcatlr(v_2) \|_2 \right) \cdot \exp\left( \frac{C}{D} \right) \nonumber \\
    &\leq \cdots
    \leq \| f^{\mathrm{sur}}_{0}(\mcatx) - f^{\mathrm{lin}}_{0}(\mcatx) \|_2 \cdot \exp\left( \frac{C d}{D} \right)
        + \sum_{d'=1}^d \left( \frac{C}{D} \cdot \sup_{v_1,v_2\in[0,T], \ |v_1-v_2|\leq\frac{t}{D}} \| \mcatlr(v_1) - \mcatlr(v_2) \|_2 \right) \cdot \exp\left( \frac{C d'}{D} \right) \nonumber \\
    &\leq 0 \cdot \exp(C) + \frac{C d}{D} \cdot \sup_{v_1,v_2\in[0,T], \ |v_1-v_2|\leq\frac{t}{D}} \| \mcatlr(v_1) - \mcatlr(v_2) \|_2 \cdot \exp(C) \nonumber \\
    &\leq C \cdot \sup_{v_1,v_2\in[0,T], \ |v_1-v_2|\leq\frac{t}{D}} \| \mcatlr(v_1) - \mcatlr(v_2) \|_2,
    \label{lem:approx:x-training:eq3}
\end{align}
where $C$ denotes any non-negative constant.

Combining Eqs.~(\ref{lem:approx:x-training:eq1}), (\ref{lem:approx:x-training:eq2}), (\ref{lem:approx:x-training:eq3}) leads to
\begin{align*}
    &\| f^{\mathrm{sur}}_v(\mcatx^{\mathrm{sur}}_{v,s}) - f^{\mathrm{lin}}_v(\mcatx^{\mathrm{lin}}_{v,s}) \|_2 \nonumber \\
    &\leq C \cdot \left( \exp\left(\frac{C}{D}\right) \cdot \left(C + \frac{C}{D}\right) + 1 \right) \cdot \sup_{v_1,v_2\in[0,T], \ |v_1-v_2|\leq\frac{t}{D}} \| \mcatlr(v_1) - \mcatlr(v_2) \|_2 \\
    &\leq C \cdot \sup_{v_1,v_2\in[0,T], \ |v_1-v_2|\leq\frac{t}{D}} \| \mcatlr(v_1) - \mcatlr(v_2) \|_2,
\end{align*}
where $C$ denotes any non-negative constant.

Finally, according to Assumption~\ref{ass:lr}, we know that $\mcatlr(t)$ is continuous on the closed interval $[0,T]$, which indicates it is also uniformly continuous on $[0,T]$.
Therefore, as $D\rightarrow\infty$, we will have $\frac{t}{D} \rightarrow\infty$, which indicates for any $v\in[0,t]$ and $s\in[0,S]$ uniformly,
\begin{align*}
    \sup_{v_1,v_2\in[0,T], \ |v_1-v_2|\leq\frac{t}{D}} \| \mcatlr(v_1) - \mcatlr(v_2) \|_2
    \rightarrow 0.
\end{align*}
Combining the above results and we finally get
\begin{align*}
    &\| f^{\mathrm{sur}}_v(\mcatx^{\mathrm{sur}}_{v,s}) - f^{\mathrm{lin}}_v(\mcatx^{\mathrm{lin}}_{v,s}) \|_2
    \rightarrow 0
\end{align*}
uniformly for any $v\in[0,t]$ and $s\in[0,S]$ as $D\rightarrow\infty$.

The proof is completed.
\end{proof}

\begin{lemma}
\label{lem:approx:x-test}
Suppose Assumptions~\ref{ass:lr} and \ref{ass:loss} hold.
For any $t \in [0,T]$, we construct $f^{\mathrm{sur}}_v$ on $[0,t]$ as that introduced in Appendix~\ref{app:proof:closed-form}.
Then, for any $x\in[0,x]$, as $D\rightarrow\infty$,
\begin{align*}
    \|f^{\mathrm{lin}}_t(x) - f^{\mathrm{sur}}_{t}(x)\|_2
    \rightarrow 0.
\end{align*}
\end{lemma}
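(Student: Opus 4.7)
The plan is to reduce the convergence at a test point $x$ to the uniform convergence at the training data $\mcatx$ that has already been established in Lemma~\ref{lem:approx:x-training}. The crucial observation is that in the linearized AT dynamics of Eq.~(\ref{eq:linear-train-dynamics:2}), the evolution of the output at a fixed $x$ depends on the training data only through $\hat\Theta_{\theta,0}(x,\mcatx)$, which is a deterministic (time-independent, $D$-independent) matrix, together with the loss gradient evaluated at the concatenated adversarial training examples. The surrogate DNN $f^{\mathrm{sur}}_t$ is, by construction, governed by an identical dynamics except that its adversarial examples $\mcatx^{\mathrm{sur}}_{\tau,S}$ are computed with the piecewise-constant surrogate learning rate $\mcatlr^{\mathrm{sur}}$. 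Hence subtracting the two integral forms gives
\begin{align*}
f^{\mathrm{lin}}_t(x) - f^{\mathrm{sur}}_t(x)
= -\hat\Theta_{\theta,0}(x,\mcatx) \cdot \int_0^t \Bigl[\partial^T_{f(\mcatx)} \mathcal{L}\bigl(f^{\mathrm{lin}}_\tau(\mcatx^{\mathrm{lin}}_{\tau,S}),\mcaty\bigr) - \partial^T_{f(\mcatx)} \mathcal{L}\bigl(f^{\mathrm{sur}}_\tau(\mcatx^{\mathrm{sur}}_{\tau,S}),\mcaty\bigr)\Bigr] \mathrm{d}\tau.
\end{align*}

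First I would take norms on both sides and apply the $K$-Lipschitz smoothness of $\mathcal{L}$ from Assumption~\ref{ass:loss}, giving
\begin{align*}
\|f^{\mathrm{lin}}_t(x) - f^{\mathrm{sur}}_t(x)\|_2 \leq K \cdot \|\hat\Theta_{\theta,0}(x,\mcatx)\|_2 \cdot \int_0^t \bigl\|f^{\mathrm{lin}}_\tau(\mcatx^{\mathrm{lin}}_{\tau,S}) - f^{\mathrm{sur}}_\tau(\mcatx^{\mathrm{sur}}_{\tau,S})\bigr\|_2 \mathrm{d}\tau.
\end{align*}
The prefactor $K \cdot \|\hat\Theta_{\theta,0}(x,\mcatx)\|_2$ is a finite constant independent of $D$. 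Then I would invoke Lemma~\ref{lem:approx:x-training} applied at $s=S$, which yields $\sup_{\tau\in[0,t]} \|f^{\mathrm{lin}}_\tau(\mcatx^{\mathrm{lin}}_{\tau,S}) - f^{\mathrm{sur}}_\tau(\mcatx^{\mathrm{sur}}_{\tau,S})\|_2 \to 0$ as $D\to\infty$. Upper bounding the integrand by this supremum gives
\begin{align*}
\|f^{\mathrm{lin}}_t(x) - f^{\mathrm{sur}}_t(x)\|_2 \leq K \cdot \|\hat\Theta_{\theta,0}(x,\mcatx)\|_2 \cdot t \cdot \sup_{\tau\in[0,t]} \bigl\|f^{\mathrm{lin}}_\tau(\mcatx^{\mathrm{lin}}_{\tau,S}) - f^{\mathrm{sur}}_\tau(\mcatx^{\mathrm{sur}}_{\tau,S})\bigr\|_2 \to 0.
\end{align*}

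Unlike Lemma~\ref{lem:approx:x-training}, no Gr{\"o}nwall argument is needed here, because the trajectory at the test point $x$ feeds back into the dynamics only through $\mcatx^{\mathrm{lin}}_{\tau,S}$ and $\mcatx^{\mathrm{sur}}_{\tau,S}$ (both already controlled by the previous lemma), not through $f^{\mathrm{lin}}_\tau(x)$ or $f^{\mathrm{sur}}_\tau(x)$ themselves. Consequently there is no self-referential term to close, and the argument is a one-line consequence of uniform convergence on the training set. I do not anticipate any real obstacle; the only care required is that the supremum in Lemma~\ref{lem:approx:x-training} is taken over both $v\in[0,t]$ and $s\in[0,S]$, so evaluating at $s=S$ (the final search time used in the training dynamics) is legitimate, and the integrand is dominated by a vanishing constant that can be pulled out of the integral.
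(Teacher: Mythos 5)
Your proposal is correct and follows essentially the same route as the paper: express $f^{\mathrm{lin}}_t(x) - f^{\mathrm{sur}}_t(x)$ as an integral governed by $\hat\Theta_{\theta,0}(x,\mcatx)$, apply the Lipschitz smoothness of $\mathcal L$ (Assumption~\ref{ass:loss}) to reduce the integrand to $\|f^{\mathrm{lin}}_\tau(\mcatx^{\mathrm{lin}}_{\tau,S}) - f^{\mathrm{sur}}_\tau(\mcatx^{\mathrm{sur}}_{\tau,S})\|_2$, and invoke Lemma~\ref{lem:approx:x-training} at $s = S$ to send the bound to zero. Your observation that no Gr\"onwall step is needed because the test-point trajectory does not feed back into the dynamics is exactly right and matches the structure of the paper's argument.
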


\begin{proof}
The difference between $f^{\mathrm{sur}}_{t}(x)$ and $f^{\mathrm{lin}}_{t}(x)$ can be bounded as follows,
\begin{align*}
    &\| f^{\mathrm{sur}}_{t}(x) - f^{\mathrm{lin}}_{t}(x) \|_2 \\
    &\leq \| f^{\mathrm{sur}}_{0}(x) - f^{\mathrm{lin}}_{0}(x) \|_2
        + \int_{0}^t \underbrace{ \|\hat\Theta_{\theta,0}(x,\mcatx)\|_2 }_{\text{Constant}} \cdot \| \partial_{f(\mcatx)} \mathcal L(f^{\mathrm{sur}}_{v}(\mcatx_{v,S}), \mcaty) - \partial_{f(\mcatx)} \mathcal L(f^{\mathrm{lin}}_{v}(\mcatx)_{v,S}, \mcaty) \|_2 \mathrm{d}v \\
    &\leq 0 + C \cdot \int_{0}^t \underbrace{ K \cdot \| f^{\mathrm{sur}}_{v}(\mcatx) - \partial_{f(\mcatx)} f^{\mathrm{lin}}_{v}(\mcatx) \|_2 }_{ \text{Assumption~\ref{ass:loss}} } \mathrm{d}v \\
    &\leq C \cdot T \cdot \sup_{v\in[0,t]} \| f^{\mathrm{sur}}_{v}(\mcatx) - \partial_{f(\mcatx)} f^{\mathrm{lin}}_{v}(\mcatx) \|_2
    = C \cdot \sup_{v\in[0,t]} \| f^{\mathrm{sur}}_{v}(\mcatx) - \partial_{f(\mcatx)} f^{\mathrm{lin}}_{v}(\mcatx) \|_2,
\end{align*}
where $C$ denotes any non-negative constant.
Then, by applying Lemma~\ref{lem:approx:x-training}, we have that
\begin{align*}
    \sup_{v\in[0,t]} \| f^{\mathrm{sur}}_{v}(\mcatx) - \partial_{f(\mcatx)} f^{\mathrm{lin}}_{v}(\mcatx) \|_2
    \rightarrow 0
\end{align*}
as $D \rightarrow \infty$, which thus leads to
\begin{align*}
    \| f^{\mathrm{sur}}_{t}(x) - f^{\mathrm{lin}}_{t}(x) \|_2
    \rightarrow 0.
\end{align*}

The proof is completed.
\end{proof}

\subsection{Closed-form Solutions of Linearized and Piecewise Linear DNNs}
\label{app:closed-form:sol}

\begin{lemma}
\label{lem:sur-sol}
Suppose Assumption~\ref{ass:lr} holds and the loss function used in AT is squared loss $\mathcal L(f(x), y) := \frac{1}{2} \|f(x) - y\|_2^2$.
For any $t \in [0,T]$, we construct $f^{\mathrm{sur}}_v$ on $[0,t]$ as that introduced in Appendix~\ref{app:proof:closed-form}.
Then, for any $x\in[0,x]$, we have
\begin{align*}
    &\lim_{D\rightarrow\infty} f^{\mathrm{sur}}_{t}(x)
    =f_0(x) - \hat\Theta_{\theta,0}(x,\mcatx) \cdot \hat\Theta^{-1}_{\theta,0}(\mcatx,\mcatx) \cdot \left( I - e^{ -\hat\Theta_{\theta,0}(\mcatx, \mcatx) \cdot \hat\Xi(t) } \right) \cdot (f_0(\mcatx) - \mcaty),
\end{align*}
where $\hat\Xi(t) := \int_0^t \exp\left( \hat\Theta_{x,0}(\mcatx, \mcatx) \cdot \mcatlr(v) \cdot S \right) \mathrm{d}v$.
\end{lemma}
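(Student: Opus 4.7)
The plan is to exploit the piecewise-constant structure of $\mcatlr^{\mathrm{sur}}(v)$: on each sub-interval $[a_{d-1}, a_d]$, where $\mcatlr^{\mathrm{sur}} \equiv \mcatlr(a_{d-1})$ by construction, both the inner search ODE and the outer training ODE become autonomous linear systems that can be integrated in closed form; one then composes the per-interval solutions and passes to the $D \to \infty$ limit.

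Fixing such a sub-interval and using the squared-loss identity $\partial^T_{f(\mcatx)} \mathcal L(f(\mcatx),\mcaty) = f(\mcatx) - \mcaty$, the inner dynamics in Eq.~(\ref{eq:linear-search-dynamics:2}) reduce to the autonomous linear ODE $\partial_s u_s = \hat\Theta_{x,0}(\mcatx, \mcatx) \mcatlr(a_{d-1}) u_s$ for $u_s := f^{\mathrm{sur}}_v(\mcatx^{\mathrm{sur}}_{v,s}) - \mcaty$, which integrates to $f^{\mathrm{sur}}_v(\mcatx^{\mathrm{sur}}_{v,S}) - \mcaty = E_d \cdot (f^{\mathrm{sur}}_v(\mcatx) - \mcaty)$ with $E_d := \exp(\hat\Theta_{x,0}(\mcatx,\mcatx) \mcatlr(a_{d-1}) S)$. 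Substituting this into Eq.~(\ref{eq:linear-train-dynamics:2}) at $x = \mcatx$ yields the autonomous ODE $\partial_v(f^{\mathrm{sur}}_v(\mcatx) - \mcaty) = -\hat\Theta_{\theta,0}(\mcatx,\mcatx) E_d (f^{\mathrm{sur}}_v(\mcatx) - \mcaty)$ on $[a_{d-1}, a_d]$, propagating the training residual across the sub-interval by $\exp(-\hat\Theta_{\theta,0}(\mcatx,\mcatx) E_d \cdot t/D)$. Iterating across all $D$ sub-intervals gives
\begin{align*}
    f^{\mathrm{sur}}_t(\mcatx) - \mcaty = \bigg[\prod_{d=D}^{1} \exp\!\big(-\hat\Theta_{\theta,0}(\mcatx,\mcatx)\, E_d \cdot t/D\big)\bigg] (f_0(\mcatx) - \mcaty).
\end{align*}

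For the $D\to\infty$ limit, I would use two ingredients. First, by Assumption~\ref{ass:lr} the map $v \mapsto E(v) := \exp(\hat\Theta_{x,0}(\mcatx,\mcatx) \mcatlr(v) S)$ is continuous, so the Riemann sum $\sum_d E_d \cdot t/D$ converges to $\hat\Xi(t) = \int_0^t E(v) \mathrm{d}v$. Second, the family $\{E_d\}_d$ is pairwise commuting: since $\hat\Theta_{x,0}(\mcatx,\mcatx)$ is block diagonal with blocks $\hat\Theta_{x,0}(x_i,x_i)$ and $\mcatlr(v) = \mathrm{Diag}(\eta_i(v)) \otimes I_c$ acts as a scalar multiple of the identity on each block, each $E_d$ decomposes into $M$ blocks of the form $\exp(\eta_i(a_{d-1}) S \hat\Theta_{x,0}(x_i,x_i))$, all of which are polynomials in the common fixed matrix $\hat\Theta_{x,0}(x_i,x_i)$. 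A Trotter-type product formula then identifies the limit of the product of exponentials with $\exp(-\hat\Theta_{\theta,0}(\mcatx,\mcatx) \hat\Xi(t))$.

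Finally, to pass from the training residual to $f^{\mathrm{sur}}_t(x)$ at an arbitrary test point, I would use an integrating-factor telescoping trick: writing $f^{\mathrm{sur}}_v(\mcatx) - \mcaty = U(v)(f_0(\mcatx) - \mcaty)$ with $\partial_v U = -\hat\Theta_{\theta,0}(\mcatx,\mcatx) B^{\mathrm{sur}}(v) U$, one rearranges to $B^{\mathrm{sur}}(v) U(v) = -\hat\Theta_{\theta,0}^{-1}(\mcatx,\mcatx) \partial_v U(v)$; plugging this into $\partial_v f^{\mathrm{sur}}_v(x) = -\hat\Theta_{\theta,0}(x,\mcatx) B^{\mathrm{sur}}(v) U(v) (f_0(\mcatx) - \mcaty)$ and integrating in $v$ telescopes to $f^{\mathrm{sur}}_t(x) = f_0(x) - \hat\Theta_{\theta,0}(x,\mcatx) \hat\Theta^{-1}_{\theta,0}(\mcatx,\mcatx) (I - U(t)) (f_0(\mcatx) - \mcaty)$, and the identification $U(t) \to \exp(-\hat\Theta_{\theta,0}(\mcatx,\mcatx) \hat\Xi(t))$ closes the argument. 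The main obstacle is this Trotter limit: a priori $\hat\Theta_{\theta,0}$ need not commute with the individual $E_d$, so the product of exponentials converges only to a time-ordered exponential, and the pairwise commutativity of $\{E_d\}_d$ established via the block-diagonal structure is exactly what one needs to control the accumulated Baker--Campbell--Hausdorff remainders---of total order $D \cdot (t/D)^2 = O(1/D) \to 0$---and collapse the time-ordered exponential to the single matrix exponential asserted by the lemma.
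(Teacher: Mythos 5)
Your plan matches the paper's own route at every structural step: a piecewise-constant surrogate learning rate, per-interval closed-form inner and outer ODEs yielding the propagator $\exp(-\hat\Theta_{\theta,0}(\mcatx,\mcatx)E_d\,t/D)$ with $E_d := \exp(\hat\Theta_{x,0}(\mcatx,\mcatx)\mcatlr(a_{d-1})S)$, a telescoping identity to pass from the residual at $\mcatx$ to an arbitrary test input $x$ (your continuous integrating-factor version is a clean reformulation of the paper's discrete $\theta^{\mathrm{sur}}_{a_d}-\theta^{\mathrm{sur}}_{a_{d-1}}$ telescope, and it is exact), and a Darboux limit producing $\hat\Xi(t)$. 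In structure your sketch faithfully reproduces the source.

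The step you flag at the end, however, is a real gap, and the argument you propose does not close it. The family that must commute is not $\{E_d\}$ but $\{\hat\Theta_{\theta,0}(\mcatx,\mcatx)E_d\}$. For symmetric $\hat\Theta_{\theta,0}, E, E'$ one has $[\hat\Theta_{\theta,0}E,\ \hat\Theta_{\theta,0}E']=0$ exactly when $E\hat\Theta_{\theta,0}E'$ is symmetric; for a full positive-definite NTK Gram matrix $\hat\Theta_{\theta,0}(\mcatx,\mcatx)$ and two distinct (block-)diagonal $E, E'$ this fails generically, so commutativity of the $E_d$ among themselves is not the relevant commutativity. Your $O(1/D)$ estimate for the accumulated BCH remainder is also off: each $[A_d,A_{d'}]$ with $A_d=-\hat\Theta_{\theta,0}E_d\,t/D$ is $O(1/D^2)$, but there are $O(D^2)$ pairs, so the second-order correction to the ordered product is $O(1)$ and survives the $D\to\infty$ limit. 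The limit $U(t)$ is the time-ordered exponential of $-\hat\Theta_{\theta,0}E(\cdot)$, with Magnus expansion $\exp\bigl(-\hat\Theta_{\theta,0}\hat\Xi(t)+\tfrac{1}{2}\int_0^t\!\int_0^s[\hat\Theta_{\theta,0}E(s),\hat\Theta_{\theta,0}E(u)]\,\mathrm{d}u\,\mathrm{d}s+\cdots\bigr)$, and the double-integral commutator term does not vanish. To be fair, the paper's own derivation makes the same unjustified jump---Eq.~(\ref{lem:sur-sol:eq5}) collapses $\prod_{d=D}^1 e^{A_d}$ into $e^{\sum_d A_d}$ as if it were an identity---so the gap is shared with the source; the lemma as stated needs the additional, generically false, hypothesis that the matrices $\hat\Theta_{\theta,0}(\mcatx,\mcatx)E(v)$ commute across different times $v$.
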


\begin{proof}
We first calculate the closed-form solution of $f^{\mathrm{sur}}_v$ on each sub-interval.
Specifically, for the $d$-th sub-interval $[a_{d-1},a_d]$ and any $v\in[a_{d-1},a_d]$, the dynamics of searching adversarial examples can be formalized following Eqs.~(\ref{eq:linear-search-dynamics:1}) and (\ref{eq:linear-search-dynamics:2}) in Section~\ref{sec:at-dynamics} as below,
\begin{align*}
    \partial_s f^{\mathrm{sur}}_{v}(\mcatx^{\mathrm{sur}}_{v,s})
    &= \hat\Theta_{x,0}(\mcatx, \mcatx) \cdot \mcatlr^{\mathrm{sur}}(v) \cdot (f^{\mathrm{sur}}_{v}(\mcatx^{\mathrm{sur}}_{v,s}) - \mcaty) \\
    &= \hat\Theta_{x,0}(\mcatx, \mcatx) \cdot \mcatlr(a_{d-1}) \cdot (f^{\mathrm{sur}}_{v}(\mcatx^{\mathrm{sur}}_{v,s}) - \mcaty),
\end{align*}
where
\begin{align*}
    f^{\mathrm{sur}}_{v}(\mcatx^{\mathrm{sur}}_{v,0}) = f^{\mathrm{sur}}_{v}(\mcatx).
\end{align*}
Solving the above ordinary equation and we have
\begin{align}
    (f^{\mathrm{sur}}_{v}(\mcatx^{\mathrm{sur}}_{v,s}) - \mcaty)
    = \exp\left( \hat\Theta_{x,0}(\mcatx, \mcatx) \cdot \mcatlr(a_{d-1}) \cdot s \right) \cdot (f^{\mathrm{sur}}_{v}(\mcatx) - \mcaty).
    \label{lem:sur-sol:eq1}
\end{align}

Then, for the AT dynamics of $f^{\mathrm{sur}}_v$ on $[a_{d-1}, a_d]$, it can be formalized following Eqs.~(\ref{eq:linear-train-dynamics:1}) and (\ref{eq:linear-train-dynamics:2}) in Section~\ref{sec:at-dynamics} as below,
\begin{align*}
    \partial_{v} \theta^{\mathrm{sur}}_{v}
    &= - \partial^T_{\theta}f_0(\mcatx) \cdot (f^{\mathrm{sur}}_{v}(\mcatx^{\mathrm{sur}}_{v,S}) - \mcaty), \\
    \partial_v f^{\mathrm{sur}}_{v}(\mcatx)
    &= -\hat\Theta_{\theta,0}(\mcatx, \mcatx) \cdot (f^{\mathrm{sur}}_{v}(\mcatx^{\mathrm{sur}}_{v,S}) - \mcaty).
\end{align*}
Inserting Eq.~(\ref{lem:sur-sol:eq1}) into the above ordinary equations, we further have
\begin{align}
    \partial_{v} \theta^{\mathrm{sur}}_{v}
    &= - \partial^T_{\theta}f_0(\mcatx) \cdot \exp\left( \hat\Theta_{x,0}(\mcatx, \mcatx) \cdot \mcatlr(a_{d-1}) \cdot S \right) \cdot (f^{\mathrm{sur}}_{v}(\mcatx) - \mcaty), \label{lem:sur-sol:eq2} \\
    \partial_v f^{\mathrm{sur}}_{v}(\mcatx)
    &= -\hat\Theta_{\theta,0}(\mcatx, \mcatx) \cdot \exp\left( \hat\Theta_{x,0}(\mcatx, \mcatx) \cdot \mcatlr(a_{d-1}) \cdot S \right) \cdot (f^{\mathrm{sur}}_{v}(\mcatx) - \mcaty). \label{lem:sur-sol:eq3}
\end{align}
Solving Eq.~(\ref{lem:sur-sol:eq3}) and we have for any $v\in[a_{d-1},a_d]$,
\begin{align}
    (f^{\mathrm{sur}}_{v}(\mcatx) - \mcaty)
    = \exp\left( -\hat\Theta_{\theta,0}(\mcatx, \mcatx) \cdot e^{ \hat\Theta_{x,0}(\mcatx, \mcatx) \cdot \mcatlr(a_{d-1}) \cdot S } \cdot (v-a_{d-1}) \right) \cdot (f^{\mathrm{sur}}_{a_{d-1}}(\mcatx) - \mcaty),
    \label{lem:sur-sol:eq4}
\end{align}
which indicates
\begin{align}
    &(f^{\mathrm{sur}}_{t}(\mcatx) - \mcaty)
    =(f^{\mathrm{sur}}_{a_D}(\mcatx) - \mcaty) \nonumber \\
    &= \left( \prod_{d=D}^1 \exp\left( -\hat\Theta_{\theta,0}(\mcatx, \mcatx) \cdot e^{ \hat\Theta_{x,0}(\mcatx, \mcatx) \cdot \mcatlr(a_{d-1}) \cdot S } \cdot (a_d-a_{d-1}) \right) \right) \cdot (f^{\mathrm{sur}}_{a_{0}}(\mcatx) - \mcaty) \nonumber \\
    &= \exp\left( -\hat\Theta_{\theta,0}(\mcatx, \mcatx) \cdot \frac{t}{D} \sum_{d=1}^D e^{ \hat\Theta_{x,0}(\mcatx, \mcatx) \cdot \mcatlr(a_{d-1}) \cdot S } \right) \cdot (f_0(\mcatx) - \mcaty).
    \label{lem:sur-sol:eq5}
\end{align}

Besides, by combining Eqs.~(\ref{lem:sur-sol:eq2}) and (\ref{lem:sur-sol:eq4}), the closed-form solution of the difference between $\theta^{\mathrm{sur}}_{a_{d}}$ and $\theta^{\mathrm{sur}}_{a_{d-1}}$ is calculated as follows,
\begin{align*}
    &\theta^{\mathrm{sur}}_{a_d} - \theta^{\mathrm{sur}}_{a_{d-1}} \\
    &= \int_{a_{d-1}}^{a_d} \underbrace{ - \partial^T_{\theta}f_0(\mcatx) \cdot \exp\left( \hat\Theta_{x,0}(\mcatx, \mcatx) \cdot \mcatlr(a_{d-1}) \cdot S \right) \cdot (f^{\mathrm{sur}}_{v}(\mcatx) - \mcaty) }_{\text{Eq.~(\ref{lem:sur-sol:eq2})}} \mathrm{d}v \\
    &= \int_{a_{d-1}}^{a_d} - \partial^T_{\theta}f_0(\mcatx) \cdot e^{ \hat\Theta_{x,0}(\mcatx, \mcatx) \cdot \mcatlr(a_{d-1}) \cdot S } \cdot \underbrace{ \exp\left( -\hat\Theta_{\theta,0}(\mcatx, \mcatx) \cdot e^{ \hat\Theta_{x,0}(\mcatx, \mcatx) \cdot \mcatlr(a_{d-1}) \cdot S } \cdot (v-a_{d-1}) \right) \cdot (f^{\mathrm{sur}}_{a_{d-1}}(\mcatx) - \mcaty) }_{\text{Eq.~(\ref{lem:sur-sol:eq4})}} \mathrm{d}v \\
    &= \left[ \partial^T_{\theta}f_0(\mcatx) \cdot \hat\Theta^{-1}_{\theta,0}(\mcatx,\mcatx) \cdot \exp\left( -\hat\Theta_{\theta,0}(\mcatx, \mcatx) \cdot e^{ \hat\Theta_{x,0}(\mcatx, \mcatx) \cdot \mcatlr(a_{d-1}) \cdot S } \cdot (v-a_{d-1}) \right) \cdot (f^{\mathrm{sur}}_{a_{d-1}}(\mcatx) - \mcaty) \right]_{a_{d-1}}^{a_d} \\
    &= \partial^T_{\theta}f_0(\mcatx) \cdot \hat\Theta^{-1}_{\theta,0}(\mcatx,\mcatx) \cdot \left( \exp\left( -\hat\Theta_{\theta,0}(\mcatx, \mcatx) \cdot e^{ \hat\Theta_{x,0}(\mcatx, \mcatx) \cdot \mcatlr(a_{d-1}) \cdot S } \cdot \frac{t}{D} \right) - I \right) \cdot (f^{\mathrm{sur}}_{a_{d-1}}(\mcatx) - \mcaty) \\
    &= \partial^T_{\theta}f_0(\mcatx) \cdot \hat\Theta^{-1}_{\theta,0}(\mcatx,\mcatx) \cdot ( \underbrace{ f^{\mathrm{sur}}_{a_{d}}(\mcatx) }_{\text{Eq.~(\ref{lem:sur-sol:eq4})}} - f^{\mathrm{sur}}_{a_{d-1}}(\mcatx)).
\end{align*}
The above equation illustrates that the model parameter $\theta^{\mathrm{sur}}_{t}$ of $f^{\mathrm{sur}}_v$ at the eventual training time $t$ can be calculated as below,
\begin{align*}
    &\theta^{\mathrm{sur}}_{t} - \theta_{0}
    = \theta^{\mathrm{sur}}_{a_D} - \theta^{\mathrm{sur}}_{a_0}
    = \sum_{d=1}^D ( \theta^{\mathrm{sur}}_{a_d} - \theta^{\mathrm{sur}}_{a_{d-1}} ) \\
    &= \sum_{d=1}^D \partial^T_{\theta}f_0(\mcatx) \cdot \hat\Theta^{-1}_{\theta,0}(\mcatx,\mcatx) \cdot (f^{\mathrm{sur}}_{a_{d}}(\mcatx) - f^{\mathrm{sur}}_{a_{d-1}}(\mcatx)) \\
    &= \partial^T_{\theta}f_0(\mcatx) \cdot \hat\Theta^{-1}_{\theta,0}(\mcatx,\mcatx) \cdot (f^{\mathrm{sur}}_{a_{D}}(\mcatx) - f_0(\mcatx)) \\
    &= \partial^T_{\theta}f_0(\mcatx) \cdot \hat\Theta^{-1}_{\theta,0}(\mcatx,\mcatx) \cdot \left( \exp\left( -\hat\Theta_{\theta,0}(\mcatx, \mcatx) \cdot \frac{t}{D} \sum_{d=1}^D e^{ \hat\Theta_{x,0}(\mcatx, \mcatx) \cdot \mcatlr(a_{d-1}) \cdot S } \right) - I \right) \cdot (f_0(\mcatx) - \mcaty).
\end{align*}

As a result, for any $x\in\mathcal X$,
\begin{align*}
    &\lim_{D\rightarrow\infty} f^{\mathrm{sur}}_{t}(x)
    =\lim_{D\rightarrow\infty} ( f_0(x) + \partial_\theta f_0(x) \cdot (\theta^{\mathrm{sur}}_t - \theta_0) ) \\
    &=f_0(x) + \lim_{D\rightarrow\infty} \partial_\theta f_0(x) \cdot \partial^T_{\theta}f_0(\mcatx) \cdot \hat\Theta^{-1}_{\theta,0}(\mcatx,\mcatx) \cdot \left( \exp\left( -\hat\Theta_{\theta,0}(\mcatx, \mcatx) \cdot \frac{t}{D} \sum_{d=1}^D e^{ \hat\Theta_{x,0}(\mcatx, \mcatx) \cdot \mcatlr(a_{d-1}) \cdot S } \right) - I \right) \cdot (f_0(\mcatx) - \mcaty) \\
    &=f_0(x) + \hat\Theta_{\theta,0}(x,\mcatx) \cdot \hat\Theta^{-1}_{\theta,0}(\mcatx,\mcatx) \cdot \left( \exp\left( -\hat\Theta_{\theta,0}(\mcatx, \mcatx) \cdot \lim_{D\rightarrow\infty} \left\{ \frac{t}{D} \sum_{d=1}^D e^{ \hat\Theta_{x,0}(\mcatx, \mcatx) \cdot \mcatlr(a_{d-1}) \cdot S } \right\} \right) - I \right) \cdot (f_0(\mcatx) - \mcaty)
\end{align*}
By the Darboux integral, when $D\rightarrow\infty$, we have
\begin{align*}
    \lim_{D\rightarrow\infty} \left\{ \frac{t}{D} \sum_{d=1}^D e^{ \hat\Theta_{x,0}(\mcatx, \mcatx) \cdot \mcatlr(a_{d-1}) \cdot S } \right\}
    = \int_0^t \exp\left( \hat\Theta_{x,0}(\mcatx, \mcatx) \cdot \mcatlr(v) \cdot S \right) \mathrm{d}v,
\end{align*}
which means
\begin{align*}
    \lim_{D\rightarrow\infty} f^{\mathrm{sur}}_{t}(x)
    &=f_0(x) + \hat\Theta_{\theta,0}(x,\mcatx) \cdot \hat\Theta^{-1}_{\theta,0}(\mcatx,\mcatx) \cdot \left( \exp\left( -\hat\Theta_{\theta,0}(\mcatx, \mcatx) \cdot \hat\Xi(t) \right) - I \right) \cdot (f_0(\mcatx) - \mcaty) \\
    &=f_0(x) - \hat\Theta_{\theta,0}(x,\mcatx) \cdot \hat\Theta^{-1}_{\theta,0}(\mcatx,\mcatx) \cdot \left( I - e^{ -\hat\Theta_{\theta,0}(\mcatx, \mcatx) \cdot \hat\Xi(t) } \right) \cdot (f_0(\mcatx) - \mcaty),
\end{align*}
where $\hat\Xi(t) := \int_0^t \exp\left( \hat\Theta_{x,0}(\mcatx, \mcatx) \cdot \mcatlr(v) \cdot S \right) \mathrm{d}v$.

The proof is completed.
\end{proof}

\begin{proof}[\textbf{Proof of Theorem~\ref{thm:closed-form-at}}]
For any $t \in [0,T]$, suppose $f^{\mathrm{sur}}_v$ is constructed on the interval $[0,t]$ following that introduced in Appendix~\ref{app:proof:closed-form} and the loss function is squared loss $\mathcal L(f(x),y) := \frac{1}{2} \|f(x)-y\|_2^2$.

Notice that the squared loss is $1$-smooth and thus satisfies Assumption~\ref{ass:loss}, one can apply Lemma~\ref{lem:approx:x-test} and have that
\begin{align*}
    f^{\mathrm{lin}}_t(x) = \lim_{D\rightarrow\infty} f^{\mathrm{sur}}_t(x).
\end{align*}
Then, by further applying Lemma~\ref{lem:sur-sol}, we obtain the closed-form solution of $f^{\mathrm{lin}}_t$ for any $x\in\mathcal X$ as follows,
\begin{align*}
    f^{\mathrm{lin}}_t(x) &= \lim_{D\rightarrow\infty} f^{\mathrm{sur}}_t(x) \\
    &=f_0(x) - \hat\Theta_{\theta,0}(x,\mcatx) \cdot \hat\Theta^{-1}_{\theta,0}(\mcatx,\mcatx) \cdot \left( I - e^{ -\hat\Theta_{\theta,0}(\mcatx, \mcatx) \cdot \hat\Xi(t) } \right) \cdot (f_0(\mcatx) - \mcaty),
\end{align*}
where $\hat\Xi(t) := \int_0^t \exp\left( \hat\Theta_{x,0}(\mcatx, \mcatx) \cdot \mcatlr(\tau) \cdot S \right) \mathrm{d}\tau$.

The proof is completed.
\end{proof}

\section{Additional Details in Section~\ref{sec:robust-overfit-explain}}

\subsection{Missing Calculations}
\label{app:robust-overfit:calc-detail}

This section presented calculation details omitted in Section~\ref{sec:robust-overfit-explain}.

\textbf{Calculation of $e^{-\hat\Theta_{\theta,0}(\mcatx, \mcatx) \cdot \hat\Xi(t)}$.}

By the decomposition $\hat\Xi(t) = Q A(t) Q^T \cdot a(t)$, we have
\begin{align*}
    &e^{-\hat\Theta_{\theta,0}(\mcatx, \mcatx) \cdot \hat\Xi(t)}
    = \sum_{i=0}^\infty \frac{(-1)^i}{i!} \cdot \left( \hat\Theta_{\theta,0}(\mcatx,\mcatx) \hat\Xi(t) \right)^i
    = \sum_{i=0}^\infty \frac{(-a(t))^i}{i!} \cdot \left( \hat\Theta_{\theta,0}(\mcatx,\mcatx) Q A(t) Q^T \right)^i.
\end{align*}
For $\left( \hat\Theta_{\theta,0}(\mcatx,\mcatx) Q A(t) Q^T \right)^i$, it can be rewritten as
\begin{align*}
    &\left( \hat\Theta_{\theta,0}(\mcatx,\mcatx) Q A(t) Q^T \right)^i \\
    &= \underbrace{ \hat\Theta_{\theta,0}(\mcatx,\mcatx) Q A(t) Q^T \cdots \hat\Theta_{\theta,0}(\mcatx,\mcatx) Q A(t) Q^T }_{ \text{$i$ number of $\hat\Theta_{\theta,0}(\mcatx,\mcatx) Q A(t) Q^T$} } \\
    &= \hat\Theta_{\theta,0}(\mcatx,\mcatx) Q A(t)^{\frac{1}{2}} \cdot \underbrace{ A(t)^{\frac{1}{2}} Q^T \hat\Theta_{\theta,0}(\mcatx,\mcatx) Q A(t)^{\frac{1}{2}} \cdots A(t)^{\frac{1}{2}} Q^T \hat\Theta_{\theta,0}(\mcatx,\mcatx) Q A(t)^{\frac{1}{2}} }_{ \text{$(i-1)$ number of $A(t)^{\frac{1}{2}} Q^T \hat\Theta_{\theta,0}(\mcatx,\mcatx) Q A(t)^{\frac{1}{2}}$} } \cdot A(t)^{\frac{1}{2}} Q^T \\
    &= Q A(t)^{-\frac{1}{2}} \cdot A(t)^{\frac{1}{2}} Q^T \hat\Theta_{\theta,0}(\mcatx,\mcatx) Q A(t)^{\frac{1}{2}} \cdot \left( A(t)^{\frac{1}{2}} Q^T \hat\Theta_{\theta,0}(\mcatx,\mcatx) Q A(t)^{\frac{1}{2}} \right)^{i-1} \cdot A(t)^{\frac{1}{2}} Q^T \\
    &= Q A(t)^{-\frac{1}{2}} \cdot \left( A(t)^{\frac{1}{2}} Q^T \hat\Theta_{\theta,0}(\mcatx,\mcatx) Q A(t)^{\frac{1}{2}} \right)^{i} \cdot A(t)^{\frac{1}{2}} Q^T.
\end{align*}
Combining the above results, we thus have
\begin{align*}
    &e^{-\hat\Theta_{\theta,0}(\mcatx, \mcatx) \cdot \hat\Xi(t)} \\
    &= \sum_{i=0}^\infty \frac{(-a(t))^i}{i!} Q A(t)^{-\frac{1}{2}} \cdot \left( A(t)^{\frac{1}{2}} Q^T \hat\Theta_{\theta,0}(\mcatx,\mcatx) Q A(t)^{\frac{1}{2}} \right)^{i} \cdot A(t)^{\frac{1}{2}} Q^T \\
    &= Q A(t)^{-\frac{1}{2}} \cdot \left( \sum_{i=0}^\infty \frac{(-a(t))^i}{i!} \left( A(t)^{\frac{1}{2}} Q^T \hat\Theta_{\theta,0}(\mcatx,\mcatx) Q A(t)^{\frac{1}{2}} \right)^{i} \right) \cdot A(t)^{\frac{1}{2}} Q^T \\
    &= Q A(t)^{-\frac{1}{2}} \cdot \exp\left( - A(t)^{\frac{1}{2}} Q^T \hat\Theta_{\theta,0}(\mcatx,\mcatx) Q A(t)^{\frac{1}{2}} \cdot a(t) \right) \cdot A(t)^{\frac{1}{2}} Q^T.
\end{align*}

\textbf{Calculation of $\exp\left( - A(\infty)^{\frac{1}{2}} Q^T \cdot \hat\Theta_{\theta,0}(\mcatx,\mcatx) \cdot Q A(\infty)^{\frac{1}{2}} \cdot a(\infty) \right)$.}

By the decomposition $A(\infty)^{\frac{1}{2}} Q^T \hat\Theta_{\theta,0}(\mcatx,\mcatx) Q A(\infty)^{\frac{1}{2}} = Q'D'Q'^T$, we have
\begin{align*}
    &\exp\left( - A(\infty)^{\frac{1}{2}} Q^T \cdot \hat\Theta_{\theta,0}(\mcatx,\mcatx) \cdot Q A(\infty)^{\frac{1}{2}} \cdot a(\infty) \right) \\
    &= \exp(- Q'D'Q'^T \cdot a(\infty)) \\
    &= \sum_{i=0}^\infty \frac{(-a(\infty))^i}{i!} ( Q' D' Q'^T )^i \\
    &= \sum_{i=0}^\infty \frac{(-a(\infty))^i}{i!}  \underbrace{ Q' \cdot D'^i \cdot Q'^T }_{\text{By $Q'Q'^T = I$}} \\
    &= Q' \cdot e^{ - D' \cdot a(\infty) } \cdot Q'^T  \\
    &\mathop{=}^{(*)} Q' \cdot \diag(-\infty) \cdot Q'^T = 0,
\end{align*}
where $(*)$ is by: (1) $a(\infty) = \infty$, and (2) every diagonal entry of $D'$ is positive.

{\updcolor

\subsection{DNNs Behavior Under Large Adversarial Perturbation}
\label{app:large-perturb-discuss}

In Section~\ref{sec:robust-overfit-explain}, we have assumed that when the adversarial perturbation scale is small enough, the symmetric matrix
\begin{align*}
    H := A(\infty)^{\frac{1}{2}} Q^T \hat\Theta_{\theta,0}(\mcatx, \mcatx) Q A(\infty)^{\frac{1}{2}}
\end{align*}
stays positive definite, which combines with the assumption $\lim_{t\rightarrow\infty} a(t) = \infty$ leads to the AT degeneration phenomenon.
However, theoretically we can only prove that $H$ is positive semi-definite based on facts that (1) $A(\infty)$ is a diagonal matrix, and (2) $\hat\Theta_{\theta,0}(\mcatx, \mcatx)$ is positive definite.

Further, when the adversarial perturbation scale is large where $\eta_1(t)S, \cdots, \eta_M(t)S$ are large, the symmetric matrix is likely not be positive definite.
This is because in this case the elements in the matrix $A(\infty) := \frac{1}{a(\infty)} \int_0^\infty \exp(D \bm{\upeta}(t) S)\mathrm{d}t$ may vary greatly, which thus erode the positive definiteness of $Q^T \hat\Theta_{\theta,0}(\mcatx, \mcatx) Q$ and results in $H$ not being a positive definite matrix.
Therefore, we now analyze the remaining cases where $\lambda_{\min}(H) = 0$.

When $\lambda_{\min}(H) = 0$, following similar derivations as that in Section~\ref{sec:robust-overfit-explain} and Appendix~\ref{app:robust-overfit:calc-detail}, we will have that the exponential term in the AT dynamics in Eq.~(\ref{eq:closed-form-at}) not converging to zero matrix, {\it i.e.},
\begin{align*}
    \lim_{t\rightarrow\infty} e^{-\hat\Theta_{\theta,0}(\mcatx,\mcatx) \cdot \hat\Xi(t)} = Q A(\infty)^{-\frac{1}{2}} \cdot e^{H \cdot a(\infty)} \cdot A(\infty)^{\frac{1}{2}} Q^T \neq 0,
    \quad \text{where} \quad \lambda_{\min}(H) = 0.
\end{align*}
In this case, AT degeneration would not occur in long-term AT, and the adversarially trained wide DNN will eventually converge to a model that is different from that obtained in standard training.
However, we also notice that a large adversarial perturbation could add strong noise that can destroy meaningful features within training data, which barriers DNN to effectively learning knowledge.
Therefore, it would be interesting to study how to construct NTK models with large adversarial perturbations to achieve strong robustness in practice.

\section{Experiment Details}
\label{app:experiment}

This section collects experiment details that are omitted from Section~\ref{sec:exp}.

\subsection{Projected Gradient Descent}

We leverage projected gradient descent (PGD)~\citep{madry2018towards} to find adversarial examples within constraint spaces in our experiments.

Formally, given a machine learning model $f: \mathcal X \rightarrow \mathcal Y$, a loss function $\mathcal L: \mathcal Y \times \mathcal Y \rightarrow \mathbb{R}^+$, and a perturbation radius $\rho>0$, PGD aims to find the adversarial example $x^{\mathrm{adv}}$ for a given input data point $(x,y)$ via solving the following maximization problem,
\begin{align*}
    x^{\mathrm{adv}} = \argmax_{\|x' - x\|\leq\rho} \mathcal L(f(x'), y).
\end{align*}
PGD will perform $K$ iterations of projection update to find optimal adversarial examples.
In the $k$-th iteration, the update is as follows,
\begin{align*}
    x^{(k)} = \prod_{\| x' - x \| \leq \rho} \left[ x^{(k-1)} + \alpha \cdot \text{Sign} \left( \partial_{x^{(k-1)}} \mathcal L(f(x^{(k-1)}), y) \right) \right],
\end{align*}
where $x^{(k)}$ is the intermediate adversarial example found in the $k$-th iteration, $\alpha > 0$ is the step size, and $\prod_{\|x' - x\|\leq\rho}$ means the projection is calculated in a ball sphere centered at $x$, {\it i.e.}, $\{ x' :\|x'-x\| \leq \rho \}$.
The eventual adversarial example is $x^{\mathrm{adv}} := x^{(K)}$.

\subsection{Network Architectures}

Our experiments adopt two types of multi-layer DNNs, ``MLP-x'' and ``CNN-x''.
The detailed architectures of MLP-x and CNN-x are presented in Table~\ref{tab:arch-detail}, where ``Conv-$x$($c$)'' denotes a convolutional layer with kernel shape $x \times x$ and $c$ output channels, and ``Dense($c$)'' denotes a fully-connected layer with $c$ output channels.
Also note that there is a network width hyperparameter $w$ for the architectures in Table~\ref{tab:arch-detail}.
For vanilla neural networks, $w$ is a finite value, while for NTK-based models, $w$ is infinite.

\begin{table}[h]
\renewcommand\arraystretch{1.2}
\centering
\caption{The detailed architectures of MLP-x and CNN-x, where $w$ is the network width.}
 \begin{tabular}{c c}
\toprule
MLP-x & CNN-x \\
\midrule
$\left[ \begin{matrix} \text{Dense}^1(w) \\ \mathrm{ReLU} \end{matrix} \right]$ &
$\left[ \begin{matrix} \text{Conv-3}^1(w) \\ \mathrm{ReLU} \end{matrix} \right]$ \\
$\vdots$ & $\vdots$ \\
$\left[ \begin{matrix} \text{Dense}^{x-1}(w) \\ \mathrm{ReLU} \end{matrix} \right]$ &
$\left[ \begin{matrix} \text{Conv-3}^x(w) \\ \mathrm{ReLU} \end{matrix} \right]$ \\
$\mathrm{Dense}(10)$
& $\mathrm{Flatten}$ \\
& $\mathrm{Dense}(10)$ \\
\bottomrule
\end{tabular}
\label{tab:arch-detail}
\end{table}

We then introduce the detailed model implementations of \textbf{Adv-NTK} and baselines \textbf{NTK} and \textbf{AT}.

\textbf{Adv-NTK \& NTK.}
We use the \texttt{Neural-Tangents} Python Library~\citep{novak2020neural} which is developed based on the JAX autograd system~\citep{jax2018github} to implement NTK-based models in our experiments.
The Adv-NTK model is constructed following Eq.~(\ref{eq:adv-ntk-model}), while the NTK model is constructed following Eq.~(\ref{eq:closed-form-ensemble-inf-clean}).
For every Adv-NTK and NTK models, the netowrk width is set as $w = \infty$, and the weight and bias standard deviations are set as $1.76$ and $0.18$, respectively.

\textbf{AT.}
We use PyTorch~\citep{paszke@pytorch} to implement the finite-width neural networks in vanilla AT.
The network width for the MLP-x architecture is set as $w=512$, while that for the CNN-x architecture is set as $w=256$.
Other model hyperparameters follow the default settings of PyTorch.

\subsection{Training and Evaluation}
\label{app:exp-hpp-settings}

\textbf{General settings.}
The loss function used in all experiments is the squared loss $\mathcal L(f(x),y) := \frac{1}{2}\| f(x) -y \|_2^2$.
Whenever performing PGD to find adversarial examples, we always use $l_\infty$-perturbation.
For a given perturbation radius $\rho$, the iteration number and the step size are always set as $K=10$ and $\alpha = 2\rho/K$, respectively.
For both CIFAR-10 and SVHN experiments, we randomly draw $12,000$ samples from the original trainset to train/construct models, and use the overall testset to assess model performances.
No data augmentation is used.
Every experiment is repeated 3 times.

\textbf{Adv-NTK.}
We follow Algorithm~\ref{algo:advntk} to train Adv-NTK models.
$2,000$ training samples are used as the validation data, while the remaining $10,000$ ones are used to construct Adv-NTK models.
In experiments on SVHN, each model is trained for $50$ iterations, in which the batch size is set as $128$ and learning rate is fixed to $1$.
In experiments on CIFAR-10, each model is trained for $50$ iterations, in which the batch size is set as $128$ and learning rate is fixed to $0.1$.
All other settings follow \textbf{general settings}.

\textbf{NTK.}
Every NTK model is constructed following Eq.~(\ref{eq:closed-form-ensemble-inf-clean}) with the overall $12,000$ training data.
There is no need to train NTK models.
All other settings follow \textbf{general settings}.

\textbf{AT.}
We use SGD to train neural networks in AT following Eq.(\ref{eq:at-origin}) via SGD for $20,000$, where the momentum factor is set as $0.9$, the batch size is set as $128$, and the weight decay factor is set as $0.0005$.
In experiments on SVHN, the learning rate is initialized as $0.01$ and decay by a factor $0.1$ every $8,000$ iterations.
In experiments on CIFAR-10, the learning rate is initialized as $0.1$ and decay by a factor $0.1$ every $8,000$ iterations.
All other settings follow \textbf{general settings}.

{\updcolor

\subsection{Experiment Results on SVHN}
\label{sec:exp-result:svhn}

This section presents the additional experiment results on the SVHN dataset.

From Table~\ref{tab:robust-test-acc:svhn}, we have similar observations as that from the results of CIFAR-10, which show that Adv-NTK can improve the robustness of infinite-width DNNs and sometimes achieve comparable robust test accuracy as that of AT.
However, we also notice that in the case ``CNN-x + SVHN'', AT is significantly better than Adv-NTK.
We believe it is because the non-linearity of finite-width DNNs in AT can capture additional robustness, which will be left for future studies.

Besides, from Fig.~\ref{fig:rob-test-acc-curve:svhn}, we find that in most of the cases, the finite-width DNN is not trainable in AT.
However, in the case of CNN-5 with radius $\rho=4/255$, the finite-width DNN does not suffer from robust overfitting and its robust test accuracy continuously increases along AT.
We deduce that this is because the number of training iterations (which is $20,000$) is too short for this experiment that it almost acts like an early-stop regularization, which is shown to be effective in mitigating robust overfitting~\citep{rice2020overfitting}.
Furthermore, it is worth noting that the robust overfitting phenomenon on the SVHN dataset has already been observed by \citet{rice2020overfitting} (see Figure 9 in their paper).
}

\begin{table}[t]
\centering
\caption{Robust test accuracy (\%) of models trained with different methods on SVHN.
Every experiment is repeated 3 times.
A high robust test accuracy suggests a strong robust generalization ability.}
\scriptsize
\begin{tabular}{c l c c c c c c}
\toprule
& \multirow{2}{0.5cm}{\centering Depth} & \multicolumn{3}{c}{Adv. Acc. ($\ell_\infty$; $\rho=4/255$) (\%)} & \multicolumn{3}{c}{Adv. Acc. ($\ell_\infty$; $\rho=8/255$) (\%)}\\
\cmidrule(lr){3-5} \cmidrule(lr){6-8}
& & AT & NTK & Adv-NTK (Ours) & AT & NTK & Adv-NTK (Ours) \\

\midrule
\multirow{5}{1.5cm}{\centering MLP-x \\ + \\ SVHN \\ (Subset 12K)}
& 3  & 19.59$\pm$0.00 & 18.43$\pm$0.53 & \textbf{24.64$\pm$0.59} & 19.59$\pm$0.00 &  9.32$\pm$0.15 & \textbf{21.68$\pm$1.91} \\
& 4  & 19.59$\pm$0.00 & 21.47$\pm$0.26 & \textbf{34.93$\pm$1.17} & 19.59$\pm$0.00 &  9.00$\pm$0.17 & \textbf{29.49$\pm$0.43} \\
& 5  & 19.59$\pm$0.00 & 24.65$\pm$0.87 & \textbf{36.38$\pm$0.45} & 19.59$\pm$0.00 &  9.30$\pm$0.28 & \textbf{25.38$\pm$1.42} \\
& 8  & 19.59$\pm$0.00 & 28.63$\pm$0.18 & \textbf{32.35$\pm$0.66} & \textbf{19.59$\pm$0.00} & 10.74$\pm$0.37 & 16.22$\pm$0.35 \\
& 10 & 19.59$\pm$0.00 & \textbf{30.48$\pm$0.25} & 30.15$\pm$0.25 & \textbf{19.59$\pm$0.00} & 11.62$\pm$0.52 & 13.10$\pm$0.30 \\
\midrule
\multirow{5}{1.5cm}{\centering CNN-x \\ + \\ SVHN \\ (Subset 12K)}
& 3  & \textbf{57.07$\pm$0.41} &  7.17$\pm$0.52 & 22.83$\pm$0.76 & \textbf{34.92$\pm$0.88} &  2.87$\pm$0.22 & 20.20$\pm$1.05 \\
& 4  & \textbf{58.50$\pm$0.09} &  9.74$\pm$0.48 & 31.74$\pm$1.44 & 21.26$\pm$2.80 &  3.74$\pm$0.36 & \textbf{25.14$\pm$0.86} \\
& 5  & \textbf{54.48$\pm$0.97} & 11.60$\pm$0.27 & 32.68$\pm$0.86 & 19.59$\pm$0.00 &  3.85$\pm$0.23 & \textbf{20.75$\pm$2.38} \\
& 8  & 19.59$\pm$0.00 & 17.56$\pm$0.11 & \textbf{23.96$\pm$0.96} & \textbf{19.59$\pm$0.00} &  5.15$\pm$0.12 &  9.37$\pm$0.16 \\
& 10 & 19.59$\pm$0.00 & 21.61$\pm$0.51 & \textbf{21.98$\pm$0.49} & \textbf{19.59$\pm$0.00} &  5.94$\pm$0.37 &  7.08$\pm$0.18 \\

\bottomrule
\end{tabular}
\label{tab:robust-test-acc:svhn}
\end{table}

\begin{figure}[t]
    \centering
    \begin{subfigure}{0.22\linewidth}
        \centering
        \includegraphics[width=\linewidth]{./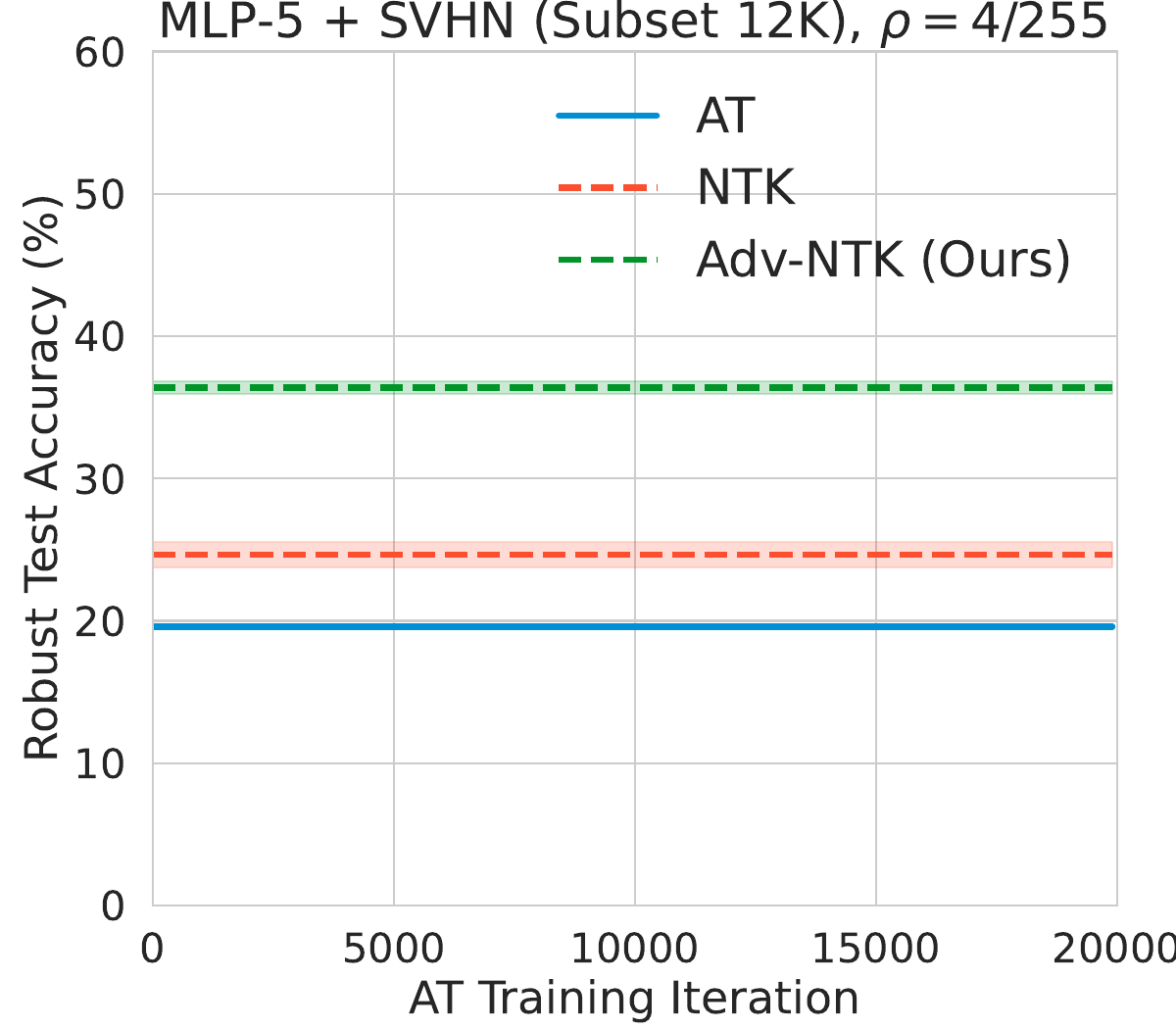}
    \end{subfigure}
    \hspace{1mm}
    \begin{subfigure}{0.22\linewidth}
        \centering
        \includegraphics[width=\linewidth]{./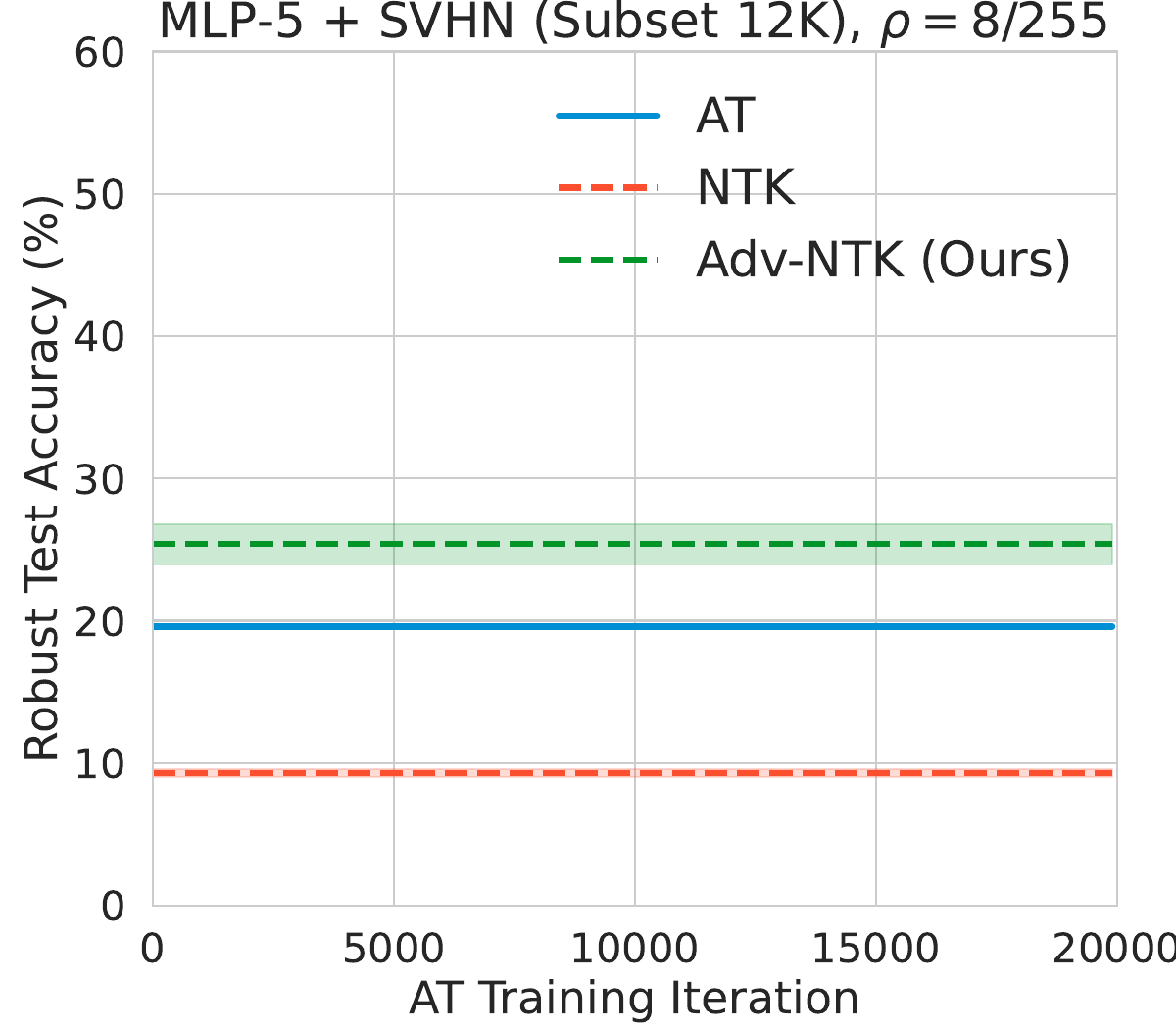}
    \end{subfigure}
    \hspace{1mm}
    \begin{subfigure}{0.22\linewidth}
        \centering
        \includegraphics[width=\linewidth]{./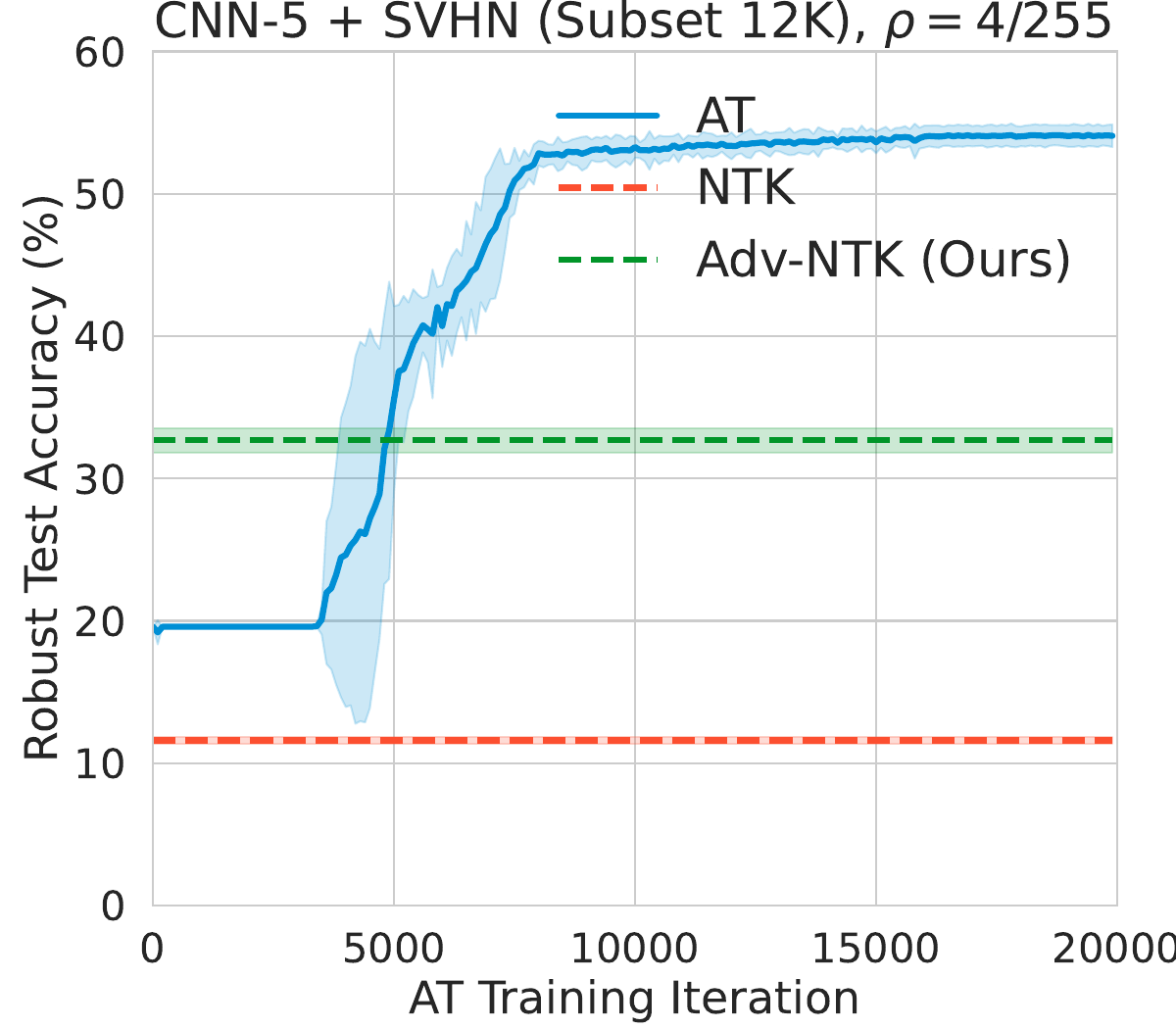}
    \end{subfigure}
    \hspace{1mm}
    \begin{subfigure}{0.22\linewidth}
        \centering
        \includegraphics[width=\linewidth]{./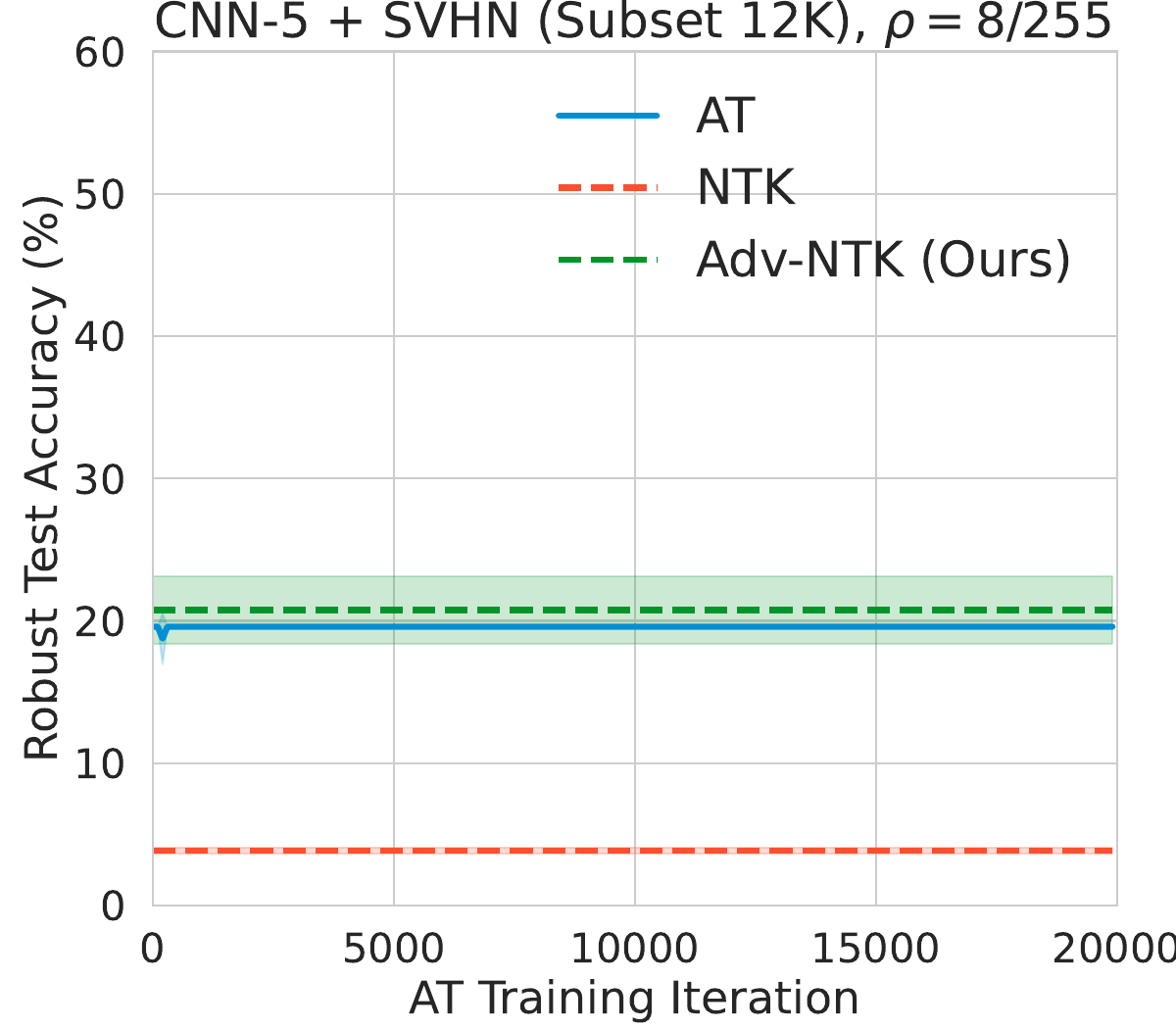}
    \end{subfigure}
    \caption{\updcolor
    The robust test accuracy curves of finite-width MLP-5/CNN-5 along AT on SVHN.
    The robust test accuracy of infinite width DNNs learned by NTK and Adv-NTK are also plotted.
    }
    \label{fig:rob-test-acc-curve:svhn}
\end{figure}

\section{Applying Adv-NTK to Large DNNs}

{\updcolor
This section discusses practical challenges and our preliminary results in applying Adv-NTK to common large DNNs.
}

\subsection{Practical Challenges Toward Large DNNs}

{\updcolor
Our experiments in Section~\ref{sec:exp} only adopted MLPs and CNNs but not common larger DNNs like VGGs~\citep{simonyan2014very} and ResNets~\citep{he2016deep}.
The reason is that large DNNs usually adopt global average pooling (GAP) layers in their architectures.
Although it is confirmed that GAP layers can effectively improve the generalization ability of NTK models, accurately calculating NTK matrices that involve GAP layers would consume an extremely large amount of GPU memory and also lead to an unbearable time usage~\citep{arora2019exact,han2022fast}.

To improve the calculation efficiency of GAP layers, existing NTK literature has adopted Monte Carlo-based techniques to estimate the output of GAP layers~\citep{novak2019bayesian} and made it affordable for vanilla NTK experiments.
However, adapting these Monte Carlo-based techniques into our AT setting is not trivial since our setting needs to further analyze the process of adversarial perturbation, and a series of mathematical and engineering challenges would arise during this adaptation. Therefore, we will leave the practicality problem of Adv-NTK in future research.
}

\subsection{Experiments on ResNets}

\begin{table}[t]
\centering
\caption{Robust test accuracy (\%) of different ResNet models.
Every experiment is repeated 3 times.
A high robust test accuracy suggests a strong robust generalizability.}
\vspace{-3mm}
\scriptsize
\begin{tabular}{c c c c c c c c}
\toprule
\multirow{2}{1.2cm}{\centering Dataset} & \multirow{2}{1.2cm}{\centering Architecture} & \multicolumn{3}{c}{Adv. Acc. ($\ell_\infty$; $\rho=4/255$) (\%)} & \multicolumn{3}{c}{Adv. Acc. ($\ell_\infty$; $\rho=8/255$) (\%)}\\
\cmidrule(lr){3-5} \cmidrule(lr){6-8}
& & AT & NTK & Adv-NTK (Ours) & AT & NTK & Adv-NTK (Ours) \\

\midrule

\multirow{2}{1.5cm}{\centering CIFAR-10 \\ (Subset 12K)}
& ResNet-18 & 28.01$\pm$0.79 & 17.09$\pm$0.25 & \textbf{28.43$\pm$0.27} & 15.11$\pm$0.65 &  4.07$\pm$0.13 & \textbf{21.46$\pm$0.47} \\
& ResNet-34 & 26.61$\pm$1.08 & 25.42$\pm$0.31 & \textbf{29.19$\pm$0.64} & 16.11$\pm$0.55 &  9.64$\pm$0.12 & \textbf{21.00$\pm$0.82} \\
\midrule
\multirow{2}{1.5cm}{\centering SVHN \\ (Subset 12K)}
& ResNet-18 & \textbf{48.10$\pm$1.08} & 24.18$\pm$0.34 & 32.70$\pm$0.26 & \textbf{30.36$\pm$1.92} & 10.05$\pm$0.30 & 13.96$\pm$0.16 \\
& ResNet-34 & \textbf{48.04$\pm$0.99} & 32.69$\pm$0.43 & 34.08$\pm$0.29 & \textbf{26.94$\pm$2.33} & 13.01$\pm$0.19 & 15.19$\pm$0.26 \\
\bottomrule
\end{tabular}
\label{tab:resnet:robust-test-acc}
\vspace{-2mm}
\end{table}

\begin{figure}[t]
    \centering
    \begin{subfigure}{\linewidth}
        \centering
        \includegraphics[width=0.22\linewidth]{./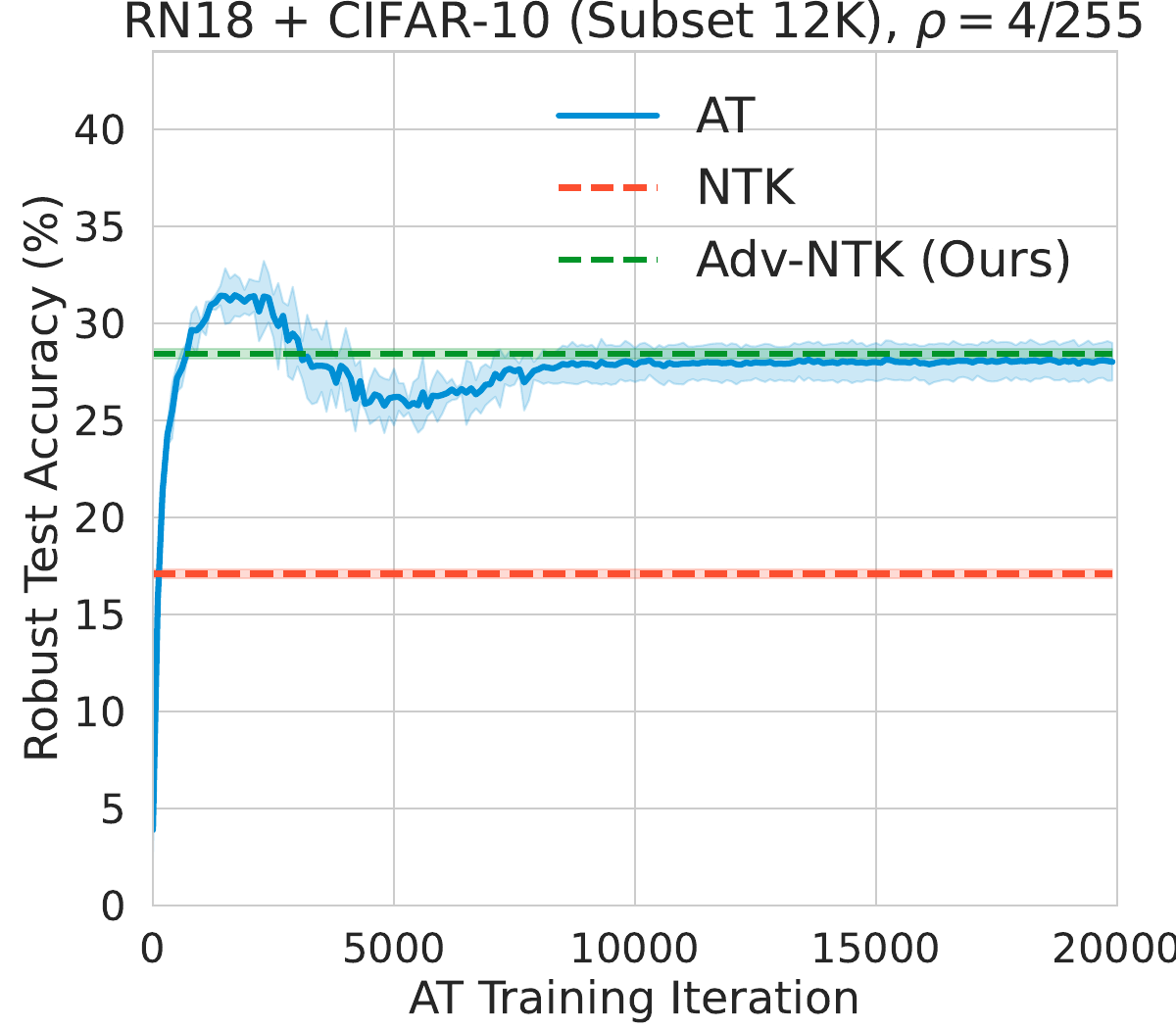}
        \hspace{1mm}
        \includegraphics[width=0.22\linewidth]{./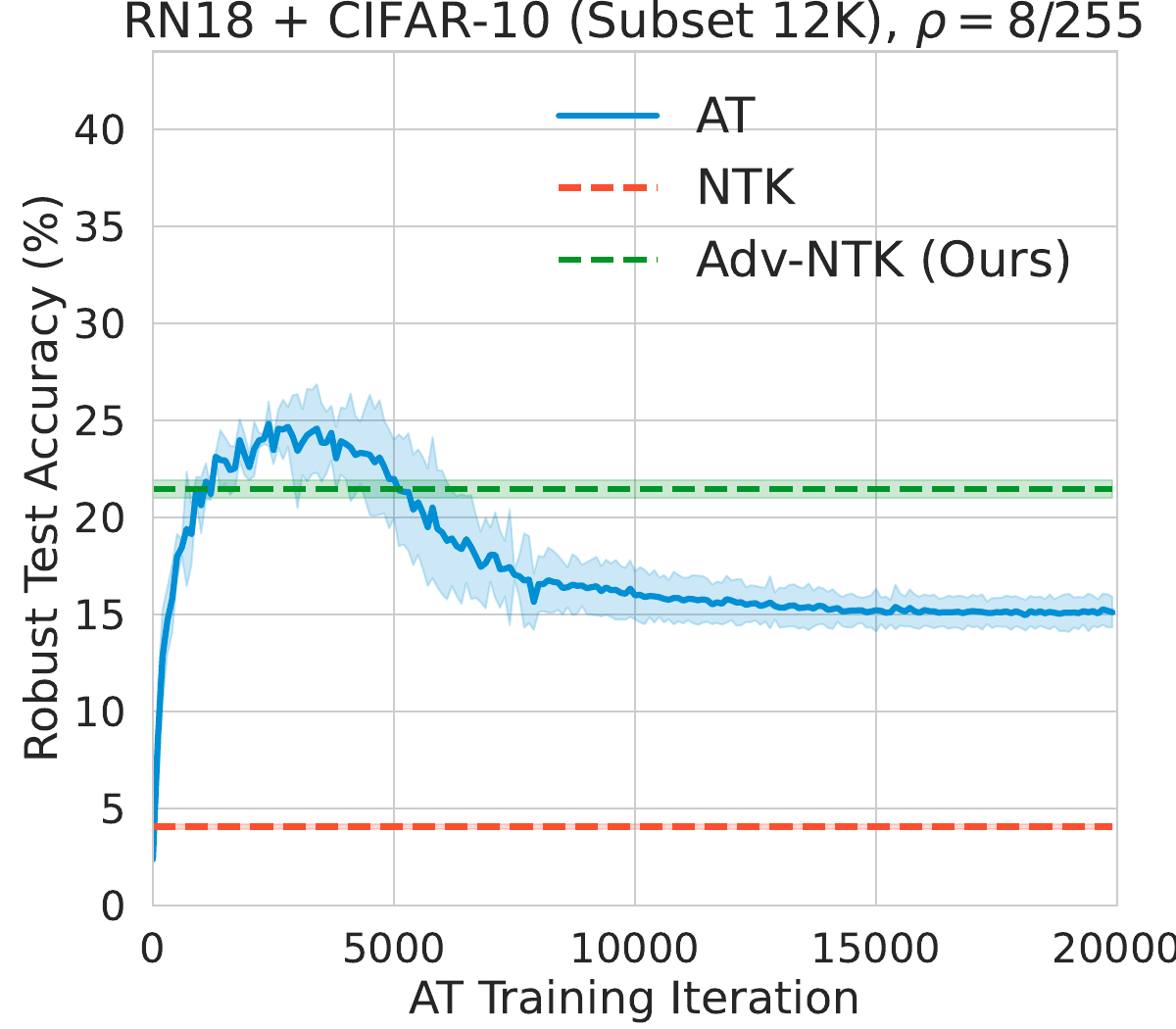}
        \hspace{1mm}
        \includegraphics[width=0.22\linewidth]{./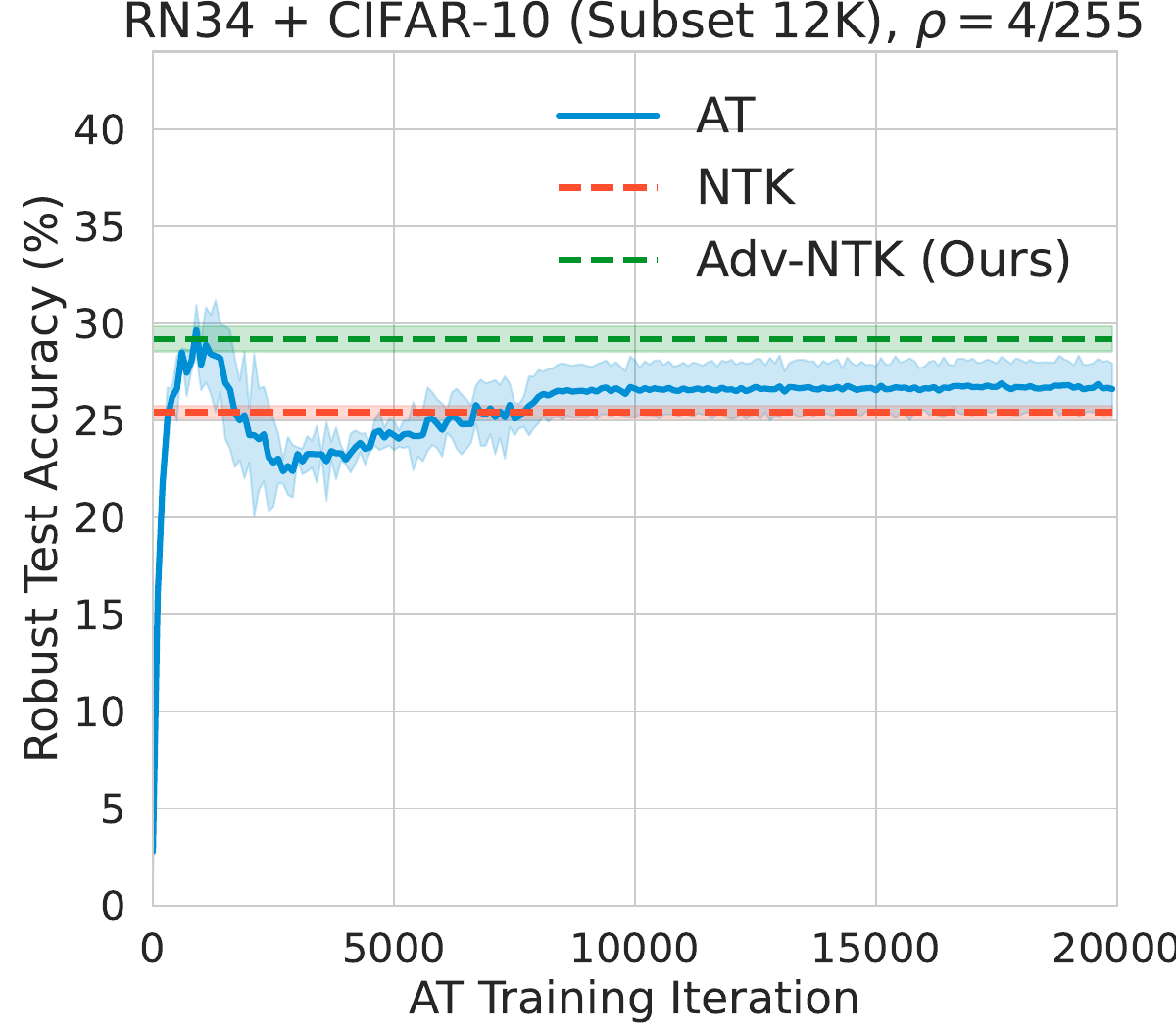}
        \hspace{1mm}
        \includegraphics[width=0.22\linewidth]{./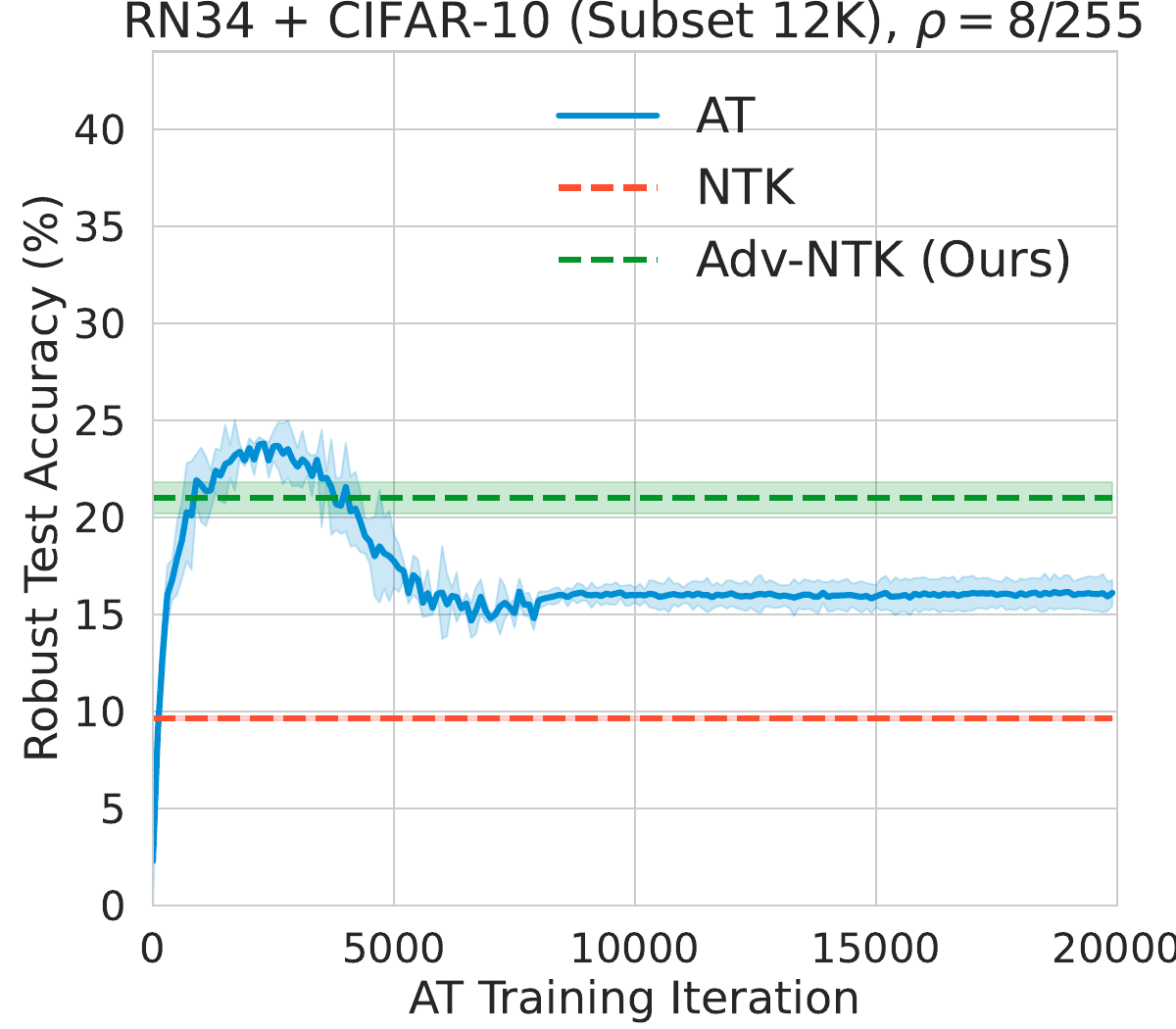}
        \caption{Results on CIFAR-10.}
    \end{subfigure}
    \\[2mm]
    \begin{subfigure}{\linewidth}
        \centering
        \includegraphics[width=0.22\linewidth]{./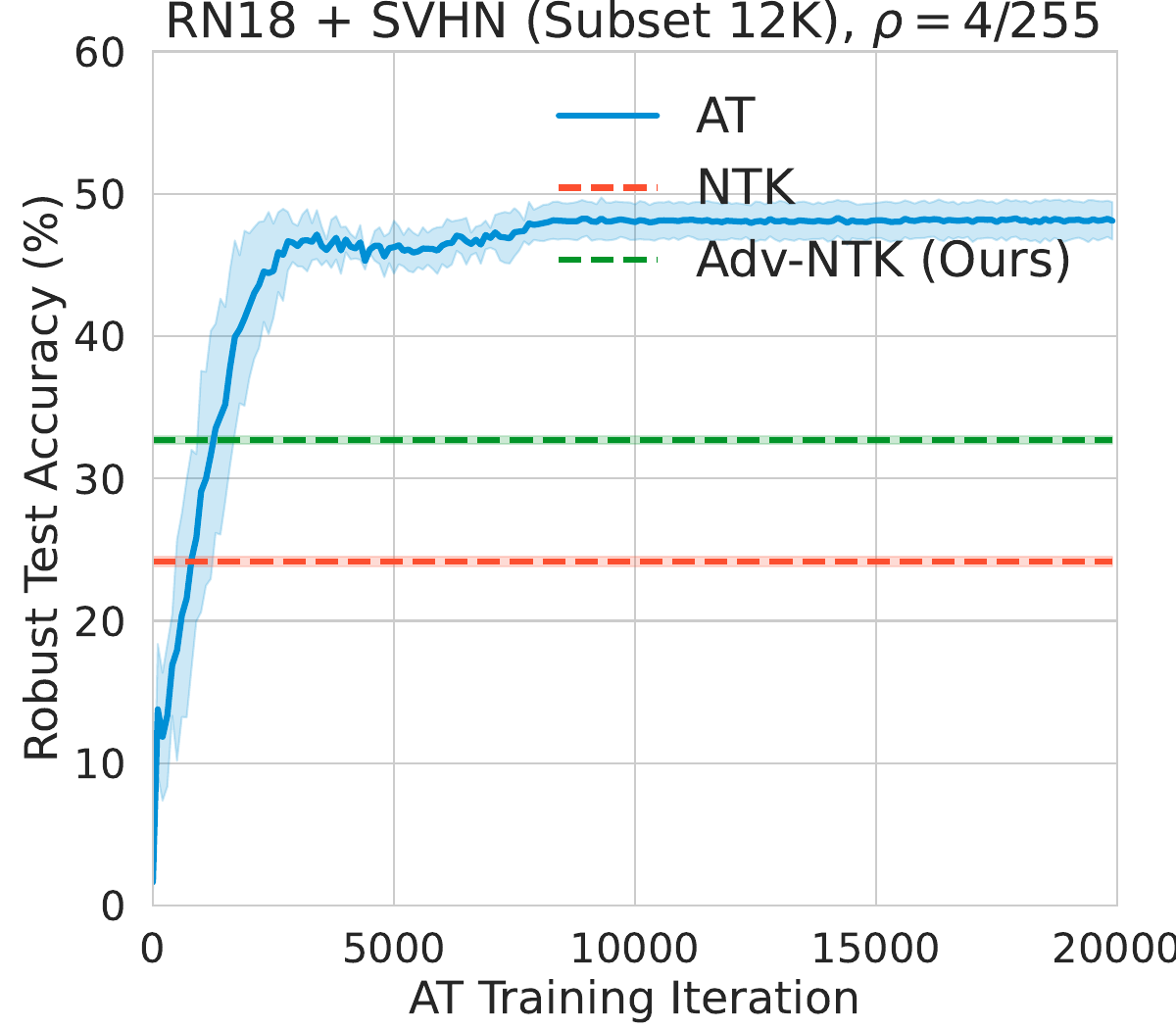}
        \hspace{1mm}
        \includegraphics[width=0.22\linewidth]{./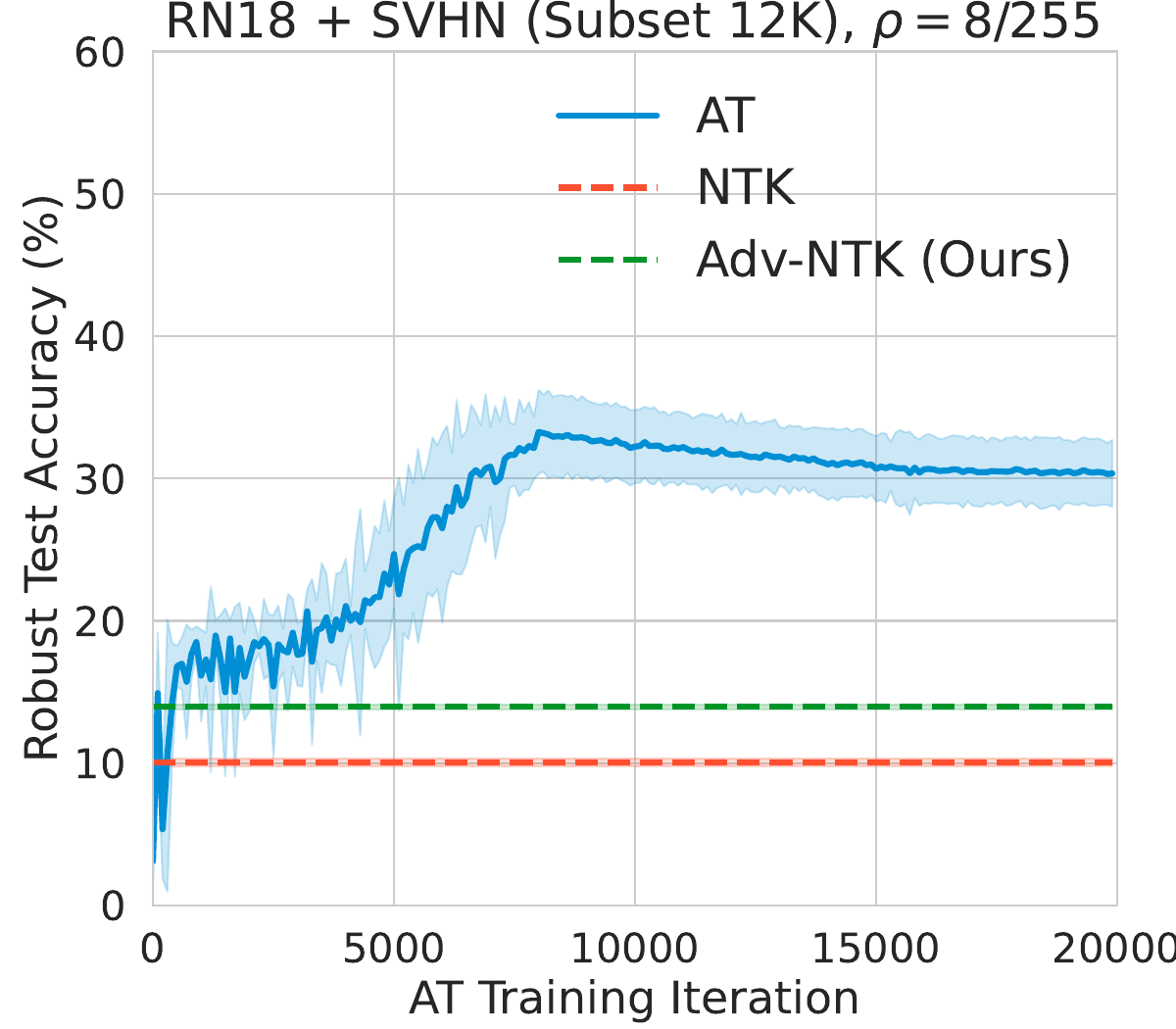}
        \hspace{1mm}
        \includegraphics[width=0.22\linewidth]{./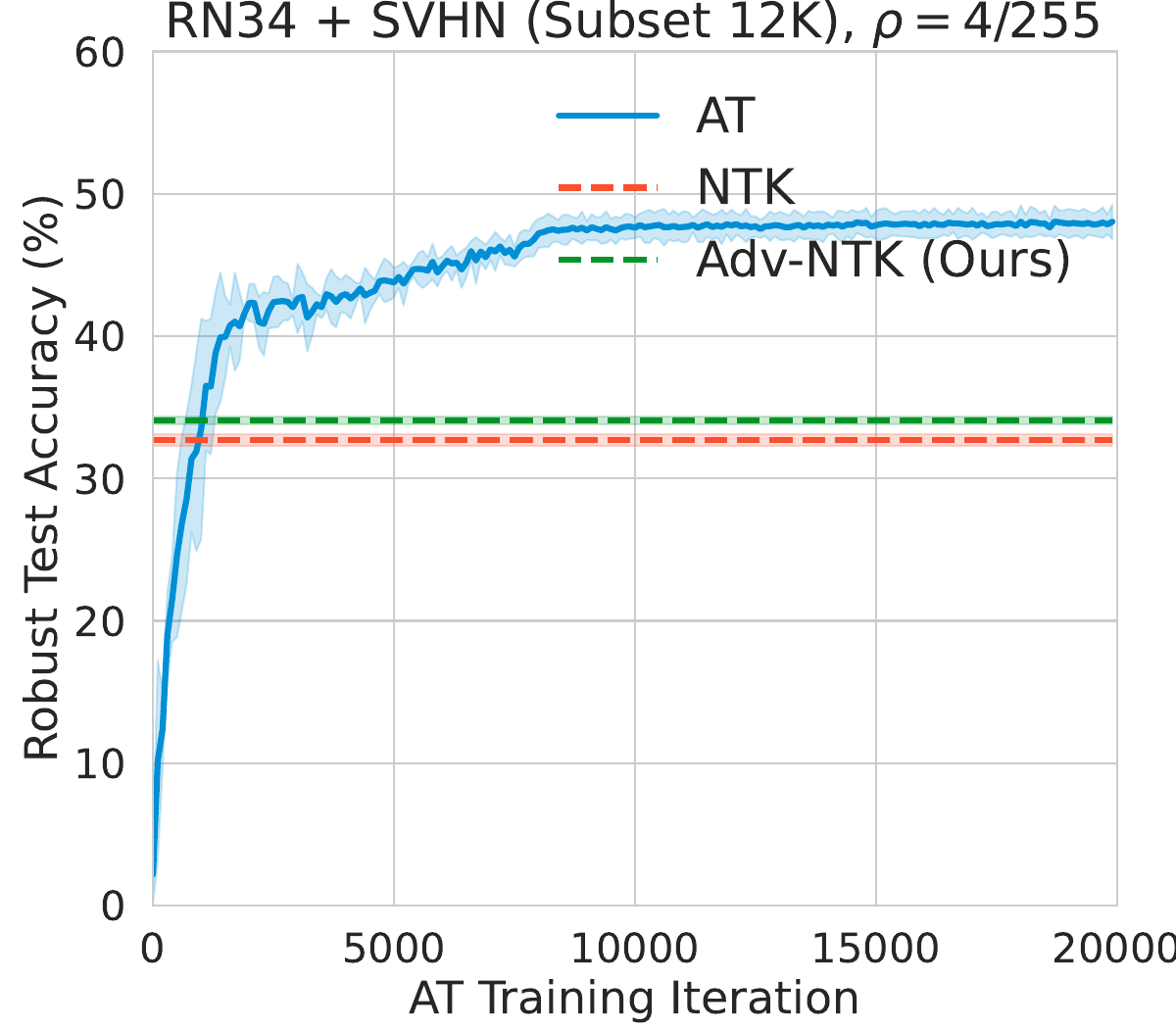}
        \hspace{1mm}
        \includegraphics[width=0.22\linewidth]{./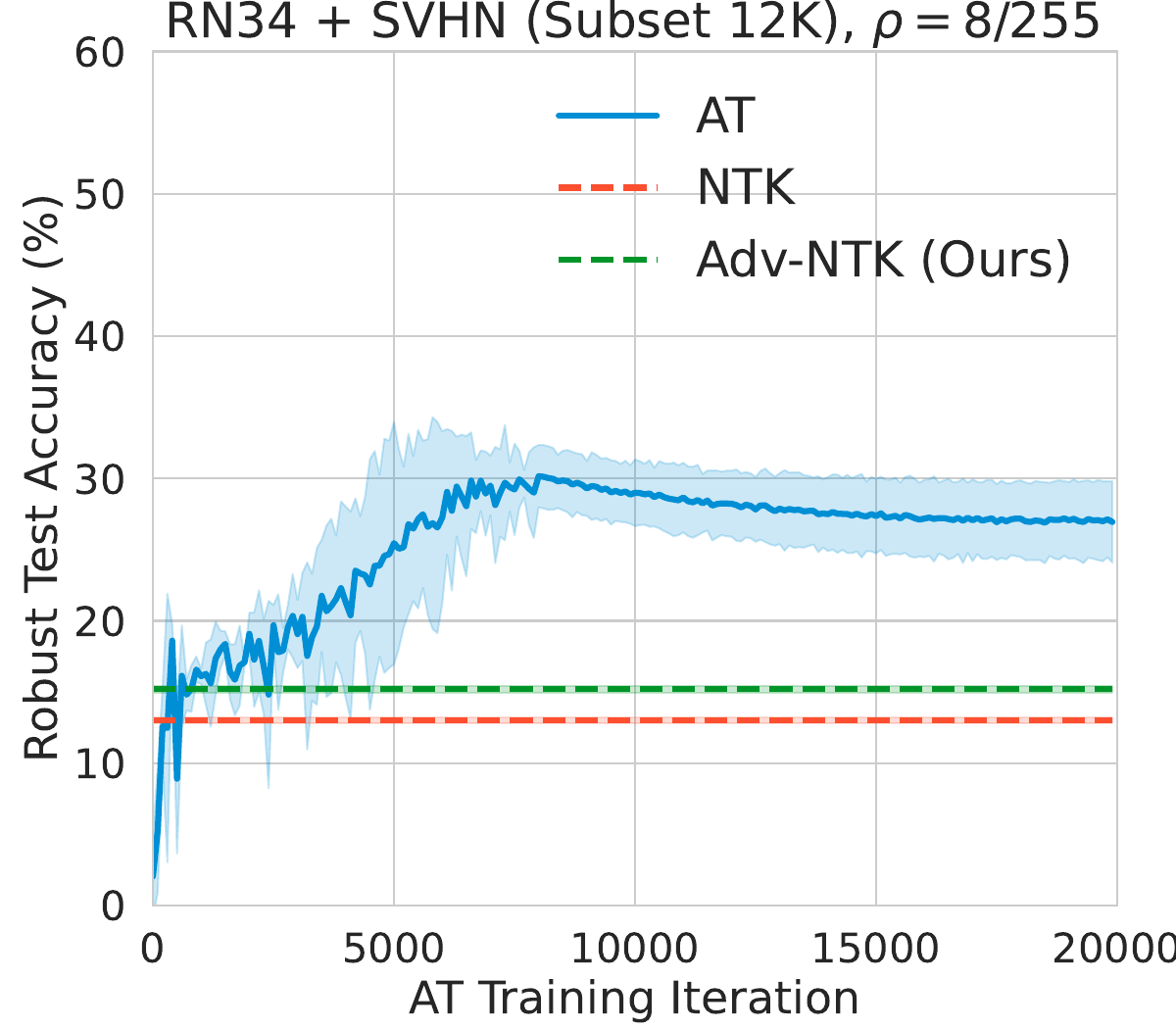}
        \caption{Results on SVHN.}
    \end{subfigure}
    \caption{\updcolor
    The robust test accuracy curves of ResNet models along AT.
    The robust test accuracy of infinite width DNNs learned by NTK and Adv-NTK are also plotted.
    }
    \label{fig:resnet:rob-test-acc-curve}
\end{figure}

{\updcolor
Although we still could not address the practical challenges induced by GAP layers, in this section we present the experiments for ResNets that are \textbf{not using GAP layers}.

\textbf{Models.}
We adopt two types of ResNet~\citep{he2016deep}, ResNet-18 and ResNet-34, in our experiments.
For finite-width models, GAP layers are adopted as that in standard ResNet architectures.
For NTK models, we replace all GAP layers with flattening layers to reduce GPU memory usage.

\textbf{Experiment setup.}
(1) For \textbf{Adv-NTK}, the learning rate is set as: $5\times 10^{-5}$ for ResNet-18 on SVHN, $10^{-6}$ for ResNet-18 on CIFAR-10, $10^{-8}$ for ResNet-34 on SVHN, and $10^{-10}$ for ResNet-34 on CIFAR-10.
Other settings follow that in Appendix~\ref{app:exp-hpp-settings}.
(2) For \textbf{NTK}, all settings follow Appendix~\ref{app:exp-hpp-settings}.
(3) For \textbf{AT}, the learning rate for both ResNet-18 and ResNet-34 on CIFAR-10 is set as $0.01$.
Other settings follow that in Appendix~\ref{app:exp-hpp-settings}.

\textbf{Results.}
The robust test accuracies of ResNet models trained with different methods are reported in Table~\ref{tab:resnet:robust-test-acc}.
The curves of robust test accuracy of finite-width ResNet models along AT are plotted in Fig.~\ref{fig:resnet:rob-test-acc-curve}.
From the results, one can find that Adv-NTK can effectively improve the robustness of infinite-width ResNet models.
Meanwhile, on the SVHN dataset, finite-width ResNet models achieved stronger robustness than infinite-width ones.
We think this is due to the use of GAP layers in finite-width ResNet models, and will leave the application of GAP layers in large-scale Adv-NTK for future studies.
}

\end{document}